\pdfoutput=1
\def\ArxivPreprint{1}

\documentclass{article}
\usepackage{iclr2027_conference,times} %

\def\ICLRFullPaperBuild{1}
\newif\ifincludeappendix
\ifdefined\ICLRFullPaperBuild
  \includeappendixtrue
\else
  \includeappendixfalse
\fi

\usepackage{amsmath,amssymb,amsthm}

\usepackage{booktabs}
\usepackage{array}
\usepackage{enumitem}
\usepackage{graphicx}
\usepackage{placeins}
\newcolumntype{L}[1]{>{\raggedright\arraybackslash}p{#1}}
\usepackage{tikz}
\usetikzlibrary{arrows.meta,decorations.pathreplacing}
\usepackage{wrapfig}
\usepackage{needspace}

\usepackage{microtype} %

\usepackage{hyperref}
\usepackage{url}

\definecolor{linknavy}{HTML}{1A3D6D}
\definecolor{keyblue}{HTML}{0072B2}
\definecolor{keyorange}{HTML}{E69F00}
\definecolor{keypurple}{HTML}{CC79A7}
\hypersetup{
  colorlinks=true,
  linkcolor=linknavy,
  citecolor=linknavy,
  urlcolor=linknavy,
}

\newcommand{\RR}{\mathbb{R}}
\newcommand{\EE}{\mathbb{E}}

\newcommand{\dconv}{d_{\mathrm{conv}}}

\newtheorem{theorem}{Theorem}
\newtheorem{proposition}{Proposition}
\newtheorem{lemma}{Lemma}
\newtheorem{corollary}{Corollary}
\newtheorem{remark}{Remark}

\title{A Comparative Analysis of Attention versus State-Space Models for In-Context Learning}

\author{Anonymous authors}

\ifdefined\ArxivPreprint
  \iclrfinalcopy
  \author{Enes Arda \\ The Ohio State University \\ \texttt{arda.2@osu.edu} \And
    Semih Cayci \\ RWTH Aachen University \\ \texttt{cayci@mathc.rwth-aachen.de} \And
    Atilla Eryilmaz \\ The Ohio State University \\ \texttt{eryilmaz.2@osu.edu}}
\fi

\begin{document}

\ifdefined\ArxivPreprint\edef\savedtabcolsep{\the\tabcolsep}\setlength{\tabcolsep}{4.5pt}\fi
\maketitle
\ifdefined\ArxivPreprint\setlength{\tabcolsep}{\savedtabcolsep}\lhead{Preprint}\vspace{-11pt}\fi

\begin{abstract}
Transformers and state-space models (SSMs) are two prominent sequential learning architectures, yet their comparison remains largely empirical and existing theoretical analyses are typically task-specific or architecturally restricted.
In this paper, we develop \emph{belief geometry}, a unified analytical framework for comparing the representational capabilities of broad classes of attention and SSMs.
Starting from a generalized formulation of in-context linear regression and using cumulative Bayes regret as our measure, we abstract three capabilities required by many sequential learning problems in our belief geometry: evidence assembly, belief maintenance, and addressing. 
We then study three cases of our formulation that isolate these capabilities and yield sharp architectural lessons:
For belief maintenance, SSMs attain the optimal regret over stationary aggregation kernels; for positional assembly, SSMs have a memory advantage; and for content addressing, softmax attention has an exponential width advantage over sigmoid-selective SSMs.
Experiments with LLaMA-type Transformers and Mamba-2 show that these architectural insights extend beyond our analytically tractable classes and linear-regression testbed.
\end{abstract}
\vspace{-0.325em}
\section{Introduction}\label{sec:intro}
\vspace{-0.325em}
Transformers \citep{vaswaniAttentionAllYou2017} and state-space models (SSMs) \citep{guMambaLinearTimeSequence2024, beckXLSTMExtendedLong2024, yangGatedLinearAttention2024} have emerged as two competitive architecture families for sequential learning. They organize history in fundamentally different ways: Transformers retain past tokens within a context window for direct access, whereas SSMs compress the past into a recurrent state. A principled understanding of when either strategy is advantageous, and which architectural mechanisms give rise to those advantages, has therefore become essential for designing sequential learning architectures.

Comparisons between Transformers and SSMs, however, remain largely empirical \citep[e.g.,][]{aroraZoologyMeasuringImproving2023, parkCanMambaLearn2024}. Existing theoretical work typically studies specific tasks \citep[e.g.,][]{sanfordRepresentationalStrengthsLimitations2023, sanfordTransformersParallelComputation2024, jelassiRepeatMeTransformers2024, wenRNNsAreNot2024}, analyzes restricted forms of these architectural classes \citep[e.g.,][]{coleInContextLearningLinear2025, zhangTrainedTransformersLearn2023, tianLearningRecurrentState2026}, or unifies their mathematical descriptions without explaining when their architectural differences matter \citep[e.g.,][]{sieberUnderstandingDifferencesFoundation2024}. We therefore lack a general framework for identifying which architectural capabilities a sequential learning task demands, how different architectures supply them, and at what resource cost.

\vspace{-0.2em}
\paragraph{Contributions.}
In this paper, we generalize in-context linear regression (ICLR) through routing and evaluate architectures by their cumulative Bayes regret rather than by their loss at a fixed prediction step. This extension reveals three capabilities required by many sequential learning problems: (i) \emph{assembling evidence} from inputs, (ii) \emph{maintaining a belief} by aggregating that evidence, and (iii) \emph{addressing} the relevant part of history to which that belief is applied. We formalize these capabilities in our \emph{belief geometry}, which enables a clean, unified representational analysis of architecturally rich and practically relevant classes of attention (Transformers' backbone) and SSMs (see Remark~\ref{rem:architecture_scope}). 

Our analysis proceeds through controlled instantiations of the generalized ICLR setup: in each case, a different capability becomes the bottleneck, and we analyze it through its corresponding element in belief geometry.
Starting from the simplest versions of each family, we use linear regret lower bounds to identify the failures and sublinear regret constructions to show which architectural components overcome them. The resulting sharp comparisons of achievable regret and of the depth, width, and memory required by each family explain \emph{where} and \emph{why} one family prevails over the other and yield the following architectural lessons:
\begingroup
\setlength{\leftmargini}{0.9em}
\begin{itemize}

\item \textbf{Belief maintenance: SSMs attain the stationary optimum.}
Case~1 isolates belief maintenance, captured in belief geometry by the \emph{aggregation kernel}. Our key observation is that one-layer linear attention and one-layer fixed-transition SSMs can at best realize stationary kernels. Within this restriction, their best achievable regrets differ: the SSM's exponential kernel attains the stationary optimum, whereas attention's uniform kernel cannot. This separation predicts which kernel practical models in each family favor---an inductive-bias difference that we validate empirically.

\item \textbf{Positional assembly: SSMs have a memory advantage.}
Case~2.a introduces positional routing and makes evidence assembly the bottleneck, captured in belief geometry by \emph{belief alignment}. The Case~1 architectures incur linear regret at any width or context length because retaining history cannot substitute for assembly. Attention therefore needs depth and uses its context length for both reach and belief maintenance, whereas the SSM separates them through convolution and recurrence, attaining the same regret rate with \emph{strictly less memory}.

\item \textbf{Content addressing: attention has an exponential width advantage.}
Case~2.b introduces content routing and makes addressing the bottleneck, quantified in belief geometry by \emph{addressing capacity}. The Case~2.a mechanisms cannot sharply address the required historical input: at any fixed depth, linear attention and affine-gated SSMs incur linear regret regardless of width or context length. The families escape this floor differently: attention uses softmax to select among retained tokens, whereas the SSM uses sigmoid-selective transitions to preserve candidates separately in its recurrent state and requires \emph{exponentially larger width} for the same regret rate.

\end{itemize}
\endgroup

Finally, experiments with LLaMA-type Transformers and Mamba-2 show that these lessons extend beyond our analytically tractable classes and linear-regression testbed. Appendix~\ref{app:experiments_analytical} additionally provides direct experimental validation of our theoretical results, which are summarized in Table~\ref{tab:results_summary}.

\paragraph{Related work.}
\label{par:intro_related_work}
Within the theory of in-context learning, work closest to our setup characterizes which regression algorithms attention and recurrent models can realize in ICLR, typically studying one family at a time and evaluating loss at a fixed prediction step \citep{oswaldTransformersLearnIncontext2023,mahankaliOneStepGradient2023,baiTransformersStatisticiansProvable2023,tianLearningRecurrentState2026,coleInContextLearningLinear2025}.
Our belief-maintenance analysis (Case~1, \S\ref{sec:case1_lb}) instead compares \emph{both families} under \emph{cumulative Bayes regret}. 
For positional assembly (Case~2.a, \S\ref{sec:case2a}), prior work studies attention depth \citep{sanfordOnelayerTransformersFail2024,ekboteWhatOneCannot2025} and convolution in Mamba \citep{bondaschiMarkovLaplaceHow2026}; we prove \emph{class-level regret floors} and an \emph{SSM memory advantage} at the same sublinear regret rate.
The literature closest to content addressing (Case~2.b, \S\ref{sec:case2b}) establishes attention--recurrence separations for associative recall, induction heads, and copying under computational, memory, or statistical criteria \citep{sanfordRepresentationalStrengthsLimitations2023,sanfordTransformersParallelComputation2024,jelassiRepeatMeTransformers2024,wenRNNsAreNot2024,mousavi-hosseiniWhenTransformersOutperform2025}.
These tasks are largely standalone and discrete, and their lower bounds measure state size in bits under a finite-precision assumption. Our addressing problem is instead embedded in noisy Bayesian inference alongside evidence assembly and belief maintenance; its lower bound concerns \emph{real-valued state dimension} without a precision restriction.
At a broader theoretical level, prior work unifies the mathematical representation of attention and SSMs \citep{sieberUnderstandingDifferencesFoundation2024,wangTesttimeRegressionUnifying2025}. Our belief geometry instead \emph{unifies their analysis}, translating architectural differences into task-dependent regret and resource comparisons. 
Appendix~\ref{app:related} compares our results with these works in greater detail and surveys the broader literature.

\section{Setup}\label{sec:setup}

\paragraph{Data-generating process.}
We generalize Bayesian in-context linear regression (ICLR) by introducing routing into the setup studied in prior work \citep{gargWhatCanTransformers2023,akyurekWhatLearningAlgorithm2023,
oswaldTransformersLearnIncontext2023,
zhangTrainedTransformersLearn2023}.
A partial injective \emph{routing map} $\rho:\mathbb N\rightharpoonup\mathbb N$ specifies which regressor generates the response $y_s \in \RR$ for each time step $s$ in the domain of $\rho$, subject to the causal constraint $\rho(s) \leq s$. Whenever $\rho(s)$ is defined, step $s$ is called \emph{matched}, and at that step the data stream obeys
\[
  y_s=x_{\rho(s)}^\top\theta+\epsilon_s,\qquad
  \theta\sim\mathcal N(0,\Sigma_\theta),\quad
  x_s\stackrel{\mathrm{iid}}{\sim}\mathcal N(0,\Sigma_x),\quad
  \epsilon_s\stackrel{\mathrm{iid}}{\sim}\mathcal N(0,\sigma^2),
\]
where $\theta,x_s\in\RR^p$ for a fixed $p\in\mathbb N$, $\Sigma_\theta,\Sigma_x\succ0$, $\sigma^2>0$, and $\theta,(x_s)_s,(\epsilon_s)_s$ are mutually independent.
All remaining steps are unmatched, with $y_s=0$, and therefore carry no information about $\theta$. Throughout, expressions involving $\rho(s)$ are understood only for matched $s$. 
At step $t$, the task is to predict $y_{t+1}$ from the history $z_{1:t}$, where $z_s = (x_{s+1}, y_s, x_s) \in \RR^{2p+1}$.
So at every step, the history contains the regressor \(x_{\rho(t+1)}\) that generates \(y_{t+1}\), but not the response itself. This lets us evaluate cumulative Bayes regret rather than only the loss at the end of the context.

\paragraph{Bayes regret.} Let $\mathcal{H}_t := L^2(\sigma(z_{1:t}))$, with its usual inner product $\langle\cdot,\cdot\rangle_t$ and norm $\|\cdot\|_t$. A fixed predictor $g$ maps each history $z_{1:t}$, of any length, to $g_t := g(z_{1:t}) \in \mathcal{H}_t$ and incurs the squared loss $\ell_t(g) := \EE[(g_t - y_{t+1})^2]$. The Bayes predictor $g^{\mathrm{Bayes}}$ minimizes this loss with \mbox{$g_t^{\mathrm{Bayes}} := \EE[y_{t+1} \mid z_{1:t}]$}. Taking it as the comparator, the cumulative regret of a predictor $g$, and of a class $\mathcal{F}$, over horizon $T$ is
\[
  \mathrm{Reg}_T(g) \,:=\, \sum_{t=1}^{T} \big[\ell_t(g) - \ell_t(g^{\mathrm{Bayes}})\big]
  \,=\, \sum_{t=1}^{T}\|g_t-g_t^{\mathrm{Bayes}}\|_t^2,
  \qquad
  \mathrm{Reg}_T(\mathcal{F}) \,:=\, \inf_{g \in \mathcal{F}} \mathrm{Reg}_T(g).
\]
Unlike fixed-context ICLR formulations that evaluate a single loss $\ell_T(g)$ after observing the entire context \citep{baiTransformersStatisticiansProvable2023, mahankaliOneStepGradient2023, ahnTransformersLearnImplement2023, wuHowManyPretraining2024}, our online formulation evaluates the \emph{cumulative excess loss} as the context grows. This captures an essential aspect of sequential learning that terminal-only evaluation misses: performance throughout the entire trajectory, not merely at its endpoint.

\paragraph{Belief geometry.} 
Our belief geometry is motivated by the following structure of the Bayes predictor under routing: it factorizes as \(g_t^{\mathrm{Bayes}}=x_{\rho(t+1)}^\top\hat\theta_t^{\mathrm{Bayes}}\), where \(\hat\theta_t^{\mathrm{Bayes}}:=\EE[\theta\mid z_{1:t}]\) is the Bayes belief, which in turn can be written as an aggregation of evidence atoms through a kernel:
\begin{equation}\label{eq:bayes_kernel}
  \hat\theta_t^{\mathrm{Bayes}}
  =
  \sum_{s\leq t}K_{t,s}^{\mathrm{Bayes}}x_{\rho(s)}y_s,
  \qquad
  K_{t,s}^{\mathrm{Bayes}}
  :=
  \bigg(
    \sigma^2\Sigma_\theta^{-1}
    +\sum_{r\leq t}x_{\rho(r)}x_{\rho(r)}^\top
  \bigg)^{-1}.
\end{equation}
Together, the factorization and kernel form reveal three operations required to solve many sequential-learning problems:
\emph{evidence assembly}, \emph{belief maintenance}, and \emph{addressing}.
The Bayes predictor performs all three: 
it assembles evidence atoms $x_{\rho(s)}y_s$, maintains a belief by weighting these atoms with the aggregation kernel $K_{t,s}^{\mathrm{Bayes}}$, and addresses the relevant part of history, $x_{\rho(t+1)}$, to which that belief is applied. 
Appendix~\ref{app:conjugate_belief_geometry} shows how these operations extend to broader sequential learning problems, including conjugate and nonconjugate setups, and explains how our regret-analysis framework carries over.

To relate an arbitrary predictor \(g\) to the Bayes structure and measure how well it performs these operations, let $\mathcal G_t^-$ denote the history with $x_{\rho(t+1)}$ held out and define the \emph{belief subspace}\hfill\break
\begin{wrapfigure}[11]{r}{0.42\textwidth}
  \vspace{-1.4\baselineskip}
  \centering
\begingroup
\definecolor{bgBayes}{HTML}{000000}
\definecolor{bgModel}{HTML}{D55E00}
\definecolor{bgKernel}{HTML}{009E73}
\definecolor{bgInk}{HTML}{333333}
\definecolor{bgPlaneFill}{HTML}{F5F5F5}
\definecolor{bgPlaneEdge}{HTML}{808080}
\definecolor{bgGrid}{HTML}{DCDCDC}
\begin{tikzpicture}[
  x={(1cm,0cm)}, y={(0.42cm,0.40cm)}, z={(0cm,1cm)},
  lab/.style={font=\footnotesize,inner sep=1.4pt},
  plab/.style={font=\footnotesize,inner sep=1.4pt},
  note/.style={font=\scriptsize,inner sep=1.2pt},
  dot/.style={circle,inner sep=0pt,minimum size=3.2pt,fill=#1,draw=white,line width=.35pt},
  sdot/.style={circle,inner sep=0pt,minimum size=3.2pt,fill=#1,draw=white,line width=.35pt},
  ra/.style={line width=.6pt,draw=black!50},
  co/.style={-{Stealth[length=3.1pt,width=2.3pt,inset=0.5pt,round]},line width=.42pt,line cap=round,shorten <=0.6pt,shorten >=0.8pt},
]
  \fill[bgPlaneFill] (-0.5,-1.35,0) -- (3.5,-1.35,0) -- (3.5,2.25,0) -- (-0.5,2.25,0) -- cycle;
  \begin{scope}
    \clip (-0.5,-1.35,0) -- (3.5,-1.35,0) -- (3.5,2.25,0) -- (-0.5,2.25,0) -- cycle;
    \foreach \gx in {0.1,0.8,1.5,2.2,2.9}
      \draw[bgGrid,line width=.4pt] (\gx,-1.35,0) -- (\gx,2.25,0);
    \foreach \gy in {-0.75,0.45,1.05,1.65}
      \draw[bgGrid,line width=.4pt] (-1.0,\gy,0) -- (4.7,\gy,0);
    \draw[bgKernel,line width=1.3pt] (-1.1,-0.15,0) -- (4.9,-0.15,0);
  \end{scope}
  \draw[bgPlaneEdge,line width=.8pt]
    (-0.5,-1.35,0) -- (3.5,-1.35,0) -- (3.5,2.25,0) -- (-0.5,2.25,0) -- cycle;
  \draw[bgModel,line width=1.3pt,dash pattern=on 3pt off 2pt] (1.5,1.05,1.2) -- (1.5,1.05,0);
  \node[dot=bgBayes]  at (2.9,-0.15,0) {};
  \node[sdot=bgModel!75!black] at (1.5,1.05,0) {};
  \node[dot=bgModel]  at (1.5,1.05,1.2) {};
  \node[lab,text=bgPlaneEdge,anchor=south east] at (4.42,0,0.88) {$\mathcal{H}_t^{\rho}$};
  \node[lab,text=bgKernel!85!black,anchor=west] at (3.50,0,-0.08) {$\mathcal{K}_t$};
  \node[plab,text=bgBayes,anchor=south west] at (2.8,0,-0.07) {$\hat\theta_t^{\mathrm{Bayes}}$};
  \node[plab,text=bgModel,anchor=east] at (1.92,0,1.66) {$g_t$};

\draw[co,bgModel]
  (1.98,0,1.18) to[out=30,in=195] (2.38,0,1.31);
\draw[black!40,line width=1.0pt] (1.5,1.05,1.2) -- (2.9,-0.15,0);   %
\draw[bgModel!75!black,line width=1.3pt,dash pattern=on 3pt off 2pt] (1.5,1.05,0) -- (1.5,-0.15,0);
\draw[ra] (1.5,0.79,0.045) -- (1.5,0.79,0.20) -- (1.5,1.005,0.20);   %
\draw[ra] (1.735,-0.075,0) -- (1.735,0.06,0) -- (1.545,0.06,0);       %
\node[sdot=bgModel!52!black] at (1.5,-0.15,0) {};
\node[sdot=bgModel!75!black] at (1.5,1.05,0) {};
\node[dot=bgModel] at (1.5,1.05,1.2) {};
\node[dot=bgBayes] at (2.9,-0.15,0) {};
\node[circle,inner sep=0pt,minimum size=2.4pt,fill=black!50] at (0.45,-0.15,0) {};
\node[note,text=black!50,anchor=north east] at (0.42,0,-0.07) {$0$};
\node[plab,text=bgModel!52!black,anchor=north east] at (1.9,0,-0.06) {$\Pi_t\hat\theta_t(g)$};
\node[plab,text=bgModel!75!black,anchor=east] at (1.84,0,0.54) {$\hat\theta_t(g)$};
\node[plab,text=bgModel,anchor=west] at (2.34,0,1.36) {$g_t^\perp$};
\end{tikzpicture}
\endgroup
   \vspace{-0.5\baselineskip}
  \caption{Belief geometry in $\mathcal H_t$.}
  \label{fig:belief_geometry}
\end{wrapfigure}
\vspace{-10pt}
\[
  \mathcal H_t^\rho
  :=
  \left\{x_{\rho(t+1)}^\top a:
  a\in L^2(\mathcal G_t^-;\RR^p)\right\}
  \subseteq\mathcal H_t.
\]
Projecting $g_t$ onto this subspace gives
\[g_t=x_{\rho(t+1)}^\top\hat\theta_t(g)+g_t^\perp,\]
which separates $g$'s maintained belief $\hat\theta_t(g)$ from the orthogonal component $g_t^\perp\perp\mathcal H_t^\rho$, which cannot arise from applying a belief to the addressed regressor (Figure~\ref{fig:belief_geometry}). 

Within $\mathcal H_t^\rho$, 
the Bayes kernel form motivates a further refinement: 
the subspace of beliefs expressible as weighted sums of evidence atoms.
Let $\mathcal X_t^-\subseteq\mathcal G_t^-$ be the $\sigma$-field generated by the regressors in $\mathcal G_t^-$, and define the \emph{kernel-form subspace}
\[
  \mathcal K_t
  :=
  \overline{\Bigl\{
    \textstyle\sum_{s\leq t}K_{t,s}x_{\rho(s)}y_s
    :
    K_{t,s}\in L^\infty(\mathcal X_t^-;\RR^{p\times p})
  \Bigr\}}
  \subseteq L^2(\mathcal G_t^-;\RR^p).
\]
Let $\Pi_t:L^2(\mathcal G_t^-;\RR^p)\to\mathcal K_t$ denote orthogonal projection under the inner product $\langle a,b\rangle_{\Sigma_x}:=\EE[a^\top\Sigma_xb]$ and the induced norm $\|a\|_{\Sigma_x}^2:=\EE[a^\top\Sigma_xa]$. Since Equation~\eqref{eq:bayes_kernel} shows that $\hat\theta_t^{\mathrm{Bayes}}\in\mathcal K_t$, the Pythagorean decomposition visualized in Figure~\ref{fig:belief_geometry} gives
\begin{equation}\label{eq:belief_pythagoras}
  \|g_t-g_t^{\mathrm{Bayes}}\|_t^2
  =
  \underbrace{\|g_t^\perp\|_t^2}_{\text{outside belief form}}
  +\underbrace{\|\hat\theta_t(g)-\Pi_t\hat\theta_t(g)\|_{\Sigma_x}^2}_{\text{outside kernel form}}
  +\underbrace{\|\Pi_t\hat\theta_t(g)-\hat\theta_t^{\mathrm{Bayes}}\|_{\Sigma_x}^2}_{\text{kernel mismatch}}.
\end{equation}
When an architecture applies a kernel-form belief to the addressed regressor, the first two terms vanish and only kernel mismatch remains.
Case~1 (\S\ref{sec:case1_lb}) begins directly in this regime: trivial routing isolates belief maintenance, and our projection arguments reduce the analysis to the optimal aggregation kernels each architecture can realize.
Case~2 (\S\ref{sec:case2_lb}) then introduces nontrivial routing, bringing the first two terms into play and motivating new elements of belief geometry that capture failures of evidence assembly and addressing.
Before turning to these cases, we define the attention and SSM classes.

\subsection{Architecture classes and their components}\label{sec:architectures}

We study rich classes of attention and SSMs whose components either appear in practice or serve as minimal tractable surrogates (see Appendix~\ref{app:arch} for practical correspondence). 
Throughout, \(D\), \(H\), \(d\), and \(d_e\) denote depth, head count, per-head width, and residual-stream width, respectively; \(d\) is attention's query/key/value width and an SSM head's recurrent-state width. Layer and head indices are \(r=1,\ldots,D\) and \(b=1,\ldots,H\), respectively. Because \(d_e\) primarily follows the input-token dimension and is common to both families, we suppress it from the class notation. Lower bounds are uniform over it, while upper bound constructions assume sufficient width. The displayed arguments in class notations fix architectural hyperparameters, and each class ranges over all remaining parameters, making our analysis representational. All projections are affine, with biases suppressed from notation. Both families initially embed $z_s$ as \(e_s^{(0)}=W_{\mathrm{emb}}z_s\in\RR^{d_e}\) and finally read out with $W_{\mathrm{pred}}\in\RR^{1\times d_e}$. 
\paragraph{Attention $(\mathcal F_{\mathrm{att}}(D,d,H,L))$.}
At time $t$, attention accesses the causal window $J_t:=\{\max(1,t-L+1),\ldots,t\}$. Its parameters at layer $r$ and head $b$ are $W_Q^{(r,b)},W_K^{(r,b)},W_V^{(r,b)}\in\RR^{d\times d_e}$, $W_O^{(r,b)}\in\RR^{d_e\times d}$, and $p_\ell^{(r,b)}\in\RR$ for $0\leq\ell<L$. It computes
\[
\begin{gathered}
  s_{t,s}^{(r,b)}=\left\langle W_Q^{(r,b)}e_t^{(r-1)},W_K^{(r,b)}e_s^{(r-1)}\right\rangle+p_{t-s}^{(r,b)},\quad s\in J_t,
  \qquad a_t^{(r,b)}\in\left\{s_t^{(r,b)},\operatorname{softmax}\bigl(s_t^{(r,b)}\bigr)\right\},\\[-1pt]
  o_t^{(r,b)}=\sum_{s\in J_t}a_{t,s}^{(r,b)}W_V^{(r,b)}e_s^{(r-1)},
  \qquad e_t^{(r)}=e_t^{(r-1)}+\sum_b W_O^{(r,b)}o_t^{(r,b)},
  \qquad g_t=W_{\mathrm{pred}}e_t^{(D)}.
\end{gathered}
\]
Choosing $a_t^{(r,b)}=s_t^{(r,b)}$ gives linear attention, whereas choosing $a_t^{(r,b)}=\operatorname{softmax}(s_t^{(r,b)})$ gives softmax attention; $\mathcal F_{\mathrm{att}}$ uses the former throughout, while $\mathcal F_{\mathrm{att}}^{\mathrm{sm}}$ permits either. Depth composes successive attention operations, $L$ bounds the accessible history, and $p_{t-s}$ is a tractable additive analogue of the relative-lag dependence induced by RoPE in LLaMA-type Transformers.

\paragraph{State-space models $(\mathcal F_{\mathrm{SSM}}(D,d,H,d_{\mathrm{conv}}))$.}
At time $t$, each SSM layer causally convolves its input, uses the result to write to a recurrent state, and reads that state back to the residual stream. 
At layer $r$, head $b$ has recurrent state $h_t^{(r,b)}\in\RR^d$, convolution coefficients $C_j^{(r,b)}\in\RR^{d_e}$ ($0\leq j<d_{\mathrm{conv}}$), projections $W_K^{(r,b)},W_V^{(r,b)},W_Q^{(r,b)}\in\RR^{d\times d_e}$ and $W_O^{(r,b)}\in\RR^{d_e\times d}$, and transition $\Phi_t^{(r,b)}\in\RR^{d\times d}$. Initial states and token padding are zero. With $\odot$ denoting coordinatewise multiplication, it computes:
\[
\resizebox{\textwidth}{!}{$
\begin{gathered}
  \widetilde e_t^{(r,b)}
  =\sum_{j=0}^{d_{\mathrm{conv}}-1}C_j^{(r,b)}\odot e_{t-j}^{(r-1)},\;
  u_t^{(r,b)}=(W_K^{(r,b)}\widetilde e_t^{(r,b)})\odot(W_V^{(r,b)}\widetilde e_t^{(r,b)}),\;
  h_t^{(r,b)}=\Phi_t^{(r,b)}h_{t-1}^{(r,b)}+u_t^{(r,b)},\\[-1pt]
  o_t^{(r,b)}=(W_Q^{(r,b)}\widetilde e_t^{(r,b)})\odot h_t^{(r,b)},\qquad
  e_t^{(r)}=e_t^{(r-1)}+\sum_b W_O^{(r,b)}o_t^{(r,b)},\qquad
  g_t=W_{\mathrm{pred}}e_t^{(D)}.
\end{gathered}
$}
\]
This coordinatewise formulation includes outer-product writes and matrix--vector reads without extra recurrent state.
The transition is fixed when $\Phi_t=\Phi$ and selective when it depends on $\widetilde e_t$. We allow
\[
  \Phi_t\in\Big\{
    \underbrace{\Phi}_{\text{fixed}},\;
    \underbrace{\alpha_tI}_{\text{scalar gate}},\;
    \underbrace{\operatorname{diag}(\boldsymbol\alpha_t)}_{\text{diagonal gate}},\;
    \underbrace{I-\beta_t k_tk_t^\top}_{\text{delta rule}},\;
    \underbrace{\operatorname{diag}(\boldsymbol\alpha_t)+r_t^{\mathrm{out}}(r_t^{\mathrm{in}})^\top}_{\text{generalized delta}}
  \Big\}.
\]
Here $\Phi$ is learned but input-independent. The scalar gates $\alpha_t,\beta_t$ and the coordinates of $\boldsymbol\alpha_t$ are sigmoids of affine functions of $\widetilde e_t$, while $k_t,r_t^{\mathrm{out}},r_t^{\mathrm{in}}$ are affine in $\widetilde e_t$. \(\mathcal F_{\mathrm{SSM}}\) uses fixed transitions throughout; \(\mathcal F_{\mathrm{SSM}}^{\mathrm{sel}}\) permits any displayed transition in its first layer and uses fixed transitions thereafter; and \(\mathcal F_{\mathrm{SSM}}^{\mathrm{aff}}\) permits any displayed transition in every layer with each sigmoid replaced by the identity.

\paragraph{Practical counterparts.}
For attention, our main-text experiments use LLaMA-type Transformers \citep{touvronLLaMAOpenEfficient2023} with causal softmax attention, RoPE, and SwiGLU MLPs. For SSMs, we use Mamba-2 \citep{daoTransformersAreSSMs2024} with scalar-gated selective transitions and additional write and read nonlinearities.  Both include RMSNorm and residual connections.

\begin{remark}[Architectural scope]\label{rem:architecture_scope}
One-layer linear attention $\mathcal F_{\mathrm{att}}(1,d,H,L)$ is standard in theoretical ICL \citep[e.g.,][]{oswaldTransformersLearnIncontext2023}, with one-layer fixed-transition SSMs $\mathcal F_{\mathrm{SSM}}(1,d,H,1)$ providing its recurrent counterpart \citep[e.g.,][]{katharopoulosTransformersAreRNNs2020}. 
Case~1 begins with these models, and as new demands arise, later cases expand the comparison to our substantially richer full classes: 
both families vary depth and width; attention also allows variable context length, positional bias, and linear or softmax attention, while SSMs allow convolution and selective transitions.
\end{remark}

\section{Case 1: belief maintenance}\label{sec:case1_lb}
Case~1 recovers the canonical Bayesian ICLR setup with the routing map $\rho(s)=s$. Under this route, each past token $z_s$ contains both factors of its evidence atom $x_sy_s$, and the current token $z_t$ contains the next target's regressor $x_{t+1}$. Evidence assembly and addressing therefore pose no additional demands, leaving belief maintenance as the focus of Case~1. Accordingly, the \emph{aggregation kernel} introduced by our belief geometry becomes the central tool: we use the kernel-mismatch term in \eqref{eq:belief_pythagoras} to compare the two families by how well their kernels approximate the Bayes kernel.

\paragraph{Stationary kernels.}
Applying the kernel-form projection in Figure~\ref{fig:belief_geometry} to the simplest classes of both families---one-layer linear attention $\mathcal F_{\mathrm{att}}(1,d,H,L)$ and one-layer fixed-transition SSMs $\mathcal F_{\mathrm{SSM}}(1,d,H,1)$---reveals our key structural observation: they can at best realize \emph{stationary aggregation kernels}, $K_{t,s}=\kappa_{t-s}$, whose weights depend only on lag and not on the current time or realized history (Appendix~\ref{app:case1_vector_stationary}). Larger width or context length can enrich the stationary profiles they realize, but cannot remove this restriction. We therefore first optimize over all stationary kernels to obtain an architecture-independent benchmark before returning to comparing the two families.

Because our performance measure is cumulative regret, a stationary kernel's inability to adapt to time creates a \emph{bias--variance trade-off}. Spreading weight across larger lags aggregates more evidence atoms and lowers variance but increases cold-start bias before those lags become available; concentrating weight on small lags reduces cold-start bias but aggregates fewer evidence atoms and raises variance. To quantify this trade-off, write $\Sigma_x^{1/2}\Sigma_\theta\Sigma_x^{1/2}=U\operatorname{diag}(\lambda_1,\ldots,\lambda_p)U^\top$ for the eigendecomposition of the prior covariance in the $\Sigma_x$-geometry. For each eigendirection, set $v_i:=\lambda_i+\sum_{j=1}^p\lambda_j+\sigma^2$, the corresponding evidence-innovation variance, and summarize the resulting bias--variance scales by $\zeta:=\sum_{i=1}^p\sqrt{\lambda_i v_i}$ and $\bar\zeta:=\sqrt{(\sum_{i=1}^p\lambda_i)(\sum_{i=1}^p v_i)}$. Cauchy--Schwarz gives $\bar\zeta\geq\zeta$. 
Optimizing the bias--variance trade-off over all stationary kernels yields the following universal regret floor.

\begin{theorem}[Stationary floor]\label{thm:stationary_floor}
Every $g$ whose kernel is stationary obeys $\mathrm{Reg}_T(g)\geq\zeta\sqrt T+o(\sqrt T)$.
\end{theorem}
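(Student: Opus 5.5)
The plan is to reduce the regret of a stationary-kernel predictor to a family of decoupled scalar bias--variance problems, one per eigendirection $i$, and to lower-bound each by the value of an infinite-horizon discrete quadratic control problem, which equals $\sqrt{\lambda_i v_i T}$ to leading order.

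\textbf{Step 1: reduction to estimation error.} With $\rho(s)=s$, a stationary-kernel predictor has the form $g_t=x_{t+1}^\top\hat\theta_t$ with $\hat\theta_t=\sum_{\ell<t}\kappa_\ell x_{t-\ell}y_{t-\ell}$ and deterministic matrices $\kappa_\ell$. Since $x_{t+1}$ is independent of the past, $\|g_t-g_t^{\mathrm{Bayes}}\|_t^2=\|\hat\theta_t-\hat\theta_t^{\mathrm{Bayes}}\|_{\Sigma_x}^2$. Because $\hat\theta_t^{\mathrm{Bayes}}$ is the posterior mean, this equals $\EE\|\hat\theta_t-\theta\|_{\Sigma_x}^2-\EE\|\hat\theta_t^{\mathrm{Bayes}}-\theta\|_{\Sigma_x}^2$. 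The Bayes risk is $O(1/t)$: it is at most that of ridge regression with $t$ samples. Summed over $t\le T$ it contributes only $O(\log T)=o(\sqrt T)$. It therefore suffices to show $\sum_{t\le T}\EE\|\hat\theta_t-\theta\|_{\Sigma_x}^2\geq\zeta\sqrt T+o(\sqrt T)$.

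\textbf{Step 2: exact bias--variance decomposition.} Whiten by setting $\tilde\theta=U^\top\Sigma_x^{1/2}\theta$ and $\tilde\kappa_\ell=U^\top\Sigma_x^{1/2}\kappa_\ell\Sigma_x^{1/2}U$. Each atom splits as $x_sy_s=\Sigma_x\theta+\xi_s$, where conditionally on $\theta$ the $\xi_s$ are independent across $s$ and mean zero. A Gaussian fourth-moment computation gives whitened covariance $\tilde\theta\tilde\theta^\top+\|\tilde\theta\|^2I+\sigma^2I$. Averaging over the prior gives $\operatorname{diag}(v_1,\dots,v_p)$, which is exactly where $v_i$ comes from. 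The bias is a function of $\theta$ and the fluctuation is conditionally mean zero, so they are orthogonal. With $A_t:=\sum_{\ell<t}\tilde\kappa_\ell$ this gives
\[
\EE\|\hat\theta_t-\theta\|_{\Sigma_x}^2=\sum_{i,j}\Big[\lambda_j\big((A_t-I)_{ij}\big)^2+v_j\sum_{\ell<t}(\tilde\kappa_\ell)_{ij}^2\Big].
\]
Every term is nonnegative, so dropping the off-diagonal $(i\neq j)$ entries yields a lower bound. This leaves $p$ independent scalar problems. For each $i$, writing $k_\ell=(\tilde\kappa_\ell)_{ii}$ and $a_t=\sum_{\ell<t}k_\ell$, I need to show
\[
J_i:=\sum_{t\le T}\lambda_i(1-a_t)^2+v_i\sum_{\ell<T}(T-\ell)k_\ell^2\;\geq\;\sqrt{\lambda_iv_iT}\,(1+o(1)).
\]

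\textbf{Step 3: the scalar variational bound (main obstacle).} A crude Cauchy--Schwarz bound of $a_t$ by the variance budget only reproduces the $\sqrt T$ scaling, with the constant $\sqrt{2/3}$ instead of $1$. The sharp constant requires solving the problem exactly.

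First truncate. Fix $\delta\in(0,1)$ and keep only $t,\ell\le\delta T$. On that range $T-\ell\ge(1-\delta)T$, so
\[
J_i\geq\sum_{t=0}^{\delta T}\Big[\lambda(1-a_t)^2+\mu(a_{t+1}-a_t)^2\Big],\qquad \mu=(1-\delta)v_iT,\quad a_0=0.
\]
This is a linear-quadratic regulator in the state $e_t=1-a_t$ with $e_0=1$. Its finite-horizon value is $P_N e_0^2$, where the Riccati recursion is $P_{n+1}=\lambda+\mu P_n/(\mu+P_n)$ with $P_0=\lambda$ (the first step has only the state cost). The recursion increases monotonically to the fixed point $P^\ast=\tfrac{\lambda}{2}+\sqrt{\lambda\mu+\lambda^2/4}=\sqrt{\lambda\mu}\,(1+o(1))$.

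The delicate point is that the horizon $N=\delta T$ must be long enough for $P_N$ to approach $P^\ast$. The convergence rate is governed by the closed-loop factor $\mu/(\mu+P^\ast)\approx1-\sqrt{\lambda/\mu}$, so the relaxation time is about $\sqrt{T}$, while $N=\delta T\gg\sqrt T$. I would make this quantitative by bounding $P^\ast-P_n$ geometrically, or by exhibiting the completed-square identity directly, which gives $P_N\ge\sqrt{\lambda\mu}\,(1-o(1))$.

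Combining, $J_i\ge\sqrt{(1-\delta)\lambda_iv_iT}\,(1-o(1))$. Summing over $i$, then letting $T\to\infty$ and then $\delta\to0$, gives $\zeta\sqrt T+o(\sqrt T)$. As a sanity check, the exponential kernel $k_\ell=c(1-c)^\ell$ with $c=\sqrt{\lambda_i/(v_iT)}$ attains this value, so the floor is tight.
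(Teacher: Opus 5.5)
Your proposal is correct and follows the paper's proof essentially step for step. You subtract the Bayes baseline at cost $O(\log T)$, use the whitened bias--variance identity with the off-diagonal kernel entries discarded, truncate each scalar problem to a constant-coefficient quadratic chain, and lower-bound it by the dynamic-programming/Riccati value, whose iterates reach the fixed point $\sqrt{\lambda_i v_i T}\,(1+o(1))$ because the truncation length far exceeds the $\Theta(\sqrt T)$ relaxation time. The paper differs only cosmetically: it takes $\omega_T\ll m_T\ll T$ and uses an exact cross-ratio identity, where you use a fixed $\delta$, a double limit, and a geometric-convergence estimate. The only slip is that your truncated sum includes the $t=0$ state cost $\lambda(1-a_0)^2=\lambda$, which is not a term of $J_i$. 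What actually lower-bounds $J_i$ is therefore $P_N-\lambda$ (the paper's $q_N$), a difference that is irrelevant at order $\sqrt T$.
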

Therefore, neither class can beat this floor by increasing width or context length; these resources only affect how closely each class approaches it. 
Rather surprisingly, the proposition below shows that the SSM attains the floor in Thm.~\ref{thm:stationary_floor}, whereas attention necessarily pays a larger leading constant.
\begin{proposition}[Stationary pair]\label{prop:stationary_pair}
When \(Hd<p\), both classes incur \(\Omega(T)\) regret (uniformly in \(L\) for attention). When \(Hd\ge p\), their leading-order optimal kernels are:
\begin{enumerate}[label=\emph{(\alph*)},wide=0pt,labelsep=.5em,
                  topsep=0pt,itemsep=0pt,parsep=0pt,partopsep=0pt]
\item $\mathcal F_{\mathrm{att}}(1,d,H,L)$: the \emph{uniform kernel} $\kappa_u=\tfrac{1}{L}\Sigma_x^{-1}+O(\tfrac{1}{L^2}+\tfrac{1}{T})$ for $0\leq u<L$ and $0$ otherwise.
\item $\mathcal F_{\mathrm{SSM}}(1,d,H,1)$: the \emph{exponential kernel} $\kappa_u=\Sigma_x^{-1/2}U\operatorname{diag}((1-\alpha_i^\star)(\alpha_i^\star)^u)_{i=1}^pU^\top\Sigma_x^{-1/2}$ for $u\geq0$, where $\alpha_i^\star=(2\sqrt{v_iT/\lambda_i}-1)/(2\sqrt{v_iT/\lambda_i}+1)$.
\end{enumerate}
For fixed $L$, $\mathrm{Reg}_T(\mathcal F_{\mathrm{att}}(1,d,H,L))=\Omega(T)$. The optimal $L^\star(T):=\sqrt{3T(\sum_i v_i)/(\sum_i\lambda_i)}$ gives
\[
  \mathrm{Reg}_T\bigl(\mathcal F_{\mathrm{att}}(1,d,H,L^\star(T))\bigr)
  =\tfrac{2}{\sqrt3}\bar\zeta\sqrt T+o(\sqrt T),\;\;\;
  \mathrm{Reg}_T\bigl(\mathcal F_{\mathrm{SSM}}(1,d,H,1)\bigr)
  =\zeta\sqrt T+o(\sqrt T).
\]
\end{proposition}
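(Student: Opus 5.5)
Our plan is to reduce both classes, via the kernel-form projection of \eqref{eq:belief_pythagoras}, to optimization problems over stationary kernels $K_{t,s}=\kappa_{t-s}$. We use the structural fact cited before Theorem~\ref{thm:stationary_floor} (Appendix~\ref{app:case1_vector_stationary}): with $\rho(s)=s$, the projection $\Pi_t\hat\theta_t(g)$ of any one-layer linear-attention or fixed-transition-SSM predictor has the form $\sum_{s\le t}\kappa_{t-s}x_sy_s$. Here $\kappa_u$ is a $p\times p$ matrix of rank at most $Hd$, the realized bilinear read/write map. For attention it is supported on $0\le u<L$. For the SSM it is $\kappa_u=W_O$-type readout composed with $\Phi^u$ and the write map, so its entries are generated by the powers of a fixed $d\times d$ matrix per head.

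\textbf{Step 1: the common regret functional.} We will write the per-step kernel mismatch in the whitened coordinates $\tilde\kappa_u:=\Sigma_x^{1/2}\kappa_u\Sigma_x^{1/2}$, rotated by $U$. Using Gaussian fourth moments ($\EE[x_sx_s^\top\theta\theta^\top x_sx_s^\top]$ and the noise term), the cross terms between distinct lags are exactly the cross terms of $\sum_u\kappa_u\Sigma_x$ against $I$. The diagonal terms give variance $\sum_{u<t}\operatorname{tr}(\tilde\kappa_u\operatorname{diag}(v)\tilde\kappa_u^\top)$, up to lower-order corrections from $\hat\theta^{\mathrm{Bayes}}_t$. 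The regret then becomes, to leading order,
\[
\sum_{t\le T}\Big[\operatorname{tr}\big((I-\textstyle\sum_{u<t}\tilde\kappa_u)\Lambda(I-\sum_{u<t}\tilde\kappa_u)^\top\big)+\sum_{u<t}\operatorname{tr}(\tilde\kappa_u V\tilde\kappa_u^\top)\Big].
\]
The $-\ell_t(g^{\mathrm{Bayes}})$ term is $O(\log T)=o(\sqrt T)$. We would check this expansion carefully. The main obstacle is making the $v_i=\lambda_i+\operatorname{tr}\Lambda+\sigma^2$ constants emerge exactly, and showing that off-diagonal parts of $\tilde\kappa_u$ in the $U$-basis only add nonnegative cost. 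With that in hand, diagonal kernels are optimal and the problem decouples into $p$ scalar problems. The rank obstruction follows directly: if $Hd<p$, some direction has $\tilde\kappa_u$ vanishing there for every $u$, so the bias term in that direction is $\lambda_i>0$ at every $t$, giving $\Omega(T)$ uniformly in $L$.

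\textbf{Step 2: SSM.} With $\Phi$ diagonal in the $U$-basis and eigenvalue $\alpha_i$ in direction $i$ (achievable when $Hd\ge p$), we have $\tilde\kappa_u=c_i\alpha_i^u$. We will compute the scalar cost $\sum_t[\lambda_i(1-c_i\frac{1-\alpha_i^t}{1-\alpha_i})^2+v_ic_i^2\frac{1-\alpha_i^{2t}}{1-\alpha_i^2}]$. Set $c_i=1-\alpha_i$ and $1-\alpha_i=\epsilon$. The bias sum is then $\approx\lambda_i/(2\epsilon)$ and the variance sum is $\approx v_iT\epsilon/2$. Minimizing gives $\epsilon\approx\sqrt{\lambda_i/(v_iT)}$, with cost $\sqrt{\lambda_iv_iT}$. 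Exact minimization of the geometric sums yields the stated $\alpha_i^\star$. Summing over $i$ gives $\zeta\sqrt T$, which matches Theorem~\ref{thm:stationary_floor} from below, so it is optimal within the class. We would also argue that a general fixed $\Phi$ cannot beat the floor by Theorem~\ref{thm:stationary_floor} itself.

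\textbf{Step 3: attention.} Here we must show that the realizable lag profile is essentially constant. In linear attention, $\kappa_u$ arises from the score $\langle W_Qe_t,W_Ke_s\rangle+p_{t-s}$. Only the $p_{t-s}$ term multiplies an evidence atom with a history-independent weight, while query–key terms yield the shared matrix $x_{t+1}$-bilinear readout. Thus $\kappa_u=(\gamma+p_u)M$-like, with each lag's contribution supplied through a single shared value map. We would attempt to show that, after projection, the optimal profile over $u<L$ with the window constraint is flat up to $O(1/L^2+1/T)$. The window truncation forces $\sum_u\kappa_u\approx\Sigma_x^{-1}$ for $t\ge L$. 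If the flatness claim fails because $p_u$ is free, we would instead show that the additive $p_u$ enters only through $y$-free terms that are orthogonal to $\mathcal K_t$, leaving the uniform kernel. Plugging $\tilde\kappa_u=\frac1L I$ into Step 1 gives bias $\sum_i\lambda_i\sum_{t<L}(1-t/L)^2\approx L\operatorname{tr}\Lambda/3$ and variance $\approx T\sum_iv_i/L$. Fixed $L$ gives $\Omega(T)$. Optimizing over $L$ gives $L^\star$ and cost $\frac{2}{\sqrt3}\sqrt{\operatorname{tr}\Lambda\sum_iv_i}\sqrt T=\frac{2}{\sqrt3}\bar\zeta\sqrt T$. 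Here a single scalar $L$ shared across directions explains why $\bar\zeta$ replaces $\zeta$.
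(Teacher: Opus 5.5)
Your Steps 1 and 2 follow the paper's route: the exact bias--variance formula with innovation covariance $\Gamma$, the reduction to eigendirections, and the SSM projection $K_u=W_{\mathrm{read}}\Phi^uW_{\mathrm{write}}$ fed into Theorem~\ref{thm:stationary_floor} with the modewise exponential witness. The gap is in Step~3, which is where the attention half of the proposition is decided, and your primary mechanism there is wrong.

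The positional bias $p_{t-s}$ multiplies only the affine value $W_Ve_s$. For $s<t$, that product contains no copy of the fresh regressor $x_{t+1}$, so $\EE[\widetilde x_{t+1}(\cdot)\mid\mathcal G_t^-]$ of it vanishes: it lands entirely in $g_t^\perp$ and never enters $\hat\theta_t(g)$. An evidence atom can reach the belief only through the trilinear score--value term, with the query supplying $x_{t+1}$ and the key and value supplying $x_s$ and $y_s$ in either order. Because the maps are shared across positions, its coefficient $K=\sum_bA_bC_b$ is one matrix at every past lag in the window (Lemma~\ref{lem:vector_attention_projection}). Uniformity is therefore a structural restriction of the class, not the outcome of an optimization.

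Your plan to show that the optimal profile over $u<L$ is flat cannot work. If $p_u$ could shape a profile $(\gamma+p_u)M$, a window-truncated exponential (decay $1-\Theta(T^{-1/2})$, $L\gg\sqrt T$) would be realizable. It would reach about $\bar\zeta\sqrt T$, beating the $\tfrac{2}{\sqrt3}\bar\zeta\sqrt T$ you are trying to prove. Your fallback reaches the right conclusion for the wrong reason: the discarded terms are not $y$-free (the value may be $y_s$); they are free of $x_{t+1}$.

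You also never account for the current atom. Since $z_t$ already contains $x_{t+1}$, both architectures produce a separate lag-zero weight $K_0$. It comes from attention's cubic self-term and from the SSM's current write times current read, and it is not confined to $\operatorname{col}(K)$ or $\operatorname{col}(W_{\mathrm{read}})$.

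This breaks your width floor as written. In a missed direction the per-step error is not $\lambda_i$, because the single current atom can reduce it. The paper projects onto that complement, optimizes the relaxed $PK_0$, and obtains the smaller but still positive $d_i=\lambda_iv_i/(\lambda_i+v_i)$, so $\Omega(T)$ survives.

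For attention's optimal kernel, a free $\Delta:=K_0-K$ is harmless for the $\sqrt T$ constant, since one tap moves the cost by $O(1)$ when $L=\Theta(\sqrt T)$. But the regret is then flat to $O(1)$ over $\Delta$ of order $1/L$, and the relaxed optimizer actually shifts the lag-zero weight by order $1/L$. So the claimed $\kappa_0=\tfrac1L\Sigma_x^{-1}+O(L^{-2}+T^{-1})$ is not established. The paper closes this by showing that the parameters generating $\Delta$ (placements $y_t\widetilde x_{s+1}^\top D_1\widetilde x_s$ and $y_s\widetilde x_{s+1}^\top D_2\widetilde x_t$) also create mutually orthogonal nuisance terms at every past source in the window. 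These cost at least $\tfrac m2\|\Delta\Gamma^{1/2}\|_F^2$ with $m\approx TL$, which pins $\|\Delta\|_F=O(1/T)$.

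Finally, for the class lower bound you must minimize over all uniform amplitudes (the paper's $\mathcal E_{L,T}$) and over all $L$, not just evaluate the normalized $1/L$ kernel. That step is routine.
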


\paragraph{Architectural lessons.}
The optimal kernels reveal how the two families maintain beliefs: attention averages evidence over a window set by its context length, whereas the SSM discounts past evidence at rates set by its recurrent transition.
At first glance, one might expect full-context attention ($L=T$) to be optimal and the SSM's exponentially decaying kernel to forget useful evidence, since $\theta$ is fixed and no observation becomes stale. But linear attention's optimal kernel cannot adapt to elapsed time: at $L=T$, it weights every evidence atom by $\Theta(1/T)$, severely underweighting the early evidence and causing large cold-start bias. Balancing this against the variance from a shorter window then yields $L^\star(T)=\Theta(\sqrt T)$, not $T$. 
This also provides a theoretical analogue of empirical findings that longer context windows need not improve LLMs' effective context use \citep{liuLostMiddleHow2024,hsiehRULERWhatsReal2024}.
Even at the optimal context length, attention pays a shape factor \(2/\sqrt3\): the SSM's exponential kernel achieves a better bias--variance trade-off than attention's uniform kernel.
Beyond this shape factor, the architectural comparison also depends on the task's statistical structure, namely how the signal variance is spread across eigendirections (the \(\lambda_i\)).
The SSM tunes a separate decay rate to each eigendirection, whereas attention's single context length must compromise across all of them.
Thus, at fixed noise and total signal variance, concentrating the signal in fewer eigendirections lowers \(\zeta\) but leaves \(\bar\zeta\) unchanged, increasing the SSM's advantage.
App.~\hyperref[app:case1_statistical_effects]{\ref*{app:case1_stationary_pair}} further examines these statistical effects and also the loss curvature at the optimal weights; App.~\ref{app:experiments_case1_analytical} experimentally validates Prop.~\ref{prop:stationary_pair}.

\paragraph{Stationary optima as inductive biases.}
Practical attention and SSMs are not confined to stationary kernels. Indeed, normalization and additional depth let both families escape the stationary regret floor (Appendix~\ref{app:case1_adaptive}).
However, the stationary comparison remains consequential: it isolates the temporal weighting supplied by each family's core mechanism and predicts which kernels richer architectures in each family learn most readily under finite optimization---their inductive biases.

\FloatBarrier
\begin{wrapfigure}[20]{r}{0.56\textwidth}
  \vspace{-1.5\baselineskip}
  \centering
  \includegraphics[width=\linewidth]{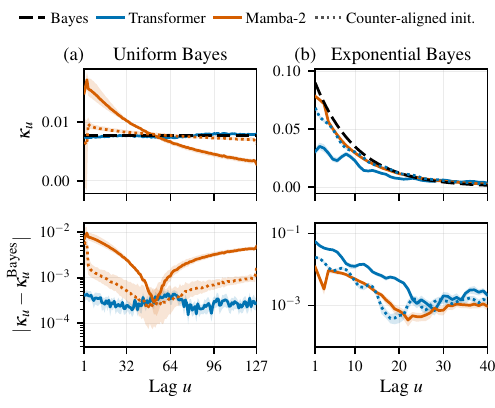}
  \begin{minipage}{0.75\linewidth}
    \vspace{-0.6\baselineskip}
    \caption{Kernel alignment in practical models under finite optimization. Shading shows 95\% bootstrap CIs over 20 seeds.}
    \label{fig:case1_kernel_alignment}
  \end{minipage}
\end{wrapfigure}

We test this hypothesis using single-layer instances of the practical architectures described in \S\ref{sec:architectures}. 
We use two scalar Gaussian filtering tasks: one with a uniform and another with an exponential Bayes kernel.
Fig.~\ref{fig:case1_kernel_alignment} compares each regime's Bayes kernel with the learned aggregation kernels estimated from model predictions.
App.~\ref{app:experiments_case1_practical} gives full experimental details.

Regardless of the regime, the Transformer learns flatter kernels, whereas Mamba-2 learns exponentially decaying ones, mirroring the in-class optimal kernels of Prop.~\ref{prop:stationary_pair}.
Correspondingly, the Transformer performs better in the uniform regime and Mamba-2 in the exponential regime.
The mismatch is asymmetric under random initialization: Mamba-2 incurs $1.9\times$ the Transformer's regret in the uniform regime, whereas the Transformer incurs $4.3\times$ Mamba-2's in the exponential regime, echoing Proposition~\ref{prop:stationary_pair}, where the exponential kernel attains the stationary optimum. 
To distinguish representational limitations from inductive bias, we also design a counter-aligned initialization (dotted lines) that biases each model toward the other Bayes kernel---uniform for Mamba-2 and exponential for the Transformer. 
Without changing either architecture, this initialization largely closes both performance gaps, showing that both models can closely approximate either kernel but more readily learn the one favored by their stationary core.

\section{Case 2: routing}\label{sec:case2_lb}
With a nontrivial routing map $\rho(s)<s$, each label $y_s$ is separated from the regressor $x_{\rho(s)}$ that generated it. This creates two architectural demands. First, the architecture must assemble each evidence atom $x_{\rho(s)}y_s$ from factors appearing in different tokens. Second, it must address the past regressor $x_{\rho(t+1)}$ to which the maintained belief is applied. Once an architecture satisfies these demands, the
\begin{wrapfigure}[8]{r}{0.42\textwidth}
  \vspace{-1.2\baselineskip}
  \centering
\begingroup
\definecolor{rgBayes}{HTML}{000000}
\definecolor{rgModel}{HTML}{D55E00}
\definecolor{rgKernel}{HTML}{009E73}
\definecolor{rgInk}{HTML}{333333}
\definecolor{rgPlaneFill}{HTML}{F5F5F5}
\definecolor{rgPlaneEdge}{HTML}{808080}
\definecolor{rgGrid}{HTML}{DCDCDC}
\begin{tikzpicture}[
  x={(1cm,0cm)}, y={(0.42cm,0.40cm)}, z={(0cm,1cm)},
  lab/.style={font=\footnotesize,inner sep=1.4pt},
  plab/.style={font=\footnotesize,inner sep=1.4pt},
  note/.style={font=\scriptsize,inner sep=1.2pt},
  dot/.style={circle,inner sep=0pt,minimum size=3.2pt,fill=#1,draw=white,line width=.35pt},
  sdot/.style={circle,inner sep=0pt,minimum size=3.2pt,fill=#1,draw=white,line width=.35pt},
  ra/.style={line width=.6pt,draw=black!50},
  co/.style={-{Stealth[length=3.1pt,width=2.3pt,inset=0.5pt,round]},line width=.42pt,line cap=round,shorten <=0.6pt,shorten >=0.8pt},
]
  \fill[rgPlaneFill] (-0.5,-1.35,0) -- (3.5,-1.35,0) -- (3.5,2.25,0) -- (-0.5,2.25,0) -- cycle;
  \begin{scope}
    \clip (-0.5,-1.35,0) -- (3.5,-1.35,0) -- (3.5,2.25,0) -- (-0.5,2.25,0) -- cycle;
    \foreach \gx in {0.1,0.8,1.5,2.2,2.9}
      \draw[rgGrid,line width=.4pt] (\gx,-1.35,0) -- (\gx,2.25,0);
    \foreach \gy in {-0.75,0.45,1.05,1.65}
      \draw[rgGrid,line width=.4pt] (-1.0,\gy,0) -- (4.7,\gy,0);
    \draw[rgKernel,line width=1.3pt] (-1.1,-0.15,0) -- (4.9,-0.15,0);
  \end{scope}
  \draw[rgPlaneEdge,line width=.8pt]
    (-0.5,-1.35,0) -- (3.5,-1.35,0) -- (3.5,2.25,0) -- (-0.5,2.25,0) -- cycle;
  \draw[rgModel,line width=1.3pt,dash pattern=on 3pt off 2pt] (1.5,1.05,1.2) -- (1.5,1.05,0);
  \node[dot=rgBayes]  at (2.9,-0.15,0) {};
  \node[sdot=rgModel!75!black] at (1.5,1.05,0) {};
  \node[dot=rgModel]  at (1.5,1.05,1.2) {};
  \node[lab,text=rgPlaneEdge,anchor=south east] at (4.42,0,0.88) {$\mathcal{H}_t^{\rho}$};
  \node[lab,text=rgKernel!85!black,anchor=west] at (3.50,0,-0.08) {$\mathcal{K}_t$};
  \node[plab,text=rgBayes,anchor=south west] at (2.8,0,-0.07) {$\hat\theta_t^{\mathrm{Bayes}}$};
  \node[plab,text=rgModel,anchor=east] at (1.92,0,1.66) {$g_t$};

\fill[rgKernel!16] (0.45,-0.15,0) -- plot[domain=0:48.8,samples=22]
  ({0.45+1.0*cos(\x)},{-0.15+1.0*sin(\x)},0) -- cycle;
\draw[rgKernel!55,line width=.6pt] plot[domain=0:48.8,samples=22]
  ({0.45+1.0*cos(\x)},{-0.15+1.0*sin(\x)},0);
\fill[rgModel!18] (0.45,-0.15,0) -- plot[domain=0:37.0,samples=22]
  ({0.45+0.95*cos(\x)*0.6585},{-0.15+0.95*cos(\x)*0.7526},{0.95*sin(\x)}) -- cycle;
\draw[rgModel!55,line width=.6pt] plot[domain=0:37.0,samples=22]
  ({0.45+0.95*cos(\x)*0.6585},{-0.15+0.95*cos(\x)*0.7526},{0.95*sin(\x)});
\draw[rgKernel,line width=1.3pt] (0.1,-0.15,0) -- (2.3,-0.15,0);
\draw[black!48,line width=.7pt] (0.45,-0.15,0) -- (1.5,1.05,0);
\draw[black!48,line width=.7pt] (0.45,-0.15,0) -- (1.5,1.05,1.2);
\node[circle,inner sep=0pt,minimum size=2.4pt,fill=black!50] at (0.45,-0.15,0) {};
\node[note,text=black!50,anchor=north east] at (0.42,0,-0.07) {$0$};
\node[sdot=rgModel!75!black] at (1.5,1.05,0) {};
\node[dot=rgModel] at (1.5,1.05,1.2) {};
\draw[co,black!50]
  (1.08,0.2,0) to[out=-120,in=150] (1.35,0,-0.35);
\node[note,text=black!62,align=center,anchor=north west] at (1.32,0,-0.22)
  {belief alignment};
\draw[co,black!50]
  (0.92,0.409,0.232) to[out=120,in=-35] (0.2,-0.1,1.0);
\node[note,text=black!62,align=center,anchor=east] at (0.30,0,1.00)
  {addressing\\fidelity};
\node[plab,text=rgModel!75!black,anchor=west] at (1.91,0.2,0.45) {$\hat\theta_t(g)$};
\end{tikzpicture}
\endgroup
   \vspace{-0.5\baselineskip}
  \caption{Floor factors in belief geometry.}
  \label{fig:routing_geometry}
\end{wrapfigure}
remaining problem is the Case~1 kernel comparison. 
Accordingly, two elements of belief geometry become central in Case 2:
\emph{belief alignment}, which measures agreement between the implicit and Bayes beliefs and therefore captures failures of evidence assembly, and \emph{addressing fidelity}, which measures whether the prediction applies its implicit belief to the correct regressor. 
Together, they yield the following lower bound.

\begin{theorem}[Two-factor floor]\label{thm:routing_floor}
For every $g$, with $\mathrm{corr}_{\Sigma_x}^2$ denoting the squared cosine under $\langle\cdot,\cdot\rangle_{\Sigma_x}$,
\[
  \mathrm{Reg}_T(g)
  \;\geq\;
  \sum_{\substack{t<T:\\t+1\ \mathrm{matched}}}
  \|\hat\theta_t^{\mathrm{Bayes}}\|_{\Sigma_x}^2
  \biggl(
    1
    -
    \underbrace{
      \mathrm{corr}_{\Sigma_x}^2\!\left(
        \hat\theta_t(g),\hat\theta_t^{\mathrm{Bayes}}
      \right)
    }_{\text{belief alignment}}
    \cdot
    \underbrace{
      \frac{\|\hat\theta_t(g)\|_{\Sigma_x}^2}
           {\|\hat\theta_t(g)\|_{\Sigma_x}^2+\|g_t^\perp\|_t^2}
    }_{\text{addressing fidelity}}
  \biggr).
\]
\end{theorem}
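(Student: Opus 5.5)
The plan is to prove the bound pointwise in $t$ and then sum. For each $t<T$ with $t+1$ matched, it suffices to show
\[
\|g_t-g_t^{\mathrm{Bayes}}\|_t^2 \;\ge\; \|\hat\theta_t^{\mathrm{Bayes}}\|_{\Sigma_x}^2\bigl(1-c_t^2 f_t\bigr),
\]
where $c_t^2$ is the belief-alignment factor and $f_t$ the addressing-fidelity factor. Unmatched steps contribute nonnegative terms that we simply drop. Degenerate cases such as $\hat\theta_t(g)=0$ are handled by the convention that the product is $0$; the bound then reads $\|g_t-g_t^{\mathrm{Bayes}}\|_t^2\ge\|g_t^{\mathrm{Bayes}}\|_t^2$, which follows from orthogonality below.

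The first step is to justify the isometry between $\mathcal H_t^\rho$ and $L^2(\mathcal G_t^-;\RR^p)$ with the $\Sigma_x$ inner product. Since $\rho(t+1)$ is matched, the regressor $x_{\rho(t+1)}$ is independent of $\mathcal G_t^-$ by construction of the held-out history. Hence for $a,b\in L^2(\mathcal G_t^-)$,
\[
\EE[(x_{\rho(t+1)}^\top a)(x_{\rho(t+1)}^\top b)]=\EE[a^\top\Sigma_x b].
\]
This is what makes the third term in \eqref{eq:belief_pythagoras} and the $\|\cdot\|_{\Sigma_x}$ norms meaningful, and it is the one modeling point I would check carefully. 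If $\rho(t+1)=t+1$, the regressor $x_{t+1}$ sits in $z_t$, and $\mathcal G_t^-$ must still exclude it, together with anything else correlated with it.

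Next, $g_t^{\mathrm{Bayes}}=x_{\rho(t+1)}^\top\hat\theta_t^{\mathrm{Bayes}}\in\mathcal H_t^\rho$, and $g_t^\perp\perp\mathcal H_t^\rho$. Writing $a:=\hat\theta_t(g)$ and $b:=\hat\theta_t^{\mathrm{Bayes}}$, Pythagoras gives
\[
\|g_t-g_t^{\mathrm{Bayes}}\|_t^2=\|g_t^\perp\|_t^2+\|a-b\|_{\Sigma_x}^2 .
\]
I would not need the finer kernel-form split, since it only partitions the second term further. This quantity is a function of $a$ and $g^\perp$ alone. The obstacle is that we must lower-bound it by an expression depending only on the direction of $a$ and the ratio $\|a\|^2/(\|a\|^2+\|g^\perp\|^2)$. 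The natural route is to minimize over the scale.

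Let $a=\lambda\hat a$ and $\|g^\perp\|_t=\lambda\gamma$, with $\|\hat a\|_{\Sigma_x}=1$ and $\lambda\ge0$. Then $f_t=1/(1+\gamma^2)$ and $c=\langle\hat a,b\rangle_{\Sigma_x}/\|b\|_{\Sigma_x}$. The error becomes
\[
\lambda^2(1+\gamma^2)-2\lambda\langle\hat a,b\rangle+\|b\|^2 .
\]
Minimizing over $\lambda$ gives
\[
\|b\|^2-\frac{\langle\hat a,b\rangle_+^2}{1+\gamma^2}\;\ge\;\|b\|^2\bigl(1-c^2f_t\bigr).
\]
This minimization over the common scale of $a$ and $g^\perp$, keeping direction and fidelity fixed, is the key step.

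Since the actual $g$ has some particular scale, its error is at least this minimum. Summing over matched $t+1$ with $t<T$ yields the theorem.
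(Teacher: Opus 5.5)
Your proof is correct and is essentially the paper's argument. The paper lower-bounds $\|g_t-g_t^{\mathrm{Bayes}}\|_t^2$ by the distance from $g_t^{\mathrm{Bayes}}$ to the line spanned by $g_t$, then uses the same orthogonality ($g_t^\perp\perp\mathcal H_t^\rho$ plus the isometry) to factor $\mathrm{corr}^2(g_t,g_t^{\mathrm{Bayes}})$ into alignment times fidelity. That is exactly your minimization over the common scale of $\hat\theta_t(g)$ and $g_t^\perp$, done after Pythagoras rather than before. The independence you flag comes from injectivity and exogeneity of the route: no observed label uses $x_{\rho(t+1)}$, so it is independent of $\mathcal G_t^-$ and $\hat\theta_t^{\mathrm{Bayes}}$ is $\mathcal G_t^-$-measurable.
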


Both factors lie in $[0,1]$, so if either remains uniformly below $1$ on a constant fraction of matched steps, then $\mathrm{Reg}_T(g)=\Omega(T_\rho)$, where $T_\rho:=|\{s\leq T:s\ \text{matched}\}|$ is the \emph{effective horizon} and we assume $T_\rho=\omega(\log T)$. The two routing rules below expose different sources of this floor: positional routing makes evidence assembly the bottleneck (Case~2.a, \S\ref{sec:case2a}), whereas content routing makes addressing the bottleneck (Case~2.b, \S\ref{sec:case2b}).

\subsection{Case 2.a: positional routing}\label{sec:case2a}

Case~2.a introduces the positional routing map \(\rho(s)=s-K\) for \(s>K\), where \(1\leq K<T\). The first $K$ steps are unmatched, so $T_\rho=T-K$. 
Under this route, assembling each evidence atom \(x_{s-K}y_s\) requires reaching \(K\) steps into the past. The same reach also supplies the regressor \(x_{t+1-K}\) to which the belief is applied. Thus, solving evidence assembly also solves addressing.

One might expect retaining the relevant history to suffice for positional assembly: attention keeps past tokens in its context, while an SSM compresses them into its recurrent state. But neither alone pairs each $y_s$ with $x_{s-K}$: Case 2.a requires a compositional mechanism that first supplies positional reach and then uses it for assembly. Without this composition, increasing width or context length cannot make belief alignment in Thm.~\ref{thm:routing_floor} approach $1$, yielding the following floors for Case~1 architectures.

\begin{corollary}[Assembly floors]\label{cor:case2a_floors}
\raggedright
Under the positional-routing setup above:
\begin{enumerate}[label=\emph{(\alph*)},wide=0pt,labelsep=.5em,
                  topsep=-2pt,itemsep=3pt,parsep=0pt,partopsep=0pt]
\item $\mathrm{Reg}_T(\mathcal F_{\mathrm{att}}(1,d,H,L))=\Omega(T_\rho)$ uniformly in $d,H,L$.
\item $\mathrm{Reg}_T(\mathcal F_{\mathrm{SSM}}(1,d,H,d_{\mathrm{conv}}))=\Omega(T_\rho)$ uniformly in $d,H\ge1$ and $1\le d_{\mathrm{conv}}\le K$ with $K$ fixed.
\end{enumerate}
\end{corollary}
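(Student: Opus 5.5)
The plan is to apply Theorem~\ref{thm:routing_floor} and show that the belief-alignment factor stays bounded away from $1$ on a constant fraction of matched steps, uniformly over the stated resources. Since the addressing-fidelity factor lies in $[0,1]$, it suffices to show
\[
\mathrm{corr}^2_{\Sigma_x}\bigl(\hat\theta_t(g),\hat\theta_t^{\mathrm{Bayes}}\bigr)\le 1-c
\]
for all $g$ in the class and for all matched $t$ beyond some fixed burn-in. We also need $\|\hat\theta_t^{\mathrm{Bayes}}\|_{\Sigma_x}^2\ge c'>0$ on those steps. This second fact follows from \eqref{eq:bayes_kernel}: after $\Theta(1)$ matched steps the posterior mean carries a constant fraction of the prior variance $\operatorname{tr}(\Sigma_x\Sigma_\theta)$, and this fraction only increases with $t$. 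Summing over the $T_\rho - O(1)$ steps then gives $\Omega(T_\rho)$.

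The core step is a polynomial-degree and monomial-support argument. Both classes produce, at step $t$, a prediction $g_t$ that is a polynomial of degree at most $3$ in the token entries. Linear attention gives (query)$\cdot$(key)$\cdot$(value). The SSM gives read$\cdot$write$\cdot$write, with convolved inputs, and the convolution window $d_{\mathrm{conv}}\le K$ only mixes tokens $z_{s},\dots,z_{s-K+1}$. I would classify which \emph{monomials} of the form $x_{t+1-K,i}\,x_{s-K,j}\,y_s$ can appear. These monomials are exactly the ones building $x_{\rho(t+1)}^\top\hat\theta_t^{\mathrm{Bayes}}$ at leading order, by expanding the Bayes kernel around its mean.

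For one-layer attention, each score-weighted term pairs the query token $z_t$ with a single key/value token $z_s$. The token $z_s=(x_{s+1},y_s,x_s)$ contains $y_s$ alongside $x_s,x_{s+1}$, but never alongside $x_{s-K}$ when $K\ge2$. The query token contains $x_{t+1},x_t$ but not $x_{t+1-K}$. For $K=1$, note that $z_{s-1}$ contains $x_s=x_{\rho(s)}$ while $z_t$ does contain $x_t=x_{\rho(t+1)}$, so that case must be handled separately. Either way, a product $x_{\rho(t+1)}\,x_{\rho(s)}\,y_s$ requires factors from the query token plus two distinct past tokens, which a single attention layer cannot form. For the SSM with $d_{\mathrm{conv}}\le K$, the write $u_s$ sees tokens $z_{s-d_{\mathrm{conv}}+1..s}$, which do contain $y_s$. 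For $d_{\mathrm{conv}}\le K$, however, they contain $x_{s-K}$ only when $d_{\mathrm{conv}}-1\ge K$ (as $x_{s-K}$ first appears in $z_{s-K-1}$'s slot $x_{s+1}$ and in $z_{s-K}$), which is impossible. So the atom cannot be assembled in the write, and the read side cannot supply it either, because the read involves $z_t$ and its window only.

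Given these support restrictions, I would project $\hat\theta_t(g)$ onto the relevant Wick/Hermite chaos component under the Gaussian measure. The Bayes belief has a nonvanishing component along the span $\{x_{\rho(s)}y_s\}$ at third order, with total $\Sigma_x$-mass of order a constant fraction of its norm, whereas $\hat\theta_t(g)$ is orthogonal to that component. This orthogonality yields $\mathrm{corr}^2\le 1-c$ with $c$ independent of $d,H,L$, since these resources only enlarge the allowed coefficient set and not the support. The \textbf{main obstacle} is making this orthogonality rigorous. The Bayes belief is not polynomial (the kernel is an inverse Gram matrix), and the model's $\hat\theta_t(g)$ is defined by projection onto $\mathcal H_t^\rho$, not read off directly. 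I would first try to show that the unassembled component $\EE[\hat\theta_t^{\mathrm{Bayes}}\mid\text{non-paired products}]$ leaves a residual variance bounded below, conditioning on the regressors and using that $y_s$ enters only through $x_{s-K}^\top\theta$, which is independent of whatever pairs the model can form. The remaining care is to keep the constant uniform across $t$.
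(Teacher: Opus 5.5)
There is a genuine gap in your monomial-support step. The claim that $\hat\theta_t(g)$ is orthogonal to the evidence-atom component is false for both classes.

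You overlooked that the token $z_{t-K}=(x_{t+1-K},y_{t-K},x_{t-K})$ contains \emph{both} $x_{\rho(t)}=x_{t-K}$ and $x_{\rho(t+1)}=x_{t+1-K}$. Consequently one attention layer with $L\ge K+1$ forms $y_t\,x_{t-K,j}\,x_{t+1-K,i}$: the query from $z_t$ supplies $y_t$, and the key and value from $z_{t-K}$ supply the two regressors. So the current atom needs the query token plus only \emph{one} past token, not two.

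For the SSM, you correctly ruled out a single write containing a whole atom, but you missed the read--write split. A write at any time $r$ with $t-K\le r\le s-K+d_{\mathrm{conv}}-1$ can hold $x_{t+1-K,i}\,x_{s-K,j}$. This already happens at $r=t-K$, $s=t$ with $d_{\mathrm{conv}}=1$. The current read then supplies $y_s$ for every $s\in\{t-d_{\mathrm{conv}}+1,\dots,t\}$.

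Hence the implicit belief can put nonzero weight on one atom (attention) or on up to $d_{\mathrm{conv}}$ recent atoms (SSM). A telltale sign is that your argument never uses that $K$ is fixed, so it would prove a bound uniform in $K$. The statement fixes $K$ precisely because the number of assemblable atoms is $d_{\mathrm{conv}}\le K$.

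What is missing is a quantitative step: assembling only a bounded number $C$ of atoms still caps alignment away from $1$, because each atom brings label noise. The paper does this in three moves:
\begin{itemize}
\item It projects the (whitened) belief $\widetilde\theta_t(g)$ onto the matched labels with design-measurable coefficients $w_{t,s}$.
\item It shows $\EE[w_{t,s}\widetilde x_{\rho(s)}^\top]=0$ outside a deterministic set $\mathcal J_t$ of size at most $C$.
\item It uses the tower property $\langle\widetilde\theta_t(g),\widetilde\theta_t^{\mathrm{Bayes}}\rangle=\EE[\widetilde\theta_t(g)^\top\widetilde\theta]$. This also removes your concern that the Bayes belief is not polynomial, since only correlations with $\widetilde\theta$ are needed.
\end{itemize}
Cauchy--Schwarz over the at most $C$ surviving terms, against the noise cost $\sigma^2\EE\sum_s\|w_{t,s}\|_2^2$, then gives $\mathrm{corr}^2\le 2/(2+a)$ with $a=\sigma^2\Lambda/(C\operatorname{tr}\widetilde\Sigma_\theta^2)$. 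This holds once the Bayes energy is close to $\Lambda$ (Lemma~\ref{lem:vector_bounded_bridge}), and Theorem~\ref{thm:routing_floor} then gives linear regret.

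For one-layer attention the paper argues more directly. After the degree-two projection, the belief is a deterministic matrix times the current atom $r_n$, plus a remainder orthogonal to both $\widetilde\theta$ and $r_n$. The best linear estimate of $\widetilde\theta$ from $r_n$ alone has error $\sum_i d_i>0$ at every step. This yields regret at least $T_\rho\sum_i d_i-\mathcal E^{\mathrm{Bayes}}_{T_\rho}$, uniformly in $d,H,L$.
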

\vspace{-0.35em}
Both families escape these floors by the same two-stage composition. First, they form the lag-shifted token $(x_{s+1-K},y_s,x_{s-K})$, reducing Case~2.a to Case~1: two-layer attention uses positional encoding in its first layer, whereas the one-layer SSM uses a width-$(K+1)$ convolution. Then, they reuse their respective kernels from Prop.~\ref{prop:stationary_pair} over the effective horizon $T_\rho$, yielding the following regret rates.

\begin{proposition}[Positional-routing pair]\label{prop:case2a_pair}
Under the positional-routing setup above, 
when $H_{\mathrm s}d_{\mathrm s}<p$, $\mathrm{Reg}_T(\mathcal F_{\mathrm{SSM}}(1,d_{\mathrm s},H_{\mathrm s},d_{\mathrm{conv}}))=\Omega(T_\rho)$ uniformly in $d_{\mathrm{conv}}$.
When $H_{\mathrm a}d_{\mathrm a}\geq 2p$ and $H_{\mathrm s}d_{\mathrm s}\geq p$, setting $\widetilde L^\star:=(K+1)\vee L^\star(T_\rho)$ gives
\[
  \mathrm{Reg}_T\bigl(\mathcal{F}_{\mathrm{att}}(2,d_{\mathrm a},H_{\mathrm a},\widetilde L^\star)\bigr)
  = O\bigl(K \vee \sqrt{T_\rho}\bigr),
  \qquad
  \mathrm{Reg}_T\bigl(\mathcal{F}_{\mathrm{SSM}}(1,d_{\mathrm s},H_{\mathrm s},K{+}1)\bigr)
  = \Theta\bigl(\sqrt{T_\rho}\bigr).
\]
When $K=O(\sqrt{T_\rho})$, the attention bound is also sharp: $\mathrm{Reg}_T(\mathcal{F}_{\mathrm{att}}(2,d_{\mathrm a},H_{\mathrm a},\widetilde L^\star)) = \Theta(\sqrt{T_\rho})$.
\end{proposition}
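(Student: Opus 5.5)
The plan is to reduce Case~2.a to Case~1 over the shifted time axis $t' := t-K$ and then reuse Proposition~\ref{prop:stationary_pair} and Theorem~\ref{thm:stationary_floor}. I treat the three claims in turn: the width-deficient floor, the two upper bounds, and the matching lower bounds.

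\emph{Width floor.} For $\mathcal F_{\mathrm{SSM}}(1,d_{\mathrm s},H_{\mathrm s},d_{\mathrm{conv}})$ with $H_{\mathrm s}d_{\mathrm s}<p$, the readout $g_t$ is a sum of $H_{\mathrm s}d_{\mathrm s}$ scalar products $(W_Q\widetilde e_t)_j\,h_{t,j}$. Since each state coordinate $h_{t,j}$ is a scalar, the implicit belief $\hat\theta_t(g)$ is confined to a data-dependent subspace of dimension at most $H_{\mathrm s}d_{\mathrm s}<p$. This holds for any convolution width, because the convolution only changes which linear functionals enter.

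I would argue this exactly as in the $Hd<p$ part of Proposition~\ref{prop:stationary_pair}. The Bayes belief has a $\Sigma_x$-covariance that is full rank, with every eigendirection bounded below by a constant on matched steps. Hence the squared $\Sigma_x$-correlation between $\hat\theta_t(g)$ and $\hat\theta_t^{\mathrm{Bayes}}$ is at most $1-c$ for some constant $c>0$. Theorem~\ref{thm:routing_floor} then gives $\Omega(T_\rho)$. This holds uniformly in $d_{\mathrm{conv}}$ because the bound never uses the convolution.

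\emph{Upper bounds (constructions).} For the SSM, use $d_{\mathrm{conv}}=K+1$ with $C_0$ selecting the $(y_t,x_{t+1})$ block and $C_K$ selecting the $(x_{t+1-K},x_{t-K})$ block of $e_{t-K}$. Then $\widetilde e_t$ contains the lag-shifted token $(x_{t+1-K},y_t,x_{t-K})$, whose components are exactly a Case~1 token with routing $\rho(s)=s$ on the reindexed stream. Because $u_t$ and the query are products of affine maps of $\widetilde e_t$, the Case~1 exponential-kernel construction of Proposition~\ref{prop:stationary_pair}(b) applies verbatim with horizon $T_\rho$. The first $K$ unmatched steps have $y=0$ and contribute no error, so the regret is $\zeta\sqrt{T_\rho}+o(\sqrt{T_\rho})$.

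For attention, the first layer uses positional bias $p_K$ together with the affine bias. With a constant query/key score, the linear-attention head can copy $x_{s-K}$ and $x_{s+1-K}$ into the residual stream of token $s$ using width $2p$, which is why we need $H_{\mathrm a}d_{\mathrm a}\ge 2p$. The second layer then runs the uniform-kernel construction of Proposition~\ref{prop:stationary_pair}(a) with window $\widetilde L^\star$.

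I expect this first-layer copy to be the main obstacle. Linear attention outputs $\sum_s s_{t,s}W_Ve_s$, and its positional weights depend on lag only through the additive $p_{t-s}$. Isolating lag $K$ with other lags cancelled therefore requires the score to be pure positional: zero query/key and $p_\ell=\mathbf 1[\ell=K]$, which is allowed since $p$ is free.

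Two costs remain to account for. First, the window must be at least $K+1$ to reach lag $K$. Second, the copies are zero for $s\le K$, so there are boundary effects. Forcing $\widetilde L^\star\ge K+1$ adds at most $O(K)$ extra regret: the bias terms of a uniform kernel whose window exceeds $L^\star(T_\rho)$ sum to $O(\widetilde L^\star)$ over the cold start. Adding the Case~1 rate $\tfrac{2}{\sqrt3}\bar\zeta\sqrt{T_\rho}$ gives $O(K\vee\sqrt{T_\rho})$.

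\emph{Lower bounds and sharpness.} Any construction in these classes applied to matched steps yields a stationary kernel on the shifted stream. For the one-layer SSM, the kernel $Q$ composed with $\Phi^u$ and the convolution is lag-only. For two-layer linear attention, restricting attention to the kernel-form projection gives a lag-only kernel. Theorem~\ref{thm:stationary_floor} then gives $\Omega(\sqrt{T_\rho})$, which yields $\Theta(\sqrt{T_\rho})$ for the SSM and, when $K=O(\sqrt{T_\rho})$, also for attention.

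The delicate point is justifying stationarity for the two-layer attention. Second-layer products of first-layer outputs could create non-stationary or non-kernel terms. I would handle these through \eqref{eq:belief_pythagoras}: such terms only add the outside-belief and outside-kernel components, and the projected kernel remains a function of lag alone in expectation.
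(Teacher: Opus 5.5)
Your upper-bound constructions are the paper's. For attention, a purely positional first layer copies $(\widetilde x_{s+1-K},\widetilde x_{s-K})$ and the second layer applies the uniform kernel. For the SSM, a width-$(K+1)$ convolution is followed by the modewise exponential kernel. The $O(K)$ accounting when $\widetilde L^\star>L^\star(T_\rho)$ also matches. The lower bounds, however, have genuine gaps.

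\emph{Width floor.} Your argument fails here. Because $\widetilde x_{t+1-K}$ sits in the earlier tokens $z_{t-K},z_{t+1-K}$, it enters the \emph{writes} and not only the read. After integrating it out, the belief is $A^\top h_0+[h_1,\ldots,h_p]^\top q_0+c$, and only $A\in\RR^{H_{\mathrm s}d_{\mathrm s}\times p}$ is deterministic. So the belief is not confined to an $H_{\mathrm s}d_{\mathrm s}$-dimensional subspace. More fundamentally, confinement to a \emph{data-dependent} low-dimensional subspace gives no correlation gap at all: the random line through $\hat\theta_t^{\mathrm{Bayes}}$ is such a subspace. The Case~1 argument you cite works only because the lag-$\ge1$ weights there have range in the \emph{fixed} subspace $\mathrm{col}(W_{\mathrm{read}})$, leaving a single current atom. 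With unbounded $d_{\mathrm{conv}}$, the write-side terms can involve arbitrarily many atoms. The missing idea (Lemma~\ref{lem:case2a_full_ssm_low_state}) is to fix a deterministic unit vector $a\in\ker A$, which exists since $H_{\mathrm s}d_{\mathrm s}<p$. Then $a^\top\widetilde\theta_t(g)=q_0^\top h_a+\mathrm{const}$ is a sum of $H_{\mathrm s}d_{\mathrm s}$ products of affine functions of the data. Its label-odd part is $X^\top MY$ with $\operatorname{rank}M\le2H_{\mathrm s}d_{\mathrm s}$. For the matched-block sum $\mu$ this gives $\|\mu\|_2^2\le2H_{\mathrm s}d_{\mathrm s}\|M\|_F^2$, so driving $\mu$ toward $a$ incurs label noise at least $\sigma^2\|\mu\|_2^2/(2H_{\mathrm s}d_{\mathrm s})$. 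The resulting bias--noise trade-off leaves a constant per-step error in direction $a$, uniformly in $d_{\mathrm{conv}}$.

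\emph{Attention sharpness.} Your premise that two-layer linear attention has a lag-only projected kernel is false, and \eqref{eq:belief_pythagoras} cannot restore it. Take a spare first-layer head with query $y_s$ and constant key and value; it writes $\tau_sy_s$ with $\tau_s=\min(s,L)$. A second-layer head with score $\widetilde x_{t+1-K}^\top\widetilde x_{s-K}$ and value $\tau_sy_s$ then has implicit belief $\sum_s\tau_s\widetilde x_{s-K}y_s$. This is a deterministic kernel whose weights depend on absolute position whenever $L>K+1$, so Theorem~\ref{thm:stationary_floor} does not apply. The paper instead uses the finite receptive field (Lemma~\ref{lem:attention_context_floor}). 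A depth-$D$ prediction depends on at most $L_D=1+D(L-1)$ raw tokens. A genie revealing every regressor but only the visible labels therefore bounds the per-step excess loss below by $e^{\mathrm{Bayes}}_{L_D}-e^{\mathrm{Bayes}}_n$. Summing gives $\Omega(T_\rho/L)=\Omega(\sqrt{T_\rho})$, since $\widetilde L^\star=\Theta(\sqrt{T_\rho})$ when $K=O(\sqrt{T_\rho})$.

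\emph{SSM rate.} Your stationarity claim is close but not exact, and it omits the write-side (bypass) terms. These are lag-only except on $r_1$. Zero padding at position $0$ leaves $\widetilde x_1$ without its duplicate occurrence, so the weight on $r_1$ can differ for $n\le K+1$ (Lemma~\ref{lem:case2a_vector_full_ssm_projection}). The paper frees this first-atom weight and minimizes it out, which replaces $\lambda_i$ by $d_i$. That yields $\Omega(\sqrt{T_\rho})$, which suffices for $\Theta(\sqrt{T_\rho})$, but not the constant $\zeta$ your argument would claim.
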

\vspace{-0.88em}
\paragraph{Architectural lessons.}
\looseness=-1
Together, Cor.~\ref{cor:case2a_floors} and Prop.~\ref{prop:case2a_pair} add a lesson beyond Case~1's leading constants: the two families allocate resources for positional reach and belief maintenance differently. Attention uses the context length $L$ for both jobs, so its construction takes $\widetilde L^\star$ to span both the routing lag $K$ and Prop.~\ref{prop:stationary_pair}'s optimal window $L^\star(T_\rho)$. The SSM instead separates them: $\dconv=K+1$ supplies only the positional reach, while its recurrent decay is tuned independently for belief maintenance as in Prop.~\ref{prop:stationary_pair}. This has a memory consequence: the SSM stores only a \(p\)-dimensional recurrent state and \(K\) residual-stream vectors for convolution, whereas attention retains a growing history of \(\widetilde L^\star=\Theta(K\vee\sqrt{T_\rho})\) such vectors. The growing context requirement is not specific to our two-layer construction:
\begin{proposition}[Attention context requirement]
\label{prop:attention_context_requirement}
Under the positional-routing setup above, for every fixed $D$,
$\mathrm{Reg}_T(\mathcal F_{\mathrm{att}}^{\mathrm{sm}}(D,d,H,L))
=O(\sqrt{T_\rho})$ requires $L=\Omega(\sqrt{T_\rho})$,
uniformly in $d,H$.
\end{proposition}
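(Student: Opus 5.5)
The plan is a locality argument: for fixed depth, attention with window \(L\) can only use \(O(DL)\) recent tokens, and any predictor restricted to a bounded window pays a per-step excess loss of order \(1/(\text{window})\) against the Bayes predictor. Summed over the horizon, this gives \(\mathrm{Reg}_T \gtrsim T_\rho/(DL)\), so regret \(O(\sqrt{T_\rho})\) forces \(DL=\Omega(\sqrt{T_\rho})\), and with \(D\) fixed, \(L=\Omega(\sqrt{T_\rho})\).

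\textbf{Step 1 (receptive field).} By induction on \(r\), \(e_t^{(r)}\) is a measurable function of \(z_{t-r(L-1)},\ldots,z_t\). Each layer reads only \(e^{(r-1)}_s\) for \(s\in J_t\), and the residual connection adds nothing older. This holds for linear and softmax scores alike, and positional biases \(p_{t-s}\) do not enlarge the window. Hence \(g_t\) is measurable with respect to \(\mathcal W_t:=\sigma(z_{t-W+1:t})\), where \(W:=D(L-1)+1\). This step needs no assumption on \(d\) or \(H\), which gives the uniformity claimed in Proposition~\ref{prop:attention_context_requirement}.

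\textbf{Step 2 (projection bound).} Since \(\mathcal W_t\subseteq\sigma(z_{1:t})\), orthogonality of nested conditional expectations gives
\[
\ell_t(g)-\ell_t(g^{\mathrm{Bayes}})\ \ge\ \EE\bigl[\mathrm{Var}(y_{t+1}\mid \mathcal W_t)\bigr]-\EE\bigl[\mathrm{Var}(y_{t+1}\mid z_{1:t})\bigr].
\]
\begin{itemize}
\item \emph{Lower term.} Enlarge \(\mathcal W_t\) to \(\mathcal W_t'\), generated by \(\mathcal W_t\) and all regressors \((x_s)_s\). Conditioning on more can only decrease expected variance, so it suffices to bound \(\EE[\mathrm{Var}(y_{t+1}\mid\mathcal W_t')]\) from below. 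Given all regressors, \(\theta\) is observed through at most \(W\) noisy linear responses. The conjugate Gaussian formula gives
\[
\EE\bigl[\mathrm{Var}(y_{t+1}\mid\mathcal W_t')\bigr]=\sigma^2+\sigma^2\,\EE\operatorname{tr}\!\Bigl(\Sigma_x\bigl(\sigma^2\Sigma_\theta^{-1}+\textstyle\sum_{\le W}x x^\top\bigr)^{-1}\Bigr).
\]
By convexity of \(A\mapsto A^{-1}\) (matrix Jensen), this is at least \(\sigma^2+\sigma^2\operatorname{tr}\bigl(\Sigma_x(\sigma^2\Sigma_\theta^{-1}+W\Sigma_x)^{-1}\bigr)\ge\sigma^2+c_1/W\) for \(W\ge1\).
\item \emph{Upper term.} The full Bayes predictor sees \(t-K\) complete atoms. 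A standard Wishart / law-of-large-numbers bound then gives \(\EE[\mathrm{Var}(y_{t+1}\mid z_{1:t})]\le\sigma^2+c_2/(t-K)\) for \(t-K\) larger than a constant multiple of \(p\).
\end{itemize}
The \(\sigma^2\) terms cancel. Therefore, for all matched \(t\) with \(t-K\ge 2c_2W/c_1\) and \(t-K\) large, the per-step excess loss is at least \(c_1/(2W)\).

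\textbf{Step 3 (summation and conclusion).} Suppose \(W\le \sqrt{T_\rho}\); otherwise \(L=\Omega(\sqrt{T_\rho}/D)\) already holds. Then the steps \(t\) with \(t-K\ge C W\) number \(T_\rho-O(W)=\Omega(T_\rho)\). Hence \(\mathrm{Reg}_T(g)\ge c\,T_\rho/W\ge c\,T_\rho/(DL)\) for every \(g\in\mathcal F_{\mathrm{att}}^{\mathrm{sm}}(D,d,H,L)\). If the regret is \(O(\sqrt{T_\rho})\), this forces \(DL=\Omega(\sqrt{T_\rho})\), and with \(D\) fixed, \(L=\Omega(\sqrt{T_\rho})\).

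\textbf{Main obstacle.} I expect the only delicate step to be the upper bound \(c_2/(t-K)\) on the full-history posterior predictive variance. It requires controlling \(\EE\operatorname{tr}\bigl(\Sigma_x(\sigma^2\Sigma_\theta^{-1}+\sum_{r}x_rx_r^\top)^{-1}\bigr)\) from above, where Jensen's inequality goes the wrong way. I would handle it with the explicit inverse-Wishart mean \(\Sigma_x^{-1}/(n-p-1)\), after dropping the prior term, for \(n=t-K>p+1\). For small \(n\), the prior term keeps the quantity bounded, and those steps form a constant number.
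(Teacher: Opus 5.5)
Your proposal is correct and follows essentially the same route as the paper: bound the receptive field by $1+D(L-1)$ raw tokens, introduce a genie that reveals all regressors but only the windowed labels, then compare the two Bayes errors. The windowed Bayes error is lower-bounded by operator Jensen, the full-history error is upper-bounded by the inverse-Wishart mean $p\sigma^2/(n-p-1)$, and summing over the $\Omega(T_\rho)$ late matched steps gives $\mathrm{Reg}_T\gtrsim T_\rho/L_D$. The only difference is that you phrase the per-step comparison through $\EE[\mathrm{Var}(y_{t+1}\mid\cdot)]$ rather than through coefficient estimation error plus the fresh-regressor isometry; the two are equivalent.
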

\vspace{-0.3em}
Props.~\ref{prop:case2a_pair} and~\ref{prop:attention_context_requirement} show that, when $K=o(\sqrt{T_\rho})$, the SSM reaches the same regret rate with strictly less memory than fixed-depth attention with standard caching, regardless of width or softmax use (App.~\ref{app:case2a_memory_comparison}).
This identifies a task regime that favors SSMs: short-range positional assembly coupled with long-term belief maintenance, echoing Mamba's empirical advantage in \citet{bondaschiMarkovLaplaceHow2026}.

\begin{wrapfigure}[12]{r}{221.5bp}
  \vspace{-1.52\baselineskip}
  \centering
  \includegraphics{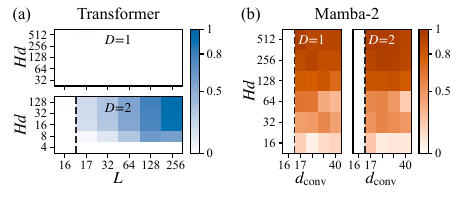}
  \begin{minipage}{0.91\linewidth}
    \vspace{-1.2\baselineskip}
    \caption{Practical validation of positional assembly mechanisms. Darker means closer to Bayes; dashed lines mark the $K+1=17$ threshold.}
  \label{fig:case2a_bandit_resources}
  \end{minipage}
\end{wrapfigure}
\vspace{-0.57em}
\paragraph{Practical validation.}
App.~\ref{app:experiments_case2a_analytical} validates these results directly within our analytically tractable classes, and
App.~\ref{app:experiments_bernoulli} confirms these architectural lessons for the practical models of \S\ref{sec:architectures} on a conjugate Beta--Bernoulli bandit. Here we test whether these lessons also extend to nonconjugate setups (App.~\hyperref[app:nonconjugate_logistic]{\ref*{app:conjugate_belief_geometry}}), using a delayed-feedback reward prediction task for a 30-arm contextual logistic bandit (\(p=30\), \(K=16\)).
Fig.~\ref{fig:case2a_bandit_resources} reports a Bayes-normalized performance score computed from cumulative excess log loss: $0$ corresponds to a prior-only predictor that ignores the history, and $1$ to Bayes. Throughout, we use \(0.8\) as the score threshold for high performance. 
Because our focus is representational capability, each cell reports the score of the best validation-selected run from a fixed sweep over training seeds and learning rates.
App.~\ref{app:experiments_common_evaluation} details this choice and the full protocol.

Consistent with Cor.~\ref{cor:case2a_floors}, the one-layer Transformer and one-layer Mamba-2 with \(d_{\mathrm{conv}}\le K\) remain near baseline. Although depth extends the overall positional reach, both two-layer models improve only once their first layer can span the lag ($L,d_{\mathrm{conv}}\ge K+1=17$), matching the positional-assembly mechanism in Prop.~\ref{prop:case2a_pair}. 
For the Transformer, $Hd\leq8$ falls short, but once $Hd$ reaches $16<p=30$, performance varies almost entirely with $L$, exceeding $0.8$ at $L=256$. This matches the architectural lesson that $L$, not $Hd$, governs belief maintenance while also supplying positional reach.
For Mamba-2, $D=2$ notably leaves the $d_{\mathrm{conv}}=17$ boundary unchanged, although two layers enable shorter convolutions for fixed-transition SSMs on the Gaussian task (App.~\ref{app:experiments_case2a_analytical}).
At either depth, Mamba-2's performance improves primarily with \(Hd\) and first exceeds \(0.8\) at \(Hd=128\), well above $p=30$. 
This fits the complementary lesson for SSMs: convolution supplies positional assembly, whereas recurrent-state capacity governs belief maintenance.
Finally, the resource differences are reflected in total memory: the lowest-memory tested Transformer exceeding \(0.8\) uses \(5.2\times\) as much as its Mamba-2 counterpart (App.~\hyperref[app:experiments_case2a_memory]{\ref*{app:experiments_case2a_practical}}), echoing the theoretical memory comparison.
\FloatBarrier
\vspace{-0.25em}
\subsection{Case 2.b: content routing}\label{sec:case2b}
\vspace{-0.25em}
Case~2.b introduces a fixed codebook of $R>0$ unit vectors $\{\psi^{(1)},\ldots,\psi^{(R)}\}\subset\RR^{d_{\mathrm{key}}}$. 
Writing $\psi_s$ for the key observed at step $s$, we define the content routing map $\rho(s):=\max\{r<s:\psi_r=\psi_s\}$ for $s>R$.
Following the layout in \S\ref{sec:setup}, each token additionally includes the current and next keys: \(z_t=(\psi_{t+1},x_{t+1},y_t,\psi_t,x_t)\).
Keys arrive in \(R\)-step cycles, each an independent uniform permutation of the codebook. The first cycle is unmatched, so \(T_\rho=T-R\).
Under this route, content rather than a fixed positional offset identifies the relevant past regressor, yielding a continuous-valued variant of associative recall, which is a widely studied sequence-model benchmark \citep{aroraZoologyMeasuringImproving2023,huangUnderstandingInputSelectivity2025}. In our case, however, it is embedded in Bayesian linear regression and must be solved alongside evidence assembly and belief maintenance.

\begin{wrapfigure}[3]{r}{.51\textwidth}
\vspace{-1.9\intextsep}
\centering
\begin{tikzpicture}[
  scale=.93,
  transform shape,
  x=1.14cm,
  key/.style={draw=black!60,rounded corners=1pt,minimum width=1.02cm,minimum height=.44cm,inner sep=1pt,font=\small},
  route/.style={densely dashed,-{Stealth[length=2.5pt,width=2.5pt]},draw=black!55,line width=.4pt},
  route label/.style={font=\footnotesize,text=black!70,fill=white,fill opacity=.82,text opacity=1,rounded corners=1.5pt,inner xsep=2pt,inner ysep=.7pt},
  group/.style={decorate,decoration={brace,mirror,amplitude=2pt},draw=black!55,line width=.4pt}
]
  \node[key,fill=keyblue!14]   (k1) at (0,0) {$\psi_1=a$};
  \node[key,fill=keyorange!16] (k2) at (1,0) {$\psi_2=b$};
  \node[key,fill=keypurple!14] (k3) at (2,0) {$\psi_3=c$};
  \node[key,fill=keypurple!14] (k4) at (3.25,0) {$\psi_4=c$};
  \node[key,fill=keyorange!16] (k5) at (4.25,0) {$\psi_5=b$};
  \node[key,fill=keyblue!14]   (k6) at (5.25,0) {$\psi_6=a$};
  \draw[route] (k4.north) .. controls +(0,.13) and +(0,.13) .. (k3.north);
  \draw[route] (k5.north) .. controls +(0,.27) and +(0,.27) .. (k2.north);
  \draw[route] (k6.north) .. controls +(0,.41) and +(0,.41) ..
    node[pos=0,left=.08pt,xshift=-75pt,yshift=9pt,route label] {$\rho(s)$} (k1.north);
  \draw[group] (-.47,-.28) -- (2.47,-.28)
    node[midway,below=1.5pt,font=\scriptsize,text=black!75] {unmatched};
  \draw[group] (2.78,-.28) -- (5.72,-.28)
    node[midway,below=1.5pt,font=\scriptsize,text=black!75] {matched};
\end{tikzpicture}
\vspace{-\intextsep}
\end{wrapfigure}
The diagram to the right shows one realization of the first two cycles for $R=3$, with codebook $\{a, b, c\}$. Its unequal spans illustrate the random lags $s-\rho(s)\in\{1,\ldots,2R-1\}$, so no fixed offset can locate all the previous occurrences. To avoid the positional-reach issue encountered in Case~2.a, we take \(R=o(\sqrt{T_\rho})\), so \(L^\star(T_\rho)\) spans every lag. At the same time, we keep content addressing nontrivial with a compressed codebook satisfying $d_{\mathrm{key}}=\Theta(\log R)$ and $\max_{i\neq j}|\langle\psi^{(i)},\psi^{(j)}\rangle|<1/3$.

Here, unlike in Case 2.a, solving evidence assembly does not automatically solve addressing. Belief maintenance allows evidence assembly to tolerate coarse key matching: for $r\neq s$, the regressor $x_{\rho(r)}$ is independent of $y_s$ and has zero mean, so spurious terms $x_{\rho(r)}y_s$ average out in the belief. 
But addressing cannot rely on this averaging: it requires the particular past regressor named by \(\psi_{t+1}\).
Thus, the architectures that suffice for Case 2.a can still make belief alignment in Theorem~\ref{thm:routing_floor} approach $1$, but they leave addressing fidelity bounded away from $1$, forcing linear regret. Accordingly, we introduce \emph{addressing capacity}, the central belief-geometry tool for quantifying this remaining failure.
\vspace{-1.7em}
\paragraph{Addressing capacity.} Fix $t\geq R$. Let $N_t$ be the number of keys not yet visited in the cycle containing $t+1$, and index them by $q=1,\ldots,N_t$. For each candidate $q$, let $X_q\in\RR^p$ be the most recent regressor carrying that key, and let $g_{t,q}$ be the predictor's time-$t$ output if the revealed key is $q$. The key idea behind addressing capacity is to decompose each candidate regressor into its components parallel and orthogonal to the Bayes-belief direction $\hat\theta_t^{\mathrm{Bayes}}$.
Let \(\mathcal C_t\) collect the history with each candidate regressor replaced by its orthogonal component, leaving only their components along the Bayes-belief direction random
(formalized in Appendix~\ref{app:case2}). 
Crucially, conditioning on \(\mathcal C_t\) makes the candidate Bayes predictions $(X_q^\top\hat\theta_t^{\mathrm{Bayes}})_{q=1}^{N_t}$ independent Gaussians while leaving each \(g_{t,q}\) free to depend arbitrarily on all of them through the candidate regressors' earlier occurrences.
This allows us to decompose the analysis across Bayes-prediction directions and define
\[
  \mathrm{Cap}_t(g)
  :=\sum_{q=1}^{N_t}
  \mathrm{corr}_{\mathcal C_t}^2
  \Big(g_{t,q},\,X_q^\top\hat\theta_t^{\mathrm{Bayes}}\Big),
\]
which is an analytically tractable measure of how many candidates are addressed. Because the arriving key is uniform, the average two-factor product in Theorem~\ref{thm:routing_floor} is at most $\mathrm{Cap}_t(g)/N_t$. 
Together with \(N_t\ge R/2\) on half of every cycle, this bound implies that $\mathrm{Cap}_t(g)=o(R)$ forces $\Omega(T_\rho)$ regret.
Thus, deriving addressing floors reduces to upper-bounding addressing capacity, which we do next.
\begin{theorem}[Addressing floors]\label{thm:case2b_floors}
Under the content-routing setup above, fix $D\geq1$ and $t\ge R$:
\begin{enumerate}[label=\emph{(\alph*)},wide=0pt,labelsep=.5em,
                  topsep=-2pt,itemsep=2pt,parsep=0pt,partopsep=0pt]
\item
Every $g\in\mathcal F_{\mathrm{att}}(D,d,H,L)$ satisfies
$\mathrm{Cap}_t(g)=o(R)$, uniformly in $d,H,L$.
\item
Every $g\in\mathcal F_{\mathrm{SSM}}^{\mathrm{aff}}(D,d,H,d_{\mathrm{conv}})$ satisfies $\mathrm{Cap}_t(g)=o(R)$,
uniformly in $d,H,d_{\mathrm{conv}}$.
\item
Every $g\in\mathcal F_{\mathrm{SSM}}^{\mathrm{sel}}(D,d,H,d_{\mathrm{conv}})$
satisfies $\mathrm{Cap}_t(g)=O(DHd\log R)$, uniformly in $d,H,d_{\mathrm{conv}}$.
\end{enumerate}
Thus, all predictors in (a)--(b) incur \(\Omega(T_\rho)\) regret, as do those in (c) with \(DHd=o(R/\log R)\).
\end{theorem}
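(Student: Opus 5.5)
The plan is to bound $\mathrm{Cap}_t(g)$ through the conditional structure given $\mathcal C_t$. Conditionally, the candidate Bayes predictions $\xi_q:=X_q^\top\hat\theta_t^{\mathrm{Bayes}}$ are independent Gaussians. Each output $g_{t,q}$ is a fixed (given $\mathcal C_t$) function of $(\xi_1,\ldots,\xi_{N_t})$ and of the revealed key $\psi^{(q)}$. Write $g_{t,q}=F(\psi^{(q)},\xi)$ for one common map $F$ determined by the architecture and by $\mathcal C_t$. Because the $\xi_q$ are independent, the squared correlations $\mathrm{corr}^2(g_{t,q},\xi_q)$ are controlled by the first-order Hermite (linear) coefficient of $F(\psi^{(q)},\cdot)$ in the coordinate $\xi_q$. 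So the first step is to reduce $\mathrm{Cap}_t(g)$ to a statement about how much linear sensitivity to coordinate $q$ the single map $F$ can devote to query $\psi^{(q)}$.

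For (a) and (b), I would show that the relevant part of $F$ is \emph{polynomial of bounded degree} in the key entries. Linear attention at fixed depth $D$ is a polynomial of degree $\le 3^D$ in the token entries. The same holds for $\mathcal F^{\mathrm{aff}}_{\mathrm{SSM}}$ with affine gates: unrolling the recurrence gives products of affine functions, and since keys are cycled, only $O(1)$-many factors per past token can touch key coordinates. Collecting terms, the coefficient of $\xi_q$ has the form $c_q=P(\psi^{(q)},\psi_{\rho\text{-occurrence of }q})$, where $P$ is a polynomial of bounded degree $m$ in $d_{\mathrm{key}}=\Theta(\log R)$ variables. Such a $P$ is an inner product $\langle\Phi(\psi^{(q)}),\Psi(\psi^{(j)})\rangle$ in a feature space of dimension $\binom{d_{\mathrm{key}}+m}{m}=\mathrm{polylog}(R)$. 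Sharp addressing requires that the matrix $[P(\psi^{(q)},\psi^{(j)})]_{q,j}$ be close to diagonal after normalization. I would bound $\mathrm{Cap}_t\lesssim \mathrm{tr}(M)^2/\|M\|_F^2\le\operatorname{rank}(M)$ through a Cauchy--Schwarz argument: the correlation for candidate $q$ is at most $M_{qq}^2/\sum_j M_{qj}^2$, after accounting for the variance contributed by the other $\xi_j$. This gives $\mathrm{Cap}_t=O(\mathrm{polylog} R)=o(R)$, uniformly in $d,H,L,d_{\mathrm{conv}}$, since width only enlarges the coefficients and not the feature dimension. For nonlinear (nonpolynomial) dependence on the $x$'s I would instead project onto the first Hermite chaos in $\xi$, which is linear. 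Residual-stream products mix only with polynomials in $\psi$, so the rank argument is unchanged.

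For (c), the sigmoid gates break polynomiality, so the feature-rank bound fails. Here I would use the recurrent state as the bottleneck instead. In $\mathcal F^{\mathrm{sel}}_{\mathrm{SSM}}$, every dependence of $g_{t,q}$ on the past passes through the $DHd$ real coordinates of the states $h_{t}^{(r,b)}$, plus the $d_{\mathrm{conv}}$ local tokens, which are irrelevant for lags beyond the window. Conditionally on $\mathcal C_t$, the linear (first-chaos) part of the state is therefore a linear map $A\xi$ with $A\in\mathbb R^{DHd\times N_t}$. The readout for query $q$ is $w_q^\top A\xi$ plus chaos of other orders, and this gives $\mathrm{Cap}_t\le\sum_q \|P_{A^\top}e_q\|^2=\operatorname{rank}A\le DHd$ up to the normalization. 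The extra $\log R$ factor enters because fixed-transition later layers can nonlinearly re-read the state through the query via products. I would absorb this by counting the degree in $\psi$ of the readout, $O(\log R)$ effective directions per state coordinate given $d_{\mathrm{key}}=\Theta(\log R)$. Finally, combining with the preceding paragraph's reduction, $\mathrm{Cap}_t(g)=o(R)$ together with $N_t\ge R/2$ on half of each cycle gives $\Omega(T_\rho)$ regret via Theorem~\ref{thm:routing_floor}.

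The main obstacle is the first-chaos reduction. I must justify that only the linear component in $\xi$ matters for $\mathrm{corr}^2(g_{t,q},\xi_q)$, which holds by Gaussian orthogonality. I must also control how the remaining variance, which sits in the denominator, scales, so that the rank and trace bounds go through uniformly in the unbounded widths.
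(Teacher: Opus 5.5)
Your first-chaos reduction for (a)--(b) is sound and differs from the paper's route only cosmetically, but the justification needs fixing. With $M_{qj}:=\EE[g_{t,q}Z_j\mid\mathcal C_t]$, Bessel's inequality gives $\operatorname{corr}^2_{\mathcal C_t}(g_{t,q},Z_q)\le M_{qq}^2/\|M_{q\cdot}\|^2$. Each normalized row lies in $\operatorname{row}(M)$, so $\sum_q M_{qq}^2/\|M_{q\cdot}\|^2\le\sum_q\|P_{\operatorname{row}(M)}e_q\|^2=\operatorname{rank}M$. This replaces the paper's Gaussian-distance-to-a-subspace corollary. Use this chain, not $\operatorname{tr}(M)^2/\|M\|_F^2$, which does not dominate the capacity when row norms differ.

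The reason $\operatorname{rank}M$ is small is causality, not key cycling. The upcoming key $\psi^{(q)}$ occurs only in the current token. So $g_{t,q}$ is a polynomial of degree at most $3^D$ (resp.\ $4^D$) in $\psi^{(q)}$, whose coefficients are arbitrary functions of the history. Hence $M_{qj}=\langle\Phi(\psi^{(q)}),b_j\rangle$ with history-dependent $b_j$. Your form $P(\psi^{(q)},\psi^{(j)})$ is inaccurate: with affine gates the degree in past tokens grows with the horizon, and $M_{qj}$ depends on labels, positions and all intervening gates. Only the left factor is needed, though.

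Part (c) has a genuine gap. Your step ``the readout for query $q$ is $w_q^\top A\xi$ plus chaos of other orders'' requires $g_{t,q}$ to be affine in the pre-update state with $Z$-free coefficients. That holds for $D=1$, $d_{\mathrm{conv}}=1$ away from cycle boundaries, where your argument indeed gives $\mathrm{Cap}_t\le Hd$. It fails in the stated class. For $D\ge2$, the current layer-$1$ output is affine in $h^{(1)}_{t-1}$ and the layer-$2$ write and read multiply it, so $g_{t,q}$ has degree above one in the states. For $d_{\mathrm{conv}}>1$, the lagged taps inject $Z$ into the sigmoid gates, writes and reads at time $t$; uniformly in $d_{\mathrm{conv}}$ they may contain every candidate, so they are not ``irrelevant.'' The first chaos then contains terms like $\EE[(b_q^\top h_{t-1})^2Z_j]$ or $\EE[\sigma(a_q+c^\top BZ)\,(b_q^\top h_{t-1})Z_j]$, which are not of the form $w_q^\top A_{:j}$. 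So $\operatorname{rank}M\le DHd$ does not follow.

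No argument using only the count of real state coordinates can work at unbounded precision. A single coordinate $h=\sum_j2^{-j}\mathbf 1\{Z_j>0\}$ with a nonlinear decoder already gives capacity about $(2/\pi)N_t$. Your $\log R$ patch via degree in $\psi$ also fails, because in this class $\psi^{(q)}$ enters through sigmoids; the construction in Proposition~\ref{prop:case2b_pair} exploits exactly this.

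The missing mechanism is the paper's, in three steps.
\begin{enumerate}
\item Condition on the candidate-dependent parts of the gate preactivations (rank at most $Hd$, plus one coordinate at cycle boundaries). Each sigmoid then becomes a fixed coefficient for each counterfactual key, at a cost of at most that rank in capacity.
\item Treat the $DHd$ pre-update states and the $O(Hd)$ per-layer write, read and delta-direction projections, including lagged convolution parts, as free parameters. Then $G_t$ lies in the image of a polynomial map $\RR^m\to\RR^{N_t}$ of degree at most $4\cdot3^{D-1}$, with $m\le(4D+2)Hd$.
\item Bound $N_t-\EE[\operatorname{dist}^2(Z,\text{image})\mid\mathcal C_t]$ by $C_k(m+1)\log(eN_t)$, using covering numbers of polynomial images and Dudley's integral.
\end{enumerate}
That metric-entropy step is where the $\log R$ comes from. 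A linear-span bound only gives the number of monomials, a polynomial in $DHd$ whose degree grows with $D$, and so does not yield $O(DHd\log R)$.
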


Unlike prior bit-based lower bounds for associative recall and copying \citep{jelassiRepeatMeTransformers2024,wenRNNsAreNot2024,aroraSimpleLinearAttention2025}, Theorem~\ref{thm:case2b_floors} requires no finite-precision assumption and treats continuous, noisy prediction. The result also clarifies why softmax matters: linear attention is often motivated as a more efficient replacement for softmax attention \citep{katharopoulosTransformersAreRNNs2020,schlagLinearTransformersAre2021}, yet here that replacement makes sublinear regret impossible at any fixed depth.

The following constructions escape the floors of Thm.~\ref{thm:case2b_floors} by replacing Case~2.a's fixed positional offset with \emph{sharp nonlinear selection}.
Attention selects from retained tokens with separate softmax heads for evidence assembly and addressing.
The SSM instead encodes each candidate in two recurrent-state coordinates per head using a diagonal sigmoid gate for addressing and assembles evidence with a width-\(2\) convolution.
Both then reuse their Case~1 kernels from Prop.~\ref{prop:stationary_pair} 
for belief maintenance.

\begin{proposition}[Content-routing pair]\label{prop:case2b_pair}
Under the content-routing setup above,
\[
  \mathrm{Reg}_T\bigl(\mathcal{F}_{\mathrm{att}}^{\mathrm{sm}}(2,d_{\mathrm{key}}\vee p,2,L^\star(T_\rho))\bigr)
  = O(\sqrt{T_\rho}),
  \qquad
  \mathrm{Reg}_T\bigl(\mathcal{F}_{\mathrm{SSM}}^{\mathrm{sel}}(2,2R,p,2)\bigr)
  = O(\sqrt{T_\rho}).
\]
\end{proposition}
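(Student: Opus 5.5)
Both bounds follow the template of Proposition~\ref{prop:case2a_pair}: build an explicit predictor that (i) assembles the evidence atoms $x_{\rho(s)}y_s$ and (ii) addresses $x_{\rho(t+1)}$, up to errors that sum to $O(\sqrt{T_\rho})$, and then (iii) reuses the Case~1 kernel from Proposition~\ref{prop:stationary_pair} on the effective horizon $T_\rho$. Since $\|g_t-g_t^{\mathrm{Bayes}}\|_t^2\le 2\|g_t-\tilde g_t\|_t^2+2\|\tilde g_t-g_t^{\mathrm{Bayes}}\|_t^2$, where $\tilde g_t=x_{\rho(t+1)}^\top\sum_{s\le t}\kappa_{t-s}x_{\rho(s)}y_s$ is the idealized stationary-kernel predictor on the assembled atoms, it suffices to show three things. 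First, $\tilde g$ has regret $O(\sqrt{T_\rho})$; this is the Case~1 computation with the uniform or exponential kernel, run over the matched steps. Second, the mismatch $\sum_t\|g_t-\tilde g_t\|_t^2=O(\sqrt{T_\rho})$. Third, the unmatched first cycle contributes only $O(R)=o(\sqrt{T_\rho})$.

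For attention, I would use two softmax layers of width $d_{\mathrm{key}}\vee p$ and two heads. In layer~1, head~1 uses query $\psi_t$, key $\psi_s$ and value $x_s$, with the score scaled by a temperature $\beta$ and a positional bias $p_{t-s}$ masking $s=t$ and favouring recency. With codebook coherence $<1/3$ and all lags at most $2R-1\le L^\star$, the softmax puts weight $1-O(Le^{-2\beta/3})$ on $x_{\rho(t)}$. This places $x_{\rho(t)}$ next to $y_t$ in the residual stream. Head~2 does the same with query $\psi_{t+1}$ to fetch $x_{\rho(t+1)}$. The matching query is not excluded, since $\psi_{t+1}$ may equal $\psi_t$ only across cycle boundaries; I would handle that case with the recency bias.

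Layer~2 is linear attention with value $x_{\rho(s)}y_s$ assembled from the residual stream, but linear attention values are linear in the embedding, not bilinear. So I would instead realize the bilinear form through the score: query $x_{\rho(t+1)}$, key $x_{\rho(s)}$ preconditioned by $\Sigma_x^{-1}/L$, and value $y_s$. This gives exactly $x_{\rho(t+1)}^\top\sum_s\kappa x_{\rho(s)}y_s$, the uniform-kernel predictor. Taking $\beta=\Theta(\log T)$ makes the softmax leakage errors polynomially small. Bounding their second moments requires controlling the Gaussian tails of $x$ inside the softmax, which I expect to be routine.

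For the SSM, I would use layer~1 with $H=p$ heads of width $2R$ and a diagonal sigmoid gate. For each codeword $j$, one coordinate pair stores the most recent regressor component carrying key $j$. Its gate is $\sigma(-\beta(\langle\psi^{(j)},\psi_t\rangle-2/3))$, which is near $0$ (overwrite) when $\psi_t=\psi^{(j)}$ and near $1$ (retain) otherwise, because the coherence is below $1/3$. The write $u_t$ is the $x_t$ component times the gated indicator of $\psi_t$. The two coordinates per codeword let a width-$2$ convolution read both the slot before the current overwrite (giving $x_{\rho(t)}$, to pair with $y_t$) and the slot to be queried by $\psi_{t+1}$ (giving $x_{\rho(t+1)}$). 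Readout uses $W_Q$ built from the codeword inner products, again sharpened by $\beta$. Layer~2 is the fixed-transition SSM of Proposition~\ref{prop:stationary_pair}(b), which writes $x_{\rho(s)}y_s$ as an outer product and reads with $x_{\rho(t+1)}$.

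\textbf{Main obstacle.} The hardest step is the error propagation through the sigmoid gates. Each soft retention factor $\sigma(\cdot)<1$ decays a stored value over up to $2R$ steps, and each soft overwrite leaves residue. I would choose $\beta=\Theta(\log(RT))$ so that the per-step error is $O(1/(RT))$. The accumulated error per slot is then $O(1/T)$, and its contribution, after passing through the layer-2 kernel, sums to $O(1)$.
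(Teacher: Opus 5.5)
\textbf{Gap in the SSM construction.} In $\mathcal F_{\mathrm{SSM}}^{\mathrm{sel}}$ the sigmoid appears only in the transition $\Phi_t$. The write $(W_K\widetilde e_t)\odot(W_V\widetilde e_t)$ and the read $(W_Q\widetilde e_t)\odot h_t$ are bilinear in the convolved token. So your write has no ``gated indicator of $\psi_t$'' available. Nor can an affine read selector be ``sharpened by $\beta$'': scaling it scales the cross-talk by the same factor.

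Any affine slot selector on the codebook gives an $R\times R$ coefficient matrix $(w_j^\top\psi^{(k)}+b_j)_{j,k}$ of rank at most $d_{\mathrm{key}}+1=O(\log R)$. Its Frobenius distance to $I_R$ is therefore at least $\sqrt{R-d_{\mathrm{key}}-1}$. This is exactly the obstruction behind Theorem~\ref{thm:case2b_floors}(a)--(b). Your cache is thus corrupted twice. A retained slot absorbs non-negligible cross-talk from every write between consecutive occurrences of its key. The read then mixes in the other $R-1$ slots with non-vanishing weights. A width-$2$ convolution cannot ``read a slot before its overwrite'' either, because convolution acts on residual tokens, not on recurrent state.

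\textbf{The missing idea.} Put all key selectivity into the gates and make writes and reads key-independent. The paper broadcasts the telescoping write $u_{i,t}=\xi_{i,t}-\xi_{i,t+1}$ (a bilinear write with one constant factor) to every slot of two blocks. Block one resets slot $\iota_t$; block two resets slots $\iota_t$ and $\iota_{t+1}$. After its last reset at time $\ell$, block-one slot $r$ holds $\xi_{i,\ell}-\xi_{i,t+1}$, and the two blocks differ only at the upcoming key's slot. The constant read $(\mathbf 1_R,-\mathbf 1_R)$ plus the residual $\xi_{i,t}$ therefore gives exactly $\chi_{i,t}=\xi_{i,\rho(t+1)}$. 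Layer two's width-$2$ convolution then pairs $\chi_{t-1}=\xi_{\rho(t)}$ with $y_t$ before the fixed exponential recurrence.

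\textbf{Attention.} Your construction is the paper's, with one correction. You need a recency slope $\gamma\gtrsim\log T$ to suppress older same-key occurrences. A wrong key at small lag then gains a positional advantage of up to $2\gamma R$ over the correct source. With $\eta_0<1/3$ the codebook coherence, you therefore need $\beta(1-\eta_0)\ge 2\gamma R+c\log(LT)$, i.e.\ $\beta=\Theta(R\log T)$ rather than $\Theta(\log T)$. Your leakage bound $O(Le^{-2\beta/3})$ ignores this, but the fix is free. Also, your scores involve only keys and positions, so the softmax weights are key-measurable and no Gaussian-tail control inside the softmax is needed.
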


\paragraph{Architectural lessons.}\looseness=-1
Thm.~\ref{thm:case2b_floors} and Prop.~\ref{prop:case2b_pair} reveal a key architectural difference in how sharp selectivity is employed. Softmax attention selects among retained tokens, so it can address $R$ candidate tokens with only $O(\log R)$-dimensional representations; the SSM instead preserves candidates separately in its recurrent state, so the first-layer-selective SSM class $\mathcal{F}_{\mathrm{SSM}}^{\mathrm{sel}}$ requires $\widetilde\Omega(R)$ recurrent-state dimensions, which our $O(R)$ construction matches up to logarithmic factors. Together, Cases~2.a--b yield a clean division of labor: convolution gives SSMs cheap positional reach, while softmax gives attention cheap content addressing.

\begin{wrapfigure}[11]{r}{221.5bp}
  \vspace{-1.6\baselineskip}
  \centering
  \includegraphics{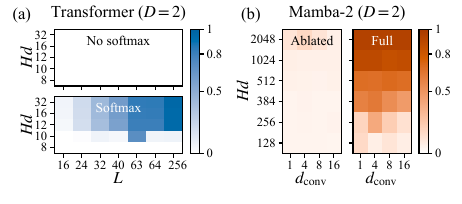}
  \begin{minipage}{0.91\linewidth}
  \vspace{-1.2\baselineskip}
  \caption{Practical validation of content addressing mechanisms. Darker means closer to Bayes.}
  \label{fig:case2b_mechanisms_scaling}
  \end{minipage}
\end{wrapfigure}
\paragraph{Practical validation.}
We test two-layer practical models on Case~2.a's delayed-bandit task, replacing its fixed delay with content routing over a compressed codebook of size \(R=32\). Consistent with Theorem~\ref{thm:case2b_floors}, Figure~\ref{fig:case2b_mechanisms_scaling} shows that the no-softmax Transformer---despite retaining its SwiGLU MLPs---and ablated Mamba-2 with fixed transitions and no additional write nonlinearities---despite retaining nonlinear reads---remain far below Bayes. 
By contrast, the softmax Transformer and full Mamba-2 achieve much higher scores.
For the Transformer, performance rises sharply when \(Hd\) reaches \(12<R=32\) and then changes little with further width. Improvement with \(L\) is instead smooth, with no sudden jump from \(L=2R-1=63\) to \(L=2R=64\), reflecting Case~2.a's belief-maintenance trend without its positional-reach threshold.
Full Mamba-2 instead depends mainly on \(Hd\), with no systematic benefit from increasing \(d_{\mathrm{conv}}\).

\begin{wrapfigure}[6]{r}{120.2pt}
  \vspace{-1.6\baselineskip}
  \centering
  \includegraphics{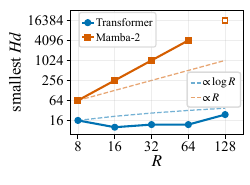}
  \makebox[\linewidth][r]{
  \begin{minipage}[t]{133pt}
    \setlength{\abovecaptionskip}{-15pt}
    \caption{Width scaling (log--log).}
    \label{fig:case2b_resource_scaling}
  \end{minipage}%
  }
\end{wrapfigure}

Fig.~\ref{fig:case2b_resource_scaling} tracks the smallest tested \(Hd\) reaching \(0.8\) as \(R\) increases from \(8\) to \(128\). It stays within \(10\)--\(24\) for the Transformer but grows rapidly for Mamba-2 (hollow marker means \(0.8\) not reached).
This qualitatively matches the exponential width separation jointly established by the lower bound in Thm.~\ref{thm:case2b_floors}(c) and the constructions in Prop.~\ref{prop:case2b_pair}. 
App.~\ref{app:experiments_case2b_practical} gives full experimental details.
\FloatBarrier

\section{Conclusion}\label{sec:conclusion}
In this paper, we developed belief geometry to compare the representational capabilities of attention and SSMs in ICLR. Because its elements abstract common computational demands, we expect them to aid analyses across broader settings. Another promising direction is to extend this representational framework to optimization, thereby helping explain why training favors some solutions over others.

\subsection*{AI use statement}

We used generative AI tools for refining theoretical claims and assisting with their proofs, implementing experiments, surveying related work, and revising the manuscript. We checked the mathematical arguments, experimental implementations, results, and cited sources, and take full responsibility for the final content of the paper. We did not use generative AI tools to generate research ideas or to propose hypotheses.

\subsection*{Ethics statement}

This work combines theoretical analysis of attention and SSMs with experiments on synthetic data. It involves no human participants or personal data. We identify no ethical concerns specific to the work presented.

\subsection*{Reproducibility statement}

Section~\ref{sec:architectures} defines our architecture classes, with practical connections and extensions in Appendix~\ref{app:arch}. Appendix~\ref{app:experiments} documents the experimental setups and training and evaluation protocols, while Appendices~\ref{app:case1} and~\ref{app:case2} provide full statements and proofs of the theoretical results. The accompanying \href{https://anonymous.4open.science/r/belief-routing-experiments-2EF4/}{anonymous code} includes implementations, configurations, plotting data, and commands to rerun the experiments and recreate the figures.

\bibliographystyle{iclr2027_conference}

\ifincludeappendix
\clearpage
\appendix

\section*{Appendix contents}
\pdfbookmark[0]{Appendix contents}{appendix.contents}

\begingroup
\setlength{\parindent}{0pt}
\setlength{\parskip}{0pt}

\newcommand{\appcontentsleaders}{\nobreak\hspace{0.7em}%
  \leaders\hbox to 0.6em{\hss\textcolor{black!30}{.}\hss}\hfill
  \nobreak\hspace{0.7em}}
\newcommand{\appcontentsguide}[2]{%
  \noindent\hyperref[#1]{#2\appcontentsleaders
    \makebox[1.6em][r]{\pageref*{#1}}}\par\vspace{5pt}}
\newcommand{\appcontentssection}[2]{%
  \par\addvspace{15pt}%
  \noindent\hyperref[#1]{\textbf{\makebox[1.8em][l]{\ref*{#1}}#2}%
    \appcontentsleaders\makebox[1.6em][r]{\textbf{\pageref*{#1}}}}\par}
\newcommand{\appcontentssubsection}[2]{%
  \par\addvspace{6pt}%
  \noindent\hspace*{1.8em}\hyperref[#1]{%
    \makebox[2.7em][l]{\ref*{#1}}#2\appcontentsleaders
    \makebox[1.6em][r]{\pageref*{#1}}}\par}

\vspace{4pt}
\appcontentsguide{tab:results_summary}{Summary of theoretical results}
\appcontentsguide{app:proof_guide}{Guide to the proofs}
\appcontentsguide{app:notation_guide}{Notation guide}
\vspace{5pt}
{\color{black!25}\hrule height 0.4pt}
\vspace{2pt}

\appcontentssection{app:related}{Extended related work}

\appcontentssection{app:conjugate_belief_geometry}{Beyond the linear-Gaussian testbed}

\appcontentssection{app:arch}{Architecture classes: practical connections and extensions}
\appcontentssubsection{app:practical_architecture_connections}{Connections to practical architectures}
\appcontentssubsection{app:count_normalized_extensions}{Count-normalized extensions}

\appcontentssection{app:experiments}{Additional experiments and experimental details}
\appcontentssubsection{app:experiments_practical}{Details of the main-text experiments}
\appcontentssubsection{app:experiments_bernoulli}{Validation on Beta--Bernoulli bandits}
\appcontentssubsection{app:experiments_analytical}{Validation of the theoretical results}

\appcontentssection{app:case1}{Case 1: full statements and proofs}
\appcontentssubsection{app:case1_vector_stationary}{Theorem~\ref*{thm:stationary_floor} and Proposition~\ref*{prop:stationary_pair}: Stationary kernels}
\appcontentssubsection{app:case1_adaptive}{Extension to time- and data-adaptive kernels}

\appcontentssection{app:case2}{Case 2: full statements and proofs}
\appcontentssubsection{app:vector_routing_geometry}{Proof of Theorem~\ref*{thm:routing_floor}}
\appcontentssubsection{app:case2a_vector}{Case 2.a: positional routing}
\appcontentssubsection{app:case2b_capacity}{Case 2.b: content routing}

\endgroup
 \clearpage

\begin{table}[!ht]
  \centering
  \caption{Summary of theoretical results. Class notations are defined in \S\ref{sec:architectures}.}
  \label{tab:results_summary}
  \smallskip
  \small
  \setlength{\tabcolsep}{1.5pt}
  \renewcommand{\arraystretch}{1.2}
  \begin{tabular}{@{}L{0.075\textwidth}L{0.205\textwidth}@{\hspace{\dimexpr2\tabcolsep+4pt\relax}}L{\dimexpr0.388\textwidth-4pt\relax}L{\dimexpr0.332\textwidth-6\tabcolsep\relax}@{}}
    \toprule
    & Bottleneck\newline(Belief geometry tool) & Lower-bound classes & Construction classes \\
    \midrule
    Case~1\newline(\S\ref{sec:case1_lb})
    & Belief maintenance\newline(Aggregation kernel)
    & Proposition~\ref{prop:stationary_pair}\par\smallskip
      $\mathcal F_{\mathrm{att}}(1,d,H,L)$, $Hd<p$ or fixed $L$\newline
      $\mathcal F_{\mathrm{SSM}}(1,d,H,1)$, $Hd<p$
    & Proposition~\ref{prop:stationary_pair}\par\smallskip
      $\mathcal F_{\mathrm{att}}(1,d,H,L^\star(T))$, $Hd\ge p$\newline
      $\mathcal F_{\mathrm{SSM}}(1,d,H,1)$, $Hd\ge p$ \\
    \midrule
    Case~2.a\newline(\S\ref{sec:case2a})
    & Evidence assembly\newline(Belief alignment)
    & Corollary~\ref{cor:case2a_floors}\par\smallskip
      $\mathcal F_{\mathrm{att}}(1,d,H,L)$\newline
      $\mathcal F_{\mathrm{SSM}}(1,d,H,d_{\mathrm{conv}})$, $d_{\mathrm{conv}}\le K$
    & Proposition~\ref{prop:case2a_pair}\par\smallskip
      $\mathcal F_{\mathrm{att}}(2,d,H,\widetilde L^\star)$, $Hd\ge2p$\newline
      $\mathcal F_{\mathrm{SSM}}(1,d,H,K+1)$, $Hd\ge p$ \\
    \midrule
    Case~2.b\newline(\S\ref{sec:case2b})
    & Addressing\newline(Addressing capacity)
    & Theorem~\ref{thm:case2b_floors}\par\smallskip
      $\mathcal F_{\mathrm{att}}(D,d,H,L)$\newline
      $\mathcal F_{\mathrm{SSM}}^{\mathrm{aff}}(D,d,H,d_{\mathrm{conv}})$\newline
      $\mathcal F_{\mathrm{SSM}}^{\mathrm{sel}}(D,d,H,d_{\mathrm{conv}})$, $DHd=o(\tfrac{R}{\log R})$
    & Proposition~\ref{prop:case2b_pair}\par\smallskip
      $\mathcal F_{\mathrm{att}}^{\mathrm{sm}}(2,d_{\mathrm{key}}\vee p,2,L^\star(T_\rho))$\newline
      $\mathcal F_{\mathrm{SSM}}^{\mathrm{sel}}(2,2R,p,2)$ \\
    \bottomrule
  \end{tabular}
\end{table}
\FloatBarrier
 
\paragraph{Guide to the proofs.}\label{app:proof_guide}
Boxes point to proof sections in the appendix, arrows show dependencies, and bold result numbers refer to the main text.
\par\smallskip
\noindent\makebox[\textwidth]{%
\begingroup
\definecolor{mapCaseOne}{HTML}{0072B2}
\definecolor{mapPositional}{HTML}{009E73}
\definecolor{mapContent}{HTML}{D55E00}
\definecolor{mapInk}{HTML}{333333}
\begin{tikzpicture}[
  x=1cm,y=1cm,
  every node/.style={font=\fontsize{9}{10.5}\selectfont,text=mapInk},
  box/.style={draw=black!35,line width=.45pt,rounded corners=1.5pt,
    align=center,inner xsep=4pt,inner ysep=4pt,fill=white},
  widebox/.style={box,text width=5.55cm,minimum height=.95cm},
  halfbox/.style={box,text width=2.65cm,minimum height=1.35cm},
  tallbox/.style={box,minimum height=1.8cm},
  proofedge/.style={-{Stealth[length=3.7pt,width=3pt]},draw=black!60,
    line width=.6pt,rounded corners=2pt},
  note/.style={font=\fontsize{8}{9}\selectfont,align=center,inner sep=2pt},
  casehead/.style={font=\fontsize{9}{10.5}\selectfont\bfseries,anchor=west},
]
  \fill[mapCaseOne!4,rounded corners=4pt] (.38,-.70) rectangle (6.7,-4.18);
  \fill[mapPositional!4,rounded corners=4pt] (.38,-5.55) rectangle (6.7,-12.1);
  \fill[mapContent!4,rounded corners=4pt] (7.25,-5.55) rectangle (13.57,-12.1);
  \node[casehead,text=mapCaseOne!65!black] at (.57,-.92)
    {Case 1: belief maintenance};
  \node[casehead,text=mapPositional!65!black] at (.57,-5.82)
    {Case 2.a: positional routing};
  \node[casehead,text=mapContent!70!black] at (7.44,-5.82)
    {Case 2.b: content routing};

  \node[widebox]
    (shared) at (6.95,-.125)
    {Whitened coordinates and regret\\
     \S\ref{app:case1_prediction_geometry}};
  \node[halfbox]
    (stationary) at (1.98,-2.05)
    {Stationary optimum\\
     \S\ref{app:case1_stationary_floor}\\
     \textbf{Theorem~\ref{thm:stationary_floor}}};
  \node[halfbox]
    (projection) at (5.10,-2.05)
    {Kernel-form projection\\
     \S\ref{app:case1_score_projection}};
  \node[widebox]
    (pair) at (3.54,-3.55)
    {Attention and SSM kernels\\
     \S\ref{app:case1_attention_kernels}--\ref{app:case1_stationary_pair}
     \quad \textbf{Proposition~\ref{prop:stationary_pair}}};
  \node[widebox]
    (adaptive) at (10.41,-3.55)
    {Extension to adaptive kernels\\
     \S\ref{app:case1_adaptive}};
  \node[box,text width=5cm,minimum height=.95cm]
    (routed) at (6.95,-4.85)
    {Belief geometry under routing\\
     \S\ref{app:vector_routing_geometry} \quad
     \textbf{Theorem~\ref{thm:routing_floor}}};

  \draw[proofedge] (shared.south -| 6.55,0) -- (6.55,-1.16)
    -- (1.98,-1.16) -- (stationary.north);
  \draw[proofedge] (5.10,-1.16) -- (projection.north);
  \draw[proofedge] (stationary.south)
    -- (stationary.south |- pair.north);
  \draw[proofedge] (projection.south)
    -- (projection.south |- pair.north);
  \draw[proofedge] (projection.east) -| (adaptive.north);
  \draw[proofedge] (shared.east) -- (13.57,0 |- shared.east)
    -- (13.57,0 |- routed.east) -- (routed.east);

  \node[widebox,anchor=north]
    (assembly) at (3.54,-6.4)
    {Assembly floors\\
     \S\ref{app:case2a_assembly_floors} \quad
     \textbf{Corollary~\ref{cor:case2a_floors}}};
  \node[halfbox,tallbox]
    (poslower) at (1.98,-8.65)
    {Lower bounds\\
     Attention (\S\ref{app:attention_context_floor})\\
     \textbf{Proposition~\ref{prop:attention_context_requirement}}\\[2pt]
     SSMs (\S\ref{app:case2a_ssm_full_class})};
  \node[halfbox,tallbox]
    (posconstruct) at (5.10,-8.65)
    {Constructions\\
     \S\ref{app:case2a_constructions}};
  \node[widebox]
    (posresults) at (3.54,-10.25)
    {Attention and SSM regret\\
     \S\ref{app:case2a_constructions}--\ref{app:case2a_ssm_full_class}
     \quad
     \textbf{Proposition~\ref{prop:case2a_pair}}};
  \node[widebox]
    (memory) at (3.54,-11.45)
    {Memory comparison\\
     \S\ref{app:case2a_memory_comparison}};

  \draw[proofedge] (posconstruct.south)
    -- (posconstruct.south |- posresults.north);
  \draw[proofedge] (poslower.south)
    -- (poslower.south |- posresults.north);
  \draw[proofedge] (posresults.south) -- (memory.north);
  \draw[proofedge] (poslower.west) -- (.38,-8.65)
    -- (.38,-11.45) -- (memory.west);

  \node[tallbox,text width=1.64cm,anchor=north]
    (capacity) at (8.28,-6.4)
    {Addressing\\capacity\\and regret\\
     \S\ref{app:case2b_capacity_regret}};
  \node[tallbox,text width=1.64cm,anchor=north]
    (capbounds) at (10.41,-6.4)
    {General\\capacity bounds\\
     \S\ref{app:case2b_capacity_geometry}};
  \node[tallbox,text width=1.64cm,anchor=north]
    (archgeometry) at (12.54,-6.4)
    {Architecture-\\specific\\geometry\\
     \S\ref{app:case2b_capacity_architectures}};
  \node[widebox]
    (addressing) at (10.41,-9.05)
    {Addressing floors\\
     \S\ref{app:case2b_capacity_theorem} \quad
     \textbf{Theorem~\ref{thm:case2b_floors}}};
  \node[widebox]
    (contentconstruct) at (10.41,-11.45)
    {Constructions\\
     \S\ref{app:case2b_vector_witnesses} \quad
     \textbf{Proposition~\ref{prop:case2b_pair}}};

  \draw[proofedge] (capacity.south)
    -- (capacity.south |- addressing.north);
  \draw[proofedge] (capbounds.south)
    -- (capbounds.south |- addressing.north);
  \draw[proofedge] (archgeometry.south)
    -- (archgeometry.south |- addressing.north);

  \draw[proofedge] (pair.east) -- (6.95,-3.55) -- (6.95,-4.24)
    -- (3.73,-4.24) -- (3.73,-5.47) -- (6.95,-5.47)
    -- (6.95,-7.55) -- (5.10,-7.55) -- (posconstruct.north);
  \draw[proofedge] (6.95,-7.55) -- (6.95,-10.5)
    -- (10.41,-10.5) -- (contentconstruct.north);
  \node[note,font=\fontsize{7.5}{8.5}\selectfont,fill=white,
    inner xsep=1.5pt,inner ysep=.75pt,rotate=90]
    at (6.95,-6.5) {reuse kernels};

  \draw[proofedge,preaction={draw=white,line width=2.5pt}]
    (routed.south -| 5.85,0) -- (5.85,0 |- assembly.north);
  \draw[proofedge] (routed.south -| 7.36,0) -- (7.36,-6.1)
    -- (8.28,-6.1) -- (capacity.north);

\end{tikzpicture}
\endgroup
}\par
\clearpage

\paragraph{Notation guide.}
\label{app:notation_guide}
Frequently used notation, grouped by role. References point to the definitions.

\begingroup
\small
\setlength{\tabcolsep}{4pt}
\renewcommand{\arraystretch}{1.12}
\noindent
\begin{tabular}{@{}L{0.25\textwidth}L{\dimexpr0.625\textwidth-4\tabcolsep\relax}L{0.125\textwidth}@{}}
\toprule
Symbol & Meaning & Defined in \\
\midrule
\multicolumn{3}{@{}l@{}}{\textbf{Data and routing}} \\
\addlinespace[3pt]
$\theta,\ p$
& Unknown regression parameter $\theta\in\mathbb R^p$ and its dimension.
& \S\ref{sec:setup} \\
\addlinespace[1pt]
$x_s,\ y_s,\ z_s$
& Regressor, response, and token $z_s=(x_{s+1},y_s,x_s)$; Case~2.b also includes keys.
& \S\ref{sec:setup}, \S\ref{sec:case2b} \\
\addlinespace[1pt]
$\Sigma_x,\ \Sigma_\theta,\ \sigma^2$
& Regressor covariance, prior covariance, and noise variance.
& \S\ref{sec:setup} \\
\addlinespace[1pt]
$T,\ T_\rho$
& Total time horizon; effective horizon (number of matched steps).
& \S\ref{sec:setup}, \S\ref{sec:case2_lb} \\
\addlinespace[1pt]
$\rho(s),\ K$
& Regressor index paired with $y_s$; positional delay $K$ gives $\rho(s)=s-K$ in Case~2.a.
& \S\ref{sec:setup}, \S\ref{sec:case2a} \\
\addlinespace[1pt]
$\mathcal M_t,\ N_t^\rho$
& Matched observation indices through time $t$ and their count.
& \S\ref{app:vector_routing_geometry} \\
\addlinespace[1pt]
$\psi_s,\ R,\ d_{\mathrm{key}}$
& Content key, number of distinct keys, and key-vector dimension (Case~2.b).
& \S\ref{sec:case2b} \\
\addlinespace[5pt]
\multicolumn{3}{@{}l@{}}{\textbf{Predictions and belief geometry}} \\
\addlinespace[3pt]
$g_t,\ g_t^{\mathrm{Bayes}}$
& Prediction of $y_{t+1}$ from $z_{1:t}$, and its Bayes-optimal counterpart.
& \S\ref{sec:setup} \\
\addlinespace[1pt]
$\hat\theta_t(g),\ \hat\theta_t^{\mathrm{Bayes}}$
& Implicit belief extracted from $g_t$ by projection, and posterior mean $\mathbb E[\theta\mid z_{1:t}]$.
& \S\ref{sec:setup} \\
\addlinespace[1pt]
$\mathrm{Reg}_T(g),\ \mathrm{Reg}_T(\mathcal F)$
& Cumulative excess squared loss over Bayes; the best achievable regret in class $\mathcal F$.
& \S\ref{sec:setup} \\
\addlinespace[1pt]
$\mathcal H_t^\rho,\ g_t^\perp$
& Belief-form prediction subspace; component of $g_t$ orthogonal to it.
& \S\ref{sec:setup} \\
\addlinespace[1pt]
$\mathcal K_t,\ \Pi_t$
& Kernel-form belief subspace and orthogonal projection onto it.
& \S\ref{sec:setup} \\
\addlinespace[1pt]
$\|\cdot\|_t,\ \|\cdot\|_{\Sigma_x},\ \mathrm{corr}_{\Sigma_x}^2$
& $L^2$ norm for predictions and whitened beliefs; $\Sigma_x$-weighted belief norm; belief alignment.
& \S\ref{sec:setup}, \S\ref{sec:case2_lb}, \S\ref{app:vector_routing_geometry} \\
\addlinespace[1pt]
$K_{t,s},\ \kappa_u$
& Matrix-valued aggregation-kernel weight on $x_{\rho(s)}y_s$; stationary kernels have $K_{t,s}=\kappa_{t-s}$.
& \S\ref{sec:setup}, \S\ref{sec:case1_lb} \\
\addlinespace[1pt]
$N_t,\ X_q,\ g_{t,q}$
& Number of candidate next keys, their latest regressors, and predictions under each candidate key (Case~2.b).
& \S\ref{sec:case2b} \\
\addlinespace[1pt]
$\mathcal C_t,\ \mathrm{Cap}_t(g)$
& Conditioning information making candidate Bayes predictions independent; capacity sums their squared conditional correlations with $g_{t,q}$.
& \S\ref{sec:case2b} \\
\addlinespace[5pt]
\multicolumn{3}{@{}l@{}}{\textbf{Architectural resources}} \\
\addlinespace[3pt]
$D,\ H,\ d,\ d_e$
& Depth, heads per layer, per-head width (SSM state width), and residual-stream width.
& \S\ref{sec:architectures} \\
\addlinespace[1pt]
$L,\ d_{\mathrm{conv}}$
& Attention window length and convolution tap count; a convolution reaches $d_{\mathrm{conv}}-1$ steps back.
& \S\ref{sec:architectures} \\
\addlinespace[1pt]
$L^\star(T),\ \widetilde L^\star$
& Optimal Case~1 attention window; $\widetilde L^\star=(K+1)\vee L^\star(T_\rho)$ also covers positional reach.
& \S\ref{sec:case1_lb}, \S\ref{sec:case2a} \\
\addlinespace[5pt]
\multicolumn{3}{@{}l@{}}{\textbf{Coordinates and statistics in the proofs}} \\
\addlinespace[3pt]
$\widetilde x_t,\ \widetilde\theta$
& Whitened variables: $\widetilde x_t=\Sigma_x^{-1/2}x_t$ and $\widetilde\theta=\Sigma_x^{1/2}\theta$.
& Eq.~\eqref{eq:vector_whitening} \\
\addlinespace[1pt]
$\widetilde\Sigma_\theta,\ U,\ \lambda_i$
& Whitened prior covariance $\Sigma_x^{1/2}\Sigma_\theta\Sigma_x^{1/2}$, its eigenbasis, and its eigenvalues.
& Eq.~\eqref{eq:vector_whitening} \\
\addlinespace[1pt]
$\widetilde\theta_t(g),\ \widetilde\theta_t^{\mathrm{Bayes}}$
& Whitened beliefs: $\Sigma_x^{1/2}\hat\theta_t(g)$ and $\Sigma_x^{1/2}\hat\theta_t^{\mathrm{Bayes}}$, respectively.
& \S\ref{app:case1_prediction_geometry} \\
\addlinespace[1pt]
$K_u,\ M_t$
& Whitened stationary kernel $K_u=\Sigma_x^{1/2}\kappa_u\Sigma_x^{1/2}$ and kernel mass $M_t=\sum_{u<t}K_u$ (Case~1).
& Eq.~\eqref{eq:vector_stationary_estimator} \\
\addlinespace[1pt]
$\Gamma,\ v_i$
& Evidence-innovation covariance and its eigenvalues $v_i=\lambda_i+\sum_j\lambda_j+\sigma^2$.
& Eq.~\eqref{eq:vector_covariance_scales} \\
\addlinespace[1pt]
$\zeta,\ \bar\zeta$
& Bias--variance scales: $\zeta=\sum_i\sqrt{\lambda_iv_i}$ and $\bar\zeta=\sqrt{(\sum_i\lambda_i)(\sum_i v_i)}$.
& \S\ref{sec:case1_lb} \\
\addlinespace[1pt]
$A_t,\ B_t$
& Accumulated evidence $\sum_{s\le t}x_sy_s$ and accumulated information $\sum_{s\le t}x_sx_s^\top$ (Case~1).
& \S\ref{app:case1_prediction_geometry} \\
\addlinespace[1pt]
$j,\ n,\ Y_j,\ r_j$
& Case~2.a: $j=s-K$, $n=t-K$; matched response $Y_j=y_{K+j}$ and evidence atom $r_j=\widetilde x_jY_j$.
& \S\ref{app:case2a_vector} \\
\bottomrule
\end{tabular}\par
\endgroup
 \clearpage

\section{Extended related work}\label{app:related}
This appendix expands the comparisons of \S\hyperref[par:intro_related_work]{\ref*{sec:intro}} result by result,
following the three capabilities of our belief geometry, and then places our
analysis relative to other expressivity axes, unifying frameworks, and
optimization results.

\paragraph{Belief maintenance and in-context regression (Case~1).}
Our testbed descends from the linear-regression formulation of in-context
learning of \citet{gargWhatCanTransformers2023}, and the theory that followed
asks which estimator a model can implement: transformers can be constructed to
run gradient descent and ridge regression \citep{akyurekWhatLearningAlgorithm2023},
one layer of linear self-attention performs one gradient step
\citep{oswaldTransformersLearnIncontext2023}, and deeper constructions implement
ridge, Lasso, and generalized linear estimators with in-context algorithm
selection \citep{baiTransformersStatisticiansProvable2023}. Statistical
treatments bound the risk at a fixed context length
\citep{wuHowManyPretraining2024,wakayamaInContextLearningProvably2025}. On the
recurrent side, the same gradient step has a one-layer recurrent implementation
\citep{tianLearningRecurrentState2026}. Closest to belief maintenance itself,
transformers can implement the Kalman filter and recurrent networks can
approximate Bayesian filters, both with time-uniform error bounds
\citep{goelCanTransformerRepresent2024,bishopRecurrentNeuralNetworks2023}, while
linear recurrences efficiently approximate a temporal relationship only when
its dependence on the distant past decays sufficiently
\citep{liApproximationOptimizationTheory2022}.

Prior work thus establishes implementability or bounds the loss at a fixed prediction step. Our
belief-maintenance analysis instead asks what maintaining the Bayes belief
costs over the stream, comparing both families under cumulative Bayes regret.
Our stationary-floor theorem (Theorem~\ref{thm:stationary_floor})
lower-bounds every stationary kernel by $\zeta\sqrt T+o(\sqrt T)$, and
Proposition~\ref{prop:stationary_pair}
shows that once $Hd\ge p$, the SSM's exponential kernel attains this floor while
attention's uniform kernel pays the factor $\tfrac{2}{\sqrt3}\,\bar\zeta/\zeta$.
Among the closest architecture lower bounds, that of
\citet{coleInContextLearningLinear2025} for a single-layer linear class is a
non-vanishing terminal loss under a parameter-norm constraint; our analysis
concerns cumulative regret and identifies the exact leading $\sqrt T$ constant.

\paragraph{Evidence assembly and positional routing (Case~2.a).}
Positional evidence assembly requires composing operations across tokens:
locating the relevant past input and combining it with the current response.
Related work examines how attention depth and convolution in SSMs enable such
interactions. For attention, one layer
cannot solve the induction-heads task unless its size grows with the sequence,
while two layers suffice \citep{sanfordOnelayerTransformersFail2024}; two layers
with one head each represent any conditional $k$-gram
\citep{ekboteWhatOneCannot2025}; a single nonlinear attention layer is
asymptotically Bayes optimal for regression on one latent location
\citep{marionAttentionLayersProvably2025}; and logarithmic depth separates from
a single layer on dynamical systems \citep{coleInContextLearningLinear2025}. On
the recurrent side, positional reach comes from short convolutions: H3
introduced its shift operator because state-space models struggled to recall
earlier tokens \citep{fuHungryHungryHippos2023}, and fixed convolution filters
support only a subset of token-interaction distances
\citep{aroraZoologyMeasuringImproving2023}. For Markov-chain prediction,
\citet{bondaschiMarkovLaplaceHow2026} show that a simplified one-layer Mamba
with width-$2$ convolution and an $\ell_1$-normalized readout exactly represents
the Bayes-optimal predictor for their first-order Markov model.

The cited attention results establish depth requirements and prediction
guarantees for specific tasks, while the recurrent constructions identify how
convolution enables useful predictors. Our analysis connects these mechanisms
through a common cumulative-regret criterion.
Corollary~\ref{cor:case2a_floors} quantifies the cost of insufficient evidence
assembly as linear regret in both architecture classes. Beyond establishing
that assembly is possible, Propositions~\ref{prop:case2a_pair}
and~\ref{prop:attention_context_requirement} compare the memory needed for
$\Theta(\sqrt{T_\rho})$ regret: the SSM separates local positional reach from
belief maintenance, whereas fixed-depth attention in our classes requires a
growing context to match this rate. Under standard per-position
caching, this yields an SSM memory advantage when $K=o(\sqrt{T_\rho})$.

\paragraph{Content addressing (Case~2.b).}
Content addressing requires selecting the relevant information from a model's
history. Induction heads illustrate this content-based selection: after observing $[A][B]$,
they predict $[B]$ when $[A]$ reappears. \citet{olssonIncontextLearningInduction2022}
identified these circuits as a mechanism behind in-context learning.
Mechanistic accounts describe associative retrieval through fast weights in
linear attention \citep{schlagLinearTransformersAre2021} and through modern
Hopfield networks in softmax attention
\citep{ramsauerHopfieldNetworksAll2021}.
\citet{aroraZoologyMeasuringImproving2023} trace most of the observed
language-modeling gap between attention and gated-convolution models to
failures to recall earlier context and introduce multi-query associative
recall to study this capability. Related experiments show that Mamba trails
transformers on retrieval while matching them on regression
\citep{parkCanMambaLearn2024}. Hybrid models such as Griffin and Jamba combine
recurrence with attention, improving retrieval and few-shot performance over
their recurrent-only baselines
\citep{deGriffinMixingGated2024,lieberJambaHybridTransformerMamba2024}.

These empirical differences have theoretical counterparts. For sparse
averaging, attention needs width
logarithmic in the sequence while any recurrent model needs $\Omega(N)$ bits of
state \citep{sanfordRepresentationalStrengthsLimitations2023}. For $k$-hop
induction, depth $\lfloor\log_2 k\rfloor+2$ suffices, whereas recurrent models
need depth $k$ or polynomial width, with similar bounds for kernel-based and
sparse attention \citep{sanfordTransformersParallelComputation2024}. Two-layer
transformers copy strings of exponential length while fixed-state models cannot
\citep{jelassiRepeatMeTransformers2024}; recurrent models with sublinear memory
cannot retrieve even with chain of thought \citep{wenRNNsAreNot2024}; and any
recurrent model needs $\Omega(N)$ bits for exact associative recall
\citep{aroraSimpleLinearAttention2025}.
\citet{mousavi-hosseiniWhenTransformersOutperform2025} also establish a
sample-complexity advantage for attention over recurrent networks on
sparse-token regression.

Our addressing-capacity analysis differs from these results in three respects.
First, at unbounded precision, the bit-based memory bounds above are vacuous
as state-dimension bounds and do not distinguish transition mechanisms;
Theorem~\ref{thm:case2b_floors} instead bounds real-valued state dimension
with no finite-precision assumption and separates the mechanisms at fixed
depth: linear attention incurs linear regret at any width and context length,
fixed and affinely gated transitions incur it at every state width,
sigmoid-gated transitions require $DHd=\Omega(R/\log R)$, and our sigmoid-gated
construction succeeds with $\Theta(R)$ recurrent-state dimension
(Proposition~\ref{prop:case2b_pair}). Second, the copying and associative-recall
tasks above are standalone and discrete, scored by exact success; our addressing
problem is embedded in noisy Bayesian
inference, alongside evidence assembly and belief maintenance, and scored by
regret. Third, sparse averaging and sparse-token regression hand the model the
relevant positions explicitly, whereas our key must be matched against $R$
near-orthogonal codewords of width $\Theta(\log R)$, the axis on which softmax
attention and sigmoid-selective SSMs separate.

\paragraph{Expressivity on other axes.}
Other expressivity results ask which sequential computations an architecture
can perform, such as tracking a changing state or computing parity---whether
a binary sequence contains an odd or even number of ones.
S4, Mamba, and transformers lie in $\mathrm{TC}^0$ and cannot solve general
state-tracking tasks \citep{merrillIllusionStateStateSpace2025}.
The transition spectrum also matters: linear recurrences whose transition
matrices have only positive eigenvalues cannot compute parity, whereas allowing
negative eigenvalues makes this possible
\citep{grazziUnlockingStateTrackingLinear2025}.
Gating provides another source of expressive power, enabling selective SSMs to
represent functions beyond the capabilities of linear transformers
\citep{cironeTheoreticalFoundationsDeep2024,cohen-karlikExpressivitySelectiveStateSpace2025}.
These results characterize computational capabilities such as state tracking
and parity. Our analysis instead quantifies the cumulative-regret and resource
costs of assembling evidence, maintaining beliefs, and addressing past
information.

\paragraph{Unifying frameworks.}
A complementary approach to comparing attention and SSMs is to identify the
mathematical structure they share. Linear attention admits a recurrent form
\citep{katharopoulosTransformersAreRNNs2020}, and selective SSMs and attention
are dual under structured state-space duality
\citep{daoTransformersAreSSMs2024}. The dynamical-systems framework of
\citet{sieberUnderstandingDifferencesFoundation2024} rewrites attention, SSMs,
and RNNs as one recurrence and identifies S6's input projections with
attention's keys and queries. Test-time learning provides another common
description: \citet{sunLearningLearnTest2025} interpret
sequence-modeling layers as test-time learners, recovering linear and softmax
attention as special cases, while \citet{wangTesttimeRegressionUnifying2025}
derive these layers as test-time regressors, with softmax attention as kernel
smoothing.

These frameworks unify architectural representations. Our belief
geometry instead unifies their analysis, translating architectural differences
into task-dependent regret and resource separations.
Appendix~\ref{app:arch} details how the corresponding components of the practical architectures enter our
formal classes.

\paragraph{Optimization.}
Our analysis is representational: it asks what each class can realize, not
what gradient descent finds. The optimization literature is complementary,
showing that related mechanisms are learnable in simplified settings. For
linear attention, the one-step estimator is the global minimizer of the
pretraining loss \citep{mahankaliOneStepGradient2023} and gradient flow reaches
it \citep{zhangTrainedTransformersLearn2023}; softmax attention converges in
phases and allocates heads to tasks
\citep{huangInContextConvergenceTransformers2023,chenTrainingDynamicsMultiHead2024};
induction heads emerge through distinct training phases
\citep{biettiBirthTransformerMemory2023,nichaniHowTransformersLearn2024,edelmanEvolutionStatisticalInduction2024,chenUnveilingInductionHeads2024};
and Mamba trained by gradient descent implements a variant of online gradient
descent on linear regression \citep{jiangTrainedMambaEmulates2025}. Optimization
guarantees also reveal a trade-off with memory: for diagonal recurrent networks,
long-term dependencies can make gradient-descent guarantees grow exponentially
with the sequence length \citep{cayciConvergenceGradientDescent2024}, whereas
for shallow attention the optimization error is independent of the sequence
length, at the price of memory that grows with it
\citep{ardaFiniteTimeAnalysisGradient2026}, echoing, from an optimization
perspective, the memory trade-off we establish in Case~2.a.
 
\section{Beyond the linear-Gaussian testbed}
\label{app:conjugate_belief_geometry}

Our three-operation viewpoint (evidence assembly, belief maintenance, addressing) in \S\ref{sec:setup} extends beyond linear-Gaussian regression,
and so does the belief geometry behind our regret analysis. Three
examples below show how the operations persist as posterior updates and
prediction rules change. We then identify when kernel form, the
Pythagorean decomposition, and the two-factor floor carry over directly,
and sketch how approximations can extend them further.

\paragraph{General formulation.}
Retain the online token layout and
partial-injective routing of \S\ref{sec:setup}, including supplied keys, and evaluate predictors by
the same squared-loss Bayes regret.
Allow an arbitrary prior $q_0$ on $\theta$ and an arbitrary iid regressor
distribution, with the parameter, regressors, and routing mutually independent.
Responses are conditionally independent given these quantities,
with density or mass function
$p_\theta(y_s\mid x_{\rho(s)})$ and finite second moments.

Write $q_t$ for the posterior distribution of $\theta$ given $z_{1:t}$ and
$m_\theta(x):=\EE[y_s\mid x_{\rho(s)}=x,\theta]$.
Bayes' rule and conditional expectation give, at matched steps,
\[
  q_t(d\theta)
  \propto
  p_\theta(y_t\mid x_{\rho(t)})q_{t-1}(d\theta),
  \qquad
  g_t^{\mathrm{Bayes}}
  =
  \int m_\theta(x_{\rho(t+1)})\,q_t(d\theta).
\]
Unmatched observations leave the posterior unchanged, and unmatched
targets have prediction zero.

Evidence assembly forms the likelihood contribution
$p_\theta(y_t\mid x_{\rho(t)})$ from the matched pair
$(x_{\rho(t)},y_t)$. Belief maintenance updates $q_{t-1}$ to $q_t$
and preserves the posterior or a sufficient summary for subsequent
updates and predictions. Addressing locates $x_{\rho(t+1)}$ before
its response is observed, so that the maintained belief is applied
to the correct regressor when forming $g_t^{\mathrm{Bayes}}$.

\paragraph{Conjugate families.}
Let $\{q_\eta\}$ be a finite-dimensional conjugate exponential family
with natural parameter $\eta$.
In natural parameters, multiplying the prior by a likelihood factor adds
the likelihood's contribution to the prior parameters. Denote this
complete sufficient-statistic increment by $S(x,y)$.
Starting from $q_0=q_{\eta_0}$, conjugacy gives
\[
  q_t=q_{\eta_t},
  \qquad
  \eta_t
  =\eta_{t-1}+S(x_{\rho(t)},y_t)
  =\eta_0+\sum_{s\leq t}S(x_{\rho(s)},y_s).
\]
Here and below, sums include only matched steps, and the recursive update
applies when $t$ is matched. This connects directly to our kernel form:
the natural parameters $\eta_t$ aggregate the sufficient-statistic increments
$S(x_{\rho(s)},y_s)$. Evidence assembly forms
$S(x_{\rho(t)},y_t)$, belief maintenance accumulates $\eta_t$, and
addressing supplies $x_{\rho(t+1)}$ for prediction under $q_{\eta_t}$.
For Gaussian regression,
$S(x,y)=(\sigma^{-2}xy,\sigma^{-2}xx^\top)$:
the $xy$ term is our evidence atom, while $xx^\top$ contributes
to posterior precision and enters the Bayes kernel in
Equation~\eqref{eq:bayes_kernel}.

The following three examples make these operations explicit and the final paragraph discusses when and how our regret analysis extends.

\paragraph{Example 1 (conjugate): Beta--Bernoulli regression.}
\label{app:conjugate_bernoulli}
Let the prior coordinates $\theta_j\sim\operatorname{Beta}(1,1)$ be
independent, and take iid regressors uniform on the standard basis vectors
$e_1,\ldots,e_p$. Keep the same routing and let
\[
  y_s\mid\theta,x_{\rho(s)}
  \sim\mathrm{Bern}(x_{\rho(s)}^\top\theta).
\]
With the accumulated statistics
\(
  N_t:=\sum_{s\leq t}x_{\rho(s)}
\) 
and 
\(
  A_t:=\sum_{s\leq t}x_{\rho(s)}y_s
\),
the centered Bayes belief has the kernel form
\[
  \hat\theta_t^{\mathrm{Bayes}}-\tfrac12\mathbf1
  =\sum_{s\leq t}K_{t,s}^{\mathrm{Bayes}}
    x_{\rho(s)}(y_s-\tfrac12),
  \qquad
  K_{t,s}^{\mathrm{Bayes}}
  :=\operatorname{diag}\!\left(\frac{1}{2+N_{t,j}}\right)_{j=1}^p,
\]
where $\mathbf1$ is the all-ones vector. And the Bayes predictor is
\[
  g_t^{\mathrm{Bayes}}
  =x_{\rho(t+1)}^\top\hat\theta_t^{\mathrm{Bayes}}.
\]
The three operations therefore have the same structure as in our linear-Gaussian setting.
Evidence assembly forms the centered evidence atoms
$x_{\rho(s)}(y_s-\tfrac12)$. Belief maintenance aggregates them
through $K_{t,s}^{\mathrm{Bayes}}$ to obtain the centered Bayes belief
$\hat\theta_t^{\mathrm{Bayes}}-\tfrac12\mathbf1$.
Addressing supplies $x_{\rho(t+1)}$ for prediction.
Thus, this non-Gaussian example retains all three operations, the kernel form,
and linear prediction form, although with different evidence atoms and a different aggregation kernel.
App.~\ref{app:experiments_bernoulli} tests our architectural lessons on this model
under the Case~2 routings.

\paragraph{Example 2 (conjugate): lognormal regression.}
Take $p=1$, a prior $\theta\sim\mathcal N(0,\nu)$ with $\nu>0$,
and iid regressors uniform on $[-1,1]$. Keep the same routing and let
\[
  \log y_s=x_{\rho(s)}\theta+\epsilon_s,
  \qquad
  \epsilon_s\stackrel{\mathrm{iid}}{\sim}\mathcal N(0,\sigma^2),
\]
with $\sigma^2>0$ and the noises independent of the parameter, regressors,
and routing. 
Completing the square gives
$q_t=\mathcal N(\hat\theta_t^{\mathrm{Bayes}},v_t)$, where
$v_t^{-1}=\nu^{-1}+\sigma^{-2}\sum_{s\leq t}x_{\rho(s)}^2$.
The Bayes belief has kernel form
\[
  \hat\theta_t^{\mathrm{Bayes}}
  =\sum_{s\leq t}K_{t,s}^{\mathrm{Bayes}}
    x_{\rho(s)}\log y_s,
  \qquad
  K_{t,s}^{\mathrm{Bayes}}:=\sigma^{-2}v_t.
\]
This is the same scalar Bayes kernel as in
Equation~\eqref{eq:bayes_kernel}, now applied to transformed evidence atoms.
Evidence assembly forms different evidence atoms
$x_{\rho(s)}\log y_s$. Belief maintenance aggregates them through
$K_{t,s}^{\mathrm{Bayes}}$ to obtain $\hat\theta_t^{\mathrm{Bayes}}$.
Averaging over the posterior and the next observation noise gives
\[
  g_t^{\mathrm{Bayes}}
  =\exp\!\left(
    x_{\rho(t+1)}\hat\theta_t^{\mathrm{Bayes}}
    +\frac{\sigma^2+x_{\rho(t+1)}^2v_t}{2}
  \right).
\]
Addressing still supplies $x_{\rho(t+1)}$ for prediction. Thus, the three operations and the kernel form are retained but the Bayes prediction is a nonlinear function of the belief parameters.

\paragraph{Example 3 (nonconjugate): logistic regression.}
\label{app:nonconjugate_logistic}
Keep the prior $\theta\sim\mathcal N(0,\nu)$, regressors, and routing of
Example~2, but let matched responses $y_s\in\{0,1\}$ be Bernoulli with
$p_\theta(1\mid x)=(1+e^{-x\theta})^{-1}$.
Multiplying the likelihood contributions gives
\[
  q_t(d\theta)
  \propto
  \exp\!\left\{
    \theta\sum_{s\leq t}x_{\rho(s)}y_s
    -\sum_{s\leq t}\log(1+e^{x_{\rho(s)}\theta})
  \right\}q_0(d\theta).
\]
Here, evidence assembly still combines each $y_s$ with its generating regressor
$x_{\rho(s)}$. Belief maintenance incorporates both the sum
$\sum_{s\leq t}x_{\rho(s)}y_s$ and the regressor-dependent logarithmic
terms. Unlike Examples~1 and~2, this update does not directly yield
a kernel-form Bayes belief.
Addressing supplies $x_{\rho(t+1)}$, and averaging
$p_\theta(1\mid x_{\rho(t+1)})$ over $q_t$ gives
$g_t^{\mathrm{Bayes}}$.
The three operations remain, even though the Gaussian posterior update
does not.
The main-text Case~2 experiments use a $p$-dimensional version of this model
(App.~\ref{app:experiments_practical}).

\paragraph{Extending our regret analysis.}
The three operations apply to every prior and likelihood in the general
formulation above. How much of our regret analysis extends with them
depends on the structure of the Bayes prediction and of the posterior
update.

When the Bayes prediction is linear in the Bayes belief through a fixed,
finite-dimensional feature map~$\varphi$ of the addressed regressor,
$g_t^{\mathrm{Bayes}}=\varphi(x_{\rho(t+1)})^\top\hat\theta_t^{\mathrm{Bayes}}$,
as in Example~1 with $\varphi(x)=x$, the projection onto the belief
subspace (Figure~\ref{fig:belief_geometry}) extends directly, with $\varphi(x)$ in place of $x$ and
$\EE[\varphi(x)\varphi(x)^\top]$ in place of $\Sigma_x$. So does the
two-factor floor of Theorem~\ref{thm:routing_floor}, which rests on this
projection alone.

The kernel-form refinement depends instead on the posterior update. When
the posterior admits a finite-dimensional sufficient statistic that
accumulates additively, as in conjugate families, belief maintenance
reduces to aggregating assembled increments, which mirrors the kernel
form of Equation~\eqref{eq:bayes_kernel} and is exactly what happens
in Examples~1 and~2. With both properties, as in Example~1, the full
Pythagorean decomposition in Equation~\eqref{eq:belief_pythagoras}
extends directly, and with it our regret-analysis framework.

Beyond these conditions, kernel form \eqref{eq:bayes_kernel} and the decomposition \eqref{eq:belief_pythagoras} still provide
a starting point for further analysis.
A nonlinear prediction without such features, as in Example~2, can be
treated as a separate readout of a kernel-form belief. Linearizing the
readout around the Bayes belief parameters makes the excess loss locally
quadratic in the belief mismatch, and our projection decomposition then holds
in the sensitivity-weighted geometry.
A general posterior update, as in Example~3, can be approximated by one
that retains kernel aggregation. Expanding the logarithmic terms to
second order around the prior mean $\theta=0$ gives a Gaussian
approximation whose mean has the kernel form of
Equation~\eqref{eq:bayes_kernel}, with $\sigma^2=4$ and pseudo-responses
$4(y_s-\tfrac12)$ in place of $y_s$.
Combined with the readout linearization, the decomposition then applies
to this approximate predictor, up to the errors of the posterior
approximation and the readout linearization.
 
\section{Architecture classes: practical connections and extensions}\label{app:arch}
Section~\ref{sec:architectures} defines our architecture classes. Here we
explain how their components connect to practical architectures and specify the
count-normalized architecture extensions used in the adaptive results of
Appendix~\ref{app:case1_adaptive}.

\subsection{Connections to practical architectures}
\label{app:practical_architecture_connections}

Table~\ref{tab:architecture_connections} collects the architectural
references discussed below.

\begin{table}[!ht]
  \centering
  \caption{Architectural references.}
  \label{tab:architecture_connections}
  \small
  \setlength{\tabcolsep}{12pt}
  \renewcommand{\arraystretch}{1.15}
  \begin{tabular}{@{}ll@{}}
    \toprule
    Architecture & Reference \\
    \midrule
    Transformer & \citet{vaswaniAttentionAllYou2017} \\
    LLaMA & \citet{touvronLLaMAOpenEfficient2023} \\
    \midrule
    S4 & \citet{guEfficientlyModelingLong2022} \\
    H3 & \citet{fuHungryHungryHippos2023} \\
    RetNet & \citet{sunRetentiveNetworkSuccessor2023} \\
    Mamba & \citet{guMambaLinearTimeSequence2024} \\
    Mamba-2 & \citet{daoTransformersAreSSMs2024} \\
    mLSTM & \citet{beckXLSTMExtendedLong2024} \\
    GLA & \citet{yangGatedLinearAttention2024} \\
    DeltaNet & \citet{yangParallelizingLinearTransformers2025} \\
    Gated DeltaNet & \citet{yangGatedDeltaNetworks2025} \\
    RWKV-7 & \citet{pengRWKV7GooseExpressive2025} \\
    \bottomrule
  \end{tabular}
\end{table}
\FloatBarrier

\paragraph{Attention.}
Our attention classes retain the Transformer's standard query--key scores,
value aggregation, multi-head structure, output projections, and residual
connections. The additive positional bias
$p_{t-s}$ is a tractable surrogate for the relative-lag dependence induced
by RoPE in LLaMA-type Transformers.

\paragraph{SSMs: convolution, writes, and reads.}
Our SSM convolution corresponds to the short causal convolution that Mamba applies
before its recurrence.
Multiplicative writes and reads are shared by H3 and Mamba:
writes form products of input features, while reads combine the stored state
with current input features. Our formulation retains these multiplicative write/read
operations, using affine feature projections and omitting additional nonlinear
processing after the state read. Our vector-state formulation with coordinatewise
multiplication also accommodates outer-product writes and matrix--vector reads.
Indeed, index the recurrent state coordinates by matrix entries,
$h_{t,(i,j)}=M_{t,ij}$. Repeating the affine projection rows gives
\[
u_{t,(i,j)}=v_{t,i}k_{t,j},
\qquad
o_{t,(i,j)}=M_{t,ij}q_{t,j}.
\]
The output projection sums over $j$, recovering $(M_tq_t)_i$, with one
recurrent coordinate per matrix entry.

\paragraph{SSMs: transitions.}
Fixed transitions support retention and decay, as in S4, H3, and RetNet.
Scalar gates control retention across a head, as in Mamba-2's
scalar-identity transition and mLSTM's scalar forget gate.
Diagonal gates allow coordinatewise retention, as in Mamba and GLA.
The delta rule makes a rank-one correction along the current key,
following DeltaNet.
Gated DeltaNet combines decay with such a correction.
Our generalized delta form follows RWKV-7's diagonal-plus-rank-one update,
allowing separate input and output directions.
Our classes retain these transition mechanisms, using affine projections to
form gate inputs and rank-one update vectors.

Our experiments use the practical architectures described in \S\ref{sec:architectures},
retaining RMSNorm in both families, SwiGLU MLPs in the Transformer, and
additional write/read nonlinearities in Mamba-2.

\subsection{Count-normalized extensions}
\label{app:count_normalized_extensions}

We use count normalization as a tractable surrogate for softmax normalization
in attention and gated normalization in recurrent architectures such as mLSTM.
It scales accumulated evidence by a deterministic token count, while learned
parameters remain fixed throughout each stream.
We use these extensions for the adaptive kernel constructions in
Appendix~\ref{app:case1_adaptive} and show that the assembly and addressing
floors persist under count normalization in Appendix~\ref{app:case2}.
We specify a deterministic layerwise schedule
$\mathbf c=(c_t^{(1)},\ldots,c_t^{(D)})$.
For $D=1$, we write $c_t$ instead of $\mathbf c$.

\paragraph{Attention.}
Let $\tau_t:=|J_t|=\min(t,L)$. Each layer uses
$c_t^{(r)}\in\{1,\tau_t\}$ and replaces its linear weights by
\[
  a_{t,s}^{(r,b)}:=\frac{s_{t,s}^{(r,b)}}{c_t^{(r)}}.
\]
These weights define
$\mathcal F_{\mathrm{att}}^{\mathrm{cal}}(D,d,H,L;\mathbf c)$.
The softmax-enabled class
$\mathcal F_{\mathrm{att}}^{\mathrm{sm,cal}}(D,d,H,L;\mathbf c)$
additionally permits any layer with $c_t^{(r)}\equiv1$ to use
softmax weights.

\paragraph{State-space models.}
Each layer uses $c_t^{(r)}\in\{1,t\}$, and only its ordinary read changes:
\[
  o_{t,\mathbf c}^{(r,b)}
  :=\frac{(W_Q^{(r,b)}\widetilde e_t^{(r,b)})\odot h_t^{(r,b)}}
          {c_t^{(r)}}.
\]
Using these reads defines
$\mathcal F_{\mathrm{SSM}}^{\mathrm{cal}}
(D,d,H,d_{\mathrm{conv}};\mathbf c)$ for fixed transitions and
$\mathcal F_{\mathrm{SSM}}^{\mathrm{sel,cal}}
(D,d,H,d_{\mathrm{conv}};\mathbf c)$ for selective-first, fixed-later
transitions. The same read-only normalization applies to
$\mathcal F_{\mathrm{SSM}}^{\mathrm{aff}}$.

For every class, the all-one schedule $\mathbf c=\mathbf1$ recovers its
base counterpart exactly. Count normalization adds no learned parameters
and leaves recurrent-state dimension, residual width, and access to past
tokens unchanged. It normalizes by a
deterministic count, unlike the data-dependent normalization in softmax,
LayerNorm, or RMSNorm.
 
\section{Additional experiments and experimental details}\label{app:experiments}
We first give full protocols and supplementary results for the practical
experiments reported in the main text (\S\ref{app:experiments_practical}).
We then confirm the Case~2 architectural lessons for the same practical
models on conjugate Beta--Bernoulli bandits
(\S\ref{app:experiments_bernoulli}), complementing the nonconjugate tasks of
the main text. Finally, we present additional experiments that test the exact theoretical predictions within our analytically tractable classes
(\S\ref{app:experiments_analytical}).

\paragraph{Code availability.}\label{app:code_availability}
The code, experiment configurations, plotting data, and reproduction
instructions are available in an anonymous repository:
\begin{center}
  \url{https://anonymous.4open.science/r/belief-routing-experiments-2EF4/}
\end{center}

\subsection{Details of the main-text experiments}
\label{app:experiments_practical}

This section details the practical experiments reported in the main text.
Section~\ref{app:experiments_case1_practical} examines kernel alignment in
Case~1 (\S\ref{sec:case1_lb}).
Section~\ref{app:experiments_common_evaluation} introduces the common setup
and evaluation protocol for the Case~2 heatmaps. We then cover positional
assembly in \S\ref{app:experiments_case2a_practical}
(Case~2.a, \S\ref{sec:case2a}) and selectivity and width scaling in
\S\ref{app:experiments_case2b_practical} (Case~2.b, \S\ref{sec:case2b}).

\subsubsection{Case 1: kernel alignment}
\label{app:experiments_case1_practical}

We detail the kernel-alignment experiments in \S\ref{sec:case1_lb}
(Figure~\ref{fig:case1_kernel_alignment}).

\paragraph{Tasks and tokens.}
We test whether practical models more readily learn the Bayes kernel
favored by their stationary counterparts. Each episode contains $T=128$
scalar observations, preceded by a single ordinary zero token. The model receives the scalar sequence $0,o_1,\ldots,o_T$.
The initial zero makes averaging the first $t+1$ tokens yield
$\sum_{s=1}^t o_s/(t+1)$, the posterior mean in the uniform regime.

In the uniform regime, the posterior mean is
$\hat\theta_t^{\mathrm{Bayes}}:=\EE[\theta\mid o_{1:t}]$:
\[
  \theta\sim\mathcal N(0,1),\qquad
  o_t=\theta+\epsilon_t,\quad
  \epsilon_t\stackrel{\mathrm{iid}}{\sim}\mathcal N(0,1),\qquad
  \hat\theta_t^{\mathrm{Bayes}}=\frac{\sum_{s=1}^t o_s}{t+1}.
\]
In the exponential regime,
\[
  \theta_1\sim\mathcal N(0,1),\qquad
  \theta_{t+1}=\theta_t+\xi_t,\quad
  \xi_t\stackrel{\mathrm{iid}}{\sim}\mathcal N(0,0.1),\qquad
  o_t=\theta_t+\epsilon_t,\quad
  \epsilon_t\stackrel{\mathrm{iid}}{\sim}\mathcal N(0,9).
\]
The posterior mean
$\hat\theta_t^{\mathrm{Bayes}}:=\EE[\theta_t\mid o_{1:t}]$ satisfies the exact
recursion $\hat\theta_0^{\mathrm{Bayes}}=0$ and
$\hat\theta_t^{\mathrm{Bayes}}=0.9\hat\theta_{t-1}^{\mathrm{Bayes}}+0.1o_t$.
The terminal Bayes aggregation kernels are therefore
$\kappa_u^{\mathrm{Bayes}}=1/(T+1)$ and
$\kappa_u^{\mathrm{Bayes}}=0.1(0.9)^u$, respectively, for $0\leq u<T$.
The model output $g_t$ estimates the posterior mean at all $128$
observation times in each episode.

\paragraph{Models and optimization.}
We use the practical counterparts described in \S\ref{sec:architectures}.
Both models have a single pre-norm layer with residual width $32$
(\texttt{d\_model}) and a
scalar linear readout. The Transformer uses four heads of width $8$, full
context $L=129$, RoPE with base $10{,}000$ and unscaled positions, and an
MLP of width $64$. Mamba-2 uses expanded width $64$ (\texttt{d\_inner}),
state width $16$, head dimension $16$, one state group, convolution width $4$,
and chunk size $64$.

Every run uses $450$ AdamW updates with fresh batches of $64$ episodes,
peak learning rate $10^{-3}$ with $5\%$ linear warmup and cosine decay to
zero, $\beta_2=0.95$, weight decay $0.1$, default remaining optimizer
arguments, and global gradient-norm clipping at $5$. The loss is mean squared error between
$g_t$ and $\hat\theta_t^{\mathrm{Bayes}}$, averaged over episodes and
observation times; the reported model is the final iterate. We use $20$
paired seeds. For each task and seed, all architecture and
initialization conditions receive the same procedural training batches and
the same independent evaluation set of $8192$ episodes.

\paragraph{Counter-aligned initialization.}
For each architecture and seed, the counter-aligned model starts from the
same random weights as the default model and then receives the change
below; all weights remain trainable.

For the Transformer, counter-alignment plants half the log-slope of the
exponential Bayes kernel, $\tfrac12\log(0.9)$ per lag, in every head's
attention scores. Since RoPE has no lag bias, it does so by changing the
query and key weights of one rotary frequency only along the mean
normalized observation embedding. For Mamba-2, counter-alignment averages
the base parameters with an analytic endpoint that approximates the
uniform posterior mean $\sum_{s=1}^{t}o_s/(t+1)$, using existing recurrent
coordinates to accumulate the observation sum and token count, including
the initial zero. The count coordinate is scaled so that Mamba-2's RMSNorm
approximately divides the accumulated sum by the count.
For both models, counter-alignment has strength $1/2$, uses no fitted
prediction target, and changes only the initialization, not the
architecture.

\paragraph{Fitted aggregation kernels.}
Following the kernel-fitting approach in
\S\ref{app:experiments_case1_analytical}, we describe each trained model's predictions through
\[
  g_T(o_{1:T})\approx\sum_{u=0}^{T-1}\kappa_u o_{T-u}.
\]

For each task and training seed, the final figure uses $N=2^{17}$ fresh
fitting episodes and $8192$ independent held-out episodes, shared across
its three model conditions. We estimate the kernel by ridge regression with penalty $10^{-2}$. Held-out $R^2$ measures agreement
with the trained model's predictions. Across the
three conditions, mean $R^2$ over twenty seeds ranges from
$0.9969$--$0.9993$ in the uniform regime and
$0.9641$--$0.9823$ in the exponential regime.

Figure~\ref{fig:case1_kernel_alignment} plots the $20$-seed mean fitted
kernel and pointwise $95\%$ percentile bands from $2000$ bootstrap
resamples of the training seeds. Its lower panels show the mean of the
seedwise absolute deviations from the exact Bayes kernel. The mean all-lag relative $\ell_2$ kernel errors are
$6.13\%$, $48.82\%$, and $20.58\%$ in the uniform regime, and $68.05\%$,
$13.27\%$, and $21.13\%$ in the order default Transformer, default Mamba-2, and counter-aligned model.
One counter-aligned Mamba-2 run (seed $7012$) failed to train, widening the
dotted uniform band at short lags; the median counter-aligned uniform error
is $11.41\%$.

\paragraph{Prediction error.}
We evaluate the trained models directly, independently of the kernel fit:
\[
  \mathcal E:=
  \frac{\EE[\sum_{t=1}^{T}(g_t-\hat\theta_t^{\mathrm{Bayes}})^2]}
       {\EE[\sum_{t=1}^{T}(\hat\theta_t^{\mathrm{Bayes}})^2]}.
\]
For the default Transformer and Mamba-2, respectively, the twenty-seed
mean $\mathcal E$ is $3.21\times10^{-3}$ and $6.09\times10^{-3}$
in the uniform regime, and $7.96\times10^{-2}$ and $1.86\times10^{-2}$
in the exponential regime. Ratios computed before rounding give the
main-text factors $1.9\times$ and $4.3\times$.
The Transformer has the lower error in $19$ of $20$ seeds in the uniform
regime, and Mamba-2 in all $20$ in the exponential regime.
Counter-alignment reduces mean $\mathcal E$ by $60.4\%$ for Mamba-2
in the uniform regime, including the failed run, and $70.5\%$ for the
Transformer in the exponential regime, relative to their default
initializations.

\paragraph{Effect of training budget.}
Longer training can narrow the exponential-regime gap: in a two-seed run with $1800$ updates, the Transformer acquires the exponential kernel, whereas its uniform-regime advantage persists. This does not change the inductive-bias picture of \S\ref{sec:case1_lb}: each model learns its favored kernel readily, within a small budget, and the other more slowly.

\subsubsection{Shared setup and evaluation for Case 2 heatmaps}
\label{app:experiments_common_evaluation}

To test whether our architectural lessons extend to nonconjugate setups
(App.~\hyperref[app:nonconjugate_logistic]{\ref*{app:conjugate_belief_geometry}}),
the heatmap experiments in Cases~2.a and~2.b use delayed-feedback
contextual logistic bandits \citep{fauryImprovedOptimisticAlgorithms2020},
specified in each subsection. Training and
evaluation use only matched steps, when delayed feedback arrives. The
prediction loss is binary cross-entropy to the current arm's sampled
reward, averaged over episodes and matched prediction times; the Bayes
prediction is used only for evaluation. 

Let $g_t^{\mathrm{Bayes}},g_t\in(0,1)$ denote the
Bayes and model predictions at step $t$ within an episode. We report the
Bayes-normalized performance score, summing over matched steps in all
evaluation episodes:
\[
  S
  :=
  1-
  \frac{\sum_{\text{episodes}}\sum_{t:\,\text{matched}}
    \operatorname{KL}\!\left(\mathrm{Bern}(g_t^{\mathrm{Bayes}})
    \,\|\,\mathrm{Bern}(g_t)\right)}
  {\sum_{\text{episodes}}\sum_{t:\,\text{matched}}
    \operatorname{KL}\!\left(\mathrm{Bern}(g_t^{\mathrm{Bayes}})
    \,\|\,\mathrm{Bern}(1/2)\right)}.
\]
This is cumulative excess log loss normalized between the prior-only
predictor ($S=0$) and Bayes ($S=1$); scores can be negative. 

\paragraph{Capability focus and training aids.}
\label{app:experiments_training_aids}
To isolate what each architecture can attain, each heatmap cell reports
the performance of the best validation-selected candidate from the fixed
search specified in each subsection, rather than average performance over
random initializations. To the same end, training may use aids that make these solutions easier to find: an initialization favoring the routing lag in Case~2.a
and an auxiliary objective built from past observable history in
Case~2.b, detailed in each subsection. The aids act only during
training; evaluation uses the unmodified architectures and native
inputs, so they do not change what a model can represent. A success
therefore shows what the architecture can represent, and a failure
despite the aids is stronger evidence than a failure without them.
Within each mechanism comparison, both variants of a family receive
identical aids, so the aids cannot account for the contrast between
them. All aids were fixed in development pilots before the reported grids were
trained, and all runs in a comparison train for the same number of updates.
Base learning-rate grids differ by family and width and were fixed in
development runs. Wherever the top rate wins in a reported cell scoring above
$0.1$, we extend the grid until the selected rate lies strictly inside it.

\subsubsection{Case 2.a: positional routing}
\label{app:experiments_case2a_practical}

We detail the positional-routing experiments in \S\ref{sec:case2a}
(Figure~\ref{fig:case2a_bandit_resources}), using the shared setup and
evaluation protocol of
\S\ref{app:experiments_common_evaluation}.

\paragraph{Task.}
\label{app:experiments_case2a_task}
We test the same positional-assembly demands in practical models on a
delayed contextual logistic bandit
with $p=30$ arms, horizon $T=600$, and
delay $K=16$. Each episode draws $\theta\sim\mathcal N(0,4I_p)$, one
coefficient per arm. At each step, the arm $j$ is uniform, its context $c$
has a uniform sign and magnitude drawn from $\operatorname{Unif}[0.5,1.5]$,
and the reward is Bernoulli with success probability
$(1+e^{-c\theta_j})^{-1}$, as in the logistic model of
App.~\hyperref[app:nonconjugate_logistic]{\ref*{app:conjugate_belief_geometry}}.
At time $t$, the model receives the current arm and context and the reward
generated at $t-K$, but not the source arm or its context; recovering them
is the positional-assembly problem. The input encodes the arm as a
$30$-dimensional one-hot vector, together with the context and the signed
reward. The Bayes predictor is the posterior probability of a reward
for the current arm and context, given all rewards received so far. This posterior has no closed form,
so we compute it numerically.

\paragraph{Models.}
We report the Transformer at $L\in\{16,17,32,64,128,256\}$, with
$Hd\in\{32,64,128,256,512\}$ at depth one and $Hd\in\{4,8,16,32,128\}$ at
depth two. Models below $Hd=32$ have one head, and the others use head
width $32$; each SwiGLU MLP has width $4Hd$. Queries and keys use RoPE with
base $10{,}000$ and unscaled positions. Before training, we set the query
and key weights of the first head in the first layer to target the task's
delay $K=16$.

For Mamba-2, we test depths one and two with per-layer
$Hd\in\{16,32,64,128,256,512\}$ and
$d_{\mathrm{conv}}\in\{16,17,24,32,40\}$. The residual width (\texttt{d\_model}) is $16$,
expanded width (\texttt{d\_inner}) is $32$, head dimension is $8$, and state width is chosen so
that the total recurrent-state dimension per layer equals the displayed
$Hd$; at $Hd=16$, the expanded width is $16$ with state width one. Convolutions start as causal identities, except that when
$d_{\mathrm{conv}}\geq17$, every first-layer channel starts with equal taps
$1/\sqrt2$ at the current token and at lag $16$. Both hints are training aids in the sense of
\S\hyperref[app:experiments_training_aids]{\ref*{app:experiments_common_evaluation}}:
they only set initial values, and all parameters remain trainable.

\paragraph{Training and evaluation.}
All runs use AdamW for $12{,}000$ updates with effective batch size $16$,
$\beta=(0.9,0.999)$, $\epsilon=10^{-8}$, weight decay $0.01$, gradient
clipping at $5$, and constant learning rates.

Each Transformer heatmap cell searches four random seeds
and five learning rates. At widths at most $128$, the rates are
$\{10^{-4},3\cdot10^{-4},10^{-3},3\cdot10^{-3},10^{-2}\}$; wider models use
smaller rates in the same progression, from $3\cdot10^{-5}$ to
$3\cdot10^{-3}$ at width $256$ and from $10^{-5}$ to $10^{-3}$ at width
$512$. Each seed/rate candidate first trains for $6000$ updates at $L=256$.
Every target context then independently restores that candidate's weights
and Adam state, then continues through update $12{,}000$ at its target $L$,
without restarting the optimizer or changing the rate.

Similarly, each Mamba-2 heatmap cell searches four random seeds and the five
rates used for Transformer widths at most $128$, adding $3\cdot10^{-2}$ at
$(D,Hd,d_{\mathrm{conv}})=(1,256,17)$, where the best of these lies at the
edge of the grid.

In every cell, we select the candidate whose final iterate scores best on a
fixed $1024$-episode validation set, and report that model's score on
$2048$ independent test episodes, which never influence selection.

\paragraph{Results.}
The one-layer Transformer remains near baseline, with test scores below
$.01$. Figure~\ref{fig:case2a_bandit_resources} shows that two-layer
attention benefits mainly from longer context, whereas Mamba-2 benefits
mainly from wider recurrent states once its convolution spans the delay.
For example, the two-layer Transformer at $Hd=16$ scores near zero at
$L=16$ and $.200$ at $L=17$, increasing to $.895$ at $L=256$; $Hd=4$
remains near baseline throughout.
For Mamba-2, $d_{\mathrm{conv}}=16$ remains near the prior-only baseline at
every tested per-layer state dimension. At $d_{\mathrm{conv}}=17$, the
scores for $Hd=\{16,32,64,128,256,512\}$ are
\[
  \begin{array}{c|rrrrrr}
    Hd & 16 & 32 & 64 & 128 & 256 & 512\\
    \midrule
    D=1 & .225 & .458 & .654 & .782 & .881 & .971\\
    D=2 & .265 & .511 & .567 & .824 & .924 & .975
  \end{array}
\]
Across all convolution widths, both depths first reach $S\geq0.8$ at
$Hd=128$, although depth one does so only at $d_{\mathrm{conv}}=32$; at the
$d_{\mathrm{conv}}=17$ boundary, it needs twice that width, $Hd=256$. Depth two scores higher
at every width except $Hd=64$.

\paragraph{Memory comparison.}
\label{app:experiments_case2a_memory}
We count learned parameters and persistent inference buffers for one
sequence at a common precision.
Among tested models with $S\geq0.8$, the lowest-storage Transformer has
$(D,L,Hd)=(2,256,16)$ and uses $9313+16{,}384=25{,}697$ scalar entries for
parameters and a standard key--value cache. Its Mamba-2 counterpart has
$(D,d_{\mathrm{conv}},Hd)=(1,17,256)$ and uses $3837+256+816=4909$
entries for parameters, recurrent state, and convolution buffers,
respectively. Their ratio gives the main-text factor $5.2\times$.

\paragraph{Optimization effects.}
Per-seed results additionally show how reliably training reaches each cell's best run.
For the two-layer Transformer at $L=256$, only two of four training seeds
reach $S\geq0.8$ at $Hd=32$, whereas all four score approximately $.93$ at
$Hd=128$, with learning rates selected separately within each seed by
validation.
Additional resources need not always help optimization, however.
For Mamba-2 at $D=2$ and $Hd=64$, increasing $d_{\mathrm{conv}}$ from $24$
to $40$ lowers the test score from $.583$ to $.242$.
A longer filter can reproduce a shorter one by zeroing its additional
coefficients, so this decline suggests optimization sensitivity under the
fixed training budget, rather than reduced representational capacity.

\subsubsection{Case 2.b: content routing}
\label{app:experiments_case2b_practical}

We detail the content routing experiments in \S\ref{sec:case2b}
(Figures~\ref{fig:case2b_mechanisms_scaling} and~\ref{fig:case2b_resource_scaling}),
using the shared setup and evaluation protocol of
\S\ref{app:experiments_common_evaluation}.

\paragraph{Task.}
We test the role and resource cost of selectivity in practical models on a
$10$-arm contextual logistic bandit with horizon $T=256$, using the
coefficient prior, contexts, rewards, and Bayes predictor of Case~2.a
(\S\hyperref[app:experiments_case2a_task]{\ref*{app:experiments_case2a_practical}}).
Keys occur in independently permuted cycles, and the feedback arriving with
a key is the reward generated at its previous occurrence. A shared
unit-norm codebook uses dimensions $d_{\mathrm{key}}=\{6,8,11,14,20\}$ for
$R=\{8,16,32,64,128\}$. We construct the codebook with maximum absolute
pairwise inner product below \(1/3\), shared across models. Each token
contains the current one-hot arm, codebook key, and context, the arriving
signed reward, and the preceding arm, key, and context; the arriving
reward's source arm and context are not given; recovering them is the
content-addressing problem.

\paragraph{Models.}
For the mechanism heatmaps (Figure~\ref{fig:case2b_mechanisms_scaling}),
we use $R=32$ and $d_{\mathrm{key}}=11$. The two-layer,
two-head Transformer uses a SwiGLU MLP with hidden width $4Hd$ and sweeps
linear versus softmax attention over
$Hd\in\{8,10,12,16,32\}$ and
$L\in\{16,24,32,40,63,64,256\}$.
Both variants use RoPE with base $100{,}000$ and positions divided by $64$,
without a positional initialization hint.

The Mamba-2 heatmaps use two layers, residual width $64$
(\texttt{d\_model}), expanded width $128$ (\texttt{d\_inner}), head dimension $8$,
$Hd\in\{128,256,384,512,1024,2048\}$, and
$d_{\mathrm{conv}}\in\{1,4,8,16\}$. They compare stock Mamba-2 (Full)
with two variants. Ablated removes sharp selection from both transitions
and writes: in both layers, its transitions are input-independent but
trainable, and the SiLU activations on the two write branches are replaced
by their first-order linearizations at zero, so its writes are bilinear, as
in our SSM classes. Fixed transitions removes only transition selectivity:
its decay is input-independent but trainable, while its writes match
Full's. All variants retain nonlinear reads, output gating, normalization,
and residual connections, initialize convolutions as identities, and use
state-channel prefixes of a shared $Hd=4096$ initialization.

For width scaling (Figure~\ref{fig:case2b_resource_scaling}), we vary $R\in\{8,16,32,64,128\}$. The Transformer is a
two-layer, two-head softmax model with $L=256$ and
$Hd\in\{2,4,6,8,10,12,16,24,32,48,64\}$, with the same RoPE settings
at every $R$. Mamba-2 uses two selective layers, residual width $64$, head dimension $8$,
$d_{\mathrm{conv}}=1$, and
$Hd\in\{128,256,384,512,1024,2048,4096,8192,16384\}$. Its expanded
width is $128$ and its state width varies. At $R=8,16$, we additionally test
$Hd\in\{8,16,32,64\}$ with state width one and expanded width $Hd$.

\paragraph{Training and evaluation.}
All runs use $12{,}000$ AdamW updates with batch size $32$,
$\beta=(0.9,0.999)$, $\epsilon=10^{-8}$, weight decay $0.01$, gradient
clipping at $5$, and constant learning rates. Training follows a fixed
two-phase curriculum. In the alignment phase, only the input projection and
first layer train, on an auxiliary target that pairs each arriving reward
with its source arm, key, and context from the observed history. In the
reward phase, the prediction loss takes over with a fresh optimizer. The
Transformer aligns for $3000$ updates and then trains all parameters on
reward for $9000$; Mamba-2 aligns for $9000$ updates, then freezes the
aligned layers and trains the rest on reward for $3000$. Development pilots
selected these schedules: the Transformer trained best with a short
alignment phase followed by joint training, and Mamba-2 with a longer
alignment phase whose layers then stay frozen.

Each Transformer cell searches three random seeds and learning rates
$\{10^{-4},3\cdot10^{-4},5\cdot10^{-4},10^{-3},3\cdot10^{-3},10^{-2},3\cdot10^{-2},10^{-1}\}$.
For the heatmaps, each seed/rate candidate first trains for $6000$ updates
at $L=256$. Each target context independently restores that candidate's
weights and Adam state, then continues for $6000$ updates at its target
$L$. Width-scaling runs (Figure~\ref{fig:case2b_resource_scaling}) use
$L=256$ throughout.

Mamba-2 trains each configuration from scratch, searching eight random
seeds and learning rates $\{3\cdot10^{-4},10^{-3},3\cdot10^{-3}\}$,
extended with $10^{-2}$ and $3\cdot10^{-2}$ in some cells.

For both families, we select the candidate whose final iterate scores best
on a fixed $256$-episode validation set, and report its score on $2048$
independent test episodes. The width-scaling frontier is the smallest
tested width whose selected validation score reaches $S\geq0.8$.

\begin{wrapfigure}[12]{r}{145.5bp}
  \vspace{-1.6\baselineskip}
  \centering
  \includegraphics{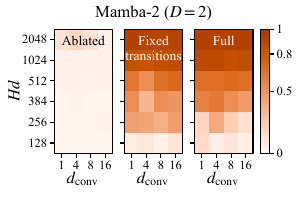}
  \begin{minipage}{\linewidth}
    \vspace{-1.2\baselineskip}
    \caption{Content addressing with the three Mamba-2 variants ($R=32$).
    Darker means closer to Bayes.}
    \label{fig:case2b_fixed_transitions}
  \end{minipage}
\end{wrapfigure}
\paragraph{Results.}
Figure~\ref{fig:case2b_mechanisms_scaling} shows that softmax attention and
full Mamba-2 escape the low scores of their restricted variants: softmax
attention reaches $S\geq0.8$ in $9/35$ cells and full Mamba-2 in $8/24$,
whereas linear attention and ablated Mamba-2 reach it in none.
Fixed transitions reaches $S\geq0.8$ in the same $8/24$ cells as Full
(Figure~\ref{fig:case2b_fixed_transitions}), so only Ablated, which lacks
sharp selection in both transitions and writes, stays near baseline,
consistent with Theorem~\ref{thm:case2b_floors}. Whereas the construction
in Prop.~\ref{prop:case2b_pair} escapes the floor through selective
transitions, Mamba-2's nonlinear writes provide a second route to the sharp
selection that our floors identify as necessary. Transformer
performance improves mainly with context once $Hd$ reaches $12$, whereas
Mamba-2 depends mainly on $Hd$, with no systematic benefit from increasing
$d_{\mathrm{conv}}$. At $Hd=12$, scores rise smoothly with context, from
$.045$ at $L=16$ to $.808$ at both $L=63$ and $L=64$, and $.951$ at
$L=256$.
The width-scaling frontier in Figure~\ref{fig:case2b_resource_scaling} is
\[
  \begin{array}{c|rrrrr}
    R & 8 & 16 & 32 & 64 & 128\\
    \midrule
    \text{Transformer }Hd & 16 & 10 & 12 & 12 & 24\\
    \text{test score} & .883 & .933 & .951 & .946 & .969\\
    \text{Mamba-2 }Hd & 64 & 256 & 1024 & 4096 & \text{--}\\
    \text{test score} & .874 & .831 & .889 & .917 & \text{--}
  \end{array}
\]
All nine frontier points exceed $0.8$ on independent testing. At $R=128$,
no tested Mamba-2 width reaches $S\geq0.8$; the best test scores are $.283$
at $Hd=8192$ and $.186$ at $Hd=16384$. Across the tested range, the
successful Transformer width stays between $10$ and $24$, whereas the
Mamba-2 frontier quadruples with each doubling of $R$, following $Hd=R^2$
from $R=8$ to $64$. This quadratic growth is consistent with
Theorem~\ref{thm:case2b_floors}(c), which only lower-bounds the required
width by $\widetilde\Omega(R)$. It suggests that practical Mamba-2 needs
even more state than this floor, although part of this growth may reflect
optimization under our fixed training budget. The dashed
$\log R$ and linear-$R$ curves are visual guides anchored at the first
point, not fitted exponents.

\paragraph{Optimization effects.}
As in Case~2.a, architectural resources affect not only representational
capability but also the outcomes of optimization.
For the Transformer at $R=128$, only one of three training seeds reaches
$S\geq0.8$ at the frontier width $Hd=24$, whereas all three score
$.942$--$.974$ at $Hd=64$, with learning rates selected separately within
each seed by validation.
In Figure~\ref{fig:case2b_mechanisms_scaling}, the $Hd=10$ row, just below
the width where performance rises sharply, is irregular: $.710$ at $L=63$
but $.069$ at $L=64$ and $.077$ at $L=256$. Since longer contexts only add
visible tokens, this irregularity suggests fragile optimization at the
threshold width rather than an effect of context length.
For Mamba-2 at $R=128$, the best test score falls from $.283$ at
$Hd=8192$ to $.186$ at $Hd=16384$; since a wider state can reproduce a
narrower one by leaving its extra channels unused, this decline again
suggests optimization sensitivity under the fixed training budget, rather
than a capacity limit.
\par\WFclear

\FloatBarrier
\subsection{Validation on Beta--Bernoulli bandits}
\label{app:experiments_bernoulli}

We repeat the Case~2 experiments of \S\ref{app:experiments_practical} on
conjugate Beta--Bernoulli bandits \citep{russoTutorialThompsonSampling2018}
with the same architectures and score,
using earlier training recipes documented in the released code. Each episode
draws independent arm means from $\operatorname{Unif}[0,1]$, the
$\operatorname{Beta}(1,1)$ prior, and rewards are Bernoulli with the pulled
arm's mean, as in the conjugate model of
App.~\hyperref[app:conjugate_bernoulli]{\ref*{app:conjugate_belief_geometry}}. Case~2.a keeps $p=30$ arms, $T=600$, and $K=16$, and
Case~2.b keeps $10$ arms, $T=256$, and the same keys. The Bayes predictor is
the exact Beta posterior predictive.

\begin{wrapfigure}[11]{r}{221.5bp}
  \vspace{-2.2\baselineskip}
  \centering
  \includegraphics{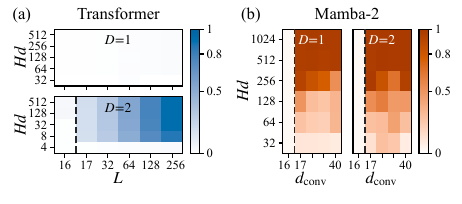}
  \vspace{-3em}
  \caption{Positional assembly on a Beta--Bernoulli bandit
  ($p=30$, $K=16$). Darker means closer to Bayes; dashed lines mark
  the $K+1=17$ threshold.}
  \label{fig:case2a_bandit_resources_bernoulli}
\end{wrapfigure}
\paragraph{Case~2.a.}
Figure~\ref{fig:case2a_bandit_resources_bernoulli} reproduces the
positional-assembly pattern of the main text. The one-layer Transformer
stays near baseline, and both two-layer models improve only once their
first layer spans the lag ($L,d_{\mathrm{conv}}\geq K+1=17$). Two-layer
attention then improves mainly with context and Mamba-2 mainly with
recurrent-state width. The widths at which the models first exceed $0.8$
differ from those on the nonconjugate task: $Hd=8$ rather than $16$ for the
Transformer, and $Hd=256$ rather than $128$ for Mamba-2. The lowest-memory successful Transformer still
stores $2.2\times$ as many entries as its Mamba-2 counterpart.
\par\WFclear

\paragraph{Case~2.b.}
Figures~\ref{fig:case2b_mechanisms_bernoulli}
and~\ref{fig:case2b_resource_scaling_bernoulli} reproduce the
content-addressing pattern: softmax attention and full Mamba-2 escape the
low scores of linear attention and ablated Mamba-2. At every $R$, both
families need the same or smaller widths than on the nonconjugate task, and
Mamba-2 now reaches $0.8$ at $R=128$, but the contrast is unchanged: the
smallest successful width stays between $8$ and $16$ for the Transformer,
whereas for Mamba-2 it again quadruples with each doubling of $R$,
following $Hd=R^2/2$ from $R=16$ to $128$.

\begin{figure}[htbp]
  \centering
  \begin{minipage}[t]{221.5bp}
    \vspace{0pt}
    \centering
    \includegraphics{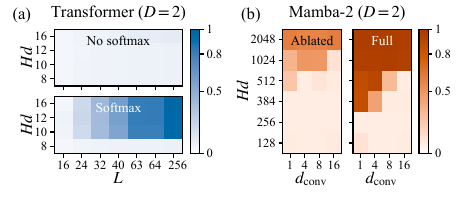}
    \vspace{-2.5em}
    \caption{Content addressing on a Beta--Bernoulli bandit
    ($p=10$, $R=32$). Darker means closer to Bayes.}
    \label{fig:case2b_mechanisms_bernoulli}
  \end{minipage}\hfill
  \begin{minipage}[t]{148bp}
    \vspace{0pt}
    \centering
    \includegraphics{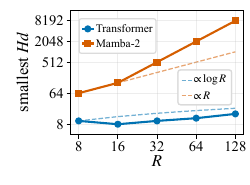}
    \vspace{-0.45em}
    \caption{Smallest tested widths
    reaching \mbox{$S\geq0.8$} on validation.}
    \label{fig:case2b_resource_scaling_bernoulli}
  \end{minipage}
\end{figure}
\FloatBarrier

\subsection{Validation of the theoretical results}
\label{app:experiments_analytical}

We use the architecture classes defined in \S\ref{sec:architectures} to test
the predicted context scale and aggregation kernels in
Case~1 (\S\ref{app:experiments_case1_analytical}), depth and positional reach
for evidence assembly in Case~2.a
(\S\ref{app:experiments_case2a_analytical}), and selectivity for content
addressing in Case~2.b (\S\ref{app:experiments_case2b_analytical}).

\subsubsection{Case 1: belief maintenance}
\label{app:experiments_case1_analytical}

\paragraph{Context-length scaling.}
Proposition~\ref{prop:stationary_pair} predicts that the leading-order optimal
window for one-layer linear attention is
\[
  L^\star(T)=\sqrt{3T\frac{\sum_i v_i}{\sum_i\lambda_i}}.
\]
We test this prediction in the matrix-valued Gaussian Case~1 with
\[
  p=2,\qquad \Sigma_x=I_2,\qquad
  \Sigma_\theta=
  \begin{bmatrix}
    2.5 & -1.5\\
    -1.5 & 2.5
  \end{bmatrix},
  \qquad \sigma^2=1.
\]
For this choice, $\sum_i\lambda_i=5$ and $\sum_i v_i=17$, giving
$L^\star(T)=\sqrt{51T/5}$. We use
$T\in\{512,1024,2048,4096\}$ and sweep
\[
  \frac{L}{L^\star(T)}
  \in
  \left\{
    \frac14,\frac12,\frac1{\sqrt2},1,\sqrt2,2,4
  \right\},
\]
rounding the resulting window to the nearest integer and plotting the realized
ratio after rounding.

We train the one-layer linear-attention model
$\mathcal F_{\mathrm{att}}(1,2,1,L)$ with residual width $d_e=6$, no positional
bias, and a scalar readout. At time $t$, it receives
$z_t=(x_{t+1},y_t,x_t)$ and predicts $y_{t+1}$. All parameters are learned from
a random initialization; no aggregation kernel is planted. We initialize all
affine biases at zero and divide the attention output projection by $\sqrt L$
so that its initial scale is comparable across windows. Training minimizes
squared error to the noisy response $y_{t+1}$, averaged over episodes and
prediction times.

For each horizon--window pair $(T,L)$, we use five paired random seeds and
$5000$ Adam updates per run, with initial learning rate $10^{-3}$, cosine
decay, and gradient clipping at $5$. Each update uses a fixed batch of
$2^{15}=32{,}768$ freshly sampled tokens, with the number of episodes adjusted
to the horizon. For a fixed horizon and seed, all
window lengths receive the same training stream. We evaluate the final iterate
on a common held-out set of $1024$ episodes at each horizon, using cumulative
squared-loss regret relative to the exact Bayes predictor.

\begin{wrapfigure}[13]{r}{0.52\textwidth}
  \vspace{-1.5\baselineskip}
  \centering
  \includegraphics[width=\linewidth]
    {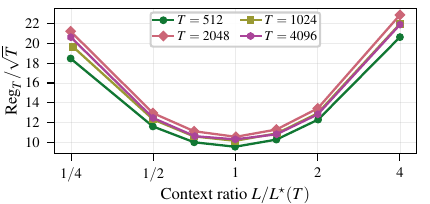}
  \caption{Context-length scaling in one-layer linear attention. Each point
  reports the minimum over five initializations.}
  \label{fig:case1_lstar_scaling}
\end{wrapfigure}

Figure~\ref{fig:case1_lstar_scaling} shows the predicted U-shaped
bias--variance trade-off. Normalizing by $\sqrt T$ approximately collapses the
curves across horizons, and in every case the empirical valley is centered on
the predicted scale. This location is not created by selecting the best run:
the five-seed mean is also minimized by the $L/L^\star(T)=1$ grid point at every
horizon. Among the $20$ individual seed--horizon profiles, $13$ are minimized
there and the remaining $7$ at the adjacent $1/\sqrt2$ grid point; none is
minimized elsewhere. Thus the experiment supports both the predicted
$\sqrt T$ context scaling and the location of its leading-order optimum under
finite optimization.

\paragraph{Learned aggregation kernels.}
We next test whether training from random initialization recovers the kernels
predicted by Proposition~\ref{prop:stationary_pair}. At $T=4096$, we reuse the
five $\mathcal F_{\mathrm{att}}(1,2,1,L^\star)$ runs at
$L^\star(T)=204$ and train five $\mathcal F_{\mathrm{SSM}}(1,2,1,1)$ runs.
Both models have residual width $6$ and use the $5000$-update protocol above; no target kernel is planted beforehand. Within each class, we select the run with lowest validation cumulative
regret before inspecting its kernel.

For each selected model, we fit kernel matrices $\kappa_u$ to its predictions
on fresh sequences of $501$ observations:
\[
  g_{501}\approx x_{502}^{\top}
  \sum_{u=0}^{500}\kappa_u x_{501-u}y_{501-u}.
\]
The fitted kernel aggregates evidence atoms into a belief, which is then
applied to the final regressor. Writing $\xi=U^\top x$, we regress on
$\xi_{502,i}\xi_{501-u,j}y_{501-u}$ for $i,j\in\{1,2\}$; the coefficients,
in original units, are $(U^\top\kappa_uU)_{ij}$.
We plot the diagonal entries $\kappa_{u,i}:=(U^\top\kappa_uU)_{ii}$:
off-diagonal entries account for only $0.91\%$ and $1.22\%$ of the fitted
attention and SSM kernels' squared Frobenius norms over $u=1,\ldots,500$.

\begin{wrapfigure}[16]{r}{0.52\textwidth}
  \vspace{-1.1\baselineskip}
  \centering
  \includegraphics[width=\linewidth]
    {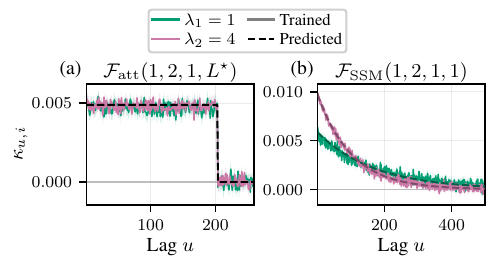}
  \caption{Optimal-kernel validation at $T=4096$.
  \emph{Trained}: kernels fitted to model predictions;
  \emph{Predicted}: kernels from Prop.~\ref{prop:stationary_pair}.
  Shading shows pointwise 95\% bands over 100 residual perturbations.}
  \label{fig:case1_analytical_kernels}
\end{wrapfigure}
We fit on $8192$ episodes, rescaling each product to unit empirical
root-mean-square without centering and minimizing mean squared error with ridge
penalty $10^{-2}$. To assess coefficient uncertainty, we use $100$
random-sign (Rademacher) perturbations of the regression residuals, holding
the trained models and probe inputs fixed. The shaded bands in
Figure~\ref{fig:case1_analytical_kernels} show the central $95\%$ of the
resulting estimates separately for each coefficient.

Figure~\ref{fig:case1_analytical_kernels} shows that kernels fitted to the
models' predictions closely follow the leading-order optima in
Proposition~\ref{prop:stationary_pair}: a uniform attention window and
mode-dependent exponential decay for the SSM. Over $u=1,\ldots,500$, their
relative Frobenius errors against these theoretical kernels are $13.5\%$
and $15.6\%$, respectively. Both kernel-only fits explain $99.92\%$ of the
variation in the models' predictions on $2048$ independent probe episodes
($R^2=0.9992$), showing that this kernel-form description accurately captures
the selected models' prediction behavior.
\par\WFclear

\FloatBarrier
\subsubsection{Case 2.a: positional routing}
\label{app:experiments_case2a_analytical}

\paragraph{Task and score.}
To test how depth and positional reach affect evidence assembly, we train
the analytical models on scalar Gaussian regression with $p=1$,
delay $K=4$, and horizon $T=512$. Each episode draws independent standard
normal $\theta$, regressors $x_t$, and noises $\epsilon_t$, with
$y_t=\theta x_{t-K}+\epsilon_t$ for $t>K$ and $y_1=\cdots=y_K=0$.
The model receives $z_t=(x_{t+1},y_t,x_t)$ and predicts $y_{t+1}$.
We report the Bayes-normalized performance score under squared loss:
\[
  S=1-
  \frac{\sum_{\text{episodes}}\sum_{t=1}^{T}
    (g_t-g_t^{\mathrm{Bayes}})^2}
  {\sum_{\text{episodes}}\sum_{t=1}^{T}(g_t^{\mathrm{Bayes}})^2}.
\]
All $T$ prediction times are included; zero prediction gives $S=0$ and Bayes
gives $S=1$.

\paragraph{Models.}
Both linear attention and the fixed-transition SSM use $D\in\{1,2\}$,
one head, residual width $18$, and $Hd\in\{1,2,4,8,16\}$. We sweep
$L\in\{1,2,3,4,5,8,16,32,64,96\}$ for attention and
$d_{\mathrm{conv}}\in\{1,2,3,4,5,8\}$ for the SSM.
All model parameters are trainable. Affine biases start at zero; SSM
convolutions start as causal identities and input-independent transitions
as $0.9I$. Attention's random query and key weights are scaled by $L^{-1/4}$
and output projections by $L^{-1/2}$; no routing lag is planted at initialization.

\paragraph{Training and evaluation.}
We train on noisy responses using squared loss; Bayes predictions are used
only for evaluation.
Each cell searches four paired initialization/data-stream seeds and learning
rates $\{10^{-4},3\!\times\!10^{-4},10^{-3},3\!\times\!10^{-3},10^{-2}\}$.
We use $12{,}000$ constant-rate AdamW updates, fresh batches of $16$ episodes,
$\beta=(0.9,0.999)$, $\epsilon=10^{-8}$, weight decay $0.01$, and gradient
clipping at $5$. We evaluate checkpoints at $3000$, $6000$, and $12{,}000$
updates and report the maximum over $60$ scheduled candidates per cell.
Selection and reporting use the same fixed set of $256$ validation episodes.

\begin{wrapfigure}[14]{r}{0.52\textwidth}
  \vspace{0\baselineskip}
  \centering
  \includegraphics[width=\linewidth]{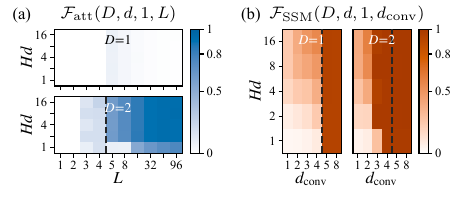}
  \vspace{-2em}
  \caption{Positional routing in the analytically tractable classes
  ($p=1$, $K=4$, $T=512$). Colors show the best validation score under the
  fixed search above. Dashed lines mark the single-layer reference $K+1=5$.}
  \label{fig:case2a_analytical}
\end{wrapfigure}
\paragraph{Results.}
Figure~\ref{fig:case2a_analytical} shows low scores for one-layer attention,
whereas two layers benefit strongly from longer context. At depth two,
$L=2$ remains near zero, $L=3,4$ give partial gains, and $Hd\geq2$ reaches
about $0.92$--$0.94$ at $L=64,96$.
The one-layer SSM reaches about $0.94$ at $d_{\mathrm{conv}}\geq5$;
two layers already reach $0.95$--$0.99$ at $d_{\mathrm{conv}}=3$ for
$Hd\geq2$. Thus additional depth enables successful learning with shorter
convolutions. This is a benefit not observed in the
practical Mamba-2 runs in Figure~\ref{fig:case2a_bandit_resources}
(\S\ref{app:experiments_case2a_practical}). The different task, training
dynamics, or architectural choices such as selective transitions may account for this contrast.

These best scores describe successful configurations, not typical training outcomes.
For example, at $d_{\mathrm{conv}}=3$ and $Hd\geq2$, only $1$--$4$ of the
$20$ seed--learning-rate runs per cell reach $S\geq0.8$. The short-convolution
depth advantage is therefore optimization-sensitive under this training protocol.
\par\WFclear

\FloatBarrier
\subsubsection{Case 2.b: content routing}
\label{app:experiments_case2b_analytical}

\paragraph{Task and score.}
To test how nonlinear selection mechanisms of architectures affect content addressing, we use the
Gaussian content-routing problem of \S\ref{sec:case2b} with
$p=1$, $R=8$, $d_{\mathrm{key}}=6$, $T=512$, and unit prior, regressor,
and noise variances. The fixed codebook has maximum absolute pairwise
inner product below $1/3$. Models receive the current and next keys along
with $(x_{t+1},y_t,x_t)$; the routed regressor is not supplied.
We use the Bayes-normalized squared-loss score $S$ defined in
\S\ref{app:experiments_case2a_analytical}, including all $T$ prediction times.

\paragraph{Models.}
All models have depth two and residual width $18$. Attention has two heads,
$Hd\in\{2,4,8,16\}$, and $L\in\{8,16,32,64\}$; we compare all-linear
attention with softmax in the first layer and linear attention in the second.
The one-head SSM compares fixed transitions in both layers with a diagonal
sigmoid transition in the first layer and a fixed transition in the second.
Both retain ordinary bilinear writes and reads, with
$d_{\mathrm{conv}}\in\{1,2,4,8\}$ and per-layer state widths $Hd\in\{1,2,4,8,16\}$. SSM convolutions start as causal identities,
fixed transitions as $0.9I$, and sigmoid gates at the constant $0.9$.
Linear attention scales its random query and key weights by $L^{-1/4}$ and
output projections by $L^{-1/2}$, as in Case~2.a; softmax attention keeps the
default random initialization.

\paragraph{Training and evaluation.}
We train on noisy responses using squared loss and use the same
$12{,}000$-update AdamW protocol as \S\ref{app:experiments_case2a_analytical}:
four seeds, five learning rates, and three checkpoints per run.
Each cell reports the maximum over $60$ scheduled candidates, with selection
and reporting on the same fixed set of $256$ validation episodes.

\Needspace{16\baselineskip}
\begin{wrapfigure}[13]{r}{0.52\textwidth}
  \vspace{-1\baselineskip}
  \centering
  \includegraphics[width=\linewidth]{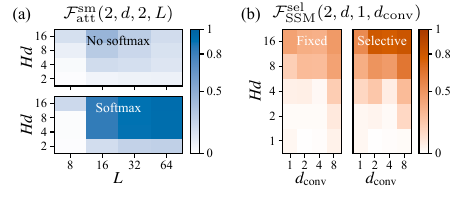}
  \vspace{-2em}
  \caption{Content routing in the analytically tractable classes
  ($p=1$, $R=8$, $T=512$, $D=2$). Colors show the best validation score
  under the fixed search.}
  \label{fig:case2b_analytical}
\end{wrapfigure}

\paragraph{Results.}
In Figure~\ref{fig:case2b_analytical}, softmax attention reaches
$0.83$--$0.93$ for $Hd\geq4$ and ${L\geq16}$, whereas all-linear attention
remains below $0.46$. Neither SSM performs well at very small state widths.
No displayed fixed-transition configuration reaches the success threshold
$S\geq0.8$; selective transitions enable success at
$Hd=16$, $d_{\mathrm{conv}}=8$, with score $0.820$.

As stated in the previous section, best-run performance does not imply reliable optimization: the successful
$Hd=16$, $d_{\mathrm{conv}}=8$ cell reaches $S\geq0.8$ in only one of its
$20$ seed--learning-rate runs.
\par\WFclear

\FloatBarrier
 
\section{Case 1: full statements and proofs}\label{app:case1}

For an overview of the notation and proof dependencies, see the
\hyperref[app:notation_guide]{notation guide} and
\hyperref[app:proof_guide]{proof guide} at the start of the appendix.

\subsection{Theorem~\ref{thm:stationary_floor} and Proposition~\ref{prop:stationary_pair}: Stationary kernels}
\label{app:case1_vector_stationary}

This section proves Theorem~\ref{thm:stationary_floor} and
Proposition~\ref{prop:stationary_pair}. We first introduce whitened coordinates
and derive the regret of a stationary kernel
(\S\ref{app:case1_prediction_geometry}), then optimize over all such
kernels (\S\ref{app:case1_stationary_floor}). Using the belief geometry in
Figure~\ref{fig:belief_geometry} (\S\ref{app:case1_score_projection}), we
connect the stationary-kernel analysis to
attention (\S\ref{app:case1_attention_kernels}) and SSMs
(\S\ref{app:case1_ssm_kernels}). Section~\ref{app:case1_stationary_pair} combines
these results to complete the proof of Proposition~\ref{prop:stationary_pair}.

\subsubsection{Whitened coordinates and stationary-kernel regret}
\label{app:case1_prediction_geometry}

We first express prediction error as belief-estimation error.
Lemma~\ref{lem:vector_stationary_risk} then gives the exact loss and regret
of a stationary kernel, providing the objective we optimize in
\S\ref{app:case1_stationary_floor}. Fix
$p\in\mathbb N$, $\Sigma_x\succ0$, $\Sigma_\theta\succ0$, and
$\sigma^2>0$, independently of the horizon, and consider
\[
  \theta\sim\mathcal N(0,\Sigma_\theta),
  \qquad
  x_t\stackrel{\mathrm{iid}}{\sim}\mathcal N(0,\Sigma_x),
  \qquad
  y_t=x_t^\top\theta+\epsilon_t,
  \qquad
  \epsilon_t\stackrel{\mathrm{iid}}{\sim}\mathcal N(0,\sigma^2),
\]
with $\theta$, $(x_t)_{t\geq1}$, and $(\epsilon_t)_{t\geq1}$ mutually
independent.  The token is
$z_t=(x_{t+1},y_t,x_t)$, and the predictor estimates the scalar
$y_{t+1}$.  The sufficient statistics
\[
  A_t:=\sum_{s\leq t}x_sy_s,
  \qquad
  B_t:=\sum_{s\leq t}x_sx_s^\top
\]
give the posterior mean:
\[
  \hat\theta_t^{\mathrm{Bayes}}
  =
  \left(\Sigma_\theta^{-1}+\sigma^{-2}B_t\right)^{-1}
  \sigma^{-2}A_t.
\]

Introduce the whitened coordinates
\begin{equation}
  \widetilde x_t:=\Sigma_x^{-1/2}x_t,
  \qquad
  \widetilde\theta:=\Sigma_x^{1/2}\theta,
  \qquad
  \widetilde\Sigma_{\theta}:=\Sigma_x^{1/2}\Sigma_\theta\Sigma_x^{1/2}
    =U\operatorname{diag}(\lambda_1,\ldots,\lambda_p)U^\top .
  \label{eq:vector_whitening}
\end{equation}
Then $\widetilde x_t\sim\mathcal N(0,I_p)$,
$\widetilde\theta\sim\mathcal N(0,\widetilde\Sigma_{\theta})$, and
$y_t=\widetilde x_t^\top\widetilde\theta+\epsilon_t$. The eigenvalues of $\widetilde\Sigma_{\theta}$ are the
generalized belief variances in the fresh-regressor prediction geometry.
The evidence innovation $\widetilde x_sy_s-\widetilde\theta$ has covariance
$\Gamma$, with eigenvalues $v_i$:
\begin{equation}
  \Gamma:=\widetilde\Sigma_{\theta}
    +\bigl[\operatorname{tr}(\widetilde\Sigma_{\theta})+\sigma^2\bigr]I_p,
  \qquad
  v_i:=\lambda_i+\operatorname{tr}(\widetilde\Sigma_{\theta})+\sigma^2.
  \label{eq:vector_covariance_scales}
\end{equation}

We first record the fresh-regressor projection used throughout the section.
Let $\mathcal G_t^-:=\sigma(\widetilde x_{1:t},y_{1:t})$.  For an arbitrary scalar
predictor $g$, define
\begin{equation}
  \widetilde\theta_t(g):=\EE[\widetilde x_{t+1}g_t\mid\mathcal G_t^-],
  \qquad
  g_t^\perp:=g_t-\widetilde x_{t+1}^\top\widetilde\theta_t(g).
  \label{eq:vector_implicit_belief}
\end{equation}
Since $\widetilde x_{t+1}$ is independent of $\mathcal G_t^-$ and has covariance $I_p$,
$g_t^\perp$ is orthogonal to every
$\widetilde x_{t+1}^\top a_t$ with
$a_t\in L^2(\mathcal G_t^-;\RR^p)$.  The whitened posterior mean is
\[
  \widetilde\theta_t^{\mathrm{Bayes}}
  =
  \left(
    \widetilde\Sigma_{\theta}^{-1}+\sigma^{-2}\sum_{s\leq t}\widetilde x_s\widetilde x_s^\top
  \right)^{-1}
  \sigma^{-2}\sum_{s\leq t}\widetilde x_sy_s.
\]
Thus fresh-regressor projection and posterior orthogonality give
\begin{align}
  \ell_t(g)-\ell_t(g^{\mathrm{Bayes}})
  &=
  \EE\|\widetilde\theta_t(g)-\widetilde\theta_t^{\mathrm{Bayes}}\|_2^2
  +\EE[(g_t^\perp)^2],
  \label{eq:vector_fresh_query_projection}\\
  &=
  \EE\|\widetilde\theta_t(g)-\widetilde\theta\|_2^2-e_t^{\mathrm{Bayes}}
  +\EE[(g_t^\perp)^2],
  \label{eq:vector_fresh_query_mse}
\end{align}
where the expected squared error of the Bayes belief and its cumulative sum are
\begin{equation}
  e_t^{\mathrm{Bayes}}:=
  \EE\operatorname{tr}\!\left[
    \left(
      \widetilde\Sigma_{\theta}^{-1}+\sigma^{-2}\sum_{s\leq t}\widetilde x_s\widetilde x_s^\top
    \right)^{-1}
  \right],
  \qquad
  \mathcal E_T^{\mathrm{Bayes}}:=\sum_{t\leq T}e_t^{\mathrm{Bayes}}.
  \label{eq:vector_bayes_baseline}
\end{equation}
We suppress their dependence on the fixed distributional parameters.
Equivalently, in the original coordinates,
\[
  \EE\!\left[
    \bigl(x_{t+1}^\top(\hat\theta_t-\theta)\bigr)^2
    \,\middle|\,\hat\theta_t,\theta
  \right]
  =
  (\hat\theta_t-\theta)^\top\Sigma_x(\hat\theta_t-\theta)
  =
  \|\widetilde\theta_t-\widetilde\theta\|_2^2.
\]

We first optimize over stationary-kernel predictors $g$ of the form
\[
  g_t=x_{t+1}^\top\hat\theta_t(g),
  \qquad
  \hat\theta_t(g)=\sum_{u<t}\kappa_ux_{t-u}y_{t-u},
\]
where $\kappa=(\kappa_u)_{u\geq0}$ is any deterministic matrix kernel.
Under \eqref{eq:vector_whitening},
\begin{equation}
  \widetilde\theta_t(g)=\Sigma_x^{1/2}\hat\theta_t(g)
  =\sum_{u<t}K_u\widetilde x_{t-u}y_{t-u},
  \quad
  K_u:=\Sigma_x^{1/2}\kappa_u\Sigma_x^{1/2},
  \quad
  M_t:=\sum_{u<t}K_u.
  \label{eq:vector_stationary_estimator}
\end{equation}
Here $M_t$ is the kernel mass. For this predictor, we also write
$\mathrm{Reg}_T(\kappa)=\mathrm{Reg}_T(K):=\mathrm{Reg}_T(g)$.
Whitening leaves predictions and regret
unchanged. Since the transformation is invertible, optimizing over the transformed stationary kernels is
equivalent to optimizing over the original ones.

The next lemma makes the stationary kernel's bias--variance trade-off
explicit, giving the regret formula we optimize to prove
Theorem~\ref{thm:stationary_floor}.

\begin{lemma}[Exact loss and regret of a stationary kernel]
\label{lem:vector_stationary_risk}
For every deterministic stationary kernel $K=(K_u)_{u\geq0}$ in whitened
coordinates, the corresponding predictor $g$ satisfies
\begin{gather}
  \EE\|\widetilde\theta_t(g)-\widetilde\theta\|_2^2
  =
  \operatorname{tr}\!\left[
    (M_t-I_p)\widetilde\Sigma_{\theta}(M_t-I_p)^\top
  \right]
  +
  \sum_{u<t}\operatorname{tr}\!\left[K_u\Gamma K_u^\top\right],
  \label{eq:vector_stationary_mse}\\
  \mathrm{Reg}_T(K)
  =
  \sum_{t\leq T}\!\Bigl\{
    \operatorname{tr}\!\left[
      (M_t-I_p)\widetilde\Sigma_{\theta}(M_t-I_p)^\top
    \right]
    +\sum_{u<t}\operatorname{tr}\!\left[K_u\Gamma K_u^\top\right]
  \Bigr\}
  -\mathcal E_T^{\mathrm{Bayes}}.
  \label{eq:vector_stationary_regret}
\end{gather}
The two terms in \eqref{eq:vector_stationary_mse} are squared bias and variance,
averaged over the prior: the bias comes from $M_t-I_p$, and the variance
from the aggregated evidence.
The loss is
$\ell_t(g)=\EE\|\widetilde\theta_t(g)-\widetilde\theta\|_2^2+\sigma^2$;
the noise term cancels in regret.
\end{lemma}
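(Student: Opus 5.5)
The plan is a direct bias--variance computation in whitened coordinates, followed by an appeal to the fresh-regressor decomposition \eqref{eq:vector_fresh_query_mse}.

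\textbf{Decomposing the belief error.} First, write the evidence innovation as $\xi_s:=\widetilde x_sy_s-\widetilde\theta$. Substituting $\widetilde x_sy_s=\widetilde\theta+\xi_s$ into \eqref{eq:vector_stationary_estimator} gives
\[
  \widetilde\theta_t(g)-\widetilde\theta=(M_t-I_p)\widetilde\theta+\sum_{u<t}K_u\xi_{t-u}.
\]
Three facts about the innovations are needed:
\begin{itemize}
  \item conditionally on $\widetilde\theta$, the $\xi_s$ are independent across $s$;
  \item each has conditional mean zero, since $\EE[\widetilde x_s\widetilde x_s^\top]\widetilde\theta=\widetilde\theta$ and $\epsilon_s$ is centered and independent;
  \item consequently, $\widetilde\theta$ is uncorrelated with each $\xi_s$, and distinct innovations are uncorrelated.
\end{itemize}
Expanding the square, all cross terms therefore vanish. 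What remains is $\operatorname{tr}[(M_t-I_p)\widetilde\Sigma_\theta(M_t-I_p)^\top]$ plus $\sum_{u<t}\operatorname{tr}[K_u\,\EE(\xi\xi^\top)K_u^\top]$.

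\textbf{Innovation covariance.} The main computation is to show $\EE[\xi_s\xi_s^\top]=\Gamma$. Condition on $\widetilde\theta$ and write $w=\widetilde x_s\sim\mathcal N(0,I_p)$, so that $\widetilde x_sy_s=ww^\top\widetilde\theta+w\epsilon_s$. By Isserlis' theorem,
\[
  \EE[ww^\top\widetilde\theta\widetilde\theta^\top ww^\top]=2\widetilde\theta\widetilde\theta^\top+\|\widetilde\theta\|^2I_p .
\]
The noise contributes $\sigma^2I_p$, and the cross term with $\epsilon_s$ vanishes. Subtracting $\widetilde\theta\widetilde\theta^\top$ gives the conditional covariance $\widetilde\theta\widetilde\theta^\top+(\|\widetilde\theta\|^2+\sigma^2)I_p$. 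Averaging over the prior then yields $\widetilde\Sigma_\theta+[\operatorname{tr}(\widetilde\Sigma_\theta)+\sigma^2]I_p=\Gamma$, which proves \eqref{eq:vector_stationary_mse}.

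\textbf{Loss and regret.} The predictor has the belief form $g_t=\widetilde x_{t+1}^\top\widetilde\theta_t(g)$, and $\widetilde\theta_t(g)$ is $\mathcal G_t^-$-measurable. Its implicit belief in \eqref{eq:vector_implicit_belief} is therefore exactly this $\widetilde\theta_t(g)$, and $g_t^\perp=0$.

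For the loss, use $y_{t+1}=\widetilde x_{t+1}^\top\widetilde\theta+\epsilon_{t+1}$. The fresh regressor $\widetilde x_{t+1}$ has identity covariance and is independent of $(\widetilde\theta_t(g),\widetilde\theta)$, and $\epsilon_{t+1}$ is independent noise. Hence $\ell_t(g)=\EE\|\widetilde\theta_t(g)-\widetilde\theta\|_2^2+\sigma^2$.

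For the regret, apply \eqref{eq:vector_fresh_query_mse} with $g_t^\perp=0$. Each per-step excess loss equals the mean squared error from \eqref{eq:vector_stationary_mse} minus $e_t^{\mathrm{Bayes}}$. Summing over $t\le T$ gives \eqref{eq:vector_stationary_regret}.

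The only nonroutine step is the fourth-moment Gaussian computation defining $\Gamma$. Everything else is orthogonality bookkeeping, which works because the kernel is deterministic.
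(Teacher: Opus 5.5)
Your proposal is correct and follows the paper's proof essentially step for step: you use the same innovation decomposition $\widetilde x_sy_s=\widetilde\theta+\xi_s$, conditional centering and independence to remove cross terms, Isserlis' identity to obtain $\EE[\xi_s\xi_s^\top]=\Gamma$, and $g_t^\perp=0$ substituted into \eqref{eq:vector_fresh_query_mse}. The differences are cosmetic: you apply Isserlis conditionally on $\widetilde\theta$ and then average, whereas the paper applies it directly with $\widetilde\Sigma_\theta$, and you spell out the $+\sigma^2$ in the loss, which the paper leaves implicit.
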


\begin{proof}
Write
\[
  \widetilde x_sy_s=\widetilde\theta+\xi_s,
  \qquad
  \xi_s:=(\widetilde x_s\widetilde x_s^\top-I_p)\widetilde\theta+\widetilde x_s\epsilon_s.
\]
Conditionally on $\widetilde\theta$, the innovations $\xi_s$ are centered and
independent across $s$.  Hence
\[
  \EE[\widetilde\theta\xi_s^\top]=0,
  \qquad
  \EE[\xi_s\xi_r^\top]=0
  \quad(s\neq r).
\]
For a standard Gaussian vector $\widetilde x$, Isserlis' identity gives
\[
  \EE[\widetilde x\widetilde x^\top \widetilde\Sigma_{\theta}\widetilde x\widetilde x^\top]=2\widetilde\Sigma_{\theta}+\operatorname{tr}(\widetilde\Sigma_{\theta})I_p.
\]
Therefore
\begin{align*}
  \EE[\xi_s\xi_s^\top]
  &=
  \EE[(\widetilde x\widetilde x^\top-I_p)\widetilde\Sigma_{\theta}(\widetilde x\widetilde x^\top-I_p)]+\sigma^2I_p\\
  &=
  \widetilde\Sigma_{\theta}+\bigl[\operatorname{tr}(\widetilde\Sigma_{\theta})+\sigma^2\bigr]I_p
  =\Gamma.
\end{align*}
Since
\[
  \widetilde\theta_t(g)-\widetilde\theta
  =(M_t-I_p)\widetilde\theta+\sum_{u<t}K_u\xi_{t-u},
\]
all bias--innovation and cross-time innovation terms vanish.  Taking the
  squared norm proves \eqref{eq:vector_stationary_mse}.  A stationary-kernel
predictor has $g_t^\perp=0$, so
\eqref{eq:vector_fresh_query_mse} gives
\eqref{eq:vector_stationary_regret} after summing over time.
\end{proof}

Two direct checks are useful.  The zero kernel has mean-squared belief-estimation error
$\operatorname{tr}(\widetilde\Sigma_{\theta})$ at every time.  With one observation and an arbitrary
matrix $K$,
\[
  \EE\|K\widetilde x_1y_1-\widetilde\theta\|_2^2
  =
  2\operatorname{tr}(K\widetilde\Sigma_{\theta}K^\top)
  +\bigl[\operatorname{tr}(\widetilde\Sigma_{\theta})+\sigma^2\bigr]\|K\|_F^2
  -2\operatorname{tr}(K\widetilde\Sigma_{\theta})
  +\operatorname{tr}(\widetilde\Sigma_{\theta}),
\]
which is the direct fourth-moment expansion of
\eqref{eq:vector_stationary_mse}.  When $p=1$, the innovation coefficient is
$v_1=2\lambda_1+\sigma^2$.

\subsubsection{Proof of Theorem~\ref{thm:stationary_floor}}
\label{app:case1_stationary_floor}

Starting from the exact regret formula in
Lemma~\ref{lem:vector_stationary_risk},
Lemma~\ref{lem:vector_stationary_spectral} reduces stationary-kernel
optimization to separate eigendirections, and
Lemma~\ref{lem:vector_stationary_prefix} gives the sharp bound in each
direction. Proposition~\ref{prop:vector_stationary_floor} combines these
results to prove Theorem~\ref{thm:stationary_floor} and compare exponential
and uniform kernels.

\begin{lemma}[Reduction to eigendirections]
\label{lem:vector_stationary_spectral}
Let $\widetilde K_u:=U^\top K_uU$ and
$\widetilde M_t:=\sum_{u<t}\widetilde K_u$. The sum of the bias and variance
terms in \eqref{eq:vector_stationary_regret} is exactly
\begin{equation}
  \sum_{r=1}^p\sum_{j=1}^p
  \left\{
    \lambda_j\sum_{t\leq T}
      \bigl(\widetilde M_{t,rj}-\mathbf1\{r=j\}\bigr)^2
    +v_j\sum_{t\leq T}\sum_{u<t}
      \widetilde K_{u,rj}^2
  \right\}.
  \label{eq:vector_spectral_scalarization}
\end{equation}
Every off-diagonal profile is therefore an independent nonnegative
addition.  An optimizer over all real stationary matrix kernels may be
taken in the form
\[
  K_u=U\operatorname{diag}
  (\kappa_{u,1},\ldots,\kappa_{u,p})U^\top,
\]
and direction $i$ has bias coefficient $\lambda_i$ and innovation
coefficient $v_i$.
\end{lemma}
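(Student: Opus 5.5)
The plan is to rotate the exact regret formula \eqref{eq:vector_stationary_regret} into the eigenbasis of $\widetilde\Sigma_\theta$ and check that both the bias and the variance traces separate entrywise. Since $U$ is orthogonal, for any matrix $A$ we have $\operatorname{tr}(A S A^\top)=\operatorname{tr}\bigl((U^\top AU)(U^\top SU)(U^\top AU)^\top\bigr)$. We will use two facts about the rotated covariances: $U^\top\widetilde\Sigma_\theta U=\operatorname{diag}(\lambda_j)$ by \eqref{eq:vector_whitening}, and $U^\top\Gamma U=\operatorname{diag}(v_j)$ because $\Gamma$ in \eqref{eq:vector_covariance_scales} is $\widetilde\Sigma_\theta$ plus a multiple of the identity.

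First take the bias term with $A=M_t-I_p$, so that $U^\top AU=\widetilde M_t-I_p$. For any matrix $B$, $\operatorname{tr}(B\operatorname{diag}(\lambda)B^\top)=\sum_{r,j}\lambda_jB_{rj}^2$. This gives $\sum_{r,j}\lambda_j(\widetilde M_{t,rj}-\mathbf 1\{r=j\})^2$. Next take the variance term with $A=K_u$, which similarly gives $\sum_{r,j}v_j\widetilde K_{u,rj}^2$. Summing over $t\le T$ and $u<t$ then yields \eqref{eq:vector_spectral_scalarization} exactly, since the Bayes baseline $\mathcal E_T^{\mathrm{Bayes}}$ does not depend on $K$ and is untouched.

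Now look at the structure of \eqref{eq:vector_spectral_scalarization}. For each pair $(r,j)$, the summand involves only the scalar profile $(\widetilde K_{u,rj})_{u\ge0}$, because $\widetilde M_{t,rj}=\sum_{u<t}\widetilde K_{u,rj}$. The objective is therefore a sum of decoupled functionals over the $p^2$ entry profiles.

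When $r\neq j$, the target $\mathbf 1\{r=j\}$ is zero, and the functional is a sum of squares weighted by $\lambda_j,v_j>0$. It is minimized by the zero profile, and any nonzero off-diagonal profile adds a nonnegative amount. Hence we may set every off-diagonal entry to zero without increasing regret. The resulting $\widetilde K_u$ are diagonal, so $K_u=U\operatorname{diag}(\kappa_{u,i})U^\top$ with $\kappa_{u,i}:=\widetilde K_{u,ii}$. Direction $i$ then contributes $\lambda_i\sum_t(\widetilde M_{t,ii}-1)^2+v_i\sum_t\sum_{u<t}\kappa_{u,i}^2$, that is, bias coefficient $\lambda_i$ and innovation coefficient $v_i$.

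The claim that "an optimizer may be taken in this form" should be read as: any kernel can be replaced by its diagonal part without increasing regret. If a minimizer exists, its diagonal part is also a minimizer, and the infimum over all kernels equals the infimum over diagonal ones. Taking the statement in this infimum sense avoids needing to prove that a minimizer exists.

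There is no real obstacle here. The one point to verify with care is the simultaneous diagonalization of $\Gamma$ and $\widetilde\Sigma_\theta$ by $U$, which is what makes the separation exact rather than merely an inequality.
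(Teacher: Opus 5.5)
Your proposal is correct and follows the paper's proof essentially step for step. Both use orthogonal invariance of the trace together with the simultaneous diagonalization $U^\top\widetilde\Sigma_\theta U=\operatorname{diag}(\lambda_j)$, $U^\top\Gamma U=\operatorname{diag}(v_j)$ to separate the bias and variance entrywise, and both then set the off-diagonal profiles to zero because their target is zero; your reading of the optimizer claim in the infimum sense is a harmless, slightly more careful clarification.
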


\begin{proof}
Orthogonal invariance of the trace and
$U^\top \widetilde\Sigma_{\theta}U=\operatorname{diag}(\lambda_1,\ldots,\lambda_p)$ give
\[
  \operatorname{tr}[(M_t-I_p)\widetilde\Sigma_{\theta}(M_t-I_p)^\top]
  =
  \sum_{r,j}\lambda_j
  \bigl(\widetilde M_{t,rj}-\mathbf1\{r=j\}\bigr)^2.
\]
Likewise,
$U^\top\Gamma U=\operatorname{diag}(v_1,\ldots,v_p)$ gives the second
term of \eqref{eq:vector_spectral_scalarization}.  If $r\neq j$, the
corresponding profile has target zero in both positive quadratic terms, so
setting it identically to zero cannot increase the cost.  The remaining
diagonal profiles are independent scalar problems.
\end{proof}

Each eigendirection gives a scalar stationary-kernel optimization problem.
The following bound allows horizon-dependent bias and variance coefficients
and identifies the optimal averaging scale.

\begin{lemma}[Sharp scalar stationary bound]
\label{lem:vector_stationary_prefix}
Let $\lambda_T,v_T>0$ and define
\[
  \omega_T:=\sqrt{\frac{v_TT}{\lambda_T}}.
\]
Suppose $\omega_T\to\infty$ and $\omega_T=o(T)$.
These conditions hold automatically for fixed positive bias and variance
coefficients, as in Theorem~\ref{thm:stationary_floor}.
For an arbitrary real lag
profile $\kappa_0,\ldots,\kappa_{T-1}$, let
$m_t:=\sum_{u<t}\kappa_u$.  Then
\begin{equation}
  \inf_{\kappa_0,\ldots,\kappa_{T-1}\in\RR}
  \left\{
    \lambda_T\sum_{t\leq T}(m_t-1)^2
    +v_T\sum_{t\leq T}\sum_{u<t}\kappa_u^2
  \right\}
  =\sqrt{\lambda_Tv_TT}\,(1+o(1)).
  \label{eq:vector_stationary_prefix}
\end{equation}
The lower bound is uniform over the lag profile, including profiles chosen
as a function of $T$.
\end{lemma}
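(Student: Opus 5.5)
The plan is to prove matching upper and lower bounds. Both start from exchanging the order of summation in the variance term:
\[
  \sum_{t\leq T}\sum_{u<t}\kappa_u^2=\sum_{u<T}(T-u)\,\kappa_u^2 .
\]
With this, the objective becomes a deterministic quadratic in the increments of the cumulative mass $m_t$. Set $m_0:=0$, so that $\kappa_{t-1}=m_t-m_{t-1}$. The objective then reads
\[
  \lambda_T\sum_{t\leq T}(m_t-1)^2+v_T\sum_{t\leq T}(T-t+1)(m_t-m_{t-1})^2 .
\]
This is a discrete tracking problem: $m_t$ must reach $1$ quickly, but every jump is charged with weight $\approx v_TT$.

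\emph{Upper bound.} Plug in the exponential profile $\kappa_u=(1-\alpha)\alpha^u$, with $1-\alpha=c/\omega_T$ for a constant $c>0$. Then $m_t=1-\alpha^t$.
\begin{itemize}
\item The bias is $\lambda_T\sum_{t\leq T}\alpha^{2t}\leq\lambda_T/(1-\alpha^2)=\lambda_T\omega_T/(2c)\,(1+o(1))$.
\item The variance is at most $v_TT(1-\alpha)^2/(1-\alpha^2)=v_TT\,c/(2\omega_T)\,(1+o(1))$, since $T-u\leq T$.
\item Using $\lambda_T\omega_T=v_TT/\omega_T=\sqrt{\lambda_Tv_TT}$, the total is $\tfrac12(c^{-1}+c)\sqrt{\lambda_Tv_TT}\,(1+o(1))$.
\end{itemize}
Taking $c=1$ gives the claimed value; this is the choice $\alpha^\star$ of Proposition~\ref{prop:stationary_pair}, up to lower-order terms. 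The $o(1)$ uses only $\omega_T\to\infty$, which makes $1-\alpha\to0$.

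\emph{Lower bound.} Fix $\delta\in(0,1)$ and $N:=\lfloor\delta T\rfloor$. Dropping the bias terms with $t>N$ and using $T-t+1\geq(1-\delta)T$ for $t\leq N$ lowers the objective to at least
\[
  V_N(w):=\inf_{m_1,\dots,m_N}\Bigl\{\lambda_T\sum_{t\leq N}(m_t-1)^2+w\sum_{t\leq N}(m_t-m_{t-1})^2\Bigr\},
  \qquad w:=(1-\delta)v_TT .
\]
This bound is uniform over all profiles, including $T$-dependent ones. After substituting $e_t:=1-m_t$, $V_N(w)$ is a finite-horizon scalar LQ problem with initial state $e_0=1$. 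Dynamic programming gives $V_N=P_N$, where
\[
  P_0=0,\qquad P_{k+1}=\frac{w\,(\lambda_T+P_k)}{w+\lambda_T+P_k} .
\]
This recursion is monotone increasing. It converges to the fixed point $P^\ast=\tfrac12\bigl(\sqrt{\lambda_T^2+4w\lambda_T}-\lambda_T\bigr)=\sqrt{\lambda_Tw}\,(1+o(1))$, where the last step uses $w/\lambda_T\to\infty$.

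The step I expect to be the main obstacle is quantifying how fast $P_k$ approaches $P^\ast$. Near the fixed point the contraction factor of the recursion is $\rho=w/(w+\lambda_T+P^\ast)^{2}\cdot w\approx 1-2\sqrt{\lambda_T/w}$. So the gap $P^\ast-P_N$ decays only at rate $\exp(-\Theta(N/\omega_T))$. The far-from-fixed-point regime, where $P_k$ is small, I would handle by the crude bound $P_{k+1}\geq P_k+\lambda_T/2$ until $P_k$ reaches order $\sqrt{\lambda_Tw}$, which takes $O(\omega_T)$ steps. Since $N=\delta T$ and $\omega_T=o(T)$, we have $N/\omega_T\to\infty$. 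Hence $P_N\geq P^\ast(1-o(1))$, and the objective is at least $\sqrt{1-\delta}\,\sqrt{\lambda_Tv_TT}\,(1-o(1))$.

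Finally, let $\delta\to0$ slowly, for example $\delta=\delta_T$ with $\delta_T T/\omega_T\to\infty$. This matches the upper bound and gives \eqref{eq:vector_stationary_prefix}.
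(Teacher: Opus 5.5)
Your proposal is correct and follows essentially the same route as the paper: the same rewriting in terms of the cumulative mass, truncation to a prefix of length $N$ with $\omega_T\ll N\ll T$ (your $\delta_T T$ plays the role of the paper's $m_T=\lfloor\sqrt{T\omega_T}\rfloor$), the same Riccati/M\"obius recursion with fixed point $\sqrt{\lambda_Tv_TT}\,(1+o(1))$ and contraction $1-\rho=\Theta(1/\omega_T)$, and the normalized exponential profile with $1-\alpha\approx 1/\omega_T$ for the upper bound. The only difference is in how convergence is quantified. The paper uses the exact cross-ratio identity of the M\"obius iteration, whereas your two-phase argument needs one extra line to connect the crude-growth phase to the contraction phase: $f(P):=w(\lambda_T+P)/(w+\lambda_T+P)=w-w^2/(w+\lambda_T+P)$ is concave, so $P^\ast-P_{k+1}\le f'(P_k)(P^\ast-P_k)$, and $f'(P_k)\le 1-\Theta(1/\omega_T)$ once $P_k\gtrsim\sqrt{\lambda_T w}$.
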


\begin{proof}
Set $a_0:=1$ and
$a_j:=1-\sum_{u<j}\kappa_u$ for $1\leq j\leq T$, so
$\kappa_{j-1}=a_{j-1}-a_j$.  The objective in
\eqref{eq:vector_stationary_prefix} becomes
\begin{equation}
  J_T(a)=
  \lambda_T\sum_{j=1}^Ta_j^2
  +v_T\sum_{j=1}^T(T-j+1)(a_{j-1}-a_j)^2.
  \label{eq:vector_prefix_functional}
\end{equation}
Choose an integer $m_T$ such that
$\omega_T\ll m_T\ll T$; for example,
$m_T=\lfloor\sqrt{T\omega_T}\rfloor$.  Dropping the terms with
$j>m_T$ and using $T-j+1\geq T-m_T+1$ gives a constant-coefficient
quadratic chain with edge coefficient
$c_T:=v_T(T-m_T+1)$.  If $q_j$ is the minimum of its length-$j$ tail as
a multiple of $a_0^2$, dynamic programming gives
\[
  q_j=\frac{c_T(\lambda_T+q_{j-1})}
            {c_T+\lambda_T+q_{j-1}},
  \qquad q_0=0.
\]
The iteration increases to the positive fixed point
\[
  q_+
  =\frac{\sqrt{\lambda_T^2+4\lambda_Tc_T}-\lambda_T}{2}
  =\sqrt{\lambda_Tv_TT}\,(1+o(1)).
\]
Let
\[
  q_-:=\frac{-\sqrt{\lambda_T^2+4\lambda_Tc_T}-\lambda_T}{2}<0
\]
be the other fixed point.  The exact cross-ratio identity for this
M\"obius iteration is
\[
  \frac{q_j-q_+}{q_j-q_-}
  =\rho_T^j\frac{q_0-q_+}{q_0-q_-},
  \qquad
  \rho_T
  :=\frac{c_T^2}{(c_T+\lambda_T+q_+)^2}.
\]
Because $c_T/\lambda_T=\omega_T^2(1+o(1))$, we have
$1-\rho_T=\Theta(1/\omega_T)$ and
$q_+/|q_-|=1+o(1)$.  Since $0\leq q_j\leq q_+$, the cross-ratio identity
therefore gives
\[
  0\leq1-\frac{q_j}{q_+}\leq C\rho_T^j.
\]
The condition $m_T/\omega_T\to\infty$ yields
$q_{m_T}=q_+(1-o(1))$, proving the lower bound.

For the matching upper bound, take the normalized exponential profile
$\kappa_u=(1-\alpha_T)\alpha_T^u$ with effective width
$W_T:=(1+\alpha_T)/(1-\alpha_T)=2\omega_T$.  Writing
$D_T(\alpha):=\sum_{t=1}^T\alpha^{2t}$, direct geometric summation gives
\[
  J_T(a)
  =\lambda_TD_T(\alpha_T)
   +v_T\left(\frac{T}{W_T}-\frac{D_T(\alpha_T)}{W_T}\right).
\]
Since $W_T\to\infty$ and $W_T=o(T)$,
\[
  D_T(\alpha_T)
  =\left(\frac{W_T}{4}-\frac12+\frac{1}{4W_T}\right)
    (1-\alpha_T^{2T})
  =\frac{W_T}{4}(1+o(1)).
\]
It follows that
\[
  J_T(a)
  =\lambda_T\frac{W_T}{4}(1+o(1))
   +v_T\frac{T}{W_T}(1+o(1))
  =\sqrt{\lambda_Tv_TT}\,(1+o(1)).
\]
\end{proof}

Define the total bias and variance coefficients $\Lambda$ and $V$ and use
the nearest-integer version of $L^\star(T)$ in the proofs:
\begin{align}
  \Lambda&:=\sum_{i=1}^p\lambda_i=\operatorname{tr}(\widetilde\Sigma_{\theta}),
  &
  V&:=\sum_{i=1}^pv_i
      =(p+1)\operatorname{tr}(\widetilde\Sigma_{\theta})+p\sigma^2,
  \label{eq:vector_trace_scales}\\
  \zeta&:=\sum_{i=1}^p\sqrt{\lambda_iv_i},
  &
  \bar\zeta&:=\sqrt{\Lambda V},
  \label{eq:vector_stationary_scale}
\end{align}
\begin{equation}
  L^\star(T):=\left\lfloor\sqrt{\frac{3VT}{\Lambda}}+\frac12\right\rfloor.
  \label{eq:vector_common_window}
\end{equation}

\begin{proposition}[Stationary floor]
\label{prop:vector_stationary_floor}
For fixed $p$, $\Sigma_x\succ0$, $\Sigma_\theta\succ0$, and
$\sigma^2>0$, the infimum over deterministic real stationary matrix kernels
$K$ satisfies
\begin{equation}
  \inf_K\mathrm{Reg}_T(K)
  =\zeta\sqrt T+o(\sqrt T).
  \label{eq:vector_stationary_floor}
\end{equation}
The modewise normalized exponential
\begin{equation}
  K_u^{\exp}
  =
  U\operatorname{diag}\!\left(
    (1-\alpha_i)\alpha_i^u
  \right)_{i=1}^pU^\top,
  \qquad
  \frac{1+\alpha_i}{1-\alpha_i}
  =2\sqrt{\frac{v_iT}{\lambda_i}},
  \label{eq:vector_modewise_exponential}
\end{equation}
attains \eqref{eq:vector_stationary_floor} to leading order.

For comparison, the identity-scaled uniform kernel
\[
  K_u^{\mathrm{unif}}
  :=\frac1{L^\star(T)}I_p\,
    \mathbf1\{0\leq u<L^\star(T)\}
\]
has
\begin{equation}
  \mathrm{Reg}_T(K^{\mathrm{unif}})
  =\frac{2}{\sqrt3}\bar\zeta\sqrt T+o(\sqrt T).
  \label{eq:vector_uniform_cost}
\end{equation}
The kernel parameters may depend on the horizon and the fixed distributional
parameters, but one parameter setting is held fixed throughout the stream.
\end{proposition}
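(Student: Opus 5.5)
The plan is to start from the exact regret formula \eqref{eq:vector_stationary_regret} of Lemma~\ref{lem:vector_stationary_risk}. It writes $\mathrm{Reg}_T(K)$ as a bias--variance functional of the kernel minus the Bayes baseline $\mathcal E_T^{\mathrm{Bayes}}$. The proof then has three parts: (i) show the baseline is negligible at the $\sqrt T$ scale; (ii) reduce the functional to scalar problems and apply the sharp scalar bound; (iii) evaluate the uniform kernel directly.

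\emph{Step 1: the Bayes baseline.} I will show $\mathcal E_T^{\mathrm{Bayes}}=O(\log T)=o(\sqrt T)$, so it can be dropped from every leading-order statement. Write $S_t:=\sum_{s\le t}\widetilde x_s\widetilde x_s^\top$ and split $e_t^{\mathrm{Bayes}}$ on the event $E_t:=\{\lambda_{\min}(S_t)\ge t/2\}$.
\begin{itemize}
\item On $E_t$, the trace in \eqref{eq:vector_bayes_baseline} is at most $2p\sigma^2/t$.
\item Off $E_t$, the trace is at most $\operatorname{tr}(\widetilde\Sigma_\theta)$, since adding $\sigma^{-2}S_t\succeq0$ only decreases the inverse.
\item A standard Wishart (or matrix Chernoff) concentration bound gives $\Pr(E_t^c)\le C e^{-ct}$ for fixed $p$.
\end{itemize}
Hence $e_t^{\mathrm{Bayes}}\le 2p\sigma^2/t+C'e^{-ct}$, and summing over $t\le T$ gives $O(\log T)$. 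Getting a clean, uniform tail bound for $\lambda_{\min}(S_t)$ is the only genuinely probabilistic ingredient. I expect this step to be the main obstacle, although the result is classical.

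\emph{Step 2: floor and attainment.} By Lemma~\ref{lem:vector_stationary_spectral}, the bias-plus-variance part equals \eqref{eq:vector_spectral_scalarization}. This is a sum of nonnegative off-diagonal profiles, which are optimally zero, plus $p$ decoupled scalar problems. Direction $i$ has coefficients $(\lambda_i,v_i)$ that are fixed and positive, so $\omega_T=\sqrt{v_iT/\lambda_i}$ satisfies the hypotheses of Lemma~\ref{lem:vector_stationary_prefix}.
\begin{itemize}
\item \emph{Lower bound.} The lemma's lower bound is uniform over lag profiles, including $T$-dependent ones. Applying it to each diagonal profile $\widetilde K_{u,ii}$, summing over $i$, and subtracting $\mathcal E_T^{\mathrm{Bayes}}=o(\sqrt T)$ gives $\inf_K\mathrm{Reg}_T(K)\ge\zeta\sqrt T-o(\sqrt T)$.
\item \emph{Attainment.} The lemma's upper-bound construction is the normalized exponential with width $(1+\alpha)/(1-\alpha)=2\omega_T$, which in direction $i$ is exactly \eqref{eq:vector_modewise_exponential}. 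It is diagonal in $U$, so the off-diagonal terms vanish and its cost is $\sum_i\sqrt{\lambda_iv_iT}(1+o(1))=\zeta\sqrt T+o(\sqrt T)$.
\end{itemize}
This proves \eqref{eq:vector_stationary_floor}.

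\emph{Step 3: the uniform kernel.} The kernel $K^{\mathrm{unif}}$ is a multiple of $I_p$, so it is diagonal in every basis. In each direction its profile is $\kappa_u=1/L$ for $u<L$, giving kernel mass $m_t=\min(t,L)/L$ with $L=L^\star(T)=\Theta(\sqrt T)=o(T)$.
\begin{itemize}
\item The bias sum is $\sum_{t<L}(1-t/L)^2=L/3+O(1)$.
\item The variance sum is $\sum_{t\le T}\min(t,L)/L^2=T/L-1/2+O(1/L)$.
\end{itemize}
Summing over directions, the regret is $\Lambda L/3+VT/L+O(1)-\mathcal E_T^{\mathrm{Bayes}}$. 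At the unrounded minimizer $L=\sqrt{3VT/\Lambda}$ this equals $2\sqrt{\Lambda VT/3}=\tfrac{2}{\sqrt3}\bar\zeta\sqrt T$. Rounding $L$ to the nearest integer changes the objective by $O(1)$, because its derivative vanishes at the optimum and its second derivative is $O(T/L^3)=O(T^{-1/2})$. Together with Step~1, this gives \eqref{eq:vector_uniform_cost}.
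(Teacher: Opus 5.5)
Your proposal is correct and follows the paper's proof essentially step for step. It reduces to $p$ scalar problems via the spectral lemma, applies the sharp scalar bound (whose exponential upper-bound profile is exactly \eqref{eq:vector_modewise_exponential}), uses the exact uniform-kernel sums $\Lambda(L/3-1/2+1/(6L))+V(T/L-1/2+1/(2L))$, and discards $\mathcal E_T^{\mathrm{Bayes}}=o(\sqrt T)$. The one difference is your Step~1: the paper needs no concentration argument there. For $t>p+1$ it uses $e_t^{\mathrm{Bayes}}\le\sigma^2\EE\operatorname{tr}(S_t^{-1})=p\sigma^2/(t-p-1)$ (with your $S_t$), which already gives the $O(\log T)$ you need. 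Together with $\EE\operatorname{tr}(S_t^{-2})=O(t^{-2})$, the same identity gives the sharper $p\sigma^2\log T+O(1)$. If you keep your route, cite a Gaussian singular-value (Wishart) tail bound rather than matrix Chernoff, because the summands $\widetilde x_s\widetilde x_s^\top$ are unbounded.
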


\begin{proof}
Lemma~\ref{lem:vector_stationary_spectral} reduces the optimization to
$p$ scalar problems. Applying
Lemma~\ref{lem:vector_stationary_prefix} with
$(\lambda_T,v_T)=(\lambda_i,v_i)$ gives
\[
  \inf_K\bigl\{\mathrm{Reg}_T(K)
    +\mathcal E_T^{\mathrm{Bayes}}\bigr\}
  =
  \sqrt T\sum_{i=1}^p\sqrt{\lambda_iv_i}
  +o(\sqrt T),
\]
where fixed $p$ permits summing the directionwise remainders.  The
exponential profile used in the upper bound of that lemma in each
eigendirection is exactly \eqref{eq:vector_modewise_exponential}.

It remains to control the Bayes subtraction.  Let
$\widetilde B_t:=\Sigma_x^{-1/2}B_t\Sigma_x^{-1/2}
=\sum_{s\leq t}\widetilde x_s\widetilde x_s^\top$ and
$D_0:=\sigma^2\widetilde\Sigma_{\theta}^{-1}$.  Then
\[
  e_t^{\mathrm{Bayes}}
  =\sigma^2\EE\operatorname{tr}(\widetilde B_t+D_0)^{-1}.
\]
For all sufficiently large $t$, $\widetilde B_t$ is invertible and
\[
  0\preceq
  \widetilde B_t^{-1}-(\widetilde B_t+D_0)^{-1}
  \preceq\widetilde B_t^{-1}D_0\widetilde B_t^{-1}
  \preceq\|D_0\|_{\mathrm{op}}\widetilde B_t^{-2}.
\]
The fixed-dimensional inverse-Wishart identities give
\[
  \EE\operatorname{tr}(\widetilde B_t^{-1})
  =\frac{p}{t-p-1},
  \qquad
  \EE\operatorname{tr}(\widetilde B_t^{-2})=O(t^{-2}),
  \qquad p\ \text{fixed}.
\]
The finitely many earlier terms are bounded by
$\operatorname{tr}(\widetilde\Sigma_{\theta})$, so
\begin{equation}
  \mathcal E_T^{\mathrm{Bayes}}
  =p\sigma^2\log T+O(1)
  =o(\sqrt T).
  \label{eq:vector_bayes_baseline_fixed_p}
\end{equation}
Subtracting this term proves \eqref{eq:vector_stationary_floor}.

For the uniform kernel, summing its cold-start bias and variance terms
gives exactly
\[
  \Lambda\left(\frac L3-\frac12+\frac1{6L}\right)
  +V\left(\frac TL-\frac12+\frac1{2L}\right).
\]
The nearest-integer choice \eqref{eq:vector_common_window} minimizes its
leading terms, yielding $\tfrac{2}{\sqrt3}\bar\zeta\sqrt T+o(\sqrt T)$. Equation
\eqref{eq:vector_bayes_baseline_fixed_p} proves
\eqref{eq:vector_uniform_cost}.
\end{proof}

\subsubsection{Kernel-form projection}
\label{app:case1_score_projection}

We now connect the kernel analysis to the architecture classes through
belief geometry (Figure~\ref{fig:belief_geometry}).
Lemma~\ref{lem:vector_degree_two_projection} shows that, for degree-two
beliefs, projecting onto kernel form with deterministic kernel weights can
only decrease prediction error. This allows the attention and SSM proofs
below to focus on the stationary kernels their architectures realize.

Let $\mathcal Q_{2,t}$ be the space of $\RR^p$-valued polynomials of
total degree at most two in $(\widetilde x_{1:t},y_{1:t})$, and let
\[
  \mathcal S_t
  :=
  \left\{\sum_{s\leq t}K_s\widetilde x_sy_s:
  K_s\in\RR^{p\times p}\right\}.
\]
Here the matrices are deterministic.  Thus $\mathcal S_t$ is contained in
the whitened kernel-form space $\Sigma_x^{1/2}\mathcal K_t$, which also allows
regressor-dependent kernel weights.  This projection is a further
reduction, distinct from $\Pi_t$: its remainder need not lie outside kernel
form.

\begin{lemma}[Projection onto deterministic kernel form]
\label{lem:vector_degree_two_projection}
Every $f\in\mathcal Q_{2,t}$ has an orthogonal decomposition
$f=d+n$ with $d\in\mathcal S_t$ and $n\perp\mathcal S_t$.  Moreover,
$\EE[n^\top\widetilde\theta]=0$.  Consequently,
\begin{equation}
  \EE\|f-\widetilde\theta\|_2^2
  =\EE\|d-\widetilde\theta\|_2^2+\EE\|n\|_2^2.
  \label{eq:vector_score_pythagoras}
\end{equation}
\end{lemma}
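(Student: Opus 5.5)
The plan is to realize the orthogonal projection onto $\mathcal S_t$ as a symmetrization over a finite group of sign flips that preserve the joint law of $(\widetilde\theta,\widetilde x_{1:t},y_{1:t})$. The identity $\EE[n^\top\widetilde\theta]=0$ should then follow from invariance of the law, and \eqref{eq:vector_score_pythagoras} is the usual Pythagorean expansion. Throughout, the inner product on $\RR^p$-valued functions is $\langle a,b\rangle=\EE[a^\top b]$. Since $\mathcal S_t$ is finite-dimensional, hence closed, and contained in $\mathcal Q_{2,t}$, the orthogonal decomposition $f=d+n$ exists. The substance is therefore the vanishing of $\EE[n^\top\widetilde\theta]$.

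\emph{Group action.} Let $G=\{\pm1\}^{t+1}$ with elements $g=(\eta_0,\eta_1,\ldots,\eta_t)$. Define
\[
  g\cdot(\widetilde\theta,\widetilde x_s,\epsilon_s)
  =(\eta_0\widetilde\theta,\ \eta_s\widetilde x_s,\ \eta_0\eta_s\epsilon_s),
\]
so that $y_s=\widetilde x_s^\top\widetilde\theta+\epsilon_s\mapsto\eta_0\eta_s y_s$. Because $\widetilde\theta,\widetilde x_s,\epsilon_s$ are independent centered Gaussians and the map only flips signs, it preserves their joint law. On $\RR^p$-valued functions $f$ of the data, define
\[
  (T_gf)(\widetilde x,y):=\eta_0\,f(g\widetilde x,gy),
\]
where $g\widetilde x$ and $gy$ are the transformed data. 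Then:
\begin{itemize}
\item each $T_g$ is an $L^2$ isometry that maps $\mathcal Q_{2,t}$ to itself, since it acts by signs on monomials;
\item $g\mapsto T_g$ is a group representation;
\item law invariance gives $\EE[\widetilde\theta^\top T_gf]=\EE[\eta_0\widetilde\theta^\top f(g\cdot)]=\EE[\widetilde\theta^\top f]$, because $\eta_0\widetilde\theta$ is exactly the transformed $\widetilde\theta$.
\end{itemize}

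\emph{Averaging is the projection.} Let $A:=|G|^{-1}\sum_gT_g$. For a unitary representation of a finite group, $A$ is the orthogonal projection onto the $G$-fixed subspace of $\mathcal Q_{2,t}$. The key step, and the one needing care, is to identify this fixed subspace with $\mathcal S_t$. I would do this monomial by monomial, working coordinatewise since the action treats each output coordinate identically.
\begin{itemize}
\item A monomial of degree at most two in $(\widetilde x_{1:t},y_{1:t})$ transforms by a character of $G$, and is fixed only if that character equals $\eta_0$.
\item To carry the factor $\eta_0$, the monomial must contain an odd number of $y$'s, so it is $y_s$ or $y_s\widetilde x_{s',k}$.
\item $y_s$ alone picks up $\eta_0\eta_s$, so it is not fixed.
\item $y_s\widetilde x_{s',k}$ with $s'\neq s$ picks up $\eta_0\eta_s\eta_{s'}$, so it is not fixed.
\item $y_s\widetilde x_{s,k}$ picks up exactly $\eta_0$, so it is fixed.
\end{itemize}
Because distinct monomials transforming by distinct characters are orthogonal, the fixed space consists of the vector fields whose coordinates are linear combinations of the $\widetilde x_{s,k}y_s$. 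That is precisely $\{\sum_sK_s\widetilde x_sy_s\}=\mathcal S_t$. Hence $d=Af$ and $n=(I-A)f$.

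\emph{Conclusion.} By the invariance computed above, $\EE[\widetilde\theta^\top Af]=\EE[\widetilde\theta^\top f]$, so $\EE[n^\top\widetilde\theta]=0$. Then, writing $f-\widetilde\theta=(d-\widetilde\theta)+n$, the cross term is
\[
  \EE[(d-\widetilde\theta)^\top n]=\EE[d^\top n]-\EE[\widetilde\theta^\top n]=0-0,
\]
which gives \eqref{eq:vector_score_pythagoras}. The main obstacle is verifying that the character bookkeeping is exhaustive, in particular that constants and pure-$\widetilde x$ quadratics are killed by the global flip. These contain no $y$, so they transform with the trivial character rather than $\eta_0$ and are not fixed.
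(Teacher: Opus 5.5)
Your proof is correct, but it takes a more structural route than the paper.

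\textbf{The paper's route.} The paper works monomial by monomial. It first checks that constants, design-only terms, two-response terms and linear response terms have zero correlation with $\widetilde\theta$. It then computes $\EE[\widetilde\theta\,\widetilde x_{r,i}y_s]=\widetilde\Sigma_\theta e_i\mathbf 1\{r=s\}$ to dispose of cross-atoms. Finally it shows by hand that every non-atom monomial is orthogonal to every evidence atom. For that last step it uses three separate tools: the global flip $(\widetilde\theta,\epsilon_{1:t})\mapsto-(\widetilde\theta,\epsilon_{1:t})$, the paired reflection $(\widetilde x_k,\epsilon_k)\mapsto(-\widetilde x_k,-\epsilon_k)$, and a conditioning-on-designs parity argument for $\EE[y_r\widetilde x_ky_k]$ with $r\neq k$ and for cross-atoms.

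\textbf{Your route.} You fold all of these symmetries into the single group $\{\pm1\}^{t+1}$. This has three consequences:
\begin{itemize}
\item The case analysis reduces to computing characters: $\widetilde x_{s,k}\mapsto\eta_s$ and $y_s\mapsto\eta_0\eta_s$.
\item Orthogonality of non-atoms to atoms follows automatically from distinct characters under a unitary action.
\item $\EE[\widetilde\theta^\top n]=0$ follows in one line from law invariance, with no per-monomial moment calculations.
\end{itemize}
The identification of the fixed space with $\mathcal S_t$ is most cleanly stated as $Am=\bigl(|G|^{-1}\sum_g\eta_0(g)\chi_m(g)\bigr)m$. This equals $m$ when $\chi_m=\eta_0$ and $0$ otherwise. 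So $d$ is simply $f$ with every non-atom monomial deleted, and no separate linear-independence argument is needed.

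\textbf{What each approach buys.} The paper's hands-on version produces explicit moment identities and parity calculations that it cites again later, in the SSM projection and the routed Case~2.a projection. Your group action adapts to those settings as well: flip $\widetilde x_{\rho(s)}$ jointly with $\epsilon_s$ by $\eta_{\rho(s)}$, with $\eta_0$ acting on $\widetilde\theta$ and all noises.
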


\begin{proof}
By linearity, it suffices to check scalar monomials of degree at most two.
Constants, design-only monomials, and terms containing two
response factors have zero inner product with $\widetilde\theta$ by centering and the
joint sign symmetry
$(\widetilde\theta,\epsilon_{1:t})\mapsto-(\widetilde\theta,\epsilon_{1:t})$.
Linear response terms also have zero inner product with $\widetilde\theta$, because
$\EE[\widetilde\theta y_s]=\widetilde\Sigma_{\theta}\EE[\widetilde x_s]=0$.  Finally,
\begin{equation}
  \EE[\widetilde\theta \widetilde x_{r,i}y_s]
  =\widetilde\Sigma_{\theta}e_i\,\mathbf1\{r=s\}.
  \label{eq:vector_score_correlation}
\end{equation}
Since $\widetilde\Sigma_{\theta}\succ0$, the only degree-two monomials correlated with $\widetilde\theta$ are
the coordinates of the evidence atoms $\widetilde x_sy_s$.

The same sign symmetry and design parity show that every remaining monomial
is orthogonal to every evidence atom.  Terms with zero or two response factors
follow from the joint response-sign symmetry after multiplication by
$\widetilde x_ky_k$.  For a linear response term,
$\EE[y_r\widetilde x_ky_k]=0$: when $r=k$, reflect
$(\widetilde x_k,\epsilon_k)\mapsto(-\widetilde x_k,-\epsilon_k)$; when $r\neq k$, conditioning
on the designs leaves $\EE[y_ry_k\mid \widetilde x]=\widetilde x_r^\top \widetilde\Sigma_{\theta}\widetilde x_k$, whose remaining
expectation vanishes by design parity.  For a cross-atom $\widetilde x_{r,i}y_s$ with
$r\neq s$, conditioning on the designs again reduces its product with
$\widetilde x_ky_k$ to design monomials.  If $k\in\{r,s\}$, the other independent
design vector has odd degree; if $k\notin\{r,s\}$, $\widetilde x_r$ remains an
independent centered factor.  Orthogonal projection onto $\mathcal S_t$
therefore leaves a remainder orthogonal both to $\mathcal S_t$ and to
$\widetilde\theta$, proving \eqref{eq:vector_score_pythagoras}.
\end{proof}

\subsubsection{One-layer attention}
\label{app:case1_attention_kernels}

We next characterize one-layer attention through the aggregation kernels it realizes.
Lemma~\ref{lem:vector_attention_projection} shows that the aggregation kernel
assigns the same matrix weight, of rank at most $Hd$, to every past evidence
atom within the context window.
Lemma~\ref{lem:vector_uniform_body} then shows that, at sufficient width,
uniform kernels achieve the leading-order optimum within the one-layer
attention class at every context length.

\begin{lemma}[One-layer attention projection and kernel rank]
\label{lem:vector_attention_projection}
Fix $d_e,d,H,L\geq1$.  For every
$g\in\mathcal F_{\mathrm{att}}(1,d,H,L)$, there are deterministic
matrices $K,K_0\in\RR^{p\times p}$ such that
\begin{equation}
  \widetilde\theta_t(g)
  =
  K\!\sum_{s=\max(1,t-L+1)}^{t-1}\widetilde x_sy_s
  +K_0\widetilde x_ty_t+\widetilde\theta_t^\perp,
  \qquad
  \widetilde\theta_t^\perp\perp\mathcal S_t,
  \quad
  \EE[(\widetilde\theta_t^\perp)^\top\widetilde\theta]=0.
  \label{eq:vector_attention_projection}
\end{equation}
Here $\widetilde\theta_t^\perp$ is the orthogonal remainder after projecting
$\widetilde\theta_t(g)$ onto $\mathcal S_t$.
Moreover, the common past-lag kernel weight admits a headwise factorization
\begin{equation}
  K=\sum_{b=1}^H A_bC_b,
  \qquad
  A_b\in\RR^{p\times d},
  \quad
  C_b\in\RR^{d\times p},
  \label{eq:vector_attention_head_factorization}
\end{equation}
and hence $\operatorname{rank}(K)\leq Hd$.  When $d_e\geq2p+1$ and
$Hd\geq p$, a distributed-head construction realizes
$K=K_0=U\operatorname{diag}(a_1,\ldots,a_p)U^\top$ for any real
$a_1,\ldots,a_p$, with $\widetilde\theta_t^\perp=g_t^\perp=0$.
\end{lemma}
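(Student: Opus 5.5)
The plan is to first compute the implicit belief $\widetilde\theta_t(g)=\EE[\widetilde x_{t+1}g_t\mid\mathcal G_t^-]$ explicitly, then project it onto $\mathcal S_t$ with Lemma~\ref{lem:vector_degree_two_projection}. For one layer of linear attention, the residual stream is $e_t^{(0)}=W_{\mathrm{emb}}z_t$, which is affine in $z_t=(x_{t+1},y_t,x_t)$. Each head outputs
\[
  o_t^{(b)}=\sum_{s\in J_t}\bigl(\langle W_Q^{(b)}e_t^{(0)},W_K^{(b)}e_s^{(0)}\rangle+p^{(b)}_{t-s}\bigr)W_V^{(b)}e_s^{(0)},
\]
so $g_t$ is a polynomial of degree at most three in the token entries. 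Since $x_{t+1}$ enters only through $z_t$, I will expand $g_t$ in powers of $\widetilde x_{t+1}$. Conditioning on $\mathcal G_t^-$ with $\widetilde x_{t+1}\sim\mathcal N(0,I_p)$ independent annihilates even-degree terms in $\widetilde x_{t+1}$ and the degree-three term (odd Gaussian moment times $\widetilde x_{t+1}$ gives third and first moments; the cubic term reduces to linear-in-$\widetilde x_{t+1}$ contributions via Isserlis, which I will fold in). This leaves $\widetilde\theta_t(g)$ as an $\RR^p$-valued polynomial of degree at most two in $(\widetilde x_{1:t},y_{1:t})$, hence in $\mathcal Q_{2,t}$. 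Lemma~\ref{lem:vector_degree_two_projection} then gives the decomposition $d+n$ with $n\perp\mathcal S_t$ and $\EE[n^\top\widetilde\theta]=0$, which is exactly the remainder $\widetilde\theta_t^\perp$.

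The key step is identifying $d$, the projection onto evidence atoms. By the orthogonality facts established in that lemma's proof, the coefficient on $\widetilde x_sy_s$ comes only from monomials that are genuinely of the form $\widetilde x_{s,i}y_s$. I also need to use that the evidence atoms at different $s$ are mutually orthogonal, with a common Gram matrix. For $s<t$, such terms arise only from the product (query side carrying $\widetilde x_{t+1}$) $\times$ (key or value side carrying $\widetilde x_s$ or $y_s$ from token $s$). The scores and values are the same affine maps for every $s\in J_t$, and $z_s$ contains $x_s,y_s$ in the same slots for every $s$. The positional bias $p_{t-s}$ multiplies a value that is linear in $e_s^{(0)}$, so it produces only degree-one terms in token $s$ (paired with the query's $\widetilde x_{t+1}$), never $\widetilde x_sy_s$. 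Hence the coefficient matrix is a single $K$ for all $s\in\{\max(1,t-L+1),\dots,t-1\}$. Token $t$ is special because $x_t,y_t$ also appear on the query side, so it gets its own $K_0$. One subtlety is that token $s$ also contains $x_{s+1}$, which is a regressor of token $s+1$. A monomial $x_{s+1}y_s$ is a cross-atom, however, and is orthogonal to $\mathcal S_t$, so it does not contaminate $K$.

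For the factorization, I will track the relevant term in head $b$. The query supplies $\widetilde x_{t+1}^\top P_b$, one factor from the key contributes $\widetilde x_s$ or $y_s$, and the value contributes the other. Taking $\EE[\widetilde x_{t+1}\cdot]$ and reading off the out-projection $W_{\mathrm{pred}}W_O^{(b)}$ through the $d$-dimensional value/query space expresses the coefficient as $A_bC_b$ with inner dimension $d$. Bilinear terms inside the $d$-dimensional inner product split into at most $d$ rank-one pieces per head (key-side $y_s$ times value-side $x_s$, and vice versa, both passing through the $d$ coordinates). Summing over heads gives rank at most $Hd$. This bookkeeping is where I expect the main obstacle. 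The score is $\langle W_Qe_t,W_Ke_s\rangle$, a sum over $d$ coordinates, and the value is a $d$-vector, so the pieces contributing $\widetilde x_sy_s$ with the $\widetilde x_{t+1}$ index must be shown to pass through a $d$-dimensional bottleneck jointly rather than $d^2$. My first attempt is to fix which side carries $y_s$: if $y_s$ sits in the key and $x_s$ in the value, the map $\widetilde x_{t+1}\mapsto$ output factors through the $d$-dimensional value, and symmetrically in the other case. If both cases occur, I will argue they sum within the same head factorization, or else split them and accept a constant factor that I absorb.

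For the construction with $d_e\ge2p+1$ and $Hd\ge p$, I will embed $z_t$ identically. I distribute the eigenvectors $u_i$ of $\widetilde\Sigma_\theta$ across heads, assigning $d$ of them to each head. In each head, the query reads the whitened $U^\top\Sigma_x^{-1/2}x_{t+1}$ coordinates, the key reads $y_s$ (a constant score in each coordinate is realized via the bias), and the value reads $a_iu_i^\top\widetilde x_s$. The resulting prediction is exactly $\widetilde x_{t+1}^\top U\operatorname{diag}(a)U^\top\sum_{s\in J_t}\widetilde x_sy_s$. This gives $K=K_0$ and $g_t^\perp=0$, and I will zero out the positional biases.
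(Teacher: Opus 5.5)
Your projection argument is essentially the paper's: homogenize, take the fresh-regressor expectation, apply Lemma~\ref{lem:vector_degree_two_projection}, and use position-shared maps to get one lag-independent $K$ and a separate $K_0$. The construction, however, fails. Attention has no coordinatewise pairing between query and value. The $d$ query--key products are summed into a single scalar score, which multiplies the whole value vector, and the output and readout maps then take a fixed linear combination of its coordinates. With key $y_sc$ for a fixed $c\in\RR^d$ and value $(a_iu_i^\top\widetilde x_s)_{i\in I_b}$, head $b$ contributes $y_s\,\bigl(\sum_{i\in I_b}c_iu_i\bigr)^\top\widetilde x_{t+1}\,\bigl(\sum_{j\in I_b}w_ja_ju_j\bigr)^\top\widetilde x_s$, which is a rank-one kernel. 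Your $K$ therefore has rank at most $H$ and cannot equal $U\operatorname{diag}(a_1,\ldots,a_p)U^\top$ when $H<p$ (e.g.\ $H=1$, $d=p\geq2$, which satisfies $Hd\geq p$). The eigencoordinates must go into the score, $\sum_{i\in I_b}a_i(u_i^\top\widetilde x_{t+1})(u_i^\top\widetilde x_s)$, with value $y_s$. There is also a second problem: you cannot zero every positional bias at $d_e=2p+1$. Embedding $z_t$ there occupies every residual coordinate, so $g_t$ always contains $W_{\mathrm{pred}}e_t^{(0)}$, a nonzero affine form in $z_t$. For example, it is $y_t$ if the heads write to and the readout reads the label coordinate. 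This term is not of kernel form and breaks $\widetilde\theta_t^\perp=g_t^\perp=0$. The paper cancels it by setting $p_0=-1$ in one head whose value is $y_s$, which subtracts exactly $y_t$.

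On the factorization you flagged as the main obstacle: the bottleneck is the query, not the value. After folding $W_{\mathrm{pred}}W_O^{(b)}W_V^{(b)}$ together, the value is a scalar linear form $v_b^\top\bar z_s$ in $\bar z_s=(1,\widetilde x_{s+1},y_s,\widetilde x_s)$. For $s<t$, the fresh regressor enters only through $\mathsf Q_bJ_+\widetilde x_{t+1}\in\RR^d$, where $J_+$, $J_0$, $e_y$ select $\widetilde x_{s+1}$, $\widetilde x_s$, $y_s$. Both placements of $(\widetilde x_s,y_s)$ in key and value therefore share the left factor $A_b=(\mathsf Q_bJ_+)^\top$, with $C_b=(e_y^\top v_b)\mathsf K_bJ_0+(\mathsf K_be_y)v_b^\top J_0$. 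The second summand is precisely the rank-one term your construction used. Your fallback of splitting the two placements would give only $\operatorname{rank}K\leq2Hd$, which does not prove the claim. Two minor slips remain. First, evidence atoms at distinct times are not orthogonal, since $\EE[(\widetilde x_ry_r)(\widetilde x_sy_s)^\top]=\widetilde\Sigma_\theta$ for $r\neq s$; you never need this, because every non-atom monomial is orthogonal to all atoms. Second, the positional-bias term has no query factor, so for $s<t$ its fresh-regressor projection is simply zero.
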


\begin{proof}
Whitening is an invertible linear change to the two regressor blocks of each
token, so it can be absorbed into the initial embedding and the affine
query, key, and value maps without changing $(d,H,L)$.  For the lower-bound
projection, homogenize the whitened token expressions as linear forms in
$\bar z_s:=(1,\widetilde x_{s+1},y_s,\widetilde x_s)$.  This represents the legal
affine maps through a constant coordinate. Let
$J_+,J_0\in\RR^{(2p+2)\times p}$ and $e_y\in\RR^{2p+2}$ select
$\widetilde x_{s+1}$, $\widetilde x_s$, and $y_s$, respectively, from $\bar z_s$.  After
composing the embedding with the affine query and key maps, and combining
the value, output, and final prediction maps into a scalar map, head $b$ contributes
\[
  a_b^\top\bar z_t
  +\sum_{s=\max(1,t-L+1)}^t
    \left[
      p_{t-s}^{(b)}
      +(\mathsf Q_b\bar z_t)^\top(\mathsf K_b\bar z_s)
    \right]
    (v_b^\top\bar z_s),
\]
where $\mathsf Q_b,\mathsf K_b\in\RR^{d\times(2p+2)}$ and
$v_b\in\RR^{2p+2}$ are deterministic.  For
$s<t$, multiplication by $\widetilde x_{t+1}$ and conditional expectation over the
fresh regressor give
\[
  A_b(\mathsf K_b\bar z_s)(v_b^\top\bar z_s),
  \qquad
  A_b:=(\mathsf Q_bJ_+)^\top\in\RR^{p\times d}.
\]
The positional-bias term has zero fresh-regressor projection.  Define
\[
  C_b
  :=(e_y^\top v_b)\mathsf K_bJ_0
    +(\mathsf K_be_y)v_b^\top J_0
  \in\RR^{d\times p}.
\]
The two summands place $\widetilde x_s$ in the key and $y_s$ in the value, or vice
versa.  Thus the kernel weight on the same-time evidence atom $\widetilde x_sy_s$ from
head $b$ is $A_bC_b$.  All remaining source-token monomials are placed in
$\widetilde\theta_t^\perp$ by Lemma~\ref{lem:vector_degree_two_projection}.  Because the maps
are shared over positions, this weight is independent of $s$ and of the
lag.  Summing heads gives
\eqref{eq:vector_attention_head_factorization} and
$\operatorname{rank}(K)\leq Hd$.

At $s=t$, the positional term contributes only a degree-zero nuisance.  The
bilinear score--value term is cubic in $\bar z_t$; after multiplication by
the fresh Gaussian regressor $\widetilde x_{t+1}$, only terms with one or three internal
fresh-regressor factors survive conditional expectation. The result has degree at most two in
$(\widetilde x_t,y_t)$ and may supply a separate current-atom kernel weight $K_0$.
Lemma~\ref{lem:vector_degree_two_projection} places every other resulting
monomial in $\widetilde\theta_t^\perp$, which proves
\eqref{eq:vector_attention_projection}.

For the witness, use the first $2p+1$ residual coordinates for the embedding
\[
  e_s^{(0)}=(\widetilde x_{s+1},\widetilde x_s,y_s,0_{d_e-(2p+1)})\in\RR^{d_e}.
\]
Since $Hd\geq p$, partition $[p]$ into sets
$I_1,\ldots,I_H$ with $|I_b|\leq d$.  Head $b$ uses the unnormalized score
\[
  \sum_{i\in I_b}a_i(u_i^\top \widetilde x_{t+1})(u_i^\top \widetilde x_s)
\]
where $u_i$ is column $i$ of $U$.  Its value is $y_s$, and all
head-output maps write into the shared label
coordinate.  In one designated head choose the relative positional bias
$p_0=-1$ and $p_u=0$ for $1\leq u<L$; set every positional bias in the
other heads to zero.  The final map reads the label coordinate.  Summing the
heads recovers $\widetilde x_{t+1}^\top K \widetilde x_s$, while the designated lag-zero bias
cancels the residual contribution $y_t$.  Hence the output is exactly
\[
  g_t
  =y_t+\sum_{s=\max(1,t-L+1)}^t
    \left[\widetilde x_{t+1}^\top K \widetilde x_s-\mathbf1\{s=t\}\right]y_s
  =\widetilde x_{t+1}^\top
    K\sum_{s=\max(1,t-L+1)}^t\widetilde x_sy_s.
\]
Each head uses at most $d$ query/key coordinates, with no nuisance or
fresh-regressor residual.  Taking $a_i=1/L$ recovers the normalized uniform
kernel.
\end{proof}

Lemma~\ref{lem:vector_attention_projection} allows the current evidence atom
to receive a different kernel weight from past atoms. The next lemma accounts
for the additional error this freedom introduces and shows that uniform
kernels remain optimal to leading order within this class at every context length.

\begin{lemma}[One-layer attention leading-order optimal kernel]
\label{lem:vector_uniform_body}
Fix the covariance parameters above and take $1\leq L\leq T$; larger
contexts give the same windows as $L=T$.  Write
$\tau_t:=\min(t,L)$ and $N_j:=\sum_{t\leq T}\tau_t^j$ for $j=1,2$.
The best uniform kernel has
\begin{equation}
  \bar K:=U\operatorname{diag}(a_1,\ldots,a_p)U^\top,
  \qquad
  a_i:=\frac{\lambda_iN_1}{\lambda_iN_2+v_iN_1},
  \qquad
  K_u^{\mathrm{unif},\star}:=\bar K\mathbf1\{0\leq u<L\},
  \label{eq:vector_equal_tap_kernel}
\end{equation}
and cumulative mean-squared belief-estimation error
\begin{equation}
  \mathcal E_{L,T}
  :=\sum_{i=1}^p\left(
    \lambda_iT-\frac{\lambda_i^2N_1^2}{\lambda_iN_2+v_iN_1}
  \right).
  \label{eq:vector_equal_tap_cost}
\end{equation}
For every $d_e,d,H\geq1$,
\begin{equation}
  \mathrm{Reg}_T\bigl(\mathcal F_{\mathrm{att}}(1,d,H,L)\bigr)
  +\mathcal E_T^{\mathrm{Bayes}}
  \geq\mathcal E_{L,T}-O(L/T).
  \label{eq:vector_attention_all_context}
\end{equation}
If $Hd\geq p$ and $d_e\geq2p+1$, this uniform kernel can be realized by
an attention model in the class, giving the reverse bound with $\mathcal E_{L,T}$ in place of
$\mathcal E_{L,T}-O(L/T)$.  In the original coordinates its kernel weights within the context window
are
\begin{equation}
  \kappa_u^{\mathrm{unif},\star}=\Sigma_x^{-1/2}\bar K\Sigma_x^{-1/2}
  =\frac1L\Sigma_x^{-1}+O(L^{-2}+T^{-1}),
  \qquad 0\leq u<L,
  \label{eq:vector_uniform_kernel_correction}
\end{equation}
with the same matrix correction at every lag and zero weights otherwise.
The bounds above are uniform in $L,d_e,d,H$ at fixed covariance
parameters; matrix remainders are measured in Frobenius norm.
In particular, when $L=\Theta(\sqrt T)$,
\begin{equation}
  \mathcal E_{L,T}=\frac{\Lambda L}{3}+\frac{VT}{L}+O(1),
  \label{eq:vector_uniform_body_cost}
\end{equation}
and the sufficient-width class matches this kernel's regret up to $O(1)$.
\end{lemma}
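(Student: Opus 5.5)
The plan is to reduce the whole class to a two-parameter family of stationary kernels and then solve a scalar quadratic in each eigendirection.

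\emph{Reduction to two matrix weights.} Fix $g\in\mathcal F_{\mathrm{att}}(1,d,H,L)$. By \eqref{eq:vector_fresh_query_mse}, $\ell_t(g)-\ell_t(g^{\mathrm{Bayes}})+e_t^{\mathrm{Bayes}}\ge\EE\|\widetilde\theta_t(g)-\widetilde\theta\|_2^2$. Lemma~\ref{lem:vector_attention_projection} writes $\widetilde\theta_t(g)=d_t+\widetilde\theta_t^\perp$, where
\[
  d_t=K\sum_{s=\max(1,t-L+1)}^{t-1}\widetilde x_sy_s+K_0\widetilde x_ty_t,
\]
and $\widetilde\theta_t^\perp$ is orthogonal to $\mathcal S_t$ and to $\widetilde\theta$. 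The Pythagorean identity \eqref{eq:vector_score_pythagoras} then gives $\EE\|\widetilde\theta_t(g)-\widetilde\theta\|_2^2\ge\EE\|d_t-\widetilde\theta\|_2^2$. Now $d_t$ is the belief of a deterministic stationary kernel with $K_0$ at lag $0$, $K$ at lags $1\le u<L$, and $0$ beyond. Its mass is therefore $M_t=K_0+(\tau_t-1)K$.

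\emph{Scalarization.} Lemma~\ref{lem:vector_stationary_risk} gives the exact bias--variance sum for this kernel. Lemma~\ref{lem:vector_stationary_spectral} shows that off-diagonal entries in the $U$-basis only add nonnegative cost. It therefore suffices to minimize, for each $i$ separately,
\[
  F_i(a,a_0):=\lambda_i\sum_{t\le T}\bigl(a_0+(\tau_t-1)a-1\bigr)^2+v_i\sum_{t\le T}\bigl(a_0^2+(\tau_t-1)a^2\bigr).
\]
Note that the rank restriction $\operatorname{rank}K\le Hd$ only makes the lower bound weaker, so it can be ignored here.

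\emph{Scalar optimization.} On the diagonal $a_0=a$, the objective is $\lambda_i\sum_t(\tau_ta-1)^2+v_iaN_1$. This is a one-variable quadratic whose minimizer is the $a_i$ of \eqref{eq:vector_equal_tap_kernel}, with value $\lambda_iT-\lambda_i^2N_1^2/(\lambda_iN_2+v_iN_1)$; summing over $i$ gives $\mathcal E_{L,T}$.

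The main obstacle is bounding what the extra freedom $\delta:=a_0-a$ can gain. $F_i$ is a quadratic in $(a,\delta)$, and the curvature in the $\delta$ direction is $(\lambda_i+v_i)T$. I would bound the gain by the usual Schur-complement formula: at most $(\partial_\delta F_i)^2/\bigl(4(\lambda_i+v_i)T\bigr)$, with the derivative evaluated at the diagonal optimum. Using the first-order condition in $a$, namely $\lambda_i\sum_t\tau_t(\tau_ta_i-1)+v_ia_iN_1=0$, the derivative is
\[
  \partial_\delta F_i=2\Bigl[\lambda_i\sum_t(\tau_ta_i-1)+v_ia_iT\Bigr].
\]
I would estimate this using $a_i=1/L+O(L^{-2}+T^{-1})$. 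This estimate comes from the expansions $N_1=TL-L^2/2+O(L)$ and $N_2=TL^2-2L^3/3+O(L^2)$. Only the cold-start steps $t<L$ have $\tau_t<L$, so I expect $\partial_\delta F_i=O(\sqrt{L}\,)$ or better. That would give a gain of $O(L/T)$ uniformly in $L$, which yields \eqref{eq:vector_attention_all_context}. If the crude bound falls short, I would instead solve the $2\times2$ normal equations exactly and compare the two optimal values directly.

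\emph{Achievability and expansions.} For the reverse bound, the witness in Lemma~\ref{lem:vector_attention_projection} (with $Hd\ge p$ and $d_e\ge2p+1$) realizes $K=K_0=\bar K$ exactly, with $g_t^\perp=0$. Mapping back to original coordinates gives $\Sigma_x^{-1/2}\bar K\Sigma_x^{-1/2}=\frac1L\Sigma_x^{-1}+O(L^{-2}+T^{-1})$, because $\Sigma_x^{-1/2}UU^\top\Sigma_x^{-1/2}=\Sigma_x^{-1}$; this is \eqref{eq:vector_uniform_kernel_correction}. Finally, substituting the expansions of $N_1$ and $N_2$ into \eqref{eq:vector_equal_tap_cost} with $L=\Theta(\sqrt T)$, each direction contributes $\lambda_iL/3+v_iT/L+O(1)$. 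Summing over $i$ gives \eqref{eq:vector_uniform_body_cost}. The $O(L/T)=O(T^{-1/2})$ slack is absorbed into $O(1)$, so the sufficient-width class matches this kernel's regret up to $O(1)$.
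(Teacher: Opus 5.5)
\textbf{There is a genuine gap: the relaxed problem you reduce to cannot deliver the $O(L/T)$ bound.} The problem is how you handle the current-atom freedom $\delta=a_0-a$, and it starts in your first line. Passing from \eqref{eq:vector_fresh_query_mse} to $\EE\|\widetilde\theta_t(g)-\widetilde\theta\|_2^2$ discards $\EE[(g_t^\perp)^2]$, after which you treat $K_0$ as independent of $K$. That relaxation does not satisfy the claimed bound.

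Put $J_i:=\lambda_iN_1+v_iT$, $P_i:=\lambda_iN_2+v_iN_1$ and $f_i:=\lambda_i^2(TN_2-N_1^2)/P_i$. Your own first-order condition in $a$ then gives, at the diagonal optimum, $\partial_\delta F_i=2(a_iJ_i-\lambda_iT)=-2f_i$.

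\begin{itemize}
\item Since $TN_2-N_1^2=TL(L-1)(2L-1)/6-L^2(L-1)^2/4\approx TL^3/3$, you get $f_i\approx\lambda_iL/3$ for $1\ll L\ll T$. The cold-start steps contribute about $-\lambda_iL/2$, the steady-state bias about $+\lambda_iL/6$, and only the $v_iT/L$ terms cancel. So the derivative is $\Theta(L)$, not $O(\sqrt L)$.
\item For a joint minimization over $(a,\delta)$, the gain is $f_i^2/S_i$, where $S_i=(\lambda_i+v_i)T-J_i^2/P_i\approx v_iT(1-1/L)+\lambda_iL/3$ is the Schur complement. Your formula with the full curvature $(\lambda_i+v_i)T$ is therefore a lower bound on the gain, not an upper bound; this costs only constants here.
\item Hence the relaxed infimum is lower than $\mathcal E_{L,T}$ by about $\lambda_i^2L^2/(9v_iT)$ per direction when $L\ll T$, i.e.\ by $\Theta(L^2/T)$.
\item At $L=T$, with $a\approx 3/(2T)$, a constant current tap $\delta\approx\lambda_i/(4(\lambda_i+v_i))$ already lowers the cost by $\Theta(T)$.
\end{itemize}
So your fallback of solving the $2\times2$ normal equations exactly would confirm the failure rather than repair it.

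\textbf{The missing idea is that one-layer attention cannot decouple $K_0$ from $K$ for free; the price appears exactly in the $g_t^\perp$ term you dropped.} The query, key and value maps are shared across source positions, and at $s=t$ the positional bias contributes only a degree-zero nuisance. Consequently, the only placements that put extra weight $\Delta=K_0-K$ on the current atom are $y_t\widetilde x_{s+1}^\top D_1\widetilde x_s$ and $y_s\widetilde x_{s+1}^\top D_2\widetilde x_t$, evaluated at $s=t$, with $D_1+D_2=\Delta$.

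The same head computations also run at every other source $s\le t-2$ in the window.
\begin{itemize}
\item There they contain no fresh regressor, so they lie in $g_t^\perp$.
\item By the sign symmetries they are mutually orthogonal across $s$, and no other legal term can cancel them.
\item Each has second moment at least $\tfrac12\|\Delta\Gamma^{1/2}\|_F^2$.
\end{itemize}
Summing gives $\sum_t\EE[(g_t^\perp)^2]\ge\tfrac m2\|\Delta\Gamma^{1/2}\|_F^2$ with $m=\sum_t(\tau_t-2)_+\approx TL$. This raises the curvature in the $\Delta$ direction from order $v_iT$ to order $v_iTL$, so the gain becomes $f_i^2/h_i=O(L/T)$ uniformly for $2\le L\le T$; the case $L=1$ is trivial.

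Without this penalty your route yields only $\mathcal E_{L,T}-O(L^2/T)$. That still suffices for \eqref{eq:vector_uniform_body_cost} and for the $O(1)$ matching claim at $L=\Theta(\sqrt T)$. It does not prove \eqref{eq:vector_attention_all_context} as stated, and it is off by $\Theta(T)$ near full context. Your achievability and kernel-expansion steps are fine.
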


\begin{proof}
Set $\Delta:=K_0-K$.  The two projection identities and
Lemma~\ref{lem:vector_stationary_risk} give the exact cost
\begin{align}
  \mathrm{Reg}_T(g)+\mathcal E_T^{\mathrm{Bayes}}
  &=\sum_{t\leq T}\Bigl[
    \| (\tau_tK+\Delta-I_p)\widetilde\Sigma_{\theta}^{1/2}\|_F^2
    +(\tau_t-1)\|K\Gamma^{1/2}\|_F^2\notag\\
  &\qquad+\|(K+\Delta)\Gamma^{1/2}\|_F^2
    +\EE\|\widetilde\theta_t^\perp\|_2^2+\EE[(g_t^\perp)^2]\Bigr].
  \label{eq:vector_attention_exact_cost}
\end{align}
For equal kernel weights, Lemma~\ref{lem:vector_stationary_spectral} gives the directionwise quadratic
$\lambda_iT-2\lambda_iN_1a_i+(\lambda_iN_2+v_iN_1)a_i^2$.
Its minimizer and value are
\eqref{eq:vector_equal_tap_kernel}--\eqref{eq:vector_equal_tap_cost},
and Lemma~\ref{lem:vector_attention_projection} supplies the legal witness.
When $L=1$, only $K_0$ remains, so this is also the exact class optimum at
sufficient width.  Henceforth take $L\geq2$.

\emph{Cost of changing the current-atom kernel weight.}
In the cubic expansion used in the attention projection proof, the common
evidence contribution is $\widetilde x_{t+1}^\top K \widetilde x_sy_s$. The only other placements of its
three factors that contribute to the current-atom kernel weight at $s=t$ are
\[
  y_t \widetilde x_{s+1}^\top D_1\widetilde x_s,
  \qquad y_s \widetilde x_{s+1}^\top D_2\widetilde x_t,
  \qquad D_1+D_2=\Delta,
\]
where $D_1,D_2$ collect the coefficients across heads.
Lower-degree terms cannot contribute to this kernel weight, and terms with
three fresh factors contribute only a constant to the implicit belief,
orthogonal to the evidence atoms.

For $s\in J_t$ with $s\leq t-2$, let $R_{t,s}$ be the sum of these two
terms.  Under the sign symmetries
$(\widetilde x_j,\epsilon_j)\mapsto(-\widetilde x_j,-\epsilon_j)$ it is odd exactly at the
three sites $\{s,s+1,t\}$, and it is also odd under the joint response-sign
symmetry used in Lemma~\ref{lem:vector_degree_two_projection}.
Only source $s$ contains both earlier sites, so these are the only legal
degree-three terms with both properties.  Different $s$ give orthogonal
components of $g_t^\perp$: they contain no fresh regressor and their sign
patterns differ.  Affine terms, positional biases, and residuals have too
few factors to cancel them.  We omit $s=t-1$, where the sites overlap.
The Gaussian moments used above give
\[
  \EE R_{t,s}^2
  =\tfrac12\|\Delta\Gamma^{1/2}\|_F^2
   +\tfrac12\|(D_1-D_2)((\Lambda+\sigma^2)I_p-\widetilde\Sigma_{\theta})^{1/2}\|_F^2.
\]
Since $(\Lambda+\sigma^2)I_p-\widetilde\Sigma_{\theta}\succ0$, orthogonality yields
\begin{equation}
  \sum_{t\leq T}\EE[(g_t^\perp)^2]
  \geq\frac m2\|\Delta\Gamma^{1/2}\|_F^2,
  \qquad m:=\sum_{t\leq T}(\tau_t-2)_+.
  \label{eq:vector_attention_current_cost}
\end{equation}

\emph{Comparison with the uniform kernel.}
Write $\widetilde K=U^\top KU$ and
$\widetilde\Delta=U^\top\Delta U$.  To complete the same quadratic, put
\begin{align*}
  P_i&:=\lambda_iN_2+v_iN_1,
  &J_i&:=\lambda_iN_1+v_iT,\\
  f_i&:=\frac{\lambda_i^2(TN_2-N_1^2)}{P_i},
  &h_i&:=T(\lambda_i+v_i)-\frac{J_i^2}{P_i}+\frac{mv_i}{2}.
\end{align*}
Combining \eqref{eq:vector_attention_exact_cost} and
\eqref{eq:vector_attention_current_cost}, then completing squares column
by column, gives
\begin{equation}
  \mathrm{Reg}_T(g)+\mathcal E_T^{\mathrm{Bayes}}
  \geq\mathcal E_{L,T}+\sum_{i=1}^p\Bigl[
    P_i\bigl\|(\widetilde K-a_iI_p
                 +(J_i/P_i)\widetilde\Delta)e_i\bigr\|_2^2
    +h_i\|\widetilde\Delta e_i\|_2^2
    -2f_i\widetilde\Delta_{ii}\Bigr].
  \label{eq:vector_attention_square_completion}
\end{equation}
Here $e_i$ is the $i$th coordinate vector.  Uniformly for $2\leq L\leq T$,
$P_i\asymp TL^2$, $J_i/P_i=O(L^{-1})$, and $0\leq f_i\leq\lambda_iL$.
Indeed, $TL/2\leq N_1\leq TL$, $N_2\geq TL^2/3$, and
$TN_2-N_1^2\leq T\sum_t(L-\tau_t)^2\leq TL^3/3$.
Minimizing only the innovation part in
\eqref{eq:vector_attention_exact_cost} also gives
\[
  h_i-\frac{mv_i}{2}\geq v_iT(1-T/N_1)\geq0.
\]
Together with $m=(L-2)(T-(L+1)/2)$, this implies
$h_i\geq v_iT(L-1)/12$.  For $L=2$, where $m=0$, the innovation term
alone gives $h_i\geq v_iT/3$.
Completing the remaining square in $\widetilde\Delta e_i$ therefore
lowers $\mathcal E_{L,T}$ by at most
$\sum_i f_i^2/h_i=O(L/T)$, proving
\eqref{eq:vector_attention_all_context}.  The upper bound comes from the
uniform witness, not from assuming that this relaxed minimizer is legal.

At sufficient width, the same squares show that any predictor within
$o(L/T)$ of the class infimum satisfies
\begin{equation}
  \|K-\bar K\|_F=O((T\sqrt L)^{-1}),
  \qquad \|\Delta\|_F=O(T^{-1}).
  \label{eq:vector_attention_optimal_coefficients}
\end{equation}
To obtain the simpler kernel expression, use
\[
  a_i=\left(\frac{N_2}{N_1}+\frac{v_i}{\lambda_i}\right)^{-1},
  \qquad
  0\leq L-\frac{N_2}{N_1}
  =\frac{L(L^2-1)}{6N_1}\leq\frac{L^2-1}{3T},
  \qquad \frac{N_2}{N_1}\geq\frac L3.
\]
These imply $a_i=1/L+O(L^{-2}+T^{-1})$, and transforming back by
$\Sigma_x^{-1/2}$ proves \eqref{eq:vector_uniform_kernel_correction}.
When $L=\Theta(\sqrt T)$, $\bar K-I_p/L=O(T^{-1})$ and the
uniform-kernel quadratic has coefficient $P_i=\Theta(TL^2)$ on the squared amplitude.
Optimizing its amplitudes therefore improves the normalized uniform cost
computed in Proposition~\ref{prop:vector_stationary_floor} by only $O(1)$.
This proves \eqref{eq:vector_uniform_body_cost}.
\end{proof}

\paragraph{Full-context attention.}
The distinction between an optimal amplitude and the normalization $1/L$
matters at $L=T$.  Here $N_1=T(T+1)/2$, $N_2=T(T+1)(2T+1)/6$, so
\[
  a_i=\frac{3\lambda_i}{\lambda_i(2T+1)+3v_i}
      =\frac{3}{2T}+O(T^{-2}),
  \qquad \mathcal E_{T,T}=\frac{\Lambda T}{4}+O(1).
\]
Thus, at $Hd\geq p$ and $d_e\geq2p+1$,
\begin{equation}
  \mathrm{Reg}_T\bigl(\mathcal F_{\mathrm{att}}(1,d,H,T)\bigr)
  =\frac{\Lambda T}{4}-p\sigma^2\log T+O(1).
  \label{eq:vector_full_context_rate}
\end{equation}
Moreover, every predictor within $o(1)$ of the class infimum has
$K=3I_p/(2T)+O(T^{-3/2})$ by
\eqref{eq:vector_attention_optimal_coefficients}.  The current-atom kernel weight is
also of order $1/T$ in every direction.  To see its lower bound, retain
the $m$-term in \eqref{eq:vector_attention_square_completion} and use
$m/T^2\to1/2$, $f_i/T\to\lambda_i/4$, and
$h_i-mv_i/2\geq0$.  Comparison with the uniform witness gives
\[
  \|(T\Delta-\widetilde\Sigma_{\theta}\Gamma^{-1})\Gamma^{1/2}\|_F^2
  \leq\operatorname{tr}(\widetilde\Sigma_{\theta}^2\Gamma^{-1})+o(1),
\]
so $\|T\Delta-\widetilde\Sigma_{\theta}\Gamma^{-1}\|_{\mathrm{op}}\leq\Lambda/(\Lambda+\sigma^2)+o(1)$.
Since $\Lambda/(\Lambda+\sigma^2)<1$ and $0\preceq \widetilde\Sigma_{\theta}\Gamma^{-1}\prec I_p/2$, eventually
\begin{equation}
  \frac1{2T}I_p\preceq\frac{K_0+K_0^\top}{2},
  \qquad \|K_0\|_{\mathrm{op}}\leq\frac3T.
  \label{eq:vector_full_context_current_tap}
\end{equation}
Transforming back by $\Sigma_x^{-1/2}$ gives the $\Theta(1/T)$ evidence weights
described in the main text, including the current atom.
The kernel mass is
$tK+\Delta=3tI_p/(2T)+O(T^{-1/2})$, uniformly for $t\leq T$.
It remains bounded away from $I_p$ throughout an initial constant fraction
of the horizon, explaining the linear cold-start bias.

\subsubsection{Fixed-transition SSMs}
\label{app:case1_ssm_kernels}

We next apply the same projection to fixed-transition SSMs.
Lemma~\ref{lem:vector_ssm_projection} shows that the aggregation kernels
obtained after projection are stationary and constructs an exponential
kernel that attains the floor in Theorem~\ref{thm:stationary_floor}
at sufficient width.

\begin{lemma}[Fixed-transition SSM projection]
\label{lem:vector_ssm_projection}
Fix $d_e,d,H\geq1$.  Concatenate the $H$ heads
of the one-layer, identity-convolution, fixed-transition SSM into a state with
block-diagonal transition $\Phi\in\RR^{Hd\times Hd}$.  Every
$g\in\mathcal F_{\mathrm{SSM}}(1,d,H,1)$ has
\begin{equation}
  \widetilde\theta_t(g)
  =\sum_{u<t}K_u\widetilde x_{t-u}y_{t-u}+\widetilde\theta_t^\perp,
  \qquad
  \widetilde\theta_t^\perp\perp\mathcal S_t,
  \quad
  \EE[(\widetilde\theta_t^\perp)^\top\widetilde\theta]=0,
  \label{eq:vector_ssm_projection}
\end{equation}
where
\begin{equation}
  K_u=W_{\mathrm{read}}\Phi^uW_{\mathrm{write}}
  \quad(u\geq1),
  \qquad
  K_0\in\RR^{p\times p}\ \text{is separate},
  \label{eq:vector_ssm_impulse_response}
\end{equation}
for deterministic effective read and evidence-write maps
$W_{\mathrm{read}}\in\RR^{p\times Hd}$ and
$W_{\mathrm{write}}\in\RR^{Hd\times p}$. Hence every class member obeys the
global stationary floor.  If $Hd\geq p$ and $d_e\geq2p+2$, the
class contains the modewise exponential witness
\eqref{eq:vector_modewise_exponential}.
\end{lemma}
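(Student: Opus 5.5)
The argument follows the one-layer attention projection (Lemma~\ref{lem:vector_attention_projection}), with the recurrence replacing the window sum. First I absorb whitening into the embedding and affine maps, and homogenize each token as $\bar z_s=(1,\widetilde x_{s+1},y_s,\widetilde x_s)$. With $d_{\mathrm{conv}}=1$, each head's write is $u_s=(\mathsf K\bar z_s)\odot(\mathsf V\bar z_s)$, which is a vector of quadratic forms in $\bar z_s$. Unrolling the fixed recurrence gives $h_t=\sum_{s\le t}\Phi^{t-s}u_s$. The read is $(\mathsf Q\bar z_t)\odot h_t$. Concatenating the heads into a block-diagonal $\Phi$ and folding $W_O,W_{\mathrm{pred}}$ into a row vector $w$, the prediction becomes
\[
g_t=a^\top\bar z_t+\sum_{s\le t}w^\top\bigl[(\mathsf Q\bar z_t)\odot\Phi^{t-s}u_s\bigr]
=a^\top\bar z_t+\sum_{s\le t}(\bar z_t^\top\mathsf Q^\top\operatorname{diag}(w))\,\Phi^{t-s}u_s .
\]

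Next I multiply by $\widetilde x_{t+1}$ and take the conditional expectation given $\mathcal G_t^-$. For $s<t$, the only factor involving $\widetilde x_{t+1}$ is the read, and it enters linearly. This yields a term $W_{\mathrm{read}}\Phi^{t-s}u_s$ with $W_{\mathrm{read}}:=(\operatorname{diag}(w)\mathsf Q J_+)^\top\in\RR^{p\times Hd}$. The source token $s=t-1$ contains $\widetilde x_t$ in its $\widetilde x_{s+1}$ block, but that is not $\widetilde x_{t+1}$, so the same formula holds there. Each coordinate of $u_s$ is a degree-two polynomial in $\bar z_s$. Its coefficient on the evidence atom $\widetilde x_sy_s$ is obtained exactly as $C_b$ in the attention lemma: take the key coordinate times the $y$ value plus the symmetric placement, which gives $W_{\mathrm{write}}\in\RR^{Hd\times p}$. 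Every other monomial has degree at most two in the source variables and involves no current-time data except through the deterministic read. I place these monomials in $\widetilde\theta_t^\perp$ using Lemma~\ref{lem:vector_degree_two_projection}, which gives orthogonality both to $\mathcal S_t$ and to $\widetilde\theta$. Since the coefficient on lag $u=t-s\ge1$ is $W_{\mathrm{read}}\Phi^uW_{\mathrm{write}}$ and does not depend on $t$, this proves \eqref{eq:vector_ssm_impulse_response}.

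\textbf{Main obstacle.} The current token $s=t$ is the delicate step, because there the read and the write both contain $\bar z_t$ and hence possibly $\widetilde x_{t+1}$. I handle it as in the attention proof. The term is at most quartic in $\bar z_t$, and after conditioning on the fresh Gaussian only odd numbers of $\widetilde x_{t+1}$ factors survive. The result is a polynomial of degree at most two in $(\widetilde x_t,y_t)$ plus constants. The projection lemma then extracts a separate weight $K_0$ on the atom $\widetilde x_ty_t$ and places everything else in $\widetilde\theta_t^\perp$. Any $g_t^\perp$ only adds nonnegative loss by \eqref{eq:vector_fresh_query_projection}. Discarding it together with $\widetilde\theta_t^\perp$, the Pythagorean identity \eqref{eq:vector_score_pythagoras} shows that the regret is at least that of a deterministic stationary kernel $(K_0,K_1,\dots)$. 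Proposition~\ref{prop:vector_stationary_floor} therefore gives the floor.

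For the witness, take $Hd\ge p$ and use the embedding $e_s^{(0)}=(\widetilde x_{s+1},\widetilde x_s,y_s,1,0,\dots)$; the constant coordinate explains $d_e\ge2p+2$. Distribute the $p$ eigendirections across heads, as in the attention witness. For direction $i$, take the state coordinate transition $\alpha_i$ and the write $(u_i^\top\widetilde x_s)\cdot y_s(1-\alpha_i)$, using the key $u_i^\top\widetilde x_s$ and the value $(1-\alpha_i)y_s$. The read multiplies the state by $u_i^\top\widetilde x_{t+1}$. The output is then $\widetilde x_{t+1}^\top\sum_{u\ge0}U\operatorname{diag}((1-\alpha_i)\alpha_i^u)U^\top\widetilde x_{t-u}y_{t-u}$ exactly, including $u=0$, with no residual. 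If the residual stream leaks $y_t$ into the readout, I cancel it with the prediction map, which reads only the output coordinates. This realizes \eqref{eq:vector_modewise_exponential}.
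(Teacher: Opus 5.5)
Your proposal is correct and follows essentially the same route as the paper. You unroll the fixed recurrence and pass the fresh-regressor projection through the linear read to obtain $W_{\mathrm{read}}\Phi^{u}$; project each historical bilinear write (which lies in $\mathcal Q_{2,t}$ because $\widetilde x_{s+1}$ is already observed at read time) onto its own evidence atom via the degree-two projection lemma; treat the current-token term separately to get $K_0$ (it is cubic in $\bar z_t$, quartic only after multiplying by $\widetilde x_{t+1}$); and realize the exponential witness with one decaying state coordinate per eigendirection. The only discrepancy is bookkeeping in the witness. Since all maps are affine, the constant embedding coordinate is unnecessary. The paper instead uses the $(2p+2)$-th residual coordinate as the clean output slot that $W_{\mathrm{pred}}$ reads, which is exactly what your ``output coordinates'' tacitly require when $d_e=2p+2$.
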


\begin{proof}
Whitening the two regressor blocks is absorbed into the initial embedding and
the SSM's affine write and read maps.  After concatenating heads,
\[
  h_t=\Phi h_{t-1}+u_t
  =\sum_{s\leq t}\Phi^{t-s}u_s.
\]
For $s<t$, computing the implicit belief from the ordinary token--state read gives
a fixed map $W_{\mathrm{read}}$ from the state. More
explicitly, after the residual update and final prediction the output can be
written
\[
  g_t=r_0^\top e_t^{(0)}
      +\widetilde x_{t+1}^\top W_{\mathrm{read}}h_t+\bar q_t^\top h_t,
\]
where $r_0$ is deterministic and $\bar q_t$ is
$\mathcal G_t^-$-measurable. The direct residual term contributes to the
implicit belief only a component orthogonal to the evidence atoms. On the old-state
part $\Phi h_{t-1}$, the last term,
$\bar q_t^\top\Phi h_{t-1}$, has zero fresh-regressor projection.
For a historical write at time $s<t$, the token field $\widetilde x_{s+1}$ belongs to
$\mathcal G_t^-$ at the eventual read time.  We may therefore apply
Lemma~\ref{lem:vector_degree_two_projection} in $\mathcal Q_{2,t}$;
the cross-atom parity calculation in its proof shows that terms involving
$\widetilde x_{s+1}$ remain orthogonal to the evidence atoms. Applied componentwise, the lemma thus
decomposes the write's bilinear part as
$W_{\mathrm{write}}\widetilde x_sy_s$ plus channels orthogonal to the evidence atoms. Deterministic
linear maps preserve this orthogonality, so the lag-$u$ kernel weight is
$W_{\mathrm{read}}\Phi^uW_{\mathrm{write}}$ for $u\geq1$.

At $s=t$, the fresh regressor also enters the current bilinear write, and
computing the implicit belief can produce a separate $K_0$. The current token
contributes at most degree three to $g_t$; multiplying by the fresh vector
and taking conditional expectation leaves a polynomial of degree at most two in
$(\widetilde x_t,y_t)$, so Lemma~\ref{lem:vector_degree_two_projection} applies.
All remaining projected channels form $\widetilde\theta_t^\perp$, and the component outside
the fresh-regressor subspace forms $g_t^\perp$.  Equations
\eqref{eq:vector_score_pythagoras} and
\eqref{eq:vector_fresh_query_mse} make both nonnegative additions to the
stationary-kernel cost.

For the witness, use the first $2p+1$ residual coordinates for the embedding
\[
  e_s^{(0)}=(\widetilde x_{s+1},\widetilde x_s,y_s,0_{d_e-(2p+1)})
  \in\RR^{d_e}.
\]
Distribute the $p$ eigendirections among the $H$ heads, with at most $d$
directions per head.  This is possible when $Hd\geq p$.
For $a_{i,t}:=u_i^\top \widetilde x_t$, assign one state coordinate satisfying
\[
  h_{i,t}=\alpha_i h_{i,t-1}+a_{i,t}y_t.
\]
The bilinear write is legal in the identity-convolution class.  Read that
coordinate with $(1-\alpha_i)a_{i,t+1}$, sum the head-local outputs into one
clean residual coordinate, and let the final prediction read that coordinate:
\[
  g_t
  =
  \sum_{i=1}^p
  (1-\alpha_i)a_{i,t+1}
  \sum_{u<t}\alpha_i^ua_{i,t-u}y_{t-u}.
\]
Each head transition may contain distinct diagonal entries, and the modes
do not communicate.  The resulting kernel is exactly
\eqref{eq:vector_modewise_exponential}, with
$\widetilde\theta_t^\perp=g_t^\perp=0$.
\end{proof}

\subsubsection{Proof of Proposition~\ref{prop:stationary_pair}}
\label{app:case1_stationary_pair}

We first use the two architecture projections to establish the width
floors in Lemma~\ref{lem:vector_width_thresholds}.
We then combine these floors with the kernel bounds and constructions to prove
Proposition~\ref{prop:stationary_pair}.

\begin{lemma}[Width floors]
\label{lem:vector_width_thresholds}
Define
\begin{equation}
  d_i:=\frac{\lambda_iv_i}{\lambda_i+v_i},
  \label{eq:vector_width_deficiency_matrix}
\end{equation}
and denote their increasing rearrangement by
$d_{(1)}\leq\cdots\leq d_{(p)}$.  For every
$g\in\mathcal F_{\mathrm{att}}(1,d,H,L)$, let $K$ be the common
past-lag kernel weight in Lemma~\ref{lem:vector_attention_projection}.  Then
\begin{equation}
  \mathrm{Reg}_T(g)
  \geq
  T\sum_{j=1}^{p-\operatorname{rank}(K)}d_{(j)}
  -\mathcal E_T^{\mathrm{Bayes}}
  \geq
  T\sum_{j=1}^{(p-Hd)_+}d_{(j)}
  -\mathcal E_T^{\mathrm{Bayes}}.
  \label{eq:vector_attention_dimension_floor}
\end{equation}
Similarly, every $g\in\mathcal F_{\mathrm{SSM}}(1,d,H,1)$ satisfies
\begin{equation}
  \mathrm{Reg}_T(g)
  \geq
  T\sum_{j=1}^{(p-Hd)_+}d_{(j)}
  -\mathcal E_T^{\mathrm{Bayes}}.
  \label{eq:vector_ssm_dimension_floor}
\end{equation}
Consequently, at fixed $p,\widetilde\Sigma_{\theta},\sigma^2$, both classes
incur $\Omega(T)$ regret whenever $Hd<p$. This holds uniformly over $d_e$
and, for attention, $L$, including horizon-dependent choices.
\end{lemma}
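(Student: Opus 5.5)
We prove Lemma~\ref{lem:vector_width_thresholds} by lower-bounding, time by time, the mean-squared belief error of any rank-constrained kernel. The key fact is that a low-rank kernel cannot act on some directions at all, and in those directions even the best time-adapted estimator only reaches a per-step floor of $d_i$.

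\textbf{Step 1 (reduction to kernel form).} For attention, Lemma~\ref{lem:vector_attention_projection} together with \eqref{eq:vector_fresh_query_mse} and \eqref{eq:vector_score_pythagoras} gives
\[
  \mathrm{Reg}_T(g)+\mathcal E_T^{\mathrm{Bayes}}\geq\sum_{t\leq T}\EE\|d_t-\widetilde\theta\|_2^2,
  \qquad
  d_t:=K\textstyle\sum_{s\in J_t,s<t}\widetilde x_sy_s+K_0\widetilde x_ty_t .
\]
For the SSM, Lemma~\ref{lem:vector_ssm_projection} gives the same bound with $d_t=\sum_{u<t}K_u\widetilde x_{t-u}y_{t-u}$, where $K_u=W_{\mathrm{read}}\Phi^uW_{\mathrm{write}}$ for $u\geq1$. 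In both cases, every past-lag weight has range inside a fixed subspace $\mathcal R$ of dimension at most $r:=\operatorname{rank}(K)\leq Hd$ (attention) or $\operatorname{rank}(W_{\mathrm{read}})\leq Hd$ (SSM). Only the current-atom weight $K_0$ is unconstrained.

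\textbf{Step 2 (per-time floor).} Fix $t$ and let $P$ be the orthogonal projector onto $\mathcal R^\perp$, whose dimension is at least $p-r$. Then $Pd_t=PK_0\widetilde x_ty_t$, so
\[
  \EE\|d_t-\widetilde\theta\|^2\geq\EE\|PK_0\widetilde x_ty_t-P\widetilde\theta\|^2 .
\]
By the single-observation computation after Lemma~\ref{lem:vector_stationary_risk}, with $M=K_0$ and only one atom, this equals
\[
  \operatorname{tr}\bigl[(PK_0-P)\widetilde\Sigma_\theta(PK_0-P)^\top\bigr]+\operatorname{tr}\bigl[PK_0\Gamma K_0^\top P\bigr].
\]
Minimizing over $B:=PK_0$ yields the matrix minimum
\[
  \operatorname{tr}\bigl(P\,[\widetilde\Sigma_\theta-\widetilde\Sigma_\theta(\widetilde\Sigma_\theta+\Gamma)^{-1}\widetilde\Sigma_\theta]\,P\bigr),
\]
and the inner matrix is $U\operatorname{diag}(d_i)U^\top$, because $\widetilde\Sigma_\theta$ and $\Gamma$ are codiagonal in $U$. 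By Ky Fan's minimum principle, the trace over any subspace of dimension at least $p-r$ is at least $\sum_{j\leq p-r}d_{(j)}$. Summing over $t\leq T$ gives \eqref{eq:vector_attention_dimension_floor}, and the second inequality follows from $r\leq Hd$.

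\textbf{Step 3 (SSM and uniformity).} The SSM argument is identical once we take $\mathcal R=\operatorname{range}(W_{\mathrm{read}})$, giving \eqref{eq:vector_ssm_dimension_floor}. The bound holds uniformly in $L$ and $d_e$ because neither enters the rank constraint. Finally, \eqref{eq:vector_bayes_baseline_fixed_p} shows $\mathcal E_T^{\mathrm{Bayes}}=O(\log T)$, and when $Hd<p$ we have $d_{(1)}>0$, so the regret is $\Omega(T)$.

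\textbf{Main obstacle.} The delicate point is Step~1's claim that all non-current weights share one low-dimensional range. For attention this is immediate from the common weight $K$. For the SSM we must check that the separate $K_0$ is the only escape: the $u\geq1$ weights all factor through $W_{\mathrm{read}}$, whose rank is bounded by the state dimension $Hd$ even though the residual width $d_e$ is unbounded. If $K_0$ could somehow inherit additional range beyond this, the lower bound would be unaffected anyway, since Step~2 already allows $K_0$ to be an arbitrary matrix.
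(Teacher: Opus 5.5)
Your proposal is correct and follows essentially the same route as the paper. You reduce to the kernel-form belief via the projection lemmas, project onto the orthogonal complement of $\operatorname{col}(K)$ (resp.\ $\operatorname{col}(W_{\mathrm{read}})$) so that only the relaxed current-atom weight $PK_0$ survives, minimize that quadratic to get $\operatorname{tr}(P\,U\operatorname{diag}(d_i)U^\top P)$, and finish with Ky Fan; your only issue is cosmetic, in that reusing $d_t$ for the kernel-form belief collides with the width $d$ and the floor constants $d_i$.
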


\begin{proof}
For attention, let $P$ be the orthogonal projector onto
$\operatorname{col}(K)^\perp$ and put
\[
  a_t
  :=K\!\sum_{s=\max(1,t-L+1)}^{t-1}\widetilde x_sy_s+K_0\widetilde x_ty_t.
\]
Since $a_t\in\mathcal S_t$, the projection lemma gives
\[
  \EE\|\widetilde\theta_t(g)-\widetilde\theta\|_2^2
  =\EE\|a_t-\widetilde\theta\|_2^2+\EE\|\widetilde\theta_t^\perp\|_2^2
  \geq\EE\|P(a_t-\widetilde\theta)\|_2^2.
\]
Equation~\eqref{eq:vector_fresh_query_mse} likewise discards $g_t^\perp$ as
a nonnegative contribution.  Since $PK=0$, keeping only the projected
current-atom error gives, at every time,
\[
  \EE\|\widetilde\theta_t(g)-\widetilde\theta\|_2^2
  \geq
  \operatorname{tr}\!\left[
    P(K_0-I_p)\widetilde\Sigma_{\theta}(K_0-I_p)^\top P
  \right]
  +
  \operatorname{tr}\!\left[
    PK_0\Gamma K_0^\top P
  \right].
\]

For the SSM, instead let $P$ project onto
$\operatorname{col}(W_{\mathrm{read}})^\perp$.  Then
$\operatorname{rank}(P)\geq(p-Hd)_+$ and $PK_u=0$ for every $u\geq1$.
Lemma~\ref{lem:vector_stationary_risk} and the SSM projection give the same
displayed lower bound.

This SSM argument also allows independent write/read feature widths.
Linear combinations of bilinear write features remain quadratic, while
reads pairing affine token features with linear state projections reduce
to the same token--state read. Thus the projection proof still gives
\eqref{eq:vector_ssm_impulse_response} through an $Hd$-dimensional state.

Set $X:=PK_0$, so $PX=X$.  Minimizing the displayed quadratic over this
relaxed variable gives
\[
  X^\star=P\widetilde\Sigma_{\theta}(\widetilde\Sigma_{\theta}+\Gamma)^{-1}
\]
and minimum
\[
  \operatorname{tr}\!\left[
    P\{\widetilde\Sigma_{\theta}-\widetilde\Sigma_{\theta}(\widetilde\Sigma_{\theta}+\Gamma)^{-1}\widetilde\Sigma_{\theta}\}
  \right]
  =\operatorname{tr}(PD_{\mathrm{floor}}),
\]
where
\[
  D_{\mathrm{floor}}:=\widetilde\Sigma_{\theta}\Gamma
  (\widetilde\Sigma_{\theta}+\Gamma)^{-1}
  =U\operatorname{diag}(d_1,\ldots,d_p)U^\top,
\]
with the $d_i$ defined in the lemma statement.
This follows by expanding the quadratic and completing the square
in the positive-definite metric $\widetilde\Sigma_{\theta}+\Gamma$.

Let $u_i$ be the $i$th column of $U$, and set
$q_i:=u_i^\top Pu_i$.  Then $0\leq q_i\leq1$ and
$\sum_iq_i=\operatorname{rank}(P)$.  Moreover,
\[
  \operatorname{tr}(PD_{\mathrm{floor}})=\sum_{i=1}^pd_iq_i.
\]
For attention,
$\operatorname{rank}(P)=p-\operatorname{rank}(K)$, so Ky Fan's minimum
principle gives the first inequality in
\eqref{eq:vector_attention_dimension_floor};
$\operatorname{rank}(K)\leq Hd$ gives the second.  For the SSM,
$\operatorname{rank}(P)\geq(p-Hd)_+$ gives
\eqref{eq:vector_ssm_dimension_floor}.  Summing over time and subtracting the
Bayes baseline proves both finite-horizon bounds.  At fixed $p,\widetilde\Sigma_{\theta},\sigma^2$,
all $d_i$ are positive and $\mathcal E_T^{\mathrm{Bayes}}=o(T)$, so $Hd<p$ gives
the claimed $\Omega(T)$ lower bounds.
\end{proof}

We now complete the proof of Proposition~\ref{prop:stationary_pair} by
combining the kernel bounds with the architecture constructions.
Fix $p,\Sigma_x,\Sigma_\theta,\sigma^2$ independently of $T$.
For attention, Lemma~\ref{lem:vector_uniform_body} reduces the class lower
bound to $\mathcal E_{L,T}$ up to $O(L/T)=O(1)$, with $1\leq L\leq T$.
Direct summation gives $TN_2-N_1^2\geq TL(L^2-1)/12$.
Using also $N_1\leq TL$ and $N_2\leq LN_1$ in
\eqref{eq:vector_equal_tap_cost} yields
\[
  \mathcal E_{L,T}\geq
  \sum_{i=1}^p\left[
    \frac{\lambda_i(L^2-1)}{12(L+v_i/\lambda_i)}
    +\frac{v_iT}{L+v_i/\lambda_i}
  \right]
  \gtrsim L+\frac TL.
\]
Since the Bayes subtraction is $O(\log T)$, any context achieving
$O(\sqrt T)$ regret must therefore satisfy $L=\Theta(\sqrt T)$.
On this range, \eqref{eq:vector_uniform_body_cost} expands
$\mathcal E_{L,T}$, so \eqref{eq:vector_attention_all_context} gives the
leading lower bound $\Lambda L/3+VT/L\geq\tfrac{2}{\sqrt3}\bar\zeta\sqrt T$,
minimized at the window \eqref{eq:vector_common_window}. Thus, for every
$d_e,d,H\geq1$,
\begin{equation}
  \inf_{L\geq1}\mathrm{Reg}_T\!\left(
    \mathcal F_{\mathrm{att}}(1,d,H,L)
  \right)
  \geq\frac{2}{\sqrt3}\bar\zeta\sqrt T+o(\sqrt T)
  \label{eq:vector_attention_class_lower}
\end{equation}
uniformly over the class parameters. Whenever $d_e\geq2p+1$ and $Hd\geq p$,
the distributed-head uniform-kernel construction gives equality:
\begin{equation}
  \mathrm{Reg}_T\!\left(
    \mathcal F_{\mathrm{att}}(1,d,H,L^\star(T))
  \right)
  =\frac{2}{\sqrt3}\bar\zeta\sqrt T+o(\sqrt T).
  \label{eq:vector_attention_class_rate}
\end{equation}
Outside this sufficient-width regime, only
\eqref{eq:vector_attention_class_lower} is asserted. The bound on
$\mathcal E_{L,T}$ also gives linear regret at fixed $L$, regardless of width.

For the SSM, Lemma~\ref{lem:vector_ssm_projection} and
Proposition~\ref{prop:vector_stationary_floor} give, for every $d_e,d,H\geq1$,
\begin{equation}
  \mathrm{Reg}_T\!\left(
    \mathcal F_{\mathrm{SSM}}(1,d,H,1)
  \right)
  \geq\zeta\sqrt T+o(\sqrt T).
  \label{eq:vector_ssm_class_rate}
\end{equation}
When $Hd\geq p$ and $d_e\geq2p+2$, the modewise exponential construction
attains this bound.

The aggregate-width claims for both classes follow from
Lemma~\ref{lem:vector_width_thresholds}. When $Hd<p$, all $d_i$ are strictly
positive, and \eqref{eq:vector_bayes_baseline_fixed_p} yields the corresponding $\Omega(T)$
lower bounds. This completes the proof of Proposition~\ref{prop:stationary_pair}.

\paragraph{Effect of task statistics.}
\label{app:case1_statistical_effects}
The leading regret coefficients distinguish how the two architectures
respond to the task's statistical structure. Whenever both architecture
witnesses are legal, their ratio is
\begin{equation}
  \frac{2}{\sqrt3}\frac{\bar\zeta}{\zeta}
  =
  \frac{2}{\sqrt3}
  \frac{\sqrt{(\sum_i\lambda_i)(\sum_i v_i)}}
       {\sum_i\sqrt{\lambda_iv_i}}
  \geq\frac{2}{\sqrt3}.
  \label{eq:vector_architecture_ratio}
\end{equation}
Cauchy--Schwarz gives the inequality, with equality exactly when all
$\lambda_i$ are equal, equivalently $\Sigma_\theta=q\Sigma_x^{-1}$
for some $q>0$.

Fix the total signal
variance $\Lambda=\sum_i\lambda_i$, and the noise variance $\sigma^2$.
Then $\bar\zeta=\sqrt{\Lambda((p+1)\Lambda+p\sigma^2)}$ is unchanged by
redistributing signal variance across eigendirections. In contrast,
$\zeta=\sum_i\sqrt{\lambda_i(\lambda_i+\Lambda+\sigma^2)}$ decreases
when variance is transferred from a smaller eigenvalue to a larger one:
the function $x\mapsto\sqrt{x(x+\Lambda+\sigma^2)}$ is strictly concave
for $x>0$. Thus, spectral concentration increases the SSM's advantage.

\paragraph{Optimal context and transition weights.}
For the full uniform and exponential lag profiles, the effective averaging width
$(\sum_{u\geq0}\kappa_u)^2/\sum_{u\geq0}\kappa_u^2$ is the number of
equally weighted evidence atoms giving the same innovation variance
after normalizing the kernel mass to one. It equals $L$ and
$(1+\alpha)/(1-\alpha)$, respectively, as used in the proof of
Lemma~\ref{lem:vector_stationary_prefix}.
Equations~\eqref{eq:vector_common_window} and
\eqref{eq:vector_modewise_exponential} give the leading-order optimal widths:
\[
  L^\star(T)=\sqrt{\frac{3VT}{\Lambda}}+O(1),
  \qquad
  \frac{1+\alpha_i^\star}{1-\alpha_i^\star}
  =2\sqrt{\frac{v_iT}{\lambda_i}}.
\]
More evidence-innovation variance relative to signal variance calls for
longer averaging. Attention uses one shared window, which is unchanged
by spectral concentration at fixed total signal variance and noise;
the SSM instead adjusts each decay to its eigendirection. In the
$p$-dimensional diagonal realization of
Lemma~\ref{lem:vector_ssm_projection},
$\|\Phi^\star\|_{\mathrm{op}}=\max_i\alpha_i^\star
=1-T^{-1/2}\min_i\sqrt{\lambda_i/v_i}+O(T^{-1})$.
Thus, as the horizon increases, the SSM realizes longer memory through its spectral radius approaching one at rate \(T^{-1/2}\), with a leading constant determined by the signal-to-innovation variance ratios.

\paragraph{Loss landscape at the optimal weights.}
The optimal weights also reveal how the local optimization landscape
changes with the horizon. We compare average-loss curvatures along
selected weight directions because larger curvature makes gradient
updates more sensitive to the step size.
In the diagonal SSM construction of
Lemma~\ref{lem:vector_ssm_projection}, fix an eigendirection $i$
and vary its transition weight $\alpha$ and read gain $\beta$, keeping
the other maps fixed. Its kernel is $\kappa_{u,i}=\beta\alpha^u$, and
Lemma~\ref{lem:vector_stationary_risk} gives its cumulative contribution
to belief-estimation error:
\[
  J^{\mathrm{SSM}}_{T,i}(\alpha,\beta)
  =\sum_{t=1}^T\left\{
    \lambda_i\left[\frac{\beta(1-\alpha^t)}{1-\alpha}-1\right]^2
    +\frac{v_i\beta^2(1-\alpha^{2t})}{1-\alpha^2}
  \right\}.
\]
At the leading-order optimal construction,
$\alpha=\alpha_i^\star$ and $\beta=1-\alpha_i^\star$.
Differentiating with the read gain held fixed gives
\[
  \left.\frac1T\frac{\partial^2J^{\mathrm{SSM}}_{T,i}}{\partial\alpha^2}
  \right|_{(\alpha,\beta)=(\alpha_i^\star,\,1-\alpha_i^\star)}
  =\frac{2\lambda_i}{(1-\alpha_i^\star)^2}+O(\sqrt T)
  =2v_iT+O(\sqrt T).
\]
This is also the transition-coordinate curvature of the average
prediction loss and regret, since the omitted terms are independent of
these weights. Thus, although the optimal transition weights remain bounded, the transition-weight curvature grows linearly with the horizon, with a leading coefficient proportional to the evidence-innovation variance \(v_i\).

For comparison, in the uniform-attention construction of
Lemma~\ref{lem:vector_attention_projection}, vary the query-projection gain $a$
in eigendirection $i$, keeping the other maps fixed.
Recalling $\tau_t=\min(t,L)$ and $V=\sum_j v_j$, the quadratic
in the proof of Lemma~\ref{lem:vector_uniform_body} gives
\[
  J^{\mathrm{att}}_{T,i}(a;L)
  =\sum_{t=1}^T\left[\lambda_i(a\tau_t-1)^2+v_i a^2\tau_t\right].
\]
At $L=L^\star(T)$, its average-loss curvature is
\[
  \left.\frac1T\frac{\partial^2J^{\mathrm{att}}_{T,i}}{\partial a^2}
  \right|_{L=L^\star(T)}
  =\frac2T\sum_{t=1}^T(\lambda_i\tau_t^2+v_i\tau_t)
  =\frac{6\lambda_iV}{\Lambda}T+O(\sqrt T).
\]
Both curvatures grow linearly with the horizon. In the specified
query-gain and transition-weight coordinates, the leading coefficients
sum to $6V$ for attention and $2V$ for the SSM. Thus, local step-size
sensitivity increases with the horizon. Attention's summed leading
curvature is three times the SSM's in these coordinates, but this does
not establish a convergence-rate ordering between the families.

\subsection{Extension to time- and data-adaptive kernels}
\label{app:case1_adaptive}

This section shows how count normalization and additional depth let both
families adapt their kernels to elapsed time and realized information,
escaping the stationary regret floor of
Theorem~\ref{thm:stationary_floor}. We first explain the two
mechanisms in the scalar case (\S\ref{app:case1_adaptive_scalar}), then
extend the predictors and their regret rates to general $p$ and state
the architecture-class optima (\S\ref{app:case1_adaptive_vector}).
Sections~\ref{app:case1_timeadaptive_proof} and~\ref{app:case1_dataadaptive_proof}
prove the matching lower bounds and give the time- and data-adaptive
constructions, respectively.

\subsubsection{Scalar adaptive kernels and regret rates}
\label{app:case1_adaptive_scalar}

To make the two adaptive mechanisms clear, we first consider the scalar
case $p=1$, with $\Sigma_x=1$ and
$\Sigma_\theta=\nu>0$.  Then
$A_t=\sum_{s\leq t}x_sy_s$, $B_t=\sum_{s\leq t}x_s^2$, and
$\hat\theta_t^{\mathrm{Bayes}}=\nu A_t/(\sigma^2+\nu B_t)$.

As in Lemma~\ref{lem:vector_stationary_risk}, kernel mismatch has a
bias--variance decomposition. Here we condition on the observed
regressors to allow data-dependent kernel weights.
For a predictor $g$, write its projected belief as
$\Pi_t\hat\theta_t(g)=\sum_{s\leq t}k_{t,s}(g)x_sy_s$.  As in the
definition of $\mathcal K_t$, these kernel weights depend only on the
regressors $x_{1:t}$.  Define
$\mu_t(g):=\sum_{s\leq t}k_{t,s}(g)x_s^2$ and
$V_t(g):=\sum_{s\leq t}k_{t,s}(g)^2x_s^2$.
Here $\mu_t(g)$ is the regressor-weighted kernel mass, whose deviation
from one determines the bias, and $\sigma^2V_t(g)$ is the conditional
observation-noise variance.

\begin{lemma}[Scalar regret decomposition]\label{lem:regret_decomp}
For any class $\mathcal F$,
$\mathrm{Reg}_T(\mathcal F)=\inf_{g\in\mathcal F}
\{\mathsf K_T(g)+\mathsf R_T(g)\}$, where
\[
  \mathsf R_T(g)
  :=
  \sum_{t\leq T}
  \bigl\|g_t-x_{t+1}\Pi_t\hat\theta_t(g)\bigr\|_t^2
\]
charges departures from kernel-form belief prediction, and
\[
  \begin{aligned}
  \mathsf K_T(g)
  &:={}
  \sum_{t\leq T}
  \EE\!\left[
    \bigl(\Pi_t\hat\theta_t(g)-\hat\theta_t^{\mathrm{Bayes}}\bigr)^2
  \right] \\
  &={}
  \nu\sum_{t\leq T}\EE[(\mu_t(g)-1)^2]
  +\sigma^2\sum_{t\leq T}\EE[V_t(g)]
  -\sigma^2\log T+O(1).
  \end{aligned}
\]
\end{lemma}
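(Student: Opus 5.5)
Two steps. First, the Pythagorean identity \eqref{eq:belief_pythagoras} splits $\mathrm{Reg}_T(g)$ into the departure term $\mathsf R_T(g)$ and the kernel-mismatch term $\mathsf K_T(g)$. Second, a conditional bias--variance computation given the regressors puts $\mathsf K_T(g)$ in closed form.

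\textbf{Splitting the regret.} In the scalar case with $\Sigma_x=1$, the $\Sigma_x$-norm is the plain $L^2$ norm. Routing is trivial, so the addressed regressor is $x_{t+1}$, which is independent of $\mathcal G_t^-$ and has unit variance. Hence for every $a\in L^2(\mathcal G_t^-)$,
\[
\|x_{t+1}a-g_t^{\mathrm{Bayes}}\|_t^2=\EE[(a-\hat\theta_t^{\mathrm{Bayes}})^2].
\]
Write $g_t-g_t^{\mathrm{Bayes}}=(g_t-x_{t+1}\Pi_t\hat\theta_t(g))+x_{t+1}(\Pi_t\hat\theta_t(g)-\hat\theta_t^{\mathrm{Bayes}})$. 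The first piece equals $g_t^\perp+x_{t+1}(\hat\theta_t(g)-\Pi_t\hat\theta_t(g))$. Both of its parts are orthogonal to $x_{t+1}\mathcal K_t$: the part $g_t^\perp$ because it is orthogonal to all of $\mathcal H_t^\rho$, and the part $x_{t+1}(\hat\theta_t(g)-\Pi_t\hat\theta_t(g))$ by the definition of $\Pi_t$. Since $\hat\theta_t^{\mathrm{Bayes}}\in\mathcal K_t$, the second piece lies in $x_{t+1}\mathcal K_t$, so the cross term vanishes. Summing over $t\le T$ and taking the infimum over $g\in\mathcal F$ gives $\mathrm{Reg}_T(\mathcal F)=\inf_g\{\mathsf K_T(g)+\mathsf R_T(g)\}$.

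\textbf{Closed form for $\mathsf K_T(g)$.} Condition on $x_{1:t}$ (and on $x_{t+1}$, which is irrelevant here). The weights $k_{t,s}$ are $x_{1:t}$-measurable, and $\Pi_t\hat\theta_t(g)=\sum_s k_{t,s}x_s(x_s\theta+\epsilon_s)=\mu_t\theta+\sum_s k_{t,s}x_s\epsilon_s$. Given the regressors, the posterior mean is linear, $\hat\theta^{\mathrm{Bayes}}_t=c_t\theta+\sum_s c_t x_s\epsilon_s$ with $c_t=\nu/(\sigma^2+\nu B_t)$. Rather than expanding the difference directly, use the posterior-orthogonality identity already used in \eqref{eq:vector_fresh_query_mse}:
\[
\EE[(\Pi_t\hat\theta_t-\hat\theta_t^{\mathrm{Bayes}})^2]=\EE[(\Pi_t\hat\theta_t-\theta)^2]-e_t^{\mathrm{Bayes}}.
\]
This holds because $\Pi_t\hat\theta_t\in L^2(\sigma(z_{1:t}))$ and $\theta-\hat\theta^{\mathrm{Bayes}}_t$ is orthogonal to that space. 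Now
\[
\Pi_t\hat\theta_t-\theta=(\mu_t-1)\theta+\sum_s k_{t,s}x_s\epsilon_s .
\]
Here $\theta$ and the noise are independent of the regressors and of each other, and the noise is centered. Hence
\[
\EE[(\Pi_t\hat\theta_t-\theta)^2]=\nu\,\EE[(\mu_t-1)^2]+\sigma^2\EE[V_t].
\]
This conditioning step is the one to check carefully. It needs the weights to depend only on regressors, not on responses, which is exactly what the definition of $\mathcal K_t$ guarantees. For elements of the $L^2$-closure, approximate by $L^\infty$ weights and pass to the limit, using that both sides are continuous in $L^2$.

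\textbf{Bayes baseline.} It remains to show $\mathcal E_T^{\mathrm{Bayes}}=\sum_{t\le T}e_t^{\mathrm{Bayes}}=\sigma^2\log T+O(1)$. This is \eqref{eq:vector_bayes_baseline_fixed_p} with $p=1$. Concretely, $e_t^{\mathrm{Bayes}}=\sigma^2\EE[(B_t+\sigma^2/\nu)^{-1}]$ with $B_t\sim\chi^2_t$. For $t\ge 3$, $\EE[1/B_t]=1/(t-2)$, and the correction from $\sigma^2/\nu$ is $O(t^{-2})$. The first few terms are each at most $\nu$. Summing gives $\sigma^2\log T+O(1)$, which completes the stated formula. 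The main obstacle is the closure and measurability bookkeeping in the conditioning step; everything else is a direct moment calculation.
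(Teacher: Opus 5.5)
Your proposal is correct and follows essentially the paper's own route: the Pythagorean identity \eqref{eq:belief_pythagoras} for the split (you just spell out the cross-term orthogonality), a conditional bias--variance computation given $x_{1:t}$ extended to the closure of $\mathcal K_t$ by $L^2$ approximation, posterior orthogonality to subtract $e_t^{\mathrm{Bayes}}$, and the scalar case of \eqref{eq:vector_bayes_baseline_fixed_p} for the $\sigma^2\log T$ term. One harmless slip, in a formula you never use: given the regressors, $\hat\theta_t^{\mathrm{Bayes}}=c_tB_t\theta+c_t\sum_s x_s\epsilon_s$, not $c_t\theta+c_t\sum_s x_s\epsilon_s$.
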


\begin{proof}
The Pythagorean identity~\eqref{eq:belief_pythagoras} separates kernel
mismatch from the two orthogonal residual terms, giving the first identity.
To evaluate the kernel mismatch, condition on $x_{1:t}$ and substitute
$y_s=x_s\theta+\epsilon_s$.  The projected belief is
$\mu_t(g)\theta+\sum_{s\leq t}k_{t,s}(g)x_s\epsilon_s$, so independence
of $\theta$ and the observation noise gives
\[
  \EE\!\left[
    (\Pi_t\hat\theta_t(g)-\theta)^2\mid x_{1:t}
  \right]
  =\nu(\mu_t(g)-1)^2+\sigma^2V_t(g).
\]
The identity extends from bounded kernel weights to the closure
defining $\mathcal K_t$ by $L^2$ approximation.  Since the Bayes belief is
the conditional mean of $\theta$, posterior orthogonality then yields
\[
  \EE[(\Pi_t\hat\theta_t(g)-\hat\theta_t^{\mathrm{Bayes}})^2]
  =\nu\EE[(\mu_t(g)-1)^2]+\sigma^2\EE[V_t(g)]
   -\EE\!\left[\frac{\nu\sigma^2}{\sigma^2+\nu B_t}\right].
\]
Summing and using the scalar specialization of
\eqref{eq:vector_bayes_baseline_fixed_p} proves the claim.
\end{proof}

The final term subtracts the cumulative uncertainty already incurred by the
Bayes predictor. To motivate adaptive kernels, write $U_t:=B_t/t$.
The Bayes kernel satisfies
\[
  k_{t,s}^{\mathrm{Bayes}}
  =\frac{1}{t\bigl(U_t+\sigma^2/(\nu t)\bigr)}
  \sim\frac{1}{tU_t}\qquad(t\to\infty).
\]
The factor $1/t$ adapts to elapsed time, while $1/U_t$ adjusts for realized
information. Since $U_t$ concentrates at $1$, this suggests kernels
\[
  k_{t,s}=\frac{\hat\mu(U_t)}{t},
\]
where $\hat\mu(v)$ approximates $1/v$ near $v=1$.
The next theorem quantifies how the accuracy of this local approximation
determines cumulative kernel mismatch.

\begin{theorem}[Adaptive rate]\label{thm:matching_order_cost}
Suppose $g$ realizes $k_{t,s}=\hat\mu(U_t)/t$ for a fixed measurable function
$\hat\mu$ of polynomial growth on $[0,\infty)$, meaning
$|\hat\mu(v)|\leq C(1+v^m)$ for some $C>0$ and $m\geq0$.  Assume
$1-v\hat\mu(v)=c(v-1)^r+O(|v-1|^{r+\eta})$ as $v\to1$, for some
integer $r\geq1$, $c\neq0$, and $\eta>0$.  Then
\[
  \mathsf K_T(g)=
  \begin{cases}
    2\nu c^2\log T+O(1), & r=1,\\[2pt]
    \Theta(1), & r\geq2.
  \end{cases}
\]
\end{theorem}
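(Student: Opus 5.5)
Since $k_{t,s}=\hat\mu(U_t)/t$ depends only on the regressors, it is already in kernel form, so Lemma~\ref{lem:regret_decomp} gives $\mathsf K_T(g)$ exactly. With $\mu_t=\hat\mu(U_t)U_t$ and $V_t=\hat\mu(U_t)^2U_t/t$,
\[
  \mathsf K_T(g)=\nu\sum_{t\leq T}\EE\bigl[(1-U_t\hat\mu(U_t))^2\bigr]
  +\frac{\sigma^2}{t}\sum_{t\leq T}\EE\bigl[\hat\mu(U_t)^2U_t\bigr]\Big|_{\text{summed}}
  -\sigma^2\log T+O(1).
\]
The plan is to expand both expectations around $U_t=1$, using that $U_t=B_t/t$ is an average of i.i.d.\ $\chi^2_1$ variables. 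Hence $\EE[(U_t-1)^k]=O(t^{-\lceil k/2\rceil})$, $\EE[(U_t-1)^2]=2/t$, and large deviations hold.

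\textbf{Step 1: localization.} Split on the event $E_t=\{|U_t-1|\leq\delta\}$. On $E_t^c$, Chernoff bounds for $\chi^2$ averages give $\Pr(E_t^c)\leq e^{-ct}$. The polynomial growth of $\hat\mu$, together with Cauchy--Schwarz and finite moments of $U_t$, makes the complement contribution to each summand $O(e^{-c't})$, which is summable. At small $t$, the event $\{U_t\approx 0\}$ needs care: $\hat\mu$ has polynomial growth on $[0,\infty)$, so it is bounded near $0$, and the finitely many small $t$ contribute $O(1)$.

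\textbf{Step 2: bias term.} On $E_t$, substitute $1-v\hat\mu(v)=c(v-1)^r+O(|v-1|^{r+\eta})$, so the bias summand is $\nu c^2\EE[(U_t-1)^{2r}]+O(\EE|U_t-1|^{2r+\eta})$.
\begin{itemize}
\item For $r=1$ this equals $2\nu c^2/t+O(t^{-1-\eta/2})$, which sums to $2\nu c^2\log T+O(1)$.
\item For $r\geq2$ it is $O(t^{-r})$, which is summable, so the bias contributes $\Theta(1)$ at most.
\end{itemize}

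\textbf{Step 3: variance term.} Write $\hat\mu(v)^2v=\bigl(v\hat\mu(v)\bigr)^2/v=\bigl(1-c(v-1)^r+\dots\bigr)^2/v$. Expand $1/v=1-(v-1)+(v-1)^2-\dots$ on $E_t$. Then $\EE[\hat\mu(U_t)^2U_t]=1+O(1/t)$, since the linear term $\EE[U_t-1]=0$ and every remaining term is $O(1/t)$. The variance sum is therefore $\sigma^2\sum_t\frac1t(1+O(1/t))=\sigma^2\log T+O(1)$, which cancels the Bayes subtraction $-\sigma^2\log T$ exactly up to $O(1)$. This cancellation is the crux: it is why only the bias order matters.

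\textbf{Step 4: lower bound for $r\geq2$.} Showing $\mathsf K_T(g)=\Theta(1)$ rather than $O(1)$ requires a positive lower bound. Since $\mathsf K_T(g)$ is a sum of nonnegative mismatches, it suffices to find a single $t$ with positive mismatch. At $t=1$, $\Pi_1\hat\theta_1(g)-\hat\theta_1^{\mathrm{Bayes}}$ is nonzero on a positive-probability set, because $\hat\mu(U_1)$ cannot equal the Bayes weight $\nu/(\sigma^2+\nu x_1^2)$ for every $x_1$ given the assumed local behavior near $1$. If that argument fails, I would instead appeal to the exact finite-$t$ inequality: each summand is at least the Bayes-gap term, which is strictly positive.

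\textbf{Main obstacle.} I expect the main obstacle to be controlling the remainders uniformly, in particular showing the variance expansion error is $O(t^{-2})$ rather than merely $o(1/t)$, so that no stray $\log T$ survives. I would handle this with explicit central moments of $\chi^2_t/t$ on $E_t$ and exponential tails off $E_t$.
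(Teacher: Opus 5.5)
Your proposal is essentially the paper's proof. It uses the same decomposition via Lemma~\ref{lem:regret_decomp}, the same localization by exponential tails and polynomial growth, the same bias expansion giving $2c^2/t$ for $r=1$ and $O(t^{-r})$ for $r\ge2$, the same variance-versus-Bayes-baseline cancellation, and the same $t=1$ comparison of $\hat\mu(v)\to1$ with $1/(v+\sigma^2/\nu)$ for the positive lower bound.

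One claim needs correcting. $\EE[U_t\hat\mu(U_t)^2]=1+O(1/t)$ need not hold when $r=1$ and $\eta<1$, because the remainder $O(|v-1|^{1+\eta})$ only contributes $O(t^{-(1+\eta)/2})$. Nothing that strong is needed, though. The paper uses only $v\hat\mu(v)^2=1+O(|v-1|)$ to get $1+O(t^{-1/2})$, and $\sum_t t^{-3/2}<\infty$ already leaves no stray $\log T$. Your ``main obstacle'' is therefore not an obstacle.
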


\begin{proof}
The variance has the same leading term as Bayes, so the remaining bias
determines the rate.  For the displayed kernel,
\[
  \mu_t(g)=U_t\hat\mu(U_t),
  \qquad
  V_t(g)=\frac{U_t\hat\mu(U_t)^2}{t}.
\]
Since $B_t\sim\chi_t^2$,
$\EE[(U_t-1)^2]=2/t$ and
$\EE|U_t-1|^q=O(t^{-q/2})$ for every fixed $q>0$.
For even moments, the latter bound follows by expanding powers of
$\sum_{s\leq t}(x_s^2-1)$: every nonzero term repeats each index at least
twice.  H\"older's inequality gives the remaining moments.
The exponential tails of $U_t$, together with polynomial growth and
Cauchy--Schwarz, make contributions outside any fixed neighborhood of
$1$ exponentially small.  We may therefore average the local matching
expansion to obtain
\[
  \EE[(\mu_t(g)-1)^2]
  =\begin{cases}
    2c^2/t+O(t^{-1-\eta/2}), & r=1,\\[2pt]
    O(t^{-r}), & r\geq2.
  \end{cases}
\]
Likewise, $v\hat\mu(v)^2=1+O(|v-1|)$ near $1$, hence
$\EE[U_t\hat\mu(U_t)^2]=1+O(t^{-1/2})$.  Thus the variance term
cancels the Bayes baseline up to a bounded remainder:
\[
  \sigma^2\sum_{t\leq T}\EE[V_t(g)]-\sigma^2\log T
  =\sigma^2\sum_{t\leq T}\frac{1+O(t^{-1/2})}{t}
   -\sigma^2\log T
  =O(1).
\]
Lemma~\ref{lem:regret_decomp} now gives the logarithmic formula for $r=1$
and the bounded upper bound for $r\geq2$.

For the positive lower bound, it suffices to consider $t=1$.
The first-step Bayes kernel is $1/(v+\sigma^2/\nu)$, whereas the matching
assumption implies $\hat\mu(v)\to1$ as $v\to1$.
These kernels differ on an interval around $1$, which $U_1=x_1^2$ visits
with positive probability.  The first-step kernel mismatch is therefore
strictly positive; all later terms are nonnegative.  This proves
$\Theta(1)$ for $r\geq2$.
\end{proof}

The constant approximation $\hat\mu(v)=1$ and the first-order Taylor
approximation $\hat\mu(v)=2-v$ give the predictors
\[
  g_t^{\mathrm{time}}:=x_{t+1}\frac{A_t}{t},
  \qquad
  g_t^{\mathrm{data}}:=x_{t+1}(2-U_t)\frac{A_t}{t}.
\]
Both apply kernel-form beliefs to $x_{t+1}$, so their
residual costs vanish and these kernel costs are their full regrets.
For $g^{\mathrm{time}}$, $1-v\hat\mu(v)=-(v-1)$ gives
$\mathrm{Reg}_T(g^{\mathrm{time}})=2\nu\log T+O(1)$.
The data-adaptive correction cancels the first-order fluctuation, since
\[
  (2-U_t)U_t=1-(U_t-1)^2.
\]
Thus $g^{\mathrm{data}}$ has matching order two and
$\mathrm{Reg}_T(g^{\mathrm{data}})=\Theta(1)$.

\subsubsection{Adaptive kernels and class optima for general \texorpdfstring{$p$}{p}}
\label{app:case1_adaptive_vector}

The same correction extends to any fixed $p$. In the whitened
coordinates of Appendix~\ref{app:case1_vector_stationary}, define the normalized
evidence vector $a_t$ and information matrix $U_t$ by
\[
  a_t:=\frac{\Sigma_x^{-1/2}A_t}{t}=\frac1t\sum_{s\leq t}\widetilde x_sy_s,
  \qquad
  U_t:=\frac{\widetilde B_t}{t}
      =\frac1t\sum_{s\leq t}\widetilde x_s\widetilde x_s^\top.
\]
Count normalization again uses $a_t$ as the belief. The scalar cancellation
extends directly to
\[
  (2I_p-U_t)U_t=I_p-(U_t-I_p)^2,
\]
so the same first-order cancellation applies to the coefficient on
$\widetilde\theta$. The corresponding predictors are
\[
  g_t^{\mathrm{time}}:=\widetilde x_{t+1}^\top a_t,
  \qquad
  g_t^{\mathrm{data}}:=\widetilde x_{t+1}^\top(2I_p-U_t)a_t.
\]

Lemma~\ref{lem:case1_vector_adaptive_risks} gives the corresponding regret
bounds for general $p$.

\begin{lemma}[Regret of adaptive predictors]
\label{lem:case1_vector_adaptive_risks}
For every $t\geq1$,
\[
  \EE\|a_t-\widetilde\theta\|_2^2=\frac{\operatorname{tr}\Gamma}{t},
  \qquad
  \EE\|(2I_p-U_t)a_t-\widetilde\theta\|_2^2
    =\frac{p\sigma^2}{t}+O(t^{-2}).
\]
The prediction losses are these quantities plus $\sigma^2$.
Both predictors have zero residual cost in
\eqref{eq:belief_pythagoras}, and
\[
  \mathrm{Reg}_T(g^{\mathrm{time}})
    =(p+1)\operatorname{tr}(\widetilde\Sigma_\theta)\log T+O(1),
  \qquad
  \mathrm{Reg}_T(g^{\mathrm{data}})=O(1).
\]
\end{lemma}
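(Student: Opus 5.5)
The plan is to compute both mean-squared errors in whitened coordinates and then sum over time, reusing the innovation decomposition from Lemma~\ref{lem:vector_stationary_risk}.

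For the time-adaptive belief, write $\widetilde x_sy_s=\widetilde\theta+\xi_s$ with innovations $\xi_s$ that are centered, conditionally independent given $\widetilde\theta$, and have covariance $\Gamma$. Then $a_t-\widetilde\theta=\frac1t\sum_{s\le t}\xi_s$. All cross terms vanish, so $\EE\|a_t-\widetilde\theta\|_2^2=\operatorname{tr}\Gamma/t$ exactly.

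For the data-adaptive belief, write $U_t=I_p+E_t$ with $E_t:=\frac1t\sum_{s\le t}(\widetilde x_s\widetilde x_s^\top-I_p)$, and $n_t:=\frac1t\sum_{s\le t}\widetilde x_s\epsilon_s$. Then $a_t=U_t\widetilde\theta+n_t$, and the identity $(2I_p-U_t)U_t=I_p-E_t^2$ gives
\[
(2I_p-U_t)a_t-\widetilde\theta=-E_t^2\widetilde\theta+(I_p-E_t)n_t.
\]
The first term has second moment $O(\EE\|E_t\|^4)=O(t^{-2})$, using the Wishart moment bounds $\EE\|E_t\|^q=O(t^{-q/2})$ already invoked in the proof of Theorem~\ref{thm:matching_order_cost}. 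The cross term vanishes because, conditionally on the designs and $\widetilde\theta$, the noise enters linearly with mean zero. For the noise term,
\[
\EE\|(I_p-E_t)n_t\|^2=\frac{\sigma^2}{t^2}\,\EE\operatorname{tr}\bigl[(I_p-E_t)^2\widetilde B_t\bigr]
=\frac{\sigma^2}{t}\,\EE\operatorname{tr}\bigl[(I_p-E_t)^2(I_p+E_t)\bigr].
\]
Expanding the trace, the linear term has zero mean and the remaining terms are $O(\EE\|E_t\|^2)=O(1/t)$, which gives $p\sigma^2/t+O(t^{-2})$. Adding $\sigma^2$ for the next-step noise gives the stated losses.

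Both predictors have the form $\widetilde x_{t+1}^\top b_t$, where $b_t$ depends only on the history. Hence $g_t^\perp=0$, and each $b_t$ lies in kernel form: $a_t$ with deterministic kernel $I_p/t$, and $(2I_p-U_t)a_t$ with regressor-dependent kernel weights $(2I_p-U_t)/t$. The weights $(2I_p-U_t)/t$ are unbounded, whereas $\mathcal K_t$ requires $L^\infty$ weights, so this needs a short truncation and closure argument; membership in the closure is enough. By \eqref{eq:belief_pythagoras} and \eqref{eq:vector_fresh_query_mse}, the per-step regret is then MSE minus $e_t^{\mathrm{Bayes}}$.

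The Bayes baseline $e_t^{\mathrm{Bayes}}$ equals $p\sigma^2/t+O(t^{-2})$ by the inverse-Wishart estimates used for \eqref{eq:vector_bayes_baseline_fixed_p}; for small $t$ it is simply bounded, and the adaptive MSEs are bounded there too. Summing, $\operatorname{tr}\Gamma-p\sigma^2=(p+1)\operatorname{tr}\widetilde\Sigma_\theta$ gives the logarithmic rate for $g^{\mathrm{time}}$. For $g^{\mathrm{data}}$ the $p\sigma^2/t$ terms cancel exactly against the Bayes baseline, leaving a summable $O(t^{-2})$ series and hence $O(1)$ regret.

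The main obstacle is the bookkeeping in the data-adaptive expansion. The bias term must be shown to be genuinely $O(t^{-2})$, which depends on $E_t^2$ having fourth moments of order $t^{-2}$. The noise term must also be shown to have no $t^{-3/2}$ contribution; this reduces to the linear trace term having zero mean.
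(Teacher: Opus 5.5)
Your proposal is correct and follows essentially the same route as the paper: the identity $(2I_p-U_t)U_t=I_p-E_t^2$, the conditional noise covariance $\sigma^2U_t/t$ leading to $\frac{\sigma^2}{t}\EE\operatorname{tr}(I_p-E_t-E_t^2+E_t^3)$ with a mean-zero linear term, the moment bounds $\EE\|E_t\|_F^k=O(t^{-k/2})$, truncation into the closure of $\mathcal K_t$, and subtraction of the Bayes baseline using $\operatorname{tr}\Gamma-p\sigma^2=(p+1)\operatorname{tr}\widetilde\Sigma_\theta$. The one point to tighten is that the moment bound you cite is proved only for scalar $U_t$ in Theorem~\ref{thm:matching_order_cost}. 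The paper redoes it entrywise for the matrix: each entry of $E_t$ is an average of iid centered variables, so a fourth-moment expansion gives $O(t^{-2})$, and fixed $p$ with H\"older gives $k=2,3,4$, holding from $t=1$.
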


\begin{proof}
Count normalization leaves the empirical information fluctuation intact;
the correction cancels it to first order. For the first error, the innovation
covariance calculation in Lemma~\ref{lem:vector_stationary_risk} gives
\[
  \EE[(\widetilde x_ry_r-\widetilde\theta)(\widetilde x_sy_s-\widetilde\theta)^\top]
    =\mathbf1\{r=s\}\Gamma.
\]
Averaging these innovations gives $\operatorname{tr}(\Gamma)/t$.

For the correction, put $\Delta_t:=U_t-I_p$. The exact identity
$(2I_p-U_t)U_t=I_p-\Delta_t^2$ gives
\[
  (2I_p-U_t)a_t-\widetilde\theta
  =-\Delta_t^2\widetilde\theta
   +(I_p-\Delta_t)\frac1t\sum_{s\leq t}\widetilde x_s\epsilon_s.
\]
Conditioned on $\widetilde x_{1:t}$, the two terms are independent and centered, and
$t^{-1}\sum_{s\leq t}\widetilde x_s\epsilon_s$ has covariance $\sigma^2U_t/t$.
Taking the squared norm and averaging therefore gives
\[
  \EE\|(2I_p-U_t)a_t-\widetilde\theta\|_2^2
  =\EE\operatorname{tr}(\widetilde\Sigma_\theta\Delta_t^4)
   +\frac{\sigma^2}{t}\EE\operatorname{tr}
      (I_p-\Delta_t-\Delta_t^2+\Delta_t^3).
\]
Each entry of $\Delta_t$ is an average of independent centered variables
$\widetilde x_{s,i}\widetilde x_{s,j}-\mathbf1\{i=j\}$. In its fourth-moment expansion, each
nonzero term repeats every sample index at least twice. There are
$O(t^2)$ such terms with bounded moments, so
$\EE|\Delta_{t,ij}|^4=O(t^{-2})$. Since $p$ is fixed, summing entries
and applying H\"older's inequality gives
\[
  \EE\|\Delta_t\|_F^k=O(t^{-k/2}),\qquad k=2,3,4.
\]
Together with $\EE\Delta_t=0$, these bounds make the bias term
$O(t^{-2})$ and the noise term $p\sigma^2/t+O(t^{-2})$.
They hold from $t=1$, including $t<p$, without an empirical inverse.

Both predictions are linear in the fresh regressor. Their
original-coordinate kernels are $\Sigma_x^{-1}/t$ and
$\Sigma_x^{-1/2}(2I_p-U_t)\Sigma_x^{-1/2}/t$, respectively. Their weights depend only
on the regressors; truncating them and using finite Gaussian moments
shows that both beliefs belong to the closure defining $\mathcal K_t$.
Both residual terms in \eqref{eq:belief_pythagoras} therefore vanish.
Finally, \eqref{eq:vector_fresh_query_mse} subtracts the Bayes baseline
\eqref{eq:vector_bayes_baseline_fixed_p}. Since
$\operatorname{tr}\Gamma-p\sigma^2=(p+1)\operatorname{tr}\widetilde\Sigma_\theta$ and the
$O(t^{-2})$ remainders are summable, this proves both regret bounds.
\end{proof}

The following propositions establish the optima of the count-normalized
classes (Appendix~\ref{app:count_normalized_extensions}).
Sections~\ref{app:case1_timeadaptive_proof} and~\ref{app:case1_dataadaptive_proof}
prove the matching class lower bounds and construct both architectures
to realize $g^{\mathrm{time}}$ and $g^{\mathrm{data}}$, respectively.

\begin{proposition}[Time-adaptive pair]\label{prop:timeadaptive_pair}
For $d,H\geq1$ with $Hd\geq p$ and $d_e\geq2p+2$, each of
\[
  \mathcal F_{\mathrm{att}}^{\mathrm{cal}}(1,d,H,T;t),
  \qquad
  \mathcal F_{\mathrm{SSM}}^{\mathrm{cal}}(1,d,H,1;t)
\]
contains $g^{\mathrm{time}}$. For either class $\mathcal F$,
\[
  \mathrm{Reg}_T(\mathcal F)
    =(p+1)\operatorname{tr}(\widetilde\Sigma_\theta)\log T+O(1).
\]
\end{proposition}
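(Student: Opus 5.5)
Proving Proposition~\ref{prop:timeadaptive_pair} means establishing membership of $g^{\mathrm{time}}$ in both count-normalized classes and a matching class lower bound. The upper bound then follows from Lemma~\ref{lem:case1_vector_adaptive_risks}.

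\emph{Constructions.} For attention with $L=T$ and count normalization $c_t=\tau_t=t$, reuse the distributed-head witness of Lemma~\ref{lem:vector_attention_projection} with all $a_i=1$, so $K=K_0=I_p$. The designated lag-zero bias $p_0=-1$ must now be chosen to cancel the residual $y_t$ after division by $t$. The clean option is to drop the residual readout of the label coordinate and instead read through a separate output coordinate. The heads then output
\[
  \frac1t\sum_{s\le t}\widetilde x_{t+1}^\top\widetilde x_s y_s=\widetilde x_{t+1}^\top a_t .
\]
For the SSM, use the witness of Lemma~\ref{lem:vector_ssm_projection} with $\alpha_i=1$ and read gain $1$, with read normalization $c_t=t$; this gives exactly $\widetilde x_{t+1}^\top a_t$. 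The extra residual coordinate in $d_e\ge 2p+2$ provides a clean output slot in both cases. Lemma~\ref{lem:case1_vector_adaptive_risks} then yields regret $(p+1)\operatorname{tr}(\widetilde\Sigma_\theta)\log T+O(1)$.

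\emph{Lower bound.} I would redo the projection arguments of Lemmas~\ref{lem:vector_attention_projection} and~\ref{lem:vector_ssm_projection} with the deterministic factor $1/c_t$. Because $c_t$ is deterministic, Lemma~\ref{lem:vector_degree_two_projection} still applies at each fixed $t$. The projected belief therefore takes the form $\sum_{s\le t}K_{t,s}\widetilde x_s y_s$ with deterministic but now time-dependent weights. For attention these are $K_{t,s}=K/t$ for $s<t$ and $K_0/t$ at $s=t$. For the SSM they are $K_{t,s}=W_{\mathrm{read}}\Phi^{t-s}W_{\mathrm{write}}/t$, with a separate current weight. In both cases the weights are at least deterministic in $(t,s)$, so I would prove a stronger statement: every predictor whose projected kernel is deterministic (possibly time-varying) has regret at least $(p+1)\operatorname{tr}(\widetilde\Sigma_\theta)\log T-O(1)$.

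By Lemma~\ref{lem:vector_stationary_risk}'s computation, applied per $t$ with weights $K_{t,s}$, the belief error at time $t$ is
\[
  \operatorname{tr}\!\Bigl[(M_t-I)\widetilde\Sigma_\theta(M_t-I)^\top\Bigr]
  +\sum_{s\le t}\operatorname{tr}\bigl[K_{t,s}\Gamma K_{t,s}^\top\bigr],
  \qquad M_t=\sum_{s\le t}K_{t,s}.
\]
For fixed $M_t$, convexity forces equal weights $K_{t,s}=M_t/t$. Minimizing per eigendirection then gives
\[
  \frac{\lambda_i v_i/t}{\lambda_i+v_i/t}=\frac{v_i}{t}-O(t^{-2}).
\]
Summing over $t$ and $i$ gives $\operatorname{tr}(\Gamma)\log T-O(1)$. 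Subtracting the Bayes baseline $p\sigma^2\log T+O(1)$ from \eqref{eq:vector_bayes_baseline_fixed_p} leaves exactly $(p+1)\operatorname{tr}(\widetilde\Sigma_\theta)\log T-O(1)$. The nonnegative residual terms from the projection are discarded.

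\emph{Main obstacle.} The delicate step is verifying that the projection lemmas survive count normalization. In particular, for the SSM with normalization only on the read, the current-token cubic terms must still project into deterministic kernel form plus orthogonal remainders. I expect the degree-counting argument to go through unchanged, since $1/c_t$ is a scalar constant at each $t$.
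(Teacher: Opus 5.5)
Your proposal is correct and follows essentially the same route as the paper. It uses the same constructions (a clean output coordinate with positional biases zeroed, and unit transitions with the read divided by $t$), the upper bound from Lemma~\ref{lem:case1_vector_adaptive_risks}, and a lower bound via projection onto deterministic kernel form, whose per-time optimum $\widetilde\Sigma_\theta(t\widetilde\Sigma_\theta+\Gamma)^{-1}\sum_{s\le t}\widetilde x_sy_s$, with error $\sum_i\lambda_iv_i/(t\lambda_i+v_i)$, is exactly your equal-weight computation. The paper settles your ``main obstacle'' with a bare degree count (Lemma~\ref{lem:case1_degree_two_beliefs}: the implicit belief lies in $\mathcal Q_{2,t}$), after which Lemma~\ref{lem:vector_degree_two_projection} applies directly, so the architecture-specific kernel structure ($K/t$, $W_{\mathrm{read}}\Phi^{t-s}W_{\mathrm{write}}/t$) is never needed.
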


One additional layer makes the kernel data-adaptive through the correction
$2I_p-U_t$. Attention implements this correction on the fresh regressor,
whereas the SSM implements it on the maintained belief.

\begin{proposition}[Data-adaptive pair]\label{prop:dataadaptive_pair}
The predictor $g^{\mathrm{data}}$ is realized under the following
sufficient conditions:
\begin{enumerate}[label=(\alph*),leftmargin=*]
  \item Attention
  $\mathcal F_{\mathrm{att}}^{\mathrm{cal}}(2,d,H,T;(t,t))$
  with $d\geq p$, $H\geq1$, and $d_e\geq3p+2$.
  \item The fixed-transition, ordinary-read class
  $\mathcal F_{\mathrm{SSM}}^{\mathrm{cal}}(2,d,H,1;(t,1))$
  with $d,H\geq1$, $Hd\geq p^2+p$, and $d_e\geq p^2+3p+2$.
\end{enumerate}
Under these conditions, either class satisfies
$\mathrm{Reg}_T(\mathcal F)=\Theta(1)$.
\end{proposition}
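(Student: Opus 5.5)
The plan has two parts. For the upper bound, I will show that each class contains an exact realization of $g^{\mathrm{data}}_t=\widetilde x_{t+1}^\top(2I_p-U_t)a_t$. Lemma~\ref{lem:case1_vector_adaptive_risks} then gives $\mathrm{Reg}_T=O(1)$. For the matching $\Omega(1)$, I will argue as in the positive-lower-bound step of Theorem~\ref{thm:matching_order_cost}: regret is a sum of nonnegative terms, so it suffices that the $t=1$ term is bounded below by a positive constant uniformly over the class. Throughout, whitening is absorbed into the embedding and the affine maps, as in Lemmas~\ref{lem:vector_attention_projection} and~\ref{lem:vector_ssm_projection}. The embedding is $e_s^{(0)}=(\widetilde x_{s+1},\widetilde x_s,y_s,1,0,\ldots)$, which uses $2p+2$ coordinates.

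\emph{Attention (a): correct the fresh regressor.} In layer~1 I use one head with count normalization $c_t=t$ and $L=T$. Its query is $\widetilde x_{t+1}$ and its key is $\widetilde x_s$, both of width $d\ge p$; its value is $\widetilde x_s$. The head output is then
\[
\tfrac1t\sum_{s\le t}\widetilde x_s\widetilde x_s^\top\widetilde x_{t+1}=U_t\widetilde x_{t+1}.
\]
I set $W_O$ to write $-U_t\widetilde x_{t+1}$ into $p$ fresh residual coordinates. I also copy $2\widetilde x_{t+1}$ there, which a single residual coordinate map can absorb, possibly via a bias-free value term. These coordinates then hold $w_t:=(2I_p-U_t)\widetilde x_{t+1}$, and the residual width is $3p+2$. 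In layer~2, again with $c_t=t$, the query reads $w_t$, the key is $\widetilde x_s$, and the value is $y_s$. This gives $\tfrac1t\sum_s w_t^\top\widetilde x_s y_s=w_t^\top a_t$, which is exactly $g^{\mathrm{data}}_t$ because $U_t$ is symmetric. The readout is a linear functional of the layer-2 output coordinate. To keep the prediction exact, I must make sure that residual coordinates not intended for the readout get zero readout weight.

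\emph{SSM (b): correct the maintained belief.} Layer~1 has fixed transition $\Phi=I$ and read normalization $c_t=t$. It uses bilinear writes $\widetilde x_{s,i}y_s$ ($p$ coordinates) and $\widetilde x_{s,i}\widetilde x_{s,j}$ ($p^2$ coordinates), for $p^2+p$ state coordinates distributed over heads with $Hd\ge p^2+p$. The reads use the constant query coordinate, so the output writes $a_t$ and all entries of $U_t$ into $p^2+p$ new residual coordinates. The residual width is then $p^2+3p+2$. Layer~2 uses $c_t=1$ and $\Phi=0$, so it acts pointwise in time. Its coordinatewise write forms $u_{(i,j)}=U_{t,ij}a_{t,j}$ and $u_i=a_{t,i}$ as products of two affine projections of the residual stream; this again needs $p^2+p$ coordinates. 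Its read multiplies coordinate $(i,j)$ or $i$ by $\widetilde x_{t+1,i}$. The output projection and prediction then sum these with weights $-1$ and $2$, giving $2\widetilde x_{t+1}^\top a_t-\widetilde x_{t+1}^\top U_ta_t=g^{\mathrm{data}}_t$.

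\emph{Lower bound, the main obstacle.} The delicate step is the uniform positive constant. At $t=1$, every class member gives $g_1$ as a fixed polynomial, of bounded degree, in the single token $z_1$; count normalization at $t=1$ divides only by $1$. The Bayes prediction is instead $\widetilde x_2^\top(\sigma^2\widetilde\Sigma_\theta^{-1}+\widetilde x_1\widetilde x_1^\top)^{-1}\widetilde x_1y_1$, which is a non-polynomial rational function. I will take the $L^2$ distance from $g^{\mathrm{Bayes}}_1$ to the finite-dimensional space of polynomials of that degree bound. This distance is strictly positive, and it lower-bounds $\ell_1(g)-\ell_1(g^{\mathrm{Bayes}})$ uniformly over the class. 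An alternative is to reduce to the scalar argument via the kernel-mismatch term at $t=1$. Either way the regret is $\Theta(1)$. The thing I must check is that the degree bound does not depend on the width parameters, which holds because depth is fixed at two.
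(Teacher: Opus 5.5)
Your proposal is correct and follows essentially the same route as the paper. The paper also realizes $g^{\mathrm{data}}$ exactly: attention computes $U_t\widetilde x_{t+1}$ in layer~1 and uses the query $(2I_p-U_t)\widetilde x_{t+1}$ in layer~2, while the SSM exposes $a_t$ and $U_t$ in layer~1 and applies the correction through a zero-transition layer~2. It then invokes Lemma~\ref{lem:case1_vector_adaptive_risks} for $O(1)$ and the bounded-degree, non-polynomial first-step argument for the uniform positive floor.

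Two small bookkeeping fixes are needed. First, do not route the $2\widetilde x_{t+1}$ term through a layer-1 value path, since count normalization would divide it by $t$. Instead, form it in the layer-2 query map from the raw and new blocks, as the paper does, or preload it via the embedding. Second, replace your constant coordinate with affine biases, so that a clean output coordinate fits within the stated $d_e$.
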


Thus time adaptation through count normalization gives both families
logarithmic regret, while data adaptation to realized information
reduces it to constant regret.

\subsubsection{Proof of Proposition~\ref{prop:timeadaptive_pair}: Time-adaptive pair}
\label{app:case1_timeadaptive_proof}

Count normalization changes the coefficients of a one-layer predictor, but
not its polynomial degree. Lemma~\ref{lem:case1_degree_two_beliefs}
establishes this restriction, and Lemma~\ref{lem:case1_degree_two_floor}
turns it into a logarithmic regret floor. We then construct both
architectures to attain it. We use the observed polynomial
space $\mathcal Q_{2,t}$ and deterministic kernel-form space $\mathcal S_t$ from
\S\ref{app:case1_score_projection}.

\begin{lemma}[Degree-two implicit beliefs]
\label{lem:case1_degree_two_beliefs}
For every $d_e,d,H\geq1$, the whitened belief
$\widetilde\theta_t(g)$ belongs to $\mathcal Q_{2,t}$ whenever
\[
  g\in\mathcal F_{\mathrm{att}}^{\mathrm{cal}}
    (1,d,H,T;t)
  \quad\text{or}\quad
  g\in\mathcal F_{\mathrm{SSM}}^{\mathrm{cal}}
    (1,d,H,1;t).
\]
The conclusion is uniform in the architecture parameters, even when they
depend on $T$.
\end{lemma}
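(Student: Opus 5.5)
The plan is a degree count: $g_t$ is a polynomial of degree at most three in the whitened token entries, and the fresh-regressor projection removes exactly one degree. The deterministic count $c_t=t$ only rescales coefficients and never raises degree. Following the proofs of Lemmas~\ref{lem:vector_attention_projection} and~\ref{lem:vector_ssm_projection}, we first absorb whitening into the embedding and the affine maps. We then homogenize each token as $\bar z_s=(1,\widetilde x_{s+1},y_s,\widetilde x_s)$, so that every affine map becomes a linear form in $\bar z_s$ with deterministic coefficients. These coefficients may depend on $T$ and on the known schedule value $c_t$, but not on the data.

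\emph{Attention.} With $L=T$ and linear weights, head $b$ of $\mathcal F_{\mathrm{att}}^{\mathrm{cal}}(1,d,H,T;t)$ contributes
\[
  \frac1{c_t}\sum_{s\le t}\bigl[p_{t-s}^{(b)}+(\mathsf Q_b\bar z_t)^\top(\mathsf K_b\bar z_s)\bigr](v_b^\top\bar z_s),
\]
where $c_t\in\{1,\tau_t\}$ and $\tau_t=t$. Adding the residual and readout terms, $g_t$ is a polynomial of total degree at most three in $(\bar z_1,\ldots,\bar z_t)$, with deterministic coefficients. Each monomial is at most bilinear across one pair of tokens: at most one factor from $\bar z_t$ and at most two from $\bar z_s$, with everything coming from $\bar z_t$ when $s=t$.

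\emph{SSM.} For $\mathcal F_{\mathrm{SSM}}^{\mathrm{cal}}(1,d,H,1;t)$ with a fixed transition, we have $h_t=\sum_{s\le t}\Phi^{t-s}u_s$. Each $u_s$ is a quadratic form in $\bar z_s$, and the read is $(W_Q\bar z_t)\odot h_t/c_t$. So $g_t$ is again a polynomial of degree at most three with deterministic coefficients.

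It remains to compute $\widetilde\theta_t(g)=\EE[\widetilde x_{t+1}g_t\mid\mathcal G_t^-]$, and this is the one step needing care. The issue is that $\widetilde x_{t+1}$ appears only inside $\bar z_t$, and $\bar z_t$ can enter a monomial with multiplicity up to three in the $s=t$ terms and in the query factor. Write each monomial as $m(\widetilde x_{t+1})\cdot r$, where $r$ is $\mathcal G_t^-$-measurable; $r$ is built from $y_{1:t}$, $\widetilde x_{1:t}$, and the fields $\widetilde x_{s+1}$ with $s<t$, all of which lie in $\mathcal G_t^-$. Then
\[
  \EE[\widetilde x_{t+1}\,m(\widetilde x_{t+1})\,r\mid\mathcal G_t^-]=\EE[\widetilde x_{t+1}m(\widetilde x_{t+1})]\,r .
\]
Here $\EE[\widetilde x_{t+1}m(\widetilde x_{t+1})]$ is a deterministic Gaussian moment, which is zero unless $m$ has odd degree one or three. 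If $m$ has degree $k\in\{1,3\}$, the cofactor $r$ has degree at most $3-k\le2$. If $m$ has degree zero or two, the contribution vanishes.

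Hence $\widetilde\theta_t(g)$ is a deterministic-coefficient polynomial of degree at most two in the observed variables $(\widetilde x_{1:t},y_{1:t})$, which means $\widetilde\theta_t(g)\in\mathcal Q_{2,t}$. Two points keep the bound uniform. First, the degree bound never uses the size of the coefficients, so it holds even when the parameters depend on $T$ or on $c_t$. Second, finite Gaussian moments keep everything in $L^2$, so the conclusion holds uniformly over all architecture parameters.
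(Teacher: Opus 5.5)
Your proposal is correct and takes essentially the same approach as the paper: you bound $g_t$ by degree three in the token entries, noting that count normalization only rescales deterministic coefficients, and then observe that the fresh-regressor projection kills every monomial with an even number of $\widetilde x_{t+1}$ factors and leaves an observed cofactor of degree at most two. Your explicit factorization $m(\widetilde x_{t+1})\,r$ simply spells out the paper's one-line Gaussian-integration step.
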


\begin{proof}
Each attention summand is a bilinear score times an affine value, hence
has degree at most three in the tokens. For a fixed-transition SSM, every
state coordinate is a linear combination of bilinear writes, and the
ordinary read likewise gives degree at most three. Affine biases,
positional biases, residual maps, and deterministic count normalization do not
increase this bound.

In $\widetilde\theta_t(g)=\EE[\widetilde x_{t+1}g_t\mid\mathcal G_t^-]$, a surviving
monomial must contain a fresh-regressor factor before multiplication by
$\widetilde x_{t+1}$. Gaussian integration removes these factors and leaves observed
degree at most two. Overlap between neighboring token blocks does not
increase the degree, and widths and parameter magnitudes affect only
the coefficients.
\end{proof}

\begin{lemma}[Degree-two regret floor]
\label{lem:case1_degree_two_floor}
If $\widetilde\theta_t(g)\in\mathcal Q_{2,t}$ at every step, then
\[
  \mathrm{Reg}_T(g)
  \geq\sum_{t\leq T}\sum_{i=1}^p
      \frac{\lambda_i v_i}{t\lambda_i+v_i}
      -\mathcal E_T^{\mathrm{Bayes}}
  =(p+1)\operatorname{tr}(\widetilde\Sigma_\theta)\log T+O(1).
\]
The bound is uniform in $g$, including horizon-dependent parameters.
\end{lemma}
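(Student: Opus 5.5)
Since $\widetilde\theta_t(g)$ lies in $\mathcal Q_{2,t}$, I would project it onto the deterministic kernel-form space $\mathcal S_t$ using Lemma~\ref{lem:vector_degree_two_projection}. That gives
\[
\EE\|\widetilde\theta_t(g)-\widetilde\theta\|_2^2\;\geq\;\EE\Bigl\|\sum_{s\leq t}K_{t,s}\widetilde x_sy_s-\widetilde\theta\Bigr\|_2^2 ,
\]
where the matrices $K_{t,s}$ are deterministic and may depend on $t$ and $s$ freely; no stationarity is assumed. Equation~\eqref{eq:vector_fresh_query_mse} then lets me discard $g_t^\perp$ and write the regret as the sum over $t$ of this error, minus $\mathcal E_T^{\mathrm{Bayes}}$. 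So it suffices to lower-bound the time-$t$ error for every choice of deterministic weights.

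For fixed $t$, the innovation decomposition in the proof of Lemma~\ref{lem:vector_stationary_risk} applies unchanged with time-varying weights. With $M=\sum_s K_{t,s}$, it gives
\[
\operatorname{tr}\!\left[(M-I_p)\widetilde\Sigma_\theta(M-I_p)^\top\right]+\sum_{s\leq t}\operatorname{tr}\!\left[K_{t,s}\Gamma K_{t,s}^\top\right].
\]
For fixed total mass $M$, convexity makes the variance term smallest at equal weights $K_{t,s}=M/t$, where it equals $\operatorname{tr}(M\Gamma M^\top)/t$.

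Next I rotate by $U$, as in Lemma~\ref{lem:vector_stationary_spectral}. Off-diagonal entries then contribute only nonnegative terms, and each diagonal entry $m$ contributes $\lambda_i(m-1)^2+v_im^2/t$. Minimizing over $m$ gives
\[
\frac{\lambda_iv_i/t}{\lambda_i+v_i/t}=\frac{\lambda_iv_i}{t\lambda_i+v_i}.
\]
Summing over $i$ and $t$ and subtracting $\mathcal E_T^{\mathrm{Bayes}}$ yields the displayed inequality. Since all of this is pointwise over the weights, the bound is uniform over horizon-dependent parameters.

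For the asymptotics, I would expand
\[
\frac{\lambda_iv_i}{t\lambda_i+v_i}=\frac{v_i}{t}+O(t^{-2}),
\]
so the sum is $(\sum_i v_i)\log T+O(1)$. Equation~\eqref{eq:vector_bayes_baseline_fixed_p} gives $\mathcal E_T^{\mathrm{Bayes}}=p\sigma^2\log T+O(1)$. Using $\sum_i v_i-p\sigma^2=(p+1)\operatorname{tr}\widetilde\Sigma_\theta$, the result is $(p+1)\operatorname{tr}(\widetilde\Sigma_\theta)\log T+O(1)$.

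The main obstacle is the first step. I need to check that the remainder $n$ from Lemma~\ref{lem:vector_degree_two_projection} is orthogonal to $\widetilde\theta$ even though the projection is taken in $\RR^p$-valued $L^2$, so that the Pythagorean identity holds exactly. That lemma states this directly, so the step should go through. The rest is convexity and routine summation.
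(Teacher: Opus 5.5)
Your proposal is correct and follows the paper's argument: project onto $\mathcal S_t$ via Lemma~\ref{lem:vector_degree_two_projection}, compute the minimal mean-squared error over deterministic kernel weights, discard $g_t^\perp$ via \eqref{eq:vector_fresh_query_mse} (which makes this step an inequality, not an identity), and sum against $\mathcal E_T^{\mathrm{Bayes}}$. The only difference is in finding the minimizer. The paper reads off $\widetilde\Sigma_\theta(t\widetilde\Sigma_\theta+\Gamma)^{-1}\sum_{s\le t}\widetilde x_sy_s$ from the normal equations, whereas you equalize the weights by convexity and then minimize in each eigendirection, which yields the same weights and the same value.
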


\begin{proof}
Lemma~\ref{lem:vector_degree_two_projection} reduces the best degree-two
approximation of $\widetilde\theta$ to its projection onto $\mathcal S_t$. The evidence
moments from Lemma~\ref{lem:vector_stationary_risk} are
\[
  \EE[\widetilde\theta(\widetilde x_sy_s)^\top]=\widetilde\Sigma_\theta,
  \qquad
  \EE[(\widetilde x_ry_r)(\widetilde x_sy_s)^\top]
    =\widetilde\Sigma_\theta+\mathbf1\{r=s\}\Gamma.
\]
Thus the projection is
\[
  \widetilde\theta_t^{(2)}=\widetilde\Sigma_\theta(t\widetilde\Sigma_\theta+\Gamma)^{-1}\sum_{s\leq t}\widetilde x_sy_s,
\]
since its error is orthogonal to every evidence atom. Using the eigenvalues
$\lambda_i,v_i$ from \eqref{eq:vector_covariance_scales}, its mean-squared error is
\[
  \EE\|\widetilde\theta-\widetilde\theta_t^{(2)}\|_2^2
  =\sum_{i=1}^p\frac{\lambda_i v_i}{t\lambda_i+v_i}
  =\frac{\operatorname{tr}\Gamma}{t}+O(t^{-2}).
\]
Equation~\eqref{eq:vector_score_pythagoras}
charges other polynomial components nonnegatively, as does
\eqref{eq:vector_fresh_query_mse} for the fresh-regressor residual.
Summing and subtracting the Bayes baseline
\eqref{eq:vector_bayes_baseline_fixed_p} proves the claim, since
$\operatorname{tr}\Gamma-p\sigma^2=(p+1)\operatorname{tr}\widetilde\Sigma_\theta$.
\end{proof}

Combining Lemmas~\ref{lem:case1_degree_two_beliefs}
and~\ref{lem:case1_degree_two_floor} gives the lower bounds in
Proposition~\ref{prop:timeadaptive_pair}. To attain them, preserve
$(\widetilde x_{t+1},\widetilde x_t,y_t)$ in $2p+1$ residual coordinates and reserve one clean
prediction coordinate. For attention, partition $[p]$ among at most $H$
sets $I_b$ of size at most $d$. Head $b$ uses query $\widetilde x_{t+1,I_b}$, key
$\widetilde x_{s,I_b}$, and value $y_s$ in one value coordinate, with unused slots
and positional biases zeroed. Output maps sum the count-normalized heads into
the prediction coordinate, giving
\[
  g_t=\frac1t\sum_{s\leq t}\sum_b
          \widetilde x_{t+1,I_b}^\top \widetilde x_{s,I_b}y_s
      =\widetilde x_{t+1}^\top a_t=g_t^{\mathrm{time}}.
\]
For the SSM, distribute $p$ state coordinates across the heads. Coordinate
$i$ has transition $1$, write $\widetilde x_{t,i}y_t$, and read selector
$\widetilde x_{t+1,i}$. Its state is the unnormalized sum
$\sum_{s\leq t}\widetilde x_{s,i}y_s$; normalization divides only its read by $t$.
Summing these reads into the clean prediction coordinate gives the same
output. Both constructions fit whenever $Hd\geq p$ and $d_e\geq2p+2$,
and Lemma~\ref{lem:case1_vector_adaptive_risks} gives their regret with
zero residual cost. The SSM uses $p$ active states out of $Hd$.

\subsubsection{Proof of Proposition~\ref{prop:dataadaptive_pair}: Data-adaptive pair}
\label{app:case1_dataadaptive_proof}

A positive first-step excess loss supplies the lower bound for constant
regret. Lemma~\ref{lem:case1_first_step_floor} establishes this floor;
we then construct both two-layer predictors to attain constant regret.
At the first prediction time, the token is $z_1=(x_2,y_1,x_1)$ and
\[
  g_1^{\mathrm{Bayes}}(z_1)
  =\frac{y_1x_2^\top\Sigma_\theta x_1}
          {\sigma^2+x_1^\top\Sigma_\theta x_1}.
\]
\begin{lemma}[Positive first-step regret]
\label{lem:case1_first_step_floor}
Every predictor $g$ in either class
\[
  \mathcal F_{\mathrm{att}}^{\mathrm{cal}}(2,d,H,T;(t,t)),
  \qquad
  \mathcal F_{\mathrm{SSM}}^{\mathrm{cal}}(2,d,H,d_{\mathrm{conv}};(t,1))
\]
satisfies
\[
  \mathrm{Reg}_T(g)\geq c>0,
\]
where $c$ depends only on the fixed distributional parameters. The bound is
uniform over $T,d_e,d,H,d_{\mathrm{conv}}\geq1$.
\end{lemma}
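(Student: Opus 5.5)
Since every term of the regret is nonnegative, the plan is to bound $\mathrm{Reg}_T(g)\ge \ell_1(g)-\ell_1(g^{\mathrm{Bayes}})=\|g_1-g_1^{\mathrm{Bayes}}\|_1^2$ and show this first-step excess loss is bounded below by a constant independent of all architecture parameters. At $t=1$ the history is the single token $z_1=(x_2,y_1,x_1)$, so the whole network output is a fixed function of $z_1$. The work is to pin down the function class available at $t=1$ and to separate it from the rational Bayes predictor $g_1^{\mathrm{Bayes}}$ displayed above.

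First, determine the first-step function class. In both classes the count normalizations at $t=1$ equal $1$ ($\tau_1=\min(1,L)=1$ and $c_1^{(r)}\in\{1,t\}$). For attention, the window $J_1=\{1\}$ is a single token, so each layer maps $e_1^{(r-1)}$ to $e_1^{(r-1)}+\sum_b W_O(\langle W_Qe,W_Ke\rangle+p_0)W_Ve$. Each layer is therefore a polynomial map of degree at most three, and two layers give $g_1$ as a polynomial in $z_1$ of degree at most $9$. For the SSM, zero padding and zero initial state make the convolution output $C_0\odot e_1$ and give $h_1=u_1$. Each layer is again a degree-at-most-three polynomial map, so $g_1$ is a polynomial of degree at most $9$. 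The conclusion is that, uniformly over $T,d_e,d,H,d_{\mathrm{conv}}$, $g_1$ lies in $\mathcal P_9$, the finite-dimensional space of polynomials of degree at most $9$ in $z_1\in\RR^{2p+1}$.

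Next, show $\inf_{q\in\mathcal P_9}\EE[(q(z_1)-g_1^{\mathrm{Bayes}}(z_1))^2]=c>0$. Because $z_1$ is jointly Gaussian-derived with finite moments of all orders, $\mathcal P_9$ is a closed finite-dimensional subspace of $L^2(\sigma(z_1))$. The infimum is therefore attained, and it equals the squared distance from $g_1^{\mathrm{Bayes}}$ to $\mathcal P_9$. It is positive exactly when $g_1^{\mathrm{Bayes}}\notin\mathcal P_9$ almost surely. The function $y_1x_2^\top\Sigma_\theta x_1/(\sigma^2+x_1^\top\Sigma_\theta x_1)$ is a non-polynomial rational function: fixing $x_2,y_1$ and moving $x_1$ along a ray $sw$, it behaves like $s/(\sigma^2+s^2 w^\top\Sigma_\theta w)$, which tends to $0$ and has poles at imaginary $s$. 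A polynomial agreeing with it almost everywhere, hence everywhere on the support $\RR^{2p+1}$ (the law of $z_1$ has a positive density there), would have to be entire and vanishing at infinity along these rays, hence zero, which is a contradiction. The resulting $c$ depends only on $(p,\Sigma_x,\Sigma_\theta,\sigma^2)$.

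The main obstacle is the first step: checking carefully that nothing in either class escapes polynomiality at $t=1$. Candidates to rule out are sigmoid gates, softmax, and data-dependent normalization. The classes here are $\mathcal F^{\mathrm{cal}}$ with linear attention and fixed transitions, which have none of these; fixed transitions do not even act at $t=1$. I also need to confirm the degree bound is genuinely uniform, so that widths only change coefficients and never the degree; otherwise the approximation infimum could shrink to zero. If softmax-enabled layers were present, I would instead use the same argument with the degree bound replaced by the observation that softmax over a single token equals $1$.
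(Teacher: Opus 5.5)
Your proposal is correct and follows essentially the same route as the paper. You bound the regret by the first-step excess loss, and note that at $t=1$ both normalization schedules equal one and each layer is a degree-at-most-three polynomial map (zero padding and zero initial state reduce the SSM to a bilinear write followed by a read). This gives $g_1\in\mathcal P_9$ uniformly in all resources, and you conclude from the closedness of the finite-dimensional $\mathcal P_9$ in $L^2$ together with the non-polynomiality of $g_1^{\mathrm{Bayes}}$.

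The only difference is cosmetic. The paper rules out polynomiality by clearing the denominator along $x_1=rv$, $x_2=v$, $y_1=1$ and comparing degrees in $(\sigma^2+\alpha r^2)P=\alpha r$. You instead use decay at infinity along a ray; there you should fix $x_2,y_1$ with $y_1x_2^\top\Sigma_\theta w\neq0$ so that the restricted function is not identically zero.
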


\begin{proof}
At $t=1$, both normalization schedules equal $(1,1)$. An attention layer's
bilinear score times its affine value increases polynomial degree by at most
a factor of three. For the SSM, zero padding and zero initial state leave only
a bilinear current-token write followed by a token--state read, giving the
same degree bound regardless of convolution width. Affine maps and
residuals do not increase this bound. Thus every two-layer first prediction
belongs to the space $\mathcal P_9$ of polynomials of total degree at most
nine in $z_1$, independently of widths or parameter values.

The law of $z_1$ has an everywhere-positive density and finite polynomial
moments. If a polynomial $P$ equaled the Bayes predictor almost surely,
multiplying by its denominator would give a polynomial identity. Fix
$v\ne0$, set $x_1=rv$, $x_2=v$, $y_1=1$, and write
$\alpha=v^\top\Sigma_\theta v>0$. The identity would imply
\[
  (\sigma^2+\alpha r^2)P(v,1,rv)=\alpha r,
\]
which is impossible for a univariate polynomial.
Since $\mathcal P_9$ is finite-dimensional and therefore closed in
$L^2(P_{z_1})$, its squared distance from the Bayes predictor is positive:
\[
  c:=\inf_{P\in\mathcal P_9}
  \EE[(P(z_1)-g_1^{\mathrm{Bayes}}(z_1))^2]>0.
\]
Bayes orthogonality and nonnegative later excess losses give
$\mathrm{Reg}_T(g)\geq\EE[(g_1-g_1^{\mathrm{Bayes}})^2]\geq c$,
including for horizon-dependent widths and parameters.
\end{proof}

It remains to realize $g^{\mathrm{data}}$ in each architecture class.
For attention, preserve the $2p+1$ raw whitened token
coordinates and reserve a zero-initialized $p$-vector block and scalar
output coordinate. One active head per layer suffices when $d\geq p$.
The first layer uses query $\widetilde x_{t+1}$, key $\widetilde x_s$, and value $\widetilde x_s$, writing
only into the vector block:
\[
  \frac1t\sum_{s\leq t}(\widetilde x_{t+1}^\top \widetilde x_s)\widetilde x_s=U_t\widetilde x_{t+1}.
\]
The second layer uses the affine query $2\widetilde x_{t+1}-U_t\widetilde x_{t+1}$, selected
from the raw and new blocks, and retains key $\widetilde x_s$ and value $y_s$ from
the untouched raw source coordinates. Its output in the clean scalar
coordinate is
\[
  \frac1t\sum_{s\leq t}(2\widetilde x_{t+1}-U_t\widetilde x_{t+1})^\top \widetilde x_sy_s
  =\widetilde x_{t+1}^\top(2I_p-U_t)a_t=g_t^{\mathrm{data}}.
\]
Here $U_t$ is symmetric. Both layers use
full context, normalization by $t$, and zero positional biases. Padding and
zeroing unused heads extend the construction to every $d\geq p$ and
$d_e\geq3p+2$.

The SSM construction uses two fixed-transition layers with ordinary reads
and identity convolutions. Alongside the $2p+1$ raw token coordinates,
reserve residual blocks for $a_t$, all entries of $U_t$, and the prediction.
Its first layer uses $p$ recurrent coordinates to accumulate evidence and
$p^2$ to accumulate information, with identity transitions and writes
$\widetilde x_{t,j}y_t$ and $\widetilde x_{t,i}\widetilde x_{t,j}$, respectively.
They hold the unnormalized statistics
\[
  (\Sigma_x^{-1/2}A_t)_j,\qquad
  (\Sigma_x^{-1/2}B_t\Sigma_x^{-1/2})_{ij},\qquad i,j\in[p].
\]
Constant read selectors $1$ with normalization by $t$ expose $a_{t,j}$ and
$(U_t)_{ij}$ in their reserved residual blocks; normalization does not alter
the stored recurrences.

The second layer has one state coordinate per ordered pair $(i,j)$, each
with transition zero and normalization factor $1$. Its affine write factors are
$a_{t,j}$ and $2\mathbf1\{i=j\}-(U_t)_{ij}$; the ordinary read selector
is $\widetilde x_{t+1,i}$. Summing these coordinatewise reads into the clean output
gives
\[
  \sum_{i,j}\widetilde x_{t+1,i}a_{t,j}
     \bigl(2\mathbf1\{i=j\}-(U_t)_{ij}\bigr)
  =\widetilde x_{t+1}^\top(2I_p-U_t)a_t=g_t^{\mathrm{data}}.
\]
All constants are legal affine biases, and the upper layer accesses the
statistics only through residual coordinates. Every state has its own
affine write and read rows, so packing at most $d$ coordinates per head
is valid whenever $Hd\geq p^2+p$. The residual width required is
$d_e\geq(2p+1)+p+p^2+1=p^2+3p+2$. The construction uses $p^2+p$ active
states in the first layer and $p^2$ in the second, totaling $2p^2+p$
out of $2Hd$ across both layers.

Both constructions realize $g^{\mathrm{data}}$ at every prediction time,
including startup. Lemma~\ref{lem:case1_vector_adaptive_risks} gives
$O(1)$ regret with zero residual cost, and
Lemma~\ref{lem:case1_first_step_floor} supplies the positive class floor.
This proves Proposition~\ref{prop:dataadaptive_pair}.
 
\section{Case 2: full statements and proofs}\label{app:case2}

For an overview of the notation and proof dependencies, see the
\hyperref[app:notation_guide]{notation guide} and
\hyperref[app:proof_guide]{proof guide} at the start of the appendix.

We first prove the two-factor floor
(\S\ref{app:vector_routing_geometry}). Section~\ref{app:case2a_vector}
develops Case~2.a: assembly floors, constructions, class lower bounds, and
the memory comparison. Section~\ref{app:case2b_capacity} covers Case~2.b:
addressing-capacity bounds and both constructions.

\subsection{Proof of Theorem~\ref{thm:routing_floor}}
\label{app:vector_routing_geometry}

We use the whitened coordinates of \S\ref{app:case1_prediction_geometry}.
We extend the same belief projection to routing and use it to prove the
two-factor floor in Theorem~\ref{thm:routing_floor}. Recall the whitened coordinates
\[
  \widetilde x_t:=\Sigma_x^{-1/2}x_t,
  \qquad \widetilde\theta:=\Sigma_x^{1/2}\theta,
  \qquad \widetilde\Sigma_\theta:=\Sigma_x^{1/2}\Sigma_\theta\Sigma_x^{1/2},
\]
so that $\widetilde x_t\sim\mathcal N(0,I_p)$ and
$y_s=\widetilde x_{\rho(s)}^\top\widetilde\theta+\epsilon_s$ at a matched step.  Throughout this
appendix, the route is either deterministic, as in positional routing, or is
measurable with respect to an auxiliary key process independent of
$(\widetilde\theta,(\widetilde x_t)_t,(\epsilon_t)_t)$, as in content routing.  Together with partial
injectivity, this makes the routed regressors at distinct matched steps
distinct iid standard Gaussian vectors.

Recall the Case~1 notation
\[
  \widetilde\Sigma_\theta=U\operatorname{diag}(\lambda_1,\ldots,\lambda_p)U^\top,
  \qquad v_i:=\lambda_i+\operatorname{tr}\widetilde\Sigma_\theta+\sigma^2,
\]
and
\[
  \Lambda:=\operatorname{tr}\widetilde\Sigma_\theta,
  \qquad V:=\sum_{i=1}^p v_i,
  \qquad
  d_i:=\frac{\lambda_i v_i}{\lambda_i+v_i}.
\]
For $n$ distinct matched observations, let
\begin{equation}
  P_n:=\left(
    \widetilde\Sigma_\theta^{-1}+\sigma^{-2}\sum_{j=1}^n \widetilde x_j\widetilde x_j^\top
  \right)^{-1},
  \qquad
  \mathcal E_N^{\mathrm{Bayes}}
  =\sum_{n=1}^N\EE\operatorname{tr}P_n .
  \label{eq:routed_vector_posterior_baseline}
\end{equation}
At fixed $p$ and fixed $\widetilde\Sigma_\theta\succ0$,
\begin{equation}
  \mathcal E_N^{\mathrm{Bayes}}
  =p\sigma^2\log N+O(1).
  \label{eq:routed_vector_bayes_baseline}
\end{equation}
Here $\EE\operatorname{tr}P_n=e_n^{\mathrm{Bayes}}$ from
\eqref{eq:vector_bayes_baseline}, so~\eqref{eq:routed_vector_bayes_baseline}
is the Case~1 Bayes error estimate after $N$ matched observations.

For each time $t$, write
\[
  \mathcal M_t:=\{s\leq t:s\text{ is matched}\},
  \qquad N_t^\rho:=|\mathcal M_t|.
\]

Fix a prediction time $t$ for which $t+1$ is matched. Let $\mathcal G_t^-$ contain the route, all
observed labels, and every regressor in the history except all token
occurrences of $\widetilde x_{\rho(t+1)}$. Although this regressor can appear in
earlier tokens, exogenous and injective routing ensures that no observed
label uses it. Thus
\[
  \widetilde x_{\rho(t+1)}\sim\mathcal N(0,I_p)
  \quad\text{and}\quad
  \widetilde x_{\rho(t+1)}\ \text{is independent of }\mathcal G_t^-.
\]
For an arbitrary square-integrable prediction $g_t$, the Case~1 projection
\eqref{eq:vector_implicit_belief} therefore becomes
\begin{equation}
  \widetilde\theta_t(g):=\EE[\widetilde x_{\rho(t+1)}g_t\mid\mathcal G_t^-],
  \qquad
  g_t^\perp:=g_t-\widetilde x_{\rho(t+1)}^\top\widetilde\theta_t(g).
  \label{eq:routed_vector_implicit_belief}
\end{equation}

The Case~1 projection argument in \S\ref{app:case1_prediction_geometry}
shows that the map $a\mapsto\widetilde x_{\rho(t+1)}^\top a$ is an isometry from
$L^2(\mathcal G_t^-;\RR^p)$ into scalar $L^2$.  Equation
\eqref{eq:routed_vector_implicit_belief} is the orthogonal projection of
$g_t$ onto its image. The Bayes belief
$\widetilde\theta_t^\mathrm{Bayes}:=\EE[\widetilde\theta\mid z_{1:t}]$
is $\mathcal G_t^-$-measurable because no observed label uses the held-out
regressor. Thus
\begin{align}
  g_t^\mathrm{Bayes}&=\widetilde x_{\rho(t+1)}^\top\widetilde\theta_t^\mathrm{Bayes},
  \label{eq:routed_vector_bayes_prediction}\\
  \|g_t-g_t^\mathrm{Bayes}\|_t^2
  &=\|\widetilde\theta_t(g)-\widetilde\theta_t^\mathrm{Bayes}\|_t^2
    +\|g_t^\perp\|_t^2,
  \label{eq:routed_vector_prediction_pythagoras}
\end{align}
where, for vector-valued beliefs,
$\langle a,b\rangle_t:=\EE[a^\top b]$ and
$\|a\|_t^2:=\EE\|a\|_2^2$.

For vectors in either Hilbert space, write
\[
  \operatorname{corr}^2(A,B)
  :=\frac{\langle A,B\rangle^2}{\|A\|^2\|B\|^2}.
\]
Throughout the Case~2 proofs, we use the convention $0/0=0$.

\begin{proof}[Proof of Theorem~\ref{thm:routing_floor}]
At a matched target, distance to the particular prediction $g_t$ dominates
distance to the line it spans:
\[
  \|g_t-g_t^\mathrm{Bayes}\|_t^2
  \geq
  \inf_{c\in\RR}\|cg_t-g_t^\mathrm{Bayes}\|_t^2
  =\|g_t^\mathrm{Bayes}\|_t^2
   \bigl(1-\operatorname{corr}^2(g_t,g_t^\mathrm{Bayes})\bigr).
\]
By orthogonality in~\eqref{eq:routed_vector_implicit_belief} and the
Bayes factorization~\eqref{eq:routed_vector_bayes_prediction},
\begin{align*}
  \langle g_t,g_t^\mathrm{Bayes}\rangle_t
  &=\langle\widetilde\theta_t(g),\widetilde\theta_t^\mathrm{Bayes}\rangle_t,\qquad
  \|g_t^\mathrm{Bayes}\|_t^2=\|\widetilde\theta_t^\mathrm{Bayes}\|_t^2,\\
  \|g_t\|_t^2
  &=\|\widetilde\theta_t(g)\|_t^2+\|g_t^\perp\|_t^2.
\end{align*}
Consequently,
\[
  \operatorname{corr}^2(g_t,g_t^\mathrm{Bayes})
  =\operatorname{corr}^2(\widetilde\theta_t(g),\widetilde\theta_t^\mathrm{Bayes})
   \frac{\|\widetilde\theta_t(g)\|_t^2}
        {\|\widetilde\theta_t(g)\|_t^2+\|g_t^\perp\|_t^2}.
\]
Unmatched targets contribute a
nonnegative excess loss, so summing over matched targets proves the theorem.
\end{proof}

\subsection{Case~2.a: positional routing}
\label{app:case2a_vector}

We first prove the assembly floors of Corollary~\ref{cor:case2a_floors}
(\S\ref{app:case2a_assembly_floors}), then give the constructions and upper
bounds for Proposition~\ref{prop:case2a_pair}
(\S\ref{app:case2a_constructions}). The attention and SSM lower bounds for
Proposition~\ref{prop:case2a_pair} follow in
\S\ref{app:attention_context_floor} and
\S\ref{app:case2a_ssm_full_class}, respectively. The attention bound also proves
Proposition~\ref{prop:attention_context_requirement}, and we conclude with
the memory comparison.

We now specialize the geometry of Appendix~\ref{app:vector_routing_geometry} to
the deterministic route $\rho(s)=s-K$, with $T_\rho=T-K$. We retain the
whitened coordinates and innovation covariance $\Gamma$ from
\S\ref{app:case1_prediction_geometry}. To reuse the Case~1 calculations,
we index matched observations by $j=s-K$ and write $n=t-K$ for the number
available at prediction time $t$:
\[
  Y_j:=y_{K+j}=\widetilde x_j^\top\widetilde\theta+\epsilon_{K+j},
  \qquad r_j:=\widetilde x_jY_j,
  \qquad 1\leq j\leq T_\rho.
\]
After $n$ matched observations, the Bayes belief is
\begin{equation}
  \widetilde\theta_{K+n}^\mathrm{Bayes}
  =\EE[\widetilde\theta\mid\widetilde x_{1:n},Y_{1:n}].
  \label{eq:case2a_rotated_bayes_belief}
\end{equation}
At prediction time $t=K+n$, the routed regressor for the next target is
$\widetilde x_{n+1}$ and the evidence atoms $r_1,\ldots,r_n$ are available.  The single
matched prediction before the first observation has zero Bayes mean and
contributes zero to each construction below, so the $T_\rho$ rows $n=1,\ldots,T_\rho$
cover the nontrivial matched predictions.

The next lemma extends the Case~1 bias--variance decomposition to routed
evidence, using the kernel mass $M_n$ and innovation covariance $\Gamma$.

\begin{lemma}[Exact estimation error and regret of a routed kernel]
\label{lem:case2a_vector_kernel_risk}
For deterministic matrices $K_{n,j}\in\RR^{p\times p}$, the corresponding
kernel-form predictor $g$ has
\[
  \widetilde\theta_{K+n}(g)=\sum_{j=1}^nK_{n,j}r_j,
  \qquad M_n:=\sum_{j=1}^nK_{n,j}.
\]
Then
\begin{align}
  \EE\|\widetilde\theta_{K+n}(g)-\widetilde\theta\|_2^2
  &={}
  \bigl\|(M_n-I_p)\widetilde\Sigma_\theta^{1/2}\bigr\|_F^2
  +\sum_{j=1}^n\operatorname{tr}
      (K_{n,j}\Gamma K_{n,j}^\top),
  \label{eq:case2a_vector_kernel_risk}\\
  \sum_{n=1}^{T_\rho}
  \|\widetilde x_{n+1}^\top\widetilde\theta_{K+n}(g)
      -g_{K+n}^{\mathrm{Bayes}}\|_2^2
  &={}
  \sum_{n=1}^{T_\rho}\EE\|\widetilde\theta_{K+n}(g)-\widetilde\theta\|_2^2
  -\mathcal E_{T_\rho}^{\mathrm{Bayes}}.
  \label{eq:case2a_vector_kernel_regret}
\end{align}
Scalar prediction norms denote $L^2$ norms. Both identities hold at finite
horizons; the first gives estimation error, and prediction loss adds $\sigma^2$.
\end{lemma}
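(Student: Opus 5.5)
The argument mirrors Lemma~\ref{lem:vector_stationary_risk}, with the routed evidence atoms $r_j=\widetilde x_jY_j$ in place of $\widetilde x_sy_s$.

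\emph{Step 1: innovation decomposition.} Write
\[
  r_j=\widetilde\theta+\xi_j,
  \qquad
  \xi_j:=(\widetilde x_j\widetilde x_j^\top-I_p)\widetilde\theta+\widetilde x_j\epsilon_{K+j}.
\]
The indices $j=1,\ldots,n$ are distinct matched observations. By injectivity of the deterministic route, they use distinct iid regressors $\widetilde x_j$ and distinct independent noises $\epsilon_{K+j}$. Therefore, conditionally on $\widetilde\theta$, the $\xi_j$ are centered and mutually independent. This gives
\[
  \EE[\widetilde\theta\xi_j^\top]=0,
  \qquad
  \EE[\xi_j\xi_i^\top]=\mathbf 1\{i=j\}\Gamma,
\]
where the Isserlis computation already carried out in the proof of Lemma~\ref{lem:vector_stationary_risk} yields $\Gamma$. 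Then
\[
  \widetilde\theta_{K+n}(g)-\widetilde\theta
  =(M_n-I_p)\widetilde\theta+\sum_jK_{n,j}\xi_j .
\]
Expanding the squared norm, every cross term vanishes. The first term gives $\operatorname{tr}[(M_n-I_p)\widetilde\Sigma_\theta(M_n-I_p)^\top]=\|(M_n-I_p)\widetilde\Sigma_\theta^{1/2}\|_F^2$, and the second gives $\sum_j\operatorname{tr}(K_{n,j}\Gamma K_{n,j}^\top)$. This proves \eqref{eq:case2a_vector_kernel_risk}.

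\emph{Step 2: prediction error to belief error.} At time $t=K+n$ the next target is matched with routed regressor $\widetilde x_{n+1}$. This regressor is independent of $\widetilde x_{1:n}$, $Y_{1:n}$ and $\widetilde\theta$. By \eqref{eq:routed_vector_bayes_prediction}, $g^{\mathrm{Bayes}}_{K+n}=\widetilde x_{n+1}^\top\widetilde\theta^{\mathrm{Bayes}}_{K+n}$, so the isometry gives
\[
  \|\widetilde x_{n+1}^\top(\widetilde\theta_{K+n}(g)-\widetilde\theta^{\mathrm{Bayes}}_{K+n})\|^2
  =\EE\|\widetilde\theta_{K+n}(g)-\widetilde\theta^{\mathrm{Bayes}}_{K+n}\|_2^2 .
\]
Because $\widetilde\theta^{\mathrm{Bayes}}_{K+n}$ is the conditional mean of $\widetilde\theta$ given $(\widetilde x_{1:n},Y_{1:n})$, and $\widetilde\theta_{K+n}(g)$ is measurable with respect to that $\sigma$-field, posterior orthogonality gives
\[
  \EE\|\widetilde\theta_{K+n}(g)-\widetilde\theta^{\mathrm{Bayes}}_{K+n}\|_2^2
  =\EE\|\widetilde\theta_{K+n}(g)-\widetilde\theta\|_2^2-\EE\|\widetilde\theta-\widetilde\theta^{\mathrm{Bayes}}_{K+n}\|_2^2 .
\]
The posterior is Gaussian with covariance $P_n$ from \eqref{eq:routed_vector_posterior_baseline}, so the last term equals $\EE\operatorname{tr}P_n$. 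Summing over $n\le T_\rho$ yields \eqref{eq:case2a_vector_kernel_regret}.

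For the remark on prediction loss, note that $y_{K+n+1}-\widetilde x_{n+1}^\top\widetilde\theta_{K+n}(g)=\widetilde x_{n+1}^\top(\widetilde\theta-\widetilde\theta_{K+n}(g))+\epsilon$. The noise $\epsilon$ is independent of everything else, so the loss adds exactly $\sigma^2$.

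The main point needing care is the measurability bookkeeping in Step 2. One must check that the held-out regressor $\widetilde x_{n+1}$ has not yet been used by any observed label at time $K+n$, which holds because $\rho(s)=s-K\le n$ for all $s\le K+n$. Everything else reduces verbatim to the Case~1 computation.
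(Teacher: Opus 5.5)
Your proposal is correct and follows the paper's proof essentially step for step. You use the same innovation decomposition $r_j=\widetilde\theta+\xi_j$, with conditionally independent centered innovations of covariance $\Gamma$ obtained from the Case~1 Isserlis computation, and then the fresh-regressor isometry for $\widetilde x_{n+1}$ together with posterior Pythagoras, which subtracts $\EE\operatorname{tr}P_n$ on each row; your check that no observed label at time $K+n$ uses $\widetilde x_{n+1}$, and your $\sigma^2$ remark, spell out details the paper leaves implicit.
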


\begin{proof}
Write $r_j=\widetilde\theta+e_j$, where
\[
  e_j=(\widetilde x_j\widetilde x_j^\top-I_p)\widetilde\theta
      +\widetilde x_j\epsilon_{K+j}.
\]
Conditionally on $\widetilde\theta$, the $e_j$ are centered and independent.  Gaussian
fourth moments give
\[
  \EE[\widetilde\theta e_j^\top]=0,
  \qquad
  \EE[e_je_k^\top]=0\quad(j\neq k),
  \qquad
  \EE[e_je_j^\top]
  =\widetilde\Sigma_\theta+(\operatorname{tr}\widetilde\Sigma_\theta+\sigma^2)I_p=\Gamma.
\]
Expanding
$\widetilde\theta_{K+n}(g)-\widetilde\theta
=(M_n-I_p)\widetilde\theta+\sum_{j\leq n}K_{n,j}e_j$
proves~\eqref{eq:case2a_vector_kernel_risk}.  The next regressor $\widetilde x_{n+1}$
is fresh and isotropic, so coefficient error and prediction error are
isometric.  Posterior Pythagoras then subtracts
$\EE\operatorname{tr}P_n$ on row $n$; summing proves
\eqref{eq:case2a_vector_kernel_regret}.
\end{proof}

\subsubsection{Proof of Corollary~\ref{cor:case2a_floors}: assembly floors}
\label{app:case2a_assembly_floors}

We first show that one attention layer can assemble only the current
evidence atom (Lemma~\ref{lem:case2a_vector_one_layer_projection});
Corollary~\ref{cor:case2a_vector_one_layer_floor} then gives its regret floor.
For one-layer fixed-transition SSMs,
Lemma~\ref{lem:case2a_vector_short_conv_bridge} shows that convolution
width $d_{\mathrm{conv}}\le K$ limits evidence assembly to at most
$d_{\mathrm{conv}}$ atoms. Lemma~\ref{lem:vector_bounded_bridge} turns
this limitation into a belief-alignment gap, which yields linear regret
at fixed $K$ through Theorem~\ref{thm:routing_floor}.

\begin{lemma}[One attention layer assembles at most the current evidence atom]
\label{lem:case2a_vector_one_layer_projection}
Fix $K\geq1$, ambient width $d_e\geq1$, $d,H,L\geq1$, and
$c_t\in\{1,\tau_t\}$.  For every
$g\in\mathcal F_{\mathrm{att}}^{\mathrm{cal}}
(1,d,H,L;c_t)$ and every matched step $n$, its
implicit belief has the orthogonal decomposition
\begin{equation}
  \widetilde\theta_{K+n}(g)=C_n r_n+\eta_n,
  \qquad
  \eta_n\perp\widetilde\theta,
  \qquad
  \eta_n\perp r_n,
  \label{eq:case2a_one_layer_projection}
\end{equation}
where $r_n=\widetilde x_nY_n$ is the current evidence atom and $C_n$ is a
deterministic matrix. If $L<K+1$, then $C_n=0$. In
particular, no evidence atom $r_j$ with $j<n$ appears in the Bayes-relevant
projection.
\end{lemma}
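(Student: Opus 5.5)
We mirror the proof of Lemma~\ref{lem:vector_attention_projection}, now with the routed held-out regressor. Absorb whitening and count normalization into the affine maps; normalization is a deterministic scalar $1/c_t$ and only rescales coefficients. Homogenize each token as $\bar z_s=(1,\widetilde x_{s+1},y_s,\widetilde x_s)$. The prediction is then a sum over heads of an affine term in $\bar z_t$ plus
\[
  \sum_{s\in J_t}\bigl[p_{t-s}+(\mathsf Q_b\bar z_t)^\top(\mathsf K_b\bar z_s)\bigr](v_b^\top\bar z_s),
\]
a polynomial of degree at most three in the tokens. At prediction time $t=K+n$, the held-out regressor is $\widetilde x_{\rho(t+1)}=\widetilde x_{n+1}$. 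It appears only in token $z_{n}$ (as $\widetilde x_{s+1}$ at $s=n$) and in $z_{n+1}$ (as $\widetilde x_s$ at $s=n+1$). Crucially, it does not appear in the current token $z_t$, since $K\geq1$, and no label uses it.

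We compute $\widetilde\theta_{K+n}(g)=\EE[\widetilde x_{n+1}g_t\mid\mathcal G_t^-]$ monomial by monomial. Because $\widetilde x_{n+1}$ is isotropic Gaussian and independent of $\mathcal G_t^-$, only monomials linear or cubic in $\widetilde x_{n+1}$ survive. The result is a polynomial of degree at most two in the remaining observed variables. Every surviving monomial needs a factor of $\widetilde x_{n+1}$, which comes only from source tokens $s\in\{n,n+1\}$. The other factors come from the current token $z_t$ (through the query) and from that same source token. After integration we therefore obtain products of at most two factors, drawn from $z_t$ and from $z_n$ or $z_{n+1}$. The Bayes-relevant content is then isolated by projection: by the sign-symmetry and parity argument of Lemma~\ref{lem:vector_degree_two_projection}, adapted to the routed atoms $r_j=\widetilde x_jY_j$ with $Y_j=y_{K+j}$, the only degree-$\le2$ monomials correlated with $\widetilde\theta$ are coordinates of evidence atoms $\widetilde x_jY_j$. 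Every other monomial is orthogonal both to $\widetilde\theta$ and to all atoms, and goes into $\eta_n$.

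Which atoms can arise? An atom $\widetilde x_jY_j$ needs $\widetilde x_j$ and $y_{K+j}$ simultaneously. Among the available factors, labels appear only as $y_t=y_{K+n}=Y_n$ from the query token or as $y_s$ for $s\in\{n,n+1\}$. Regressors appear only as $\widetilde x_{t+1},\widetilde x_t$ or as $\widetilde x_{s},\widetilde x_{s+1}$ for those $s$. Pairing $y_{K+j}$ with $\widetilde x_j$ forces $j=n$ with $\widetilde x_n$ coming from source $s=n$ (or $s=n-1$, which carries no $\widetilde x_{n+1}$ factor and is excluded), and $y_t$ coming from the query. The only other candidate is $y_s$ with $s\in\{n,n+1\}$, which would need $\widetilde x_{s-K}$. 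That factor comes neither from $z_t$, whose regressors have indices $t,t+1\ne s-K$, nor from $z_s$. So the only atom is $r_n$, with a deterministic coefficient matrix $C_n$ obtained by summing the head contributions. This gives~\eqref{eq:case2a_one_layer_projection}; orthogonality of the remainder to $r_n$ and $\widetilde\theta$ follows from the projection lemma.

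Finally, source $s=n$ lies in the window $J_t$ only if $t-n=K<L$, that is, $L\geq K+1$. If $L<K+1$, no surviving monomial exists at all apart from remainder terms, so $C_n=0$. The main obstacle is the careful parity bookkeeping of the projection lemma in the routed setting. In particular, we must confirm that cross terms such as $y_t\,\widetilde x_{n+1}$-paired monomials with $\widetilde x_t$, and the cubic-in-$\widetilde x_{n+1}$ terms, produce nothing correlated with the atoms $r_j$ for $j<n$. Beyond that, the argument is the Case~1 sign-symmetry calculation applied to the relabeled indices.
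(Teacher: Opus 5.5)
Your route is the paper's. You homogenize to a polynomial of degree three in $(\bar z_t,\bar z_s)$, keep only the monomials of odd multiplicity in the held-out $\widetilde x_{n+1}$ by Gaussian parity, and project the resulting degree-$\le2$ belief onto routed atoms via the sign symmetries of Lemma~\ref{lem:vector_degree_two_projection}. You then observe that the only routed pairing is $y_t$ from the query with $\widetilde x_{n+1},\widetilde x_n$ from source $z_n$, which lies in $J_t$ iff $L\ge K+1$. For $K\ge2$ your enumeration is complete.

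At $K=1$, however, one step is false. You claim $\widetilde x_{n+1}$ ``does not appear in the current token $z_t$, since $K\geq1$.'' When $K=1$ we have $t=n+1$, so $z_t=z_{n+1}$ carries $\widetilde x_{n+1}$ in its current-regressor slot; you list $z_{n+1}$ among its occurrences yourself. This has two consequences:
\begin{itemize}
\item The query factor can supply the fresh regressor. A surviving monomial may then draw its other two factors from \emph{any} source $s\in J_t$, not only $s\in\{n,n+1\}$.
\item The self-term, and the $s=n$ cross term, can carry three fresh factors. This produces deterministic fourth-moment constants.
\end{itemize}
Your enumeration (``labels appear only as $y_t$ \dots{} or as $y_s$ for $s\in\{n,n+1\}$'') therefore does not cover $K=1$. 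The cubic terms you flag in your last paragraph arise only in this case, but the body of your argument rules them out via the false claim instead of handling them.

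The paper closes this with a source-agnostic observation you should add. Each summand involves only $z_t$ and a single source $z_s$. A historical label $y_s$ ($s<t$) occurs only in $z_s$, while its routed regressor $\widetilde x_{s-K}$ occurs only in the strictly earlier tokens $z_{s-K-1},z_{s-K}$ and never in $z_t$. Hence no summand pairs a historical label with its routed regressor, wherever the fresh factor comes from. When the query supplies $\widetilde x_{n+1}$, the two remaining factors come from one source token. They yield only cross terms such as $\widetilde x_s y_s$ or $\widetilde x_{s+1}y_s$, design-only terms, or constants, all of which go into $\eta_n$. With this patch your argument holds for every $K\ge1$.
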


\begin{proof}
Use the same homogenization as in the proof of
Lemma~\ref{lem:vector_attention_projection}: put
$\bar z_s:=(1,z_s^\top)^\top$.  After combining the affine embedding,
query, key, value, output, residual, and prediction maps, there are
deterministic parameters $a$, $(M_h,v_h)_{h=1}^H$, and relative
positional-bias sequences $(p_u^{(h)})$ such that, exactly,
\begin{equation}
  g_t
  =a^\top\bar z_t
  +\frac1{c_t}\sum_{h=1}^H\sum_{j\in J_t}
    \left[p_{t-j}^{(h)}+\bar z_t^\top M_h\bar z_j\right]
    (v_h^\top\bar z_j).
  \label{eq:case2a_one_layer_reduced}
\end{equation}
Hold out the two adjacent-token occurrences of the fresh regressor
$\widetilde x^\star:=\widetilde x_{t+1-K}$.  In one summand of
\eqref{eq:case2a_one_layer_reduced}, let $m_\star$ be the number of its
internal affine factors from which $\widetilde x^\star$ is selected.  Multiplication
by the external $\widetilde x^\star$ contributes one further factor, so Gaussian
parity eliminates even $m_\star$.  If $K>1$, the current token contains no
$\widetilde x^\star$ and one source token contributes at most its key and value
occurrences; hence $m_\star\leq2$ and only $m_\star=1$ survives.  If $K=1$,
the current token is also a fresh-regressor token, so $m_\star\leq3$; the
additional surviving case $m_\star=3$ is a deterministic Gaussian
fourth-moment term.  In either case the contribution to the implicit belief
has degree at most two in the remaining design and response coordinates.

Project this contribution onto the degree-two space spanned by the evidence atoms
$\{\widetilde x_{s-K}y_s:K<s\leq t\}$.  A historical label $y_s$, $s<t$, occurs only
in its own source token, while its routed regressor $\widetilde x_{s-K}$ occurs in two
strictly earlier tokens.  No summand of~\eqref{eq:case2a_one_layer_reduced}
contains all three of that label, its routed regressor, and the held-out
regressor.  Hence its kernel weight in the projection is zero by the independent
Gaussian sign flips of the missing regressor.  For $s=t$, however, $y_t$ is
in the current token and the source $z_{t-K}$ contains both $\widetilde x_{t-K}$ and one
occurrence of $\widetilde x^\star$; this is the unique routed degree-two contribution.  It is
available exactly when $t-K$ lies in the window, namely when $L\geq K+1$.
Collecting its deterministic kernel weights over all heads gives
$C_nr_n$.  By the degree-two projection argument of
Lemma~\ref{lem:vector_degree_two_projection}, the remainder $\eta_n$ is
orthogonal to both $\widetilde\theta$ and $r_n$.
\end{proof}

This leaves attention with only one noisy evidence atom for estimating the
belief, yielding the following floor.

\begin{corollary}[One-layer attention floor]
\label{cor:case2a_vector_one_layer_floor}
For $c_t\in\{1,\tau_t\}$, every
$g\in\mathcal F_{\mathrm{att}}^{\mathrm{cal}}(1,d,H,L;c_t)$
satisfies
\begin{equation}
  \mathrm{Reg}_T(g)
  \geq
  T_\rho\sum_{i=1}^p d_i-\mathcal E_{T_\rho}^{\mathrm{Bayes}}
  =\Omega(T_\rho).
  \label{eq:case2a_vector_one_layer_floor}
\end{equation}
The bound is uniform in $d_e,d,H,L$ and all learned parameters.  If
$L<K+1$, the coefficient $\sum_i d_i$ may be replaced by $\Lambda$.
\end{corollary}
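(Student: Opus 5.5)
The plan is to combine the orthogonal decomposition of Lemma~\ref{lem:case2a_vector_one_layer_projection} with the per-row prediction Pythagoras \eqref{eq:routed_vector_prediction_pythagoras}, and then to minimize row by row over the deterministic coefficient $C_n$. This mirrors the width-floor argument of Lemma~\ref{lem:vector_width_thresholds}, but with the projector $P=I_p$.

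The first step fixes a matched prediction time $t=K+n$. By \eqref{eq:routed_vector_prediction_pythagoras}, the excess loss is at least $\EE\|\widetilde\theta_{K+n}(g)-\widetilde\theta^{\mathrm{Bayes}}_{K+n}\|_2^2$. Posterior orthogonality, used as in \eqref{eq:vector_fresh_query_mse} and Lemma~\ref{lem:case2a_vector_kernel_risk}, turns this into $\EE\|\widetilde\theta_{K+n}(g)-\widetilde\theta\|_2^2-\EE\operatorname{tr}P_n$. The Bayes belief uses exactly the $n$ matched observations available at that time.

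Next I substitute the decomposition $\widetilde\theta_{K+n}(g)=C_nr_n+\eta_n$. Since $\eta_n\perp\widetilde\theta$ and $\eta_n\perp r_n$, the cross term vanishes and
\[
  \EE\|\widetilde\theta_{K+n}(g)-\widetilde\theta\|_2^2
  =\EE\|C_nr_n-\widetilde\theta\|_2^2+\EE\|\eta_n\|_2^2
  \geq\EE\|C_nr_n-\widetilde\theta\|_2^2 .
\]
Here I must check that orthogonality to $r_n$ is meant componentwise, i.e.\ $\EE[\eta_n r_n^\top]=0$, so that it kills $\EE[(C_nr_n)^\top\eta_n]$. This is what the degree-two projection in the lemma delivers. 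The remaining quantity is the single-atom risk from Lemma~\ref{lem:case2a_vector_kernel_risk} with one nonzero kernel weight:
\[
  \operatorname{tr}\!\bigl[(C_n-I_p)\widetilde\Sigma_\theta(C_n-I_p)^\top\bigr]
  +\operatorname{tr}\!\bigl[C_n\Gamma C_n^\top\bigr].
\]
Completing the square in the metric $\widetilde\Sigma_\theta+\Gamma$ gives the minimizer $C^\star=\widetilde\Sigma_\theta(\widetilde\Sigma_\theta+\Gamma)^{-1}$ and the minimum $\operatorname{tr}D_{\mathrm{floor}}=\sum_i d_i$, exactly as in Lemma~\ref{lem:vector_width_thresholds}. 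This bound holds uniformly over every deterministic $C_n$, and hence over $d_e,d,H,L$ and all parameters. When $L<K+1$ the lemma forces $C_n=0$, and the row cost is then $\operatorname{tr}\widetilde\Sigma_\theta=\Lambda$.

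Finally I sum over the $T_\rho$ rows $n=1,\ldots,T_\rho$. Unmatched rows and the initial zero-mean row contribute nonnegatively, which gives $T_\rho\sum_id_i-\mathcal E^{\mathrm{Bayes}}_{T_\rho}$. By \eqref{eq:routed_vector_bayes_baseline}, $\mathcal E^{\mathrm{Bayes}}_{T_\rho}=O(\log T_\rho)$. Since every $d_i>0$, the bound is $\Omega(T_\rho)$.

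The only delicate point is the indexing. At the row with $n$ available atoms, the Bayes subtraction is $\EE\operatorname{tr}P_n$, while the model's evidence is the atom $r_n$ (label $y_t$). I will match the rows precisely so that the subtraction is exactly $\mathcal E^{\mathrm{Bayes}}_{T_\rho}$, possibly up to an $O(1)$ boundary row.
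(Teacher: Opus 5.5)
Your proposal is correct and follows the paper's proof essentially step for step. The shared steps are posterior Pythagoras at each matched row, discarding $\eta_n$ via the orthogonality in Lemma~\ref{lem:case2a_vector_one_layer_projection}, minimizing the single-atom risk to $\operatorname{tr}\bigl[\widetilde\Sigma_\theta-\widetilde\Sigma_\theta(\widetilde\Sigma_\theta+\Gamma)^{-1}\widetilde\Sigma_\theta\bigr]=\sum_i d_i$ (or $\Lambda$ when $C_n=0$), and summing over $n=1,\ldots,T_\rho$. Your componentwise check $\EE[\eta_n r_n^\top]=0$ is exactly what the paper's ``orthogonality lets us discard $\eta_n$'' relies on, and the row indexing reproduces $\mathcal E_{T_\rho}^{\mathrm{Bayes}}$ exactly, with no boundary correction needed.
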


\begin{proof}
Since $\widetilde\theta_{K+n}(g)$ is $\mathcal G_{K+n}^{-}$-measurable and the additional
unpaired regressors leave the Bayes posterior unchanged, posterior
Pythagoras gives
\[
  \EE\|\widetilde\theta_{K+n}(g)-\widetilde\theta_{K+n}^\mathrm{Bayes}\|^2
  =\EE\|\widetilde\theta_{K+n}(g)-\widetilde\theta\|^2
   -\EE\operatorname{tr}P_n.
\]
Lemma~\ref{lem:case2a_vector_one_layer_projection} and orthogonality let us
discard $\eta_n$.  The best linear estimate of $\widetilde\theta$ from the single evidence atom
$r_n$ has error covariance
\[
  \widetilde\Sigma_\theta
  -\widetilde\Sigma_\theta(\widetilde\Sigma_\theta+\Gamma)^{-1}\widetilde\Sigma_\theta,
\]
whose trace is $\sum_i d_i$. Summing and using the prediction
identity~\eqref{eq:routed_vector_prediction_pythagoras} proves the claim. If $C_n=0$, the
corresponding prior error is $\operatorname{tr}\widetilde\Sigma_\theta=\Lambda$.
\end{proof}

We now turn to SSMs, where a short convolution limits assembly to a bounded
number of recent labels. We use two consequences of the general routing
setup in \S\ref{app:vector_routing_geometry}: first, a persistent
belief-alignment gap forces linear regret; then, limited evidence assembly
creates such a gap.

After $N_t^\rho$ matched observations, posterior Pythagoras gives
\begin{equation}
  \|\widetilde\theta_t^\mathrm{Bayes}\|_t^2
  =\Lambda-\EE\operatorname{tr}P_{N_t^\rho}.
  \label{eq:routed_vector_bayes_energy}
\end{equation}
For $n>p+1$,
\[
  \EE\operatorname{tr}P_n
  \leq\sigma^2\EE\operatorname{tr}
    \left(\sum_{j=1}^n\widetilde x_j\widetilde x_j^\top\right)^{-1}
  =\frac{p\sigma^2}{n-p-1}.
\]
Thus the Bayes belief energy approaches $\Lambda$ and is at least
$\Lambda/2$ after a finite $n_0=n_0(p,\widetilde\Sigma_\theta,\sigma)$
matched observations. A uniform defect $\delta>0$ in the product of the
two factors in Theorem~\ref{thm:routing_floor} on a positive fraction of
matched targets therefore implies $\mathrm{Reg}_T(g)=\Omega(T_\rho)$;
only the first $n_0$ matched targets are lost.

It remains to show why limited evidence assembly leaves a belief-alignment
gap. Unlike the deterministic evidence-atom projection
in~\eqref{eq:case2a_one_layer_projection}, we now condition on the regressors
and project $\widetilde\theta_t(g)$ onto the span of the matched labels:
\begin{equation}
  \widetilde\theta_t(g)=\sum_{s\in\mathcal M_t}w_{t,s}y_s+\eta_t,
  \label{eq:routed_vector_label_projection}
\end{equation}
where $w_{t,s}\in\RR^p$ is design-measurable and $\eta_t$ is conditionally
orthogonal to every matched label.  The conditional label covariance is
strictly positive definite because $\sigma^2>0$, so these coefficients are
unique. The next lemma shows that full belief alignment is impossible if
these coefficients correlate with their routed regressors for only a
bounded number of labels.

\begin{lemma}[Belief alignment under limited evidence assembly]
\label{lem:vector_bounded_bridge}
Fix $p$, $C<\infty$, ${\widetilde\Sigma_\theta\succ0}$, and ${\sigma^2>0}$. There are constants
$n_C<\infty$ and $\delta_{\widetilde\Sigma_\theta,C}>0$ such that the following holds.
If the coefficients in \eqref{eq:routed_vector_label_projection} satisfy
\begin{equation}
  \EE[w_{t,s}\widetilde x_{\rho(s)}^\top]=0
  \qquad(s\in\mathcal M_t\setminus\mathcal J_t)
  \label{eq:vector_route_blind}
\end{equation}
for a deterministic set $\mathcal J_t\subseteq\mathcal M_t$ with
$|\mathcal J_t|\le C$, then
\[
  \operatorname{corr}^2(\widetilde\theta_t(g),\widetilde\theta_t^\mathrm{Bayes})
  \leq1-\delta_{\widetilde\Sigma_\theta,C}
\]
whenever $N_t^\rho\ge n_C$.  The conclusion is uniform over all other model
parameters for which the same bound on $|\mathcal J_t|$ holds.
\end{lemma}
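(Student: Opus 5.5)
Since $\widetilde\theta_t^{\mathrm{Bayes}}$ is a fixed vector in $L^2$, I will bound the squared correlation by comparing $\widetilde\theta_t(g)$ with the Bayes belief through a quantity that the route-blindness condition \eqref{eq:vector_route_blind} caps. The plan has four steps. First, discard $\eta_t$. Second, reduce the correlation to a linear estimation problem for $\widetilde\theta$. Third, show that route-blind labels carry no linear information about $\widetilde\theta$. Fourth, conclude that only $C$ evidence atoms remain usable, as in the single-atom bound of Corollary~\ref{cor:case2a_vector_one_layer_floor}.

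\emph{Step 1 (discard $\eta_t$).} The Bayes belief is a design-measurable function times the label vector, namely $P_{N}\sigma^{-2}\sum_s \widetilde x_{\rho(s)}y_s$. Since $\eta_t$ is conditionally orthogonal to every matched label, it is orthogonal to $\widetilde\theta_t^{\mathrm{Bayes}}$. Dropping $\eta_t$ therefore only increases the correlation, so I may assume $\widetilde\theta_t(g)=\sum_s w_{t,s}y_s$.

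\emph{Step 2 (reduce to estimating $\widetilde\theta$).} Write $A:=\widetilde\theta_t(g)$ and $B:=\widetilde\theta_t^{\mathrm{Bayes}}$. Because $\widetilde\theta-B\perp L^2(\mathcal G_t^-)$, we have $\langle A,B\rangle=\langle A,\widetilde\theta\rangle$. Hence
\[
  \operatorname{corr}^2(A,B)=\frac{\langle A,\widetilde\theta\rangle^2}{\|A\|^2\|B\|^2}\le\frac{\sup_{c}\,\bigl(\|\widetilde\theta\|^2-\|cA-\widetilde\theta\|^2\bigr)}{\|B\|^2}.
\]
By \eqref{eq:routed_vector_bayes_energy}, $\|B\|^2=\Lambda-\EE\operatorname{tr}P_N$, which tends to $\Lambda$. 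It therefore suffices to show that every predictor of the assumed form satisfies $\EE\|cA-\widetilde\theta\|^2\ge\sum_{\text{$p-$dims}}\ldots\ge\epsilon_C>0$, uniformly in the model parameters. Then $\operatorname{corr}^2\le(\Lambda-\epsilon_C)/(\Lambda-\EE\operatorname{tr}P_N)$. Choosing $n_C$ so that $\EE\operatorname{tr}P_N<\epsilon_C/2$ gives $\delta=\epsilon_C/(2\Lambda)$.

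\emph{Step 3 (route-blind labels are uninformative).} Split $A=\sum_{s\in\mathcal J_t}w_{t,s}y_s+\sum_{s\notin\mathcal J_t}w_{t,s}y_s$. For $s\notin\mathcal J_t$, write
\[
  y_s=\widetilde x_{\rho(s)}^\top\widetilde\theta+\epsilon_s .
\]
Then
\[
  \EE[\widetilde\theta^\top w_{t,s}y_s]=\EE[\widetilde\theta^\top w_{t,s}\widetilde x_{\rho(s)}^\top\widetilde\theta]=\operatorname{tr}\bigl(\widetilde\Sigma_\theta\,\EE[w_{t,s}\widetilde x_{\rho(s)}^\top]\bigr)=0.
\]
The middle equality uses that $w_{t,s}$ and the design are independent of $\widetilde\theta$. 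So the route-blind part has zero inner product with $\widetilde\theta$. By a Pythagorean argument it can only add error: it is not orthogonal to the $\mathcal J_t$ part, but minimizing over its contribution cannot beat the best linear-in-$\widetilde\theta$ correlation.

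The main obstacle is to make this step rigorous. The two parts may be correlated, and the route-blind part may reduce variance by cancelling noise in the $\mathcal J_t$ part. I would handle this by computing $\sup_c$ exactly. The gain $\langle A,\widetilde\theta\rangle^2/\|A\|^2$ equals the squared component of $\widetilde\theta$ along $A$, and the numerator involves only the $\mathcal J_t$ part. For the denominator, I would condition on the design and $\widetilde\theta$ and keep only the noise and $\widetilde\theta$-fluctuation of each label. I would then argue that $\|A\|^2\ge\|A_{\mathcal J}\|^2_{\text{proj}}$, where the projection is taken orthogonally to functions of the non-$\mathcal J$ labels that are uncorrelated with $\widetilde\theta$.

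\emph{Step 4 (a $C$-atom estimator has an error floor).} After Step 3, the effective problem is to estimate $\widetilde\theta$ from at most $C$ labels $y_s$, $s\in\mathcal J_t$, with design-measurable coefficients. Even with full knowledge of the designs, the conditional Bayes error of $\widetilde\theta$ given $C$ noisy scalar observations is at least $\operatorname{tr}\bigl(\widetilde\Sigma_\theta^{-1}+\sigma^{-2}\sum_{s\in\mathcal J}\widetilde x\widetilde x^\top\bigr)^{-1}$. For $p\ge1$ this is a strictly positive constant depending only on $C,\widetilde\Sigma_\theta,\sigma$, by the same posterior-trace computation as \eqref{eq:routed_vector_posterior_baseline}. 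This gives the required $\epsilon_C=\EE\operatorname{tr}P_C$.
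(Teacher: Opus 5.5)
Your Steps 1--2 are sound: dropping $\eta_t$ only raises the correlation, and reducing to an error floor for $\EE\|cA-\widetilde\theta\|^2$ is legitimate because $cA$ is again route-blind. Steps 3--4, however, rely on a reduction that is false. Route-blind labels are \emph{not} uninformative, and no floor of the form $\EE\operatorname{tr}P_C$ holds.

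The condition $\EE[w_{t,s}\widetilde x_{\rho(s)}^\top]=0$ only constrains an average. It still lets $w_{t,s}$ depend on the $\mathcal J_t$ regressors, so the other labels can act as control variates. They can cancel the design fluctuation $(\widetilde x\widetilde x^\top-I_p)\widetilde\theta$ of the $\mathcal J_t$ part. Here is a concrete instance.
\begin{itemize}
\item Take $p=1$ with $\widetilde\Sigma_\theta=\lambda$, and $\mathcal J_t=\{s_0\}$.
\item Set $w_{t,s_0}=\mu\widetilde x_{\rho(s_0)}$.
\item For every other matched $s$, set $w_{t,s}=-\mu(\widetilde x_{\rho(s_0)}^2-1)\widetilde x_{\rho(s)}/S$, where $S:=\sum_{s\neq s_0}\widetilde x_{\rho(s)}^2$.
\end{itemize}
These coefficients are route-blind, because $\widetilde x_{\rho(s_0)}^2-1$ is centered and independent of the other regressors. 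The effective coefficient $\sum_s w_{t,s}\widetilde x_{\rho(s)}$ equals $\mu$ exactly. Hence $\EE(A-\widetilde\theta)^2=\lambda(\mu-1)^2+\sigma^2\mu^2\bigl(1+O(1/N_t^\rho)\bigr)$, whose minimum over $\mu$ tends to $\lambda\sigma^2/(\lambda+\sigma^2)$. By strict Jensen, this limit is strictly below $\EE\operatorname{tr}P_1=\EE[\lambda\sigma^2/(\sigma^2+\lambda\widetilde x^2)]$.

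So the obstacle you flagged is real. No projection argument can deliver the $C$-label Bayes floor, because that floor is violated for large $N_t^\rho$, which is exactly the lemma's regime. Your heuristic that the route-blind part ``can only add error'' is also false.

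The missing idea is to stop trying to isolate the $\mathcal J_t$ part. Instead, use the one quantity other labels cannot cancel: the independent label noise on the $\mathcal J_t$ labels. Write $A=\mathsf C_t\widetilde\theta+\sum_s w_{t,s}\epsilon_s$ with $\mathsf C_t:=\sum_s w_{t,s}\widetilde x_{\rho(s)}^\top$. Then exactly
\[
\EE\|A-\widetilde\theta\|^2=\EE\|(\mathsf C_t-I_p)\widetilde\Sigma_\theta^{1/2}\|_F^2+\sigma^2\sum_s\EE\|w_{t,s}\|^2 .
\]
The argument then runs as follows.
\begin{enumerate}
\item Jensen discards the fluctuation of $\mathsf C_t$, which may indeed be cancelled, leaving only $\EE\mathsf C_t$.
\item Route-blindness gives $\EE\mathsf C_t=\sum_{s\in\mathcal J_t}\EE[w_{t,s}\widetilde x_{\rho(s)}^\top]$.
\item Since $\|\EE[w\widetilde x^\top]\|_F\le(\EE\|w\|^2)^{1/2}$ for isotropic $\widetilde x$, Cauchy--Schwarz gives $\|\EE\mathsf C_t\|_F^2\le C\sum_{s\in\mathcal J_t}\EE\|w_{t,s}\|^2$. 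So the noise term is at least $(\sigma^2/C)\|\EE\mathsf C_t\|_F^2$.
\item Minimizing $\operatorname{tr}[(M-I_p)\widetilde\Sigma_\theta(M-I_p)^\top]+(\sigma^2/C)\|M\|_F^2$ over $M$ gives the valid floor $\epsilon_C=\sum_i\lambda_i\sigma^2/(C\lambda_i+\sigma^2)$.
\end{enumerate}
This $\epsilon_C$ plugs directly into your Step 2. It is essentially the paper's argument. The paper phrases it as $\|\widetilde\theta_t(g)\|_t^2\ge(1+a)m^2/\Lambda$, where $m=\langle\widetilde\theta_t(g),\widetilde\theta\rangle$ is bounded both through $\EE\operatorname{tr}(\mathsf C_t\widetilde\Sigma_\theta\mathsf C_t^\top)$ and through the noise term on $\mathcal J_t$.
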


\begin{proof}
The tower property and conditional posterior linearity give
\[
  \langle\widetilde\theta_t(g),\widetilde\theta_t^\mathrm{Bayes}\rangle_t
  =\EE[\widetilde\theta_t(g)^\top\widetilde\theta]
  =\sum_{s\in\mathcal M_t}\EE[w_{t,s}^\top \widetilde\Sigma_\theta\widetilde x_{\rho(s)}].
\]
Put $\mathsf C_t
:=\sum_{s\in\mathcal M_t}w_{t,s}\widetilde x_{\rho(s)}^\top$ and denote the last
display by $m$.  Conditional orthogonality in
\eqref{eq:routed_vector_label_projection} gives
\begin{equation}
  \|\widetilde\theta_t(g)\|_t^2
  \geq
  \EE\operatorname{tr}(\mathsf C_t\widetilde\Sigma_\theta
    \mathsf C_t^\top)
  +\sigma^2\EE\sum_{s\in\mathcal M_t}\|w_{t,s}\|_2^2.
  \label{eq:vector_bridge_norm_lower}
\end{equation}
By matrix Cauchy--Schwarz,
\[
  \EE\operatorname{tr}(\mathsf C_t\widetilde\Sigma_\theta
    \mathsf C_t^\top)\geq\frac{m^2}{\Lambda}.
\]
Condition~\eqref{eq:vector_route_blind} reduces $m$ to the terms in
$\mathcal J_t$, and another Cauchy--Schwarz inequality gives
\[
  m^2
  \leq C\operatorname{tr}(\widetilde\Sigma_\theta^2)
      \EE\sum_{s\in\mathcal J_t}\|w_{t,s}\|_2^2.
\]
For $C>0$, set
$a_{\widetilde\Sigma_\theta,C}:=\sigma^2\Lambda/[C\operatorname{tr}(\widetilde\Sigma_\theta^2)]$. Then
\[
  \|\widetilde\theta_t(g)\|_t^2
  \geq\frac{m^2}{\Lambda}(1+a_{\widetilde\Sigma_\theta,C}).
\]
By~\eqref{eq:routed_vector_bayes_energy} and the posterior-variance bound
following it, choose $n_C$ so that
\[
  \|\widetilde\theta_t^\mathrm{Bayes}\|_t^2
  \geq\Lambda\frac{2+a_{\widetilde\Sigma_\theta,C}}{2(1+a_{\widetilde\Sigma_\theta,C})}.
\]
Then
\[
  \operatorname{corr}^2(\widetilde\theta_t(g),\widetilde\theta_t^\mathrm{Bayes})
  \leq\frac{2}{2+a_{\widetilde\Sigma_\theta,C}}
  =1-\frac{a_{\widetilde\Sigma_\theta,C}}{2+a_{\widetilde\Sigma_\theta,C}}.
\]
Take $\delta_{\widetilde\Sigma_\theta,C}:=a_{\widetilde\Sigma_\theta,C}/(2+a_{\widetilde\Sigma_\theta,C})$.  If $C=0$, then
$\mathcal J_t=\varnothing$, so $m=0$ and the squared correlation is zero directly.
\end{proof}

We now verify condition~\eqref{eq:vector_route_blind} for a one-layer
fixed-transition SSM, including its count-normalized
extension.

\begin{lemma}[Limited evidence assembly in short-convolution SSMs]
\label{lem:case2a_vector_short_conv_bridge}
Fix $K,d_e,d,H\geq1$, $1\leq q\leq K$, and $c_t\in\{1,t\}$.  Every
$g\in\mathcal F_{\mathrm{SSM}}^{\mathrm{cal}}
(1,d,H,q;c_t)$ satisfies the condition of
Lemma~\ref{lem:vector_bounded_bridge} with
\[
  \mathcal J_t
  =\{t-q+1,\ldots,t\}\cap\{K+1,\ldots,t\}.
\]
Thus $|\mathcal J_t|\leq q$, and
\begin{equation}
  \mathrm{Reg}_T
  \bigl(\mathcal F_{\mathrm{SSM}}^{\mathrm{cal}}
  (1,d,H,q;c_t)\bigr)
  =\Omega(T_\rho),
  \label{eq:case2a_vector_short_conv_ssm_floor}
\end{equation}
uniformly over $d_e,d,H$, $1\leq q\leq K$, all learned convolution taps,
and all fixed transition matrices.
\end{lemma}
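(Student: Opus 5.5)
We verify the route-blindness condition~\eqref{eq:vector_route_blind} for every $s\in\mathcal M_t\setminus\mathcal J_t$, i.e.\ for every matched label $y_s$ with $s\le t-q$. Then Lemma~\ref{lem:vector_bounded_bridge} gives a uniform belief-alignment gap, and Theorem~\ref{thm:routing_floor} converts it into linear regret. The approach mirrors Lemma~\ref{lem:case2a_vector_one_layer_projection}. We write the one-layer SSM output exactly as a polynomial in the tokens, using block-diagonal concatenation of the heads as in Lemma~\ref{lem:vector_ssm_projection}:
\[
  g_t=a^\top\bar z_t+\frac{1}{c_t}\Bigl(\bar q_t^\top\sum_{s'\le t}\Phi^{t-s'}u_{s'}\Bigr).
\]
Here $u_{s'}$ is a coordinatewise product of two affine maps of the convolved input $\widetilde e_{s'}$, which depends only on $z_{s'-q+1:s'}$. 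The read $\bar q_t$ is affine in $z_{t-q+1:t}$. Thus each monomial of $g_t$ involves tokens from at most two windows of length $q$: the read window $\{t-q+1,\dots,t\}$ and one write window $\{s'-q+1,\dots,s'\}$. The count normalization $c_t$ and the fixed transition $\Phi$ only rescale deterministic coefficients.

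Next, fix a matched $s\le t-q$ and compute the design-measurable label coefficient $w_{t,s}$ in~\eqref{eq:routed_vector_label_projection}. We condition on all regressors and project onto the span of matched labels. Since $\widetilde\theta_t(g)=\EE[\widetilde x_{\rho(t+1)}g_t\mid\mathcal G_t^-]$ is polynomial in labels and regressors, we need the correlation
\[
  \EE\bigl[w_{t,s}\widetilde x_{s-K}^\top\bigr].
\]
The label $y_s$ occurs only in token $z_s$. Its routed regressor $\widetilde x_{s-K}$ occurs only in $z_{s-K}$ and $z_{s-K-1}$, which are at distance at least $K\ge q$ before $s$. A read window containing $z_s$ would need $s\ge t-q+1$, which is excluded. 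So $y_s$ must enter through a write window, and that window has length $q\le K$. It therefore cannot contain both $z_s$ and $z_{s-K}$, and no single monomial contains both $y_s$ and $\widetilde x_{s-K}$. The only other way $\widetilde x_{s-K}$ can enter $w_{t,s}$ is through the conditional label covariance used in the projection, because $y_s$ depends on $\widetilde x_{s-K}$ via $\widetilde\theta$.

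This is the delicate step. We handle it with the sign-flip symmetry $(\widetilde x_{s-K},\epsilon_s)\mapsto(-\widetilde x_{s-K},-\epsilon_s)$. This map flips $y_s$, leaves every other label unchanged (injective routing), and preserves the joint law. Under it, $w_{t,s}$ transforms by the sign of the number of $\widetilde x_{s-K}$ factors times $y_s$-factor parity. Since $y_s$ appears only in token $z_s$ and no monomial pairs it with an occurrence of $\widetilde x_{s-K}$, the coefficient is invariant, i.e.\ even in $\widetilde x_{s-K}$. Here we use that the projection covariance is itself symmetric under the flip after the sign change of $y_s$. Hence $\EE[w_{t,s}\widetilde x_{s-K}^\top]=0$. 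Turning this into a clean invariance statement for the unique projection coefficients is the step I expect to be the main obstacle. I would first try to prove it by showing that the flip maps the projection decomposition to itself with $w_{t,s}\mapsto -w_{t,s}\cdot(-1)$ on the $y_s$ coefficient. This gives $w_{t,s}(\text{flipped})=w_{t,s}$, so $w_{t,s}$ is even in $\widetilde x_{s-K}$.

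Finally, $|\mathcal J_t|\le q\le K$ is a fixed constant. Lemma~\ref{lem:vector_bounded_bridge} with $C=K$ then gives
\[
  \operatorname{corr}^2\bigl(\widetilde\theta_t(g),\widetilde\theta_t^{\mathrm{Bayes}}\bigr)\le 1-\delta
\]
for all $t$ with $N_t^\rho\ge n_K$. Since the addressing-fidelity factor is at most $1$, the bracket in Theorem~\ref{thm:routing_floor} is at least $\delta$. Combined with $\|\widetilde\theta_t^{\mathrm{Bayes}}\|^2\ge\Lambda/2$ after finitely many matched steps, by~\eqref{eq:routed_vector_bayes_energy}, summing over matched targets yields~\eqref{eq:case2a_vector_short_conv_ssm_floor}. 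The bound is uniform in $d_e,d,H$, the taps and $\Phi$, because none of these entered the constants.
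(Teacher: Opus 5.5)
Your locality argument is a slightly coarser version of the paper's case analysis, and it is sound. You do not need to track which factor supplies the held-out regressor $\widetilde x_{t+1-K}$. Showing that no monomial of $g_t$ contains both $y_s$ and $\widetilde x_{s-K}$ (for matched $s\le t-q$) already implies that the implicit belief has no $\widetilde x_{s-K}y_s$ monomial. You omit one case: $\widetilde x_{s-K}$ entering through the read window while $y_s$ enters through a write. It is excluded because the read window carries only $\widetilde x_{t-q+1},\dots,\widetilde x_{t+1}$ and $s-K\le t-q-K<t-q+1$. State this explicitly.

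The genuine gap is the step you flag as the main obstacle, and the sign flip does not close it. The map $F:(\widetilde x_{s-K},\epsilon_s)\mapsto-(\widetilde x_{s-K},\epsilon_s)$ also acts on the design, and $\widetilde\theta_t(g)$ has no definite parity under $F$. Write $\Sigma_Y(X)$ for the conditional label covariance and $D$ for the sign change of coordinate $s$. Your observation $\Sigma_Y(FX)=D\Sigma_Y(X)D$ yields only $w_{t,s}(FX)=-\widetilde w_{t,s}(X)$, where $\widetilde w_{t,s}$ is the $y_s$-projection coefficient of $\widetilde\theta_t(g)\circ F$. So $w_{t,s}$ is even in $\widetilde x_{s-K}$ exactly when the $F$-invariant part $\tfrac12(\widetilde\theta_t(g)+\widetilde\theta_t(g)\circ F)$ has zero projection coefficient on $y_s$. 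The flip says nothing about that part. Its contribution to $w_{t,s}$ is odd in $\widetilde x_{s-K}$, and an odd function can correlate with $\widetilde x_{s-K}$. The description $w_{t,s}\mapsto -w_{t,s}\cdot(-1)$ presupposes that the flip acts monomial by monomial on $w_{t,s}$. It does not, because $w_{t,s}$ depends on the design through $\Sigma_Y(X)^{-1}$.

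The missing idea is the one the paper uses. The belief has degree at most two in the remaining fields, since only monomials of $g_t$ with an odd number of held-out factors survive the Gaussian integration. Conditional on the design, the matched labels form a centered Gaussian vector. Hence the belief's label-free and label-quadratic parts have zero conditional covariance with every label, because odd moments vanish. Its label-linear part $\sum_r P_r(X)y_r$, with design-affine $P_r$, is its own projection. Therefore $w_{t,s}=P_s(X)$ exactly, and $\Sigma_Y(X)$ cancels. Equivalently, Gaussian integration by parts gives $w_{t,s}=\EE[\partial_{y_s}\widetilde\theta_t(g)\mid X]$.

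By your locality argument, $P_s$ does not involve $\widetilde x_{s-K}$. Independence and centering then give $\EE[w_{t,s}\widetilde x_{s-K}^\top]=0$; this is the vanishing coefficient of the monomial $\widetilde x_{s-K}y_s$. With this step supplied, your conclusion via Lemma~\ref{lem:vector_bounded_bridge} with $C=K$, the Bayes-energy bound, and Theorem~\ref{thm:routing_floor} goes through as written.
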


\begin{proof}
Treat one head and unroll its fixed-transition state.  Its contribution at
time $t$ is a sum over write times of a current affine read multiplied by a
bilinear write; the transition changes only their deterministic
coefficients.  A width-$q$ convolved token at time $u$ can contain labels
from positions $u-q+1,\ldots,u$ and regressors from positions
$u-q+1,\ldots,u+1$, after clipping to available indices.

Hold out the fresh routed regressor $\widetilde x^\star:=\widetilde x_{t+1-K}$.  After
multiplying by $\widetilde x^\star$ and integrating it out to form the implicit
belief, Gaussian parity leaves two possible sources of an evidence atom
$\widetilde x_{s-K}y_s$.  First, if the read supplies $\widetilde x^\star$, both fields of
the atom must come from one write at some time $r$.  If that write contains
$y_s$, then
\[
  s-K\leq r-K\leq r-q,
\]
whereas its earliest available regressor is $\widetilde x_{r-q+1}$, so it cannot
also contain $\widetilde x_{s-K}$.  Second, if a write supplies $\widetilde x^\star$, one
remaining field comes from that write and one from the current read.  If the
read supplies $y_s$, then $s\in\{t-q+1,\ldots,t\}$.  If instead the write
supplies $y_s$, the read would have to supply $\widetilde x_{s-K}$; this is
impossible because
\[
  s-K\leq t-K<t-q+1.
\]
The fourth-moment terms and direct residual contribution are
deterministic and contain no evidence atom.

The implicit belief has degree at most two.  Conditional on the design,
its response-even part is orthogonal to the matched labels, while its
response-odd part is linear in those labels with affine design loadings.
Thus $\EE[w_{t,s}\widetilde x_{s-K}^\top]$ is
the coefficient of the matched monomial $\widetilde x_{s-K}y_s$, and it vanishes
outside $\mathcal J_t$, giving
\eqref{eq:vector_route_blind}.  Summing heads preserves the same set $\mathcal J_t$, and
the deterministic factor $1/c_t$ does not change its support.  Since
$|\mathcal J_t|\leq q\leq K$,
Lemma~\ref{lem:vector_bounded_bridge},
the Bayes energy bound following~\eqref{eq:routed_vector_bayes_energy}, and
Theorem~\ref{thm:routing_floor} prove
\eqref{eq:case2a_vector_short_conv_ssm_floor}.
\end{proof}

Taking $c_t\equiv1$ and $q=d_{\mathrm{conv}}$, together with
Corollary~\ref{cor:case2a_vector_one_layer_floor}, proves
Corollary~\ref{cor:case2a_floors}.

\subsubsection{Constructions and upper bounds for Proposition~\ref{prop:case2a_pair}}
\label{app:case2a_constructions}

The constructions first assemble evidence and address the required regressor,
then apply the Case~1 kernels for belief maintenance.
Proposition~\ref{prop:case2a_vector_attention_witness} implements this with
two attention layers and positional bias;
Proposition~\ref{prop:case2a_vector_ssm_witness} uses a one-layer SSM with a
width-$(K+1)$ convolution. We compute their regret using Lemma~\ref{lem:case2a_vector_kernel_risk}.

For the attention construction, we reuse the Case~1 window notation from
Lemma~\ref{lem:vector_uniform_body} with horizon $T_\rho$ and window
$1\leq L_0\leq T_\rho$. With $n$ indexing matched observations, write
\[
  \tau_n:=\min(n,L_0),
  \qquad N_j:=\sum_{n=1}^{T_\rho}\tau_n^j,\quad j=1,2.
\]
The optimal uniform kernel's cumulative belief-estimation error is then
$\mathcal E_{L_0,T_\rho}$ from~\eqref{eq:vector_equal_tap_cost}.

\begin{proposition}[Two-layer attention construction]
\label{prop:case2a_vector_attention_witness}
Let $L\geq K+1$ and put $L_0:=\min(L,T_\rho)$.  If the ambient
width satisfies $d_e\geq4p+2$, then for every $d,H\geq1$ with
$Hd\geq2p$, there is a two-layer linear-attention predictor
in $\mathcal F_{\mathrm{att}}(2,d,H,L)$ whose regret is
\begin{equation}
  \mathcal E_{L_0,T_\rho}-\mathcal E_{T_\rho}^{\mathrm{Bayes}}.
  \label{eq:case2a_vector_attention_exact}
\end{equation}
Let $g_{\mathrm{box}}$ denote the construction that weights every retained
evidence atom by $1/L$. When $L\leq T_\rho$, its regret is
\begin{align}
  \mathrm{Reg}_T(g_{\mathrm{box}})
  ={}&
  \Lambda\left(\frac{L}{3}-\frac12+\frac{1}{6L}\right)
  +V\left(\frac{T_\rho}{L}-\frac12+\frac{1}{2L}\right)
  -\mathcal E_{T_\rho}^{\mathrm{Bayes}}.
  \label{eq:case2a_vector_attention_mass}
\end{align}
When $L>T_\rho$, the same predictor satisfies
\begin{equation}
  \mathrm{Reg}_T(g_{\mathrm{box}})
  =\Lambda T_\rho\!\left(1-\frac{T_\rho+1}{L}\right)
  +\frac{T_\rho(T_\rho+1)}{6L^2}\bigl[\Lambda(2T_\rho+1)+3V\bigr]
  -\mathcal E_{T_\rho}^{\mathrm{Bayes}}.
  \label{eq:case2a_vector_attention_startup}
\end{equation}
\end{proposition}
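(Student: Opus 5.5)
The plan is to build the predictor in two stages. The first layer performs positional assembly and reduces Case~2.a to Case~1. The second layer reuses the uniform-kernel construction of Lemma~\ref{lem:vector_attention_projection}, and Lemma~\ref{lem:case2a_vector_kernel_risk} then turns its kernel into an exact regret.

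\textbf{Layer 1.} Embed the whitened token as $e_s^{(0)}=(\widetilde x_{s+1},\widetilde x_s,y_s,0,0,\dots)$, reserving two zero blocks of width $p$ for the lag-shifted regressors $\widetilde x_{s-K}$ and $\widetilde x_{s+1-K}$. These blocks account for the requirement $d_e\ge 4p+2$. The first layer uses linear attention with $2p$ value coordinates split across heads, which uses $Hd\ge2p$. Each head has zero query/key maps and takes its score from the positional bias alone: $p_K=1$, and $p_u=0$ otherwise.

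Then $o_t=W_Ve_{t-K}^{(0)}$ exactly. The value maps select $\widetilde x_{t-K}$ and $\widetilde x_{t+1-K}$ from token $t-K$, and the output maps write them into the reserved blocks. This requires $t-K\in J_t$, which holds because $L\ge K+1$. For $t\le K$ the source index is missing, and the blocks stay zero. After layer~1 the residual stream therefore contains the lag-shifted Case~1 token $(\widetilde x_{t+1-K},y_t,\widetilde x_{t-K})$, together with the raw coordinates, which are left untouched.

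\textbf{Layer 2.} Apply the distributed-head witness of Lemma~\ref{lem:vector_attention_projection} to the shifted blocks. The query reads $\widetilde x_{t+1-K}$, the key reads $\widetilde x_{s-K}$, and the value reads $y_s$. The score coefficients $a_i$ are taken from the optimal uniform kernel~\eqref{eq:vector_equal_tap_kernel} with horizon $T_\rho$ and window $L_0$, and positional biases are zero. The residual term must be cancelled: add a lag-zero bias $-1$ in one head, as in that lemma, or equivalently have $W_{\mathrm{pred}}$ read only the clean output coordinate, which is cleaner.

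Unmatched source tokens $s\le K$ have zero key blocks, so they contribute nothing. The output is therefore
$$g_{K+n}=\widetilde x_{n+1}^\top\bar K\sum_{j=\max(1,n-L+1)}^{n} r_j,$$
with $\tau_n=\min(n,L_0)$ atoms. This is a deterministic kernel-form predictor with no orthogonal residual, so Lemma~\ref{lem:case2a_vector_kernel_risk} applies directly. Summing~\eqref{eq:case2a_vector_kernel_risk} row by row reproduces exactly the directionwise quadratic minimized in Lemma~\ref{lem:vector_uniform_body}, with $(T,L)$ replaced by $(T_\rho,L_0)$. This gives~\eqref{eq:case2a_vector_attention_exact}.

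Two points need checking. The matched rows are $n=1,\dots,T_\rho$: the single extra matched prediction at $t=K$ has Bayes mean zero and output zero. The window counts atoms correctly: at time $K+n$, layer~2 sees tokens $s\in J_{K+n}$, and the matched ones have $s-K$ ranging over $\max(1,n-L+1+\dots)$, i.e.\ the last $\min(n,L)$ indices. The window is anchored at $t$, not at $n$. Since $L\ge K+1$ and there are only $n$ matched atoms, the count is $\min(n,L)$ up to boundary effects, which I must verify carefully; this indexing is the step most likely to hide an off-by-$K$ error.

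\textbf{Box kernel.} For $g_{\mathrm{box}}$, set $\bar K=I_p/L$, so $M_n=(\tau_n/L)I_p$. Lemma~\ref{lem:case2a_vector_kernel_risk} gives the cumulative error
$$\sum_n\Big[\Lambda\big(1-\tau_n/L\big)^2+V\tau_n/L^2\Big].$$
When $L\le T_\rho$, split the sum at $n=L$: the ramp contributes $\Lambda(L/3-1/2+1/(6L))$ and the variance contributes $V(T_\rho/L-1/2+1/(2L))$. This matches the Case~1 computation in Proposition~\ref{prop:vector_stationary_floor} and gives~\eqref{eq:case2a_vector_attention_mass}. When $L>T_\rho$, $\tau_n=n$ throughout. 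Expanding $\sum_n(1-n/L)^2$ and $\sum_n n/L^2$ with the standard power sums gives~\eqref{eq:case2a_vector_attention_startup}. The Bayes subtraction is $\mathcal E^{\mathrm{Bayes}}_{T_\rho}$ in every case.
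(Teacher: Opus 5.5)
Your proposal is correct and follows the paper's own proof. Layer one uses a positional-bias spike at lag $K$, with $Hd\ge 2p$ value coordinates, to copy $\widetilde x_{t+1-K}$ and $\widetilde x_{t-K}$ into reserved blocks. Layer two applies the distributed-head uniform kernel into a clean accumulator, and Lemma~\ref{lem:case2a_vector_kernel_risk} then reduces the regret to the directionwise quadratic and the two power sums.

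The indexing step you flag has no boundary effect. The matched positions in $J_{K+n}$ give $j=s-K\in\{\max(1,n-L+1),\ldots,n\}$, which is exactly $\min(n,L)=\min(n,L_0)$ atoms for every $n\le T_\rho$, as your own displayed sum already shows.
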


\begin{proof}
Use four $p$-dimensional residual blocks and two scalar coordinates, setting
any remaining coordinates to zero.  The initial embedding places the two
whitened regressors and the label in the first two vector blocks and one
scalar coordinate; the other two vector blocks and the accumulator coordinate
are reserved for the two attention layers.

In layer one, set every data-dependent score to zero and spike the relative
positional bias at offset $K$.  Partition the $2p$ coordinates of the two
source vectors among the $H$ heads, with at most $d$ coordinates assigned to
each head.  This is possible because $Hd\geq2p$.  The value map of each
head extracts its assigned coordinates from $z_{s-K}$, and its output map
writes them into the corresponding reserved residual coordinates at position
$s$.  Summing the head outputs therefore copies both
$\widetilde x_{s+1-K}$ and $\widetilde x_{s-K}$ exactly.  Thus the first layer supplies the
next routed regressor and the regressor paired with $y_s$.

For layer two, partition the $p$ eigencoordinates among the heads, again with
at most $d$ coordinates per head.  On the coordinates assigned to head $b$,
choose its query and key maps so that its score is
\[
  \sum_{i\in I_b}\gamma_i
    (U^\top\widetilde x_{n+1})_i(U^\top\widetilde x_j)_i.
\]
Its value map reads $Y_j$ into one head-output coordinate, and its output map
writes the resulting scalar into the shared accumulator coordinate.  Summing
over heads gives
\[
  g_{K+n}
  =\widetilde x_{n+1}^\top U
    \operatorname{diag}(\gamma_1,\ldots,\gamma_p)U^\top
    \sum_{j=n-\tau_n+1}^nr_j.
\]
The final linear readout selects this scalar accumulator.  The residual-width
condition accounts for the original two vector blocks, the two routed blocks,
the label, and the accumulator, while the headwise partitions respect the
per-head query, key, value, and pre-projection output width $d$.

In the eigenbasis $U$, Lemma~\ref{lem:case2a_vector_kernel_risk} separates
the objective over coordinates. For coordinate $i$ it is
\[
  \lambda_i T_\rho-2\lambda_i\gamma_iN_1
  +\gamma_i^2(\lambda_iN_2+v_iN_1).
\]
Optimizing $\gamma_i$ proves~\eqref{eq:case2a_vector_attention_exact}.  If
$L\leq T_\rho$, taking $\gamma_i=1/L$ and using
\[
  \sum_{n=1}^{T_\rho}(1-\tau_n/L)^2
  =\frac{L}{3}-\frac12+\frac{1}{6L},
  \qquad
  \sum_{n=1}^{T_\rho}\tau_n/L^2
  =\frac{T_\rho}{L}-\frac12+\frac{1}{2L}
\]
gives~\eqref{eq:case2a_vector_attention_mass}.  If $L>T_\rho$, then $\tau_n=n$,
and the two sums are
\[
  \sum_{n=1}^{T_\rho}(1-n/L)^2
  =T_\rho-\frac{T_\rho(T_\rho+1)}L
    +\frac{T_\rho(T_\rho+1)(2T_\rho+1)}{6L^2},
  \qquad
  \sum_{n=1}^{T_\rho}\frac{n}{L^2}=\frac{T_\rho(T_\rho+1)}{2L^2}.
\]
Substitution proves~\eqref{eq:case2a_vector_attention_startup}.
\end{proof}

Applying Proposition~\ref{prop:case2a_vector_attention_witness} with the
following window choice gives the attention upper bound in
Proposition~\ref{prop:case2a_pair}. Section~\ref{app:attention_context_floor}
supplies the matching lower bound when $K=O(\sqrt{T_\rho})$.

Recall from~\eqref{eq:vector_stationary_scale}--
\eqref{eq:vector_common_window} the scales $\zeta,\bar\zeta$ and the
nearest-integer common window $L^\star(T_\rho)$.
To ensure positional reach, if
$K+1\leq T_\rho$, choose
\[
  L_0=\min\{T_\rho,\max\{K+1,L^\star(T_\rho)\}\},
  \qquad L=L_0.
\]
When $K+1\leq L^\star(T_\rho)$ and $L^\star(T_\rho)=o(T_\rho)$,
\[
  \mathrm{Reg}_T(g_{\mathrm{box}})
  =\frac{2}{\sqrt3}\bar\zeta\sqrt{T_\rho}+o(\sqrt{T_\rho}).
\]
If $L^\star(T_\rho)<K+1\leq T_\rho$, the same construction has
\[
  \mathrm{Reg}_T(g_{\mathrm{box}})
  \leq
  \frac{\Lambda(K+1)}{3}
  +\frac{VT_\rho}{K+1}-\mathcal E_{T_\rho}^{\mathrm{Bayes}}
  +O(1).
\]
Finally, if $K+1>T_\rho$, choose $L=K+1$.
Equation~\eqref{eq:case2a_vector_attention_startup} gives $O(K)$ regret,
completing the attention upper bound in Proposition~\ref{prop:case2a_pair}.

\begin{proposition}[One-layer SSM construction]
\label{prop:case2a_vector_ssm_witness}
If $Hd\geq p$ and the ambient width satisfies $d_e\geq2p+2$,
the one-layer class
$\mathcal F_{\mathrm{SSM}}
(1,d,H,K{+}1)$ contains a predictor \(g_{\exp}\) satisfying
\begin{equation}
  \mathrm{Reg}_T(g_{\exp})=\zeta\sqrt{T_\rho}+o(\sqrt{T_\rho}).
  \label{eq:case2a_vector_ssm_witness}
\end{equation}
The remainder is for fixed $p,\widetilde\Sigma_\theta,\sigma^2$.
\end{proposition}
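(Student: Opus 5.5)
The plan is to build an explicit one-layer SSM whose width-$(K{+}1)$ convolution assembles the lagged evidence atom, and whose fixed diagonal transition then realizes the modewise exponential kernel \eqref{eq:vector_modewise_exponential} over the matched index $n=t-K$. The regret then reduces to the Case~1 calculation of Proposition~\ref{prop:vector_stationary_floor} with horizon $T_\rho$.

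\emph{Step 1: embedding.} The embedding places $(\widetilde x_{s+1},\widetilde x_s,y_s)$ in the first $2p+1$ residual coordinates and reserves one clean prediction coordinate, so $d_e\geq2p+2$ suffices; whitening is absorbed into $W_{\mathrm{emb}}$ as in Lemma~\ref{lem:vector_ssm_projection}.

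\emph{Step 2: convolution.} Each head $b$ is assigned a set $I_b$ of at most $d$ eigendirections, which is possible because $Hd\geq p$. The convolution coefficients are chosen coordinatewise, which the $\odot$ form permits, so that the convolved token $\widetilde e_t$ carries the label $y_t$ and the $\widetilde x_s$-block of $e_{t-K}$ (tap $j=K$), i.e.\ $\widetilde x_{t-K}$, together with the $\widetilde x_{s+1}$-block of $e_{t-K}$, i.e.\ $\widetilde x_{t+1-K}$. Taps $j=0$ on the label coordinate and $j=K$ on the two regressor blocks realize this without collision, since each residual coordinate gets its own tap vector. For $t\leq K$, zero padding makes the lagged blocks vanish, and unmatched steps have $y_t=0$, so those writes are zero.

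\emph{Step 3: write, transition, read.} For $i\in I_b$, the write is $u_{t,i}=(u_i^\top\widetilde x_{t-K})\,y_t$, which is bilinear in $\widetilde e_t$ through $W_K,W_V$. The transition is fixed and diagonal, $\Phi=\operatorname{diag}(\alpha_i)$, with $\alpha_i$ defined by $(1+\alpha_i)/(1-\alpha_i)=2\sqrt{v_iT_\rho/\lambda_i}$. The read selector is $(1-\alpha_i)\,u_i^\top\widetilde x_{t+1-K}$. Summing the reads into the clean coordinate gives, with $n=t-K$,
\[
g_{K+n}=\sum_i(1-\alpha_i)(u_i^\top\widetilde x_{n+1})\sum_{j\le n}\alpha_i^{n-j}(u_i^\top\widetilde x_j)Y_j=\widetilde x_{n+1}^\top\sum_{j\le n}K^{\exp}_{n-j}r_j.
\]
This is exactly a deterministic kernel-form predictor applied to the correct addressed regressor, so $g^\perp=0$.

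\emph{Step 4: regret.} Lemma~\ref{lem:case2a_vector_kernel_risk} shows that the regret equals the Case~1 expression \eqref{eq:vector_stationary_regret} with $T$ replaced by $T_\rho$ and the stationary kernel $K_u=K_u^{\exp}$. The matched prediction at $n=0$ and any unmatched prediction contribute zero regret, because both the Bayes mean and the construction output vanish there. Proposition~\ref{prop:vector_stationary_floor}, via the upper-bound half of Lemma~\ref{lem:vector_stationary_prefix} applied in each eigendirection, then gives $\sum_i\sqrt{\lambda_iv_iT_\rho}(1+o(1))$. The Bayes subtraction \eqref{eq:routed_vector_bayes_baseline} is $O(\log T_\rho)=o(\sqrt{T_\rho})$, and together these yield \eqref{eq:case2a_vector_ssm_witness}.

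The main obstacle is Step 2: checking that a single convolution output can simultaneously supply the current label and both lagged regressor blocks without any cross terms. This follows because the residual stream keeps these fields in disjoint coordinates and the taps act coordinatewise. Step 2 also has to handle boundary indices, namely $t+1-K\le0$ and the first matched step, where one must confirm that the prediction coincides with the kernel predictor of Lemma~\ref{lem:case2a_vector_kernel_risk} row by row.
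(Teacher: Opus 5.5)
Your proposal is correct and takes essentially the same route as the paper. A width-$(K{+}1)$ coordinatewise convolution exposes $(\widetilde x_{t+1-K},\widetilde x_{t-K},y_t)$, a fixed diagonal transition with per-eigendirection decays $\alpha_i$ realizes the modewise exponential kernel, and Lemma~\ref{lem:case2a_vector_kernel_risk} together with the geometric-sum bound of Lemma~\ref{lem:vector_stationary_prefix} gives $\zeta\sqrt{T_\rho}+o(\sqrt{T_\rho})$ after the $O(\log T_\rho)$ Bayes subtraction. The only difference is cosmetic: you place the factor $1-\alpha_i$ in the read selector, whereas the paper puts it in the write.
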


\begin{proof}
The initial embedding exposes the two whitened-regressor blocks and the label in
$2p+1$ residual coordinates, leaving one clean output coordinate. At time
$s=K+j$, the width-$(K+1)$ coordinatewise convolution reads the
next-regressor block of $z_{s-K}$ as $\widetilde x_{j+1}$, its current-regressor block
as $\widetilde x_j$, and the current label as $y_s=Y_j$. Distribute the $p$
eigencoordinates among the $Hd$ available state coordinates. In direction $i$,
the bilinear write forms $(1-\alpha_i)(U^\top\widetilde x_j)_iY_j$,
the fixed transition uses a scalar decay $\alpha_i$, and the ordinary
token--state read multiplies the state by $(U^\top\widetilde x_{j+1})_i$.
This gives the whitened kernel
\[
  \begin{gathered}
    K_u=U\operatorname{diag}(\kappa_{u,1},\ldots,\kappa_{u,p})U^\top,\\
    \kappa_{u,i}=(1-\alpha_i)\alpha_i^u,
    \qquad
    \frac{1+\alpha_i}{1-\alpha_i}
    =2\sqrt{\frac{v_iT_\rho}{\lambda_i}}.
  \end{gathered}
\]
Lemma~\ref{lem:case2a_vector_kernel_risk} and the geometric-sum calculation
of Lemma~\ref{lem:vector_stationary_prefix}, applied coordinatewise, give
$\sqrt{\lambda_iv_iT_\rho}+o(\sqrt{T_\rho})$ before the common
$\mathcal E_{T_\rho}^{\mathrm{Bayes}}=O(\log T_\rho)$ subtraction.  Summing the fixed number of coordinates
proves the claim.
\end{proof}

This proves the SSM upper bound in Proposition~\ref{prop:case2a_pair}.
Section~\ref{app:case2a_ssm_full_class} supplies the matching lower bound and
shows that $Hd\geq p$ is necessary for sublinear regret throughout the
one-layer class, regardless of convolution width.

\subsubsection{Attention lower bounds for Propositions~\ref{prop:case2a_pair} and~\ref{prop:attention_context_requirement}}
\label{app:attention_context_floor}

Lemma~\ref{lem:attention_context_floor} derives a regret lower bound from the
limited number of matched observations available to fixed-depth attention.
It proves Proposition~\ref{prop:attention_context_requirement}
and, together with Proposition~\ref{prop:case2a_vector_attention_witness}, the
attention claims in Proposition~\ref{prop:case2a_pair}. The argument depends
only on which raw tokens can influence a prediction, so it applies to both
linear and softmax attention and their count-normalized extensions.

Recall the Bayes estimation error from~\eqref{eq:vector_bayes_baseline}:
\begin{equation}
  e_n^{\mathrm{Bayes}}
  =\EE\operatorname{tr}\left(
    \widetilde\Sigma_\theta^{-1}+\sigma^{-2}\sum_{j=1}^n\widetilde x_j\widetilde x_j^\top
  \right)^{-1},
  \label{eq:case2a_vector_posterior_risk}
\end{equation}
with an empty sum at $n=0$. This is the posterior error after $n$ matched
observations.

\begin{lemma}[Finite-context attention floor]
\label{lem:attention_context_floor}
Fix $1\leq K<T$ and recall $T_\rho=T-K$. Across $D$ layers, a prediction
depends on at most $L_D:=1+D(L-1)$ raw tokens.
For every ambient width $d_e\geq1$, every $D,d,H,L\geq1$, every admissible
count-normalization schedule $\mathbf c$, and every
$g\in\mathcal F_{\mathrm{att}}^{\mathrm{sm,cal}}
(D,d,H,L;\mathbf c)$,
\begin{equation}
  \mathrm{Reg}_T(g)
  \geq
  \sum_{j=L_D+1}^{T_\rho}
  \bigl[e_{L_D}^{\mathrm{Bayes}}-e_j^{\mathrm{Bayes}}\bigr].
  \label{eq:case2a_vector_context_exact}
\end{equation}
If $T_\rho\geq4(L_D+\sigma^2/\min_i\lambda_i+p+2)$, then
\begin{equation}
  \mathrm{Reg}_T(g)
  \geq
  \frac{p\sigma^2T_\rho}{4(L_D+\sigma^2/\min_i\lambda_i)}
  \geq
  \frac{p\sigma^2}{4(1+\sigma^2/\min_i\lambda_i)}\frac{T_\rho}{L_D}.
  \label{eq:case2a_vector_context_clean}
\end{equation}
Both bounds are uniform in $d_e,d,H$ and all attention parameters.  Taking
$\mathbf c=\mathbf1$ gives the same statement for the softmax-enabled class
$\mathcal F_{\mathrm{att}}^{\mathrm{sm}}(D,d,H,L)$, and hence for its
all-linear subclass $\mathcal F_{\mathrm{att}}(D,d,H,L)$.
Consequently, any fixed-depth attention sequence with
$O(\sqrt{T_\rho})$ regret must have $L=\Omega(\sqrt{T_\rho})$.
\end{lemma}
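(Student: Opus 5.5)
The argument is a receptive-field (information) bound, and it never uses the internal form of the attention layers. First I check by induction on depth that the layer-$r$ residual $e_t^{(r)}$ is a measurable function of the raw tokens $z_{t-r(L-1)},\ldots,z_t$. Each layer reads only positions in $J_t$, which lie at most $L-1$ steps back. This holds whether the weights are linear, softmax, or count-normalized: the normalizer $c_t^{(r)}$ is deterministic, and softmax depends only on scores inside the window. So $g_t$ is $\sigma(z_{t-L_D+1:t})$-measurable.

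Next I count the matched evidence inside that window. With $\rho(s)=s-K$, the labels visible there are $y_s$ for $s\in\{t-L_D+1,\ldots,t\}$ with $s>K$, so at most $L_D$ matched labels. The routed regressors $x_{s-K}$ may lie outside the window, but that can only reduce information. The target is $y_{t+1}=\widetilde x_{t+1-K}^\top\widetilde\theta+\epsilon_{t+1}$.

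The comparator I use is the Bayes predictor given the window. Any window-measurable predictor has loss at least that of $\EE[y_{t+1}\mid z_{t-L_D+1:t}]$. I then enlarge the conditioning to include every regressor $x_1,\ldots,x_{t+1}$ except the held-out $\widetilde x_{t+1-K}$, together with those window labels. Enlarging the information only lowers the loss, so this still gives a valid lower bound on the loss.

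This enlarged problem is a Gaussian regression with at most $L_D$ matched pairs $(\widetilde x_j,Y_j)$, plus a fresh query regressor. One point to check is that the held-out regressor also appears in window tokens. Gaussian projection still gives excess loss over noise equal to the posterior trace for the informative pairs only. The held-out regressor's own occurrences carry no label, so by exogenous routing and independence they contribute nothing about $\widetilde\theta$ beyond $\widetilde x_{t+1-K}$ itself. Hence $\ell_t(g)\ge \sigma^2+e^{\mathrm{Bayes}}_{m}$, where $m\le L_D$ and $e^{\mathrm{Bayes}}$ is monotone decreasing. So $\ell_t(g)\ge\sigma^2+e^{\mathrm{Bayes}}_{L_D}$.

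Meanwhile the full Bayes loss at the step with $n=t-K$ matched observations is $\sigma^2+e_n^{\mathrm{Bayes}}$. This is the posterior Pythagoras used in Lemma~\ref{lem:case2a_vector_kernel_risk}. Subtracting, and keeping only rows $n=j\ge L_D+1$ (the other rows are nonnegative), gives \eqref{eq:case2a_vector_context_exact}.

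For the clean bound \eqref{eq:case2a_vector_context_clean}, I bound the two error terms from opposite sides:
\begin{itemize}
\item For the lower estimate I use $e_m^{\mathrm{Bayes}}\ge p\sigma^2/(m+\sigma^2/\min_i\lambda_i)$. This follows from Jensen on the convex map $A\mapsto\operatorname{tr}A^{-1}$, with $\EE\sum\widetilde x\widetilde x^\top=mI$ and $\widetilde\Sigma_\theta^{-1}\preceq(\min\lambda)^{-1}I$.
\item For the upper estimate I use $e_j^{\mathrm{Bayes}}\le p\sigma^2/(j-p-1)$, the inverse-Wishart identity already used in the paper.
\end{itemize}
For $j\ge T_\rho/2$ and the stated condition on $T_\rho$, the gap is at least about half of $p\sigma^2/(L_D+\sigma^2/\lambda_{\min})$. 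There are at least $T_\rho/2$ such $j$, which gives the constant $1/4$.

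The second inequality in \eqref{eq:case2a_vector_context_clean} uses $L_D\ge1$. The consequence is then immediate. For fixed $D$ we have $L_D\le DL$, so regret $O(\sqrt{T_\rho})$ forces $T_\rho/(DL)=O(\sqrt{T_\rho})$, that is, $L=\Omega(\sqrt{T_\rho})$.

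The main obstacle is the conditioning step. I need to justify rigorously that the window-conditioned optimum is no better than Bayes with $L_D$ pairs, despite the held-out regressor's repeated occurrences and the partial pairs at the window edge. I would handle this by the enlargement argument above, reducing to the routed-geometry independence established in Appendix~\ref{app:vector_routing_geometry}.
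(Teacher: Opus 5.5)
Your proposal is correct and follows essentially the same route as the paper. The shared steps are layerwise receptive-field induction, then a genie field that reveals the regressors but keeps only the at most $L_D$ window labels, so that Gaussian conjugacy gives excess loss $e^{\mathrm{Bayes}}_{\min(n,L_D)}$ against the full-history $e^{\mathrm{Bayes}}_n$, and finally operator Jensen plus the inverse-Wishart moment summed over at least $T_\rho/2$ indices (where the gap is exactly, not just about, half).

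Two small points. First, revealing the held-out regressor as well, as the paper does, removes the obstacle you flag, because that regressor is independent of every observed label. Second, the final consequence also needs the trivial case $T_\rho<4(L_D+\sigma^2/\min_i\lambda_i+p+2)$, in which $L_D=\Omega(T_\rho)$ holds directly.
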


\begin{proof}
Let $I_{r,t}:=[\max\{1,t-r(L-1)\},t]$.  Induction over layers shows that the
representation at position $t$ after layer $r$ is measurable with respect to
$\sigma(z_j:j\in I_{r,t})$.  Indeed, layer $r$ accesses only the $L$ layer-
$(r-1)$ representations ending at $t$, and the union of their raw intervals
is $I_{r,t}$.  Scores, either attention operator, deterministic count normalization,
values, residual updates, and multi-head recombination are measurable
functions of those representations.  Hence the final output depends on at
most $L_D$ raw token positions.  Cached evaluation does not enlarge this
computation graph: a cached key or value is frozen when its position is
computed.

Consider matched step $n$, corresponding to prediction time $t=K+n$. Enlarge
the visible field by revealing every regressor in the entire history, while
retaining only the labels from $I_{D,t}$.  This genie field contains all
variables on which the attention output depends.  At most
$\min(n,L_D)$ of its labels are matched, and their routed regressors are
distinct standard Gaussian vectors.  All other revealed regressors are
independent of $\widetilde\theta$ and contribute no likelihood factor.  Gaussian
conjugacy therefore makes the minimum coefficient estimation error under the
genie field $e_{\min(n,L_D)}^{\mathrm{Bayes}}$, whereas the full-history
Bayes estimation error is $e_n^{\mathrm{Bayes}}$.  The next routed
regressor is fresh and isotropic, so this difference lower-bounds the excess
prediction loss. Conditional-expectation Pythagoras gives
\[
  \ell_{K+n}(g)-\ell_{K+n}(g^{\mathrm{Bayes}})
  \geq
  e_{\min(n,L_D)}^{\mathrm{Bayes}}-e_n^{\mathrm{Bayes}}.
\]
The terms with $n\leq L_D$ vanish.  Summing the remaining rows proves
\eqref{eq:case2a_vector_context_exact}.

For the explicit rate, operator Jensen and the Wishart inverse moment give
\begin{align}
  e_{L_D}^{\mathrm{Bayes}}
  &\geq
  \operatorname{tr}(\widetilde\Sigma_\theta^{-1}+L_D\sigma^{-2}I_p)^{-1}
  \geq\frac{p\sigma^2}{L_D+\sigma^2/\min_i\lambda_i},
  \label{eq:case2a_context_jensen}\\
  e_j^{\mathrm{Bayes}}
  &\leq
  \sigma^2\EE\operatorname{tr}
  \left(\sum_{i=1}^j\widetilde x_i\widetilde x_i^\top\right)^{-1}
  =\frac{p\sigma^2}{j-p-1},
  \qquad j>p+1.
  \label{eq:case2a_context_wishart}
\end{align}
Thus every
$j\geq\lceil2(L_D+\sigma^2/\min_i\lambda_i)+p+1\rceil$ contributes at least
$p\sigma^2/[2(L_D+\sigma^2/\min_i\lambda_i)]$ to
\eqref{eq:case2a_vector_context_exact}.  The stated condition on $T_\rho$ leaves
at least $T_\rho/2$ such indices, proving the first inequality in
\eqref{eq:case2a_vector_context_clean}; the second uses
$L_D+\sigma^2/\min_i\lambda_i\leq(1+\sigma^2/\min_i\lambda_i)L_D$.  Finally, if the displayed condition fails, then
$L_D=\Omega(T_\rho)$ already.  If it holds, the clean bound and
$O(\sqrt{T_\rho})$ regret imply $L_D=\Omega(\sqrt{T_\rho})$.  At fixed $D$,
$L_D=1+D(L-1)=\Theta(L)$, which proves the context requirement.
\end{proof}

Taking $\mathbf c=\mathbf1$ proves
Proposition~\ref{prop:attention_context_requirement}.
When $K=O(\sqrt{T_\rho})$, the window $L=\widetilde L^\star$ in
Proposition~\ref{prop:case2a_pair} satisfies $L=\Theta(\sqrt{T_\rho})$.
At depth two, \eqref{eq:case2a_vector_context_clean} therefore gives
$\Omega(\sqrt{T_\rho})$ regret. Together with the upper bound in
\S\ref{app:case2a_constructions}, this completes the attention claims in
Proposition~\ref{prop:case2a_pair}.

\subsubsection{SSM lower bounds for Proposition~\ref{prop:case2a_pair}}
\label{app:case2a_ssm_full_class}

The construction in Proposition~\ref{prop:case2a_vector_ssm_witness} gives
the upper bound.  For the lower bounds,
Lemma~\ref{lem:case2a_vector_full_ssm_projection} reduces the implicit belief
to a stationary kernel with a boundary term for the first atom.
Lemma~\ref{lem:case2a_full_ssm_low_state} proves linear regret when $Hd<p$,
at any convolution width.  Proposition~\ref{prop:case2a_ssm_general_d} combines
these lemmas with the scalar stationary-kernel bound of
Lemma~\ref{lem:vector_stationary_prefix} to prove both SSM claims in
Proposition~\ref{prop:case2a_pair}. We conclude with the memory comparison.

\begin{lemma}[SSM evidence projection with a boundary term]
\label{lem:case2a_vector_full_ssm_projection}
Fix $K,d_e,d,H\geq1$, and let
\[
  g\in\mathcal F_{\mathrm{SSM}}
  (1,d,H,K{+}1)
\]
use arbitrary learned head-specific convolution taps and arbitrary fixed
transition matrices.  There are deterministic matrices
$W_{\mathrm{read}}\in\RR^{p\times Hd}$,
$\Phi\in\RR^{Hd\times Hd}$, and
$W_{\mathrm{write}}\in\RR^{Hd\times p}$, together with kernel corrections
$\Delta K_0,\ldots,\Delta K_K,\Delta K_0^\partial,\ldots,\Delta K_K^\partial\in\RR^{p\times p}$,
such that at matched step $n$ its implicit belief satisfies
\begin{equation}
  \begin{aligned}
  \widetilde\theta_{K+n}(g)
  ={}&\sum_{j=1}^nW_{\mathrm{read}}\Phi^{n-j}W_{\mathrm{write}}r_j
  +\sum_{j=\max\{2,n-K\}}^n\Delta K_{n-j}r_j\\
  &+\mathbf 1_{\{n\leq K+1\}}\Delta K_{n-1}^\partial r_1+\eta_n,
  \end{aligned}
  \label{eq:case2a_full_ssm_tail_factorization}
\end{equation}
where
$\EE[\eta_n\widetilde\theta^\top]=0$ and
$\EE[\eta_n r_j^\top]=0$ for $1\leq j\leq n$.  Thus every routed kernel weight at
lag at least $K+1$ has range in the common subspace
$\operatorname{col}(W_{\mathrm{read}})$, whose dimension is at most $Hd$.
Equivalently, the same belief has the coarser decomposition
\begin{equation}
  \widetilde\theta_{K+n}(g)
  =A_n r_1+
   \sum_{j=2}^nK_{n-j}r_j+\eta_n,
  \label{eq:case2a_vector_full_ssm_projection}
\end{equation}
where $A_n\in\RR^{p\times p}$ may vary with $n$, whereas the matrices
$K_u\in\RR^{p\times p}$ depend only on the lag $u=n-j$.
The assertion is uniform in the ambient and state dimensions, parameter
magnitudes, and spectra of the fixed transitions.
\end{lemma}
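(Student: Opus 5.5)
We follow the template of Lemmas~\ref{lem:vector_ssm_projection} and~\ref{lem:case2a_vector_short_conv_bridge}: unroll the fixed-transition state, read off which monomials can carry each routed evidence atom $r_j=\widetilde x_jY_j$, and sort them by where the atom's two factors are supplied. Concatenate heads into a block-diagonal $\Phi$ and write $h_t=\sum_{r\le t}\Phi^{t-r}u_r$. Each write $u_r$ is bilinear in the convolved token $\widetilde e_r=\sum_{m=0}^{K}C_m\odot e^{(0)}_{r-m}$. This token contains labels $y_{r-K},\dots,y_r$ and regressors $\widetilde x_{r-K},\dots,\widetilde x_{r+1}$. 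The prediction then has the form $g_t=r_0^\top\widetilde e_t+\widetilde e_t^{\top}(\cdot)h_t$, with the read affine in $\widetilde e_t$. Hold out $\widetilde x^\star=\widetilde x_{n+1}$ (with $t=K+n$). Multiply by $\widetilde x^\star$, integrate it out, and keep the degree-two part. By Gaussian parity, an atom $\widetilde x_jy_{K+j}$ with nonzero coefficient can arise in only three ways.
\begin{enumerate}[label=(\roman*)]
\item The read supplies $\widetilde x^\star$, and a single write at time $r$ supplies both $\widetilde x_j$ and $y_{K+j}$. This requires $r=K+j$ and $j\ge r-K$, which is possible exactly when the write's convolution taps at lag $K$ and lag $0$ combine. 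The coefficient is $W_{\mathrm{read}}\Phi^{t-r}W_{\mathrm{write}}$, with $W_{\mathrm{write}}$ the deterministic bilinear coefficient of $\widetilde x_{r-K}y_r$ and $t-r=n-j$.
\item A write supplies $\widetilde x^\star$ and one atom factor, while the read supplies the other factor. The read only reaches positions $\ge t-K$, so this is confined to lags $n-j\le K$.
\item Direct residual terms and fourth-moment terms. These are deterministic or lie in the current window.
\end{enumerate}

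Contributions of type (ii) and (iii), together with the parts of (i) whose coefficients deviate from the generic stationary form because of boundary clipping, live only at lags $\le K$. Collecting them gives the corrections $\Delta K_{n-j}$ for $j\ge\max\{2,n-K\}$. The only non-stationary case is the first atom $r_1$. Zero padding truncates the convolved tokens near the start, so the coefficient of $r_1$ can depend on $n$ for $n\le K+1$. We absorb this into $\Delta K^\partial_{n-1}$. The key check is that for $j\ge2$ every tap pattern producing $r_j$ is already available at the time it occurs. We will verify this by listing the tap pairs $(m,m')$ with $r-m=K+j$ and $r-m'=j$ (and, for the read, $t-m$), and noting that clipping affects only indices $\le 1$.

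All remaining projected channels define $\eta_n$. Its orthogonality to $\widetilde\theta$ and to each $r_j$ follows from Lemma~\ref{lem:vector_degree_two_projection}, using the sign-symmetry and cross-atom parity arguments, since every such monomial lies in $\mathcal Q_{2,t}$. At lags $\ge K+1$ only type (i) survives, and it factors through $W_{\mathrm{read}}$, so its range lies in $\operatorname{col}(W_{\mathrm{read}})$, of dimension at most $Hd$. For the coarser form~\eqref{eq:case2a_vector_full_ssm_projection}, set $K_u:=W_{\mathrm{read}}\Phi^uW_{\mathrm{write}}+\Delta K_u\mathbf 1\{u\le K\}$ and let $A_n$ be the total coefficient on $r_1$.

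The main obstacle is the case analysis in (i)--(ii): we must make sure that no mixed placement produces a non-lag-stationary coefficient for some $j\ge2$. The subtle case is $K=1$, where the current token also carries $\widetilde x^\star$. As in Lemma~\ref{lem:case2a_vector_one_layer_projection}, we would handle it by treating the $m_\star=3$ terms separately as deterministic moments that contribute nothing to atoms at $j\ge2$ beyond the lag-$\le K$ corrections. All bounds involve only the polynomial structure, so the result holds uniformly in widths, magnitudes, and spectra.
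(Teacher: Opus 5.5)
Your proposal is correct and takes essentially the same route as the paper. Both unroll the concatenated fixed-transition state, hold out $\widetilde x_{n+1}$, and split the parity-surviving terms by whether the read or a write supplies it; the kernel $W_{\mathrm{read}}\Phi^{n-j}W_{\mathrm{write}}$ then comes from the unique direct write of $\widetilde x_{s-K}y_s$, bypass corrections sit at lags at most $K$, and everything else goes to $\eta_n$ via the degree-two projection argument. Two small refinements. First, type (i) is never touched by padding, since the write for $r_j$ at time $K+j$ sees only $z_j,\dots,z_{K+j}$; the $r_1$ boundary arises only on the bypass path, from the missing next-regressor copy of $\widetilde x_1$ in the zero-padded $z_0$, so your tap listing must also include $r-m'=j-1$. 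Second, the separate $K=1$ worry is unnecessary, because the width-$(K+1)$ convolved read contains $\widetilde x^\star$ for every $K$.
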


\begin{proof}
It is enough to analyze one head and then sum the resulting projections.
Absorb the fixed whitening $x\mapsto\Sigma_x^{-1/2}x$
into the affine maps. For head $b$,
\[
  \widetilde e_s^{(b)}
  =\sum_{j=0}^{K}C_j^{(b)}\odot e_{s-j}^{(0)},
  \qquad
  u_s^{(b)}
  =(W_K^{(b)}\widetilde e_s^{(b)})
   \odot(W_V^{(b)}\widetilde e_s^{(b)}),
  \qquad
  h_t^{(b)}=\sum_{s\leq t}(\Phi^{(b)})^{t-s}u_s^{(b)}.
\]
The slice of the final linear map acting on this head turns its
token--state read into $q_t^{(b)\top}h_t^{(b)}$, where
$q_t^{(b)}$ is affine in $\widetilde e_t^{(b)}$.  Biases in the affine
factors are included by homogenizing with a constant coordinate.

\paragraph{Computing the implicit belief.}
No observed label uses the fresh regressor $\widetilde x_{n+1}$. After holding
out its two adjacent-token occurrences, expand, for $i,j,k\in\{1,\ldots,p\}$,
\begin{align*}
  q_t^{(b)}
  &=q_{t,0}^{(b)}+\sum_i \widetilde x_{n+1,i} q_{t,i}^{(b)},\\
  u_s^{(b)}
  &=u_{s,0}^{(b)}+\sum_j\widetilde x_{n+1,j} u_{s,j}^{(b)}
    +\sum_{j,k}\widetilde x_{n+1,j}\widetilde x_{n+1,k} u_{s,jk}^{(b)}.
\end{align*}
All displayed coefficients are free of $\widetilde x_{n+1}$.  The vectors
$q_{t,i}^{(b)}$ and $u_{s,jk}^{(b)}$ are deterministic: they are the
coefficients of $\widetilde x_{n+1}$ in an affine read and of
$\widetilde x_{n+1,j}\widetilde x_{n+1,k}$ in a product of two
affine write factors.  The remaining coefficients
$q_{t,0}^{(b)}$ and $u_{s,j}^{(b)}$ have degree at most one in the other
token fields, while $u_{s,0}^{(b)}$ has degree at most two.

Put $u:=t-s$.  Gaussian parity leaves only terms with one or three internal
fresh-regressor factors, giving
\begin{equation}
  \EE\!\left[
    \widetilde x_{n+1,i} q_t^{(b)\top}(\Phi^{(b)})^u u_s^{(b)}
    \mid\mathcal G_t^-
  \right]
  =q_{t,i}^{(b)\top}(\Phi^{(b)})^u u_{s,0}^{(b)}
   +q_{t,0}^{(b)\top}(\Phi^{(b)})^u u_{s,i}^{(b)}
   +c_{i,u}^{(b)},
  \label{eq:case2a_vector_ssm_contraction}
\end{equation}
where $c_{i,u}^{(b)}$ collects the three-factor terms and is deterministic
because their coefficients $q_{t,i}^{(b)}$ and $u_{s,jk}^{(b)}$ are
deterministic.
Thus every nonconstant term on the right of
\eqref{eq:case2a_vector_ssm_contraction} has total degree at most two in the
remaining design and response fields.

The proof of Lemma~\ref{lem:vector_degree_two_projection} now applies to the
matched pairs $(\widetilde x_j,Y_j)$, even though their two fields occupy
different raw tokens.  Partial injectivity makes the $\widetilde x_j$'s distinct.
The paired flip
$(\widetilde x_j,\epsilon_{K+j})\mapsto(-\widetilde x_j,-\epsilon_{K+j})$ fixes every other
routed evidence atom and shows that a cross-atom $\widetilde x_iY_j$, $i\ne j$, is orthogonal
to both $\widetilde\theta$ and the routed evidence space.  The global response flip treats
the zero- and two-response terms.  Consequently the projection of
\eqref{eq:case2a_vector_ssm_contraction} onto the routed evidence has
deterministic matrix-valued kernel weights, and every remaining term belongs to a
residual that is coordinatewise orthogonal to both $\widetilde\theta$ and that evidence.
Equivalently, after summing these terms into $\eta_n$,
$\EE[\eta_n\widetilde\theta^\top]=0$ and
$\EE[\eta_n r_j^\top]=0$ for every $j\leq n$.

\paragraph{Regrouping by label time.}
At write time $s$, a width-$(K+1)$ convolution can see regressors from
$s-K$ through $s+1$ and labels from $s-K$ through $s$, clipped to available
indices. Fixed coordinate mixing changes the matrix coefficient of a
monomial but not these time indices.

First consider the term in
\eqref{eq:case2a_vector_ssm_contraction} in which the read supplies the
fresh regressor and the $\widetilde x_{n+1}$-free write supplies a matched
pair. A label at time $r\leq s$ has its regressor available only if
$r-K\geq s-K$, forcing $r=s$. Hence the only possible direct routed write
is $\widetilde x_{s-K}y_s$. For head
$b$, let $W_{\mathrm{write}}^{(b)}\in\RR^{d\times p}$ be its effective
evidence-write map, and let
$W_{\mathrm{read}}^{(b)}\in\RR^{p\times d}$ collect the coefficients with
which the read supplies the fresh regressor.  The state-mediated kernel weight
on $r_j$ at routed lag $u=n-j$ is therefore
\[
  W_{\mathrm{read}}^{(b)}(\Phi^{(b)})^uW_{\mathrm{write}}^{(b)}.
\]

The other nonconstant term in
\eqref{eq:case2a_vector_ssm_contraction} takes one occurrence of $\widetilde x_{n+1}$
from a write at $s=t-u$ and one remaining field from the current read.  Such a write
exists only for $0\leq u\leq K$.  If the read supplies the label
$y_r=y_{t-v}$, then $0\leq v\leq K$, and the remaining write factor can be
the routed regressor $\widetilde x_{t-v-K}$ exactly when
\[
  0\leq v\leq u\leq K.
\]
Indeed, its regressor tap is $K+v-u$; when both adjacent-token occurrences
are available, the duplicate next-regressor path has tap $K+v-u+1$.
For fixed $v=t-r$, summing over these $u$ therefore gives a fixed bypass
matrix depending only on $v$.  If instead the read supplies the routed
regressor and the write supplies the label, the only possibility is
$v=u=0$, which adds another lag-zero bypass.  Thus all such bypasses act on
the $K+1$ most recent routed atoms.  Denote their head-$b$ matrices by
$\Delta K_0^{(b)},\ldots,\Delta K_K^{(b)}$.

The first routed regressor $\widetilde x_1$ lacks its duplicate next-regressor
occurrence because $e_0^{(0)}=W_{\mathrm{emb}}z_0=0$.  This can alter a
bypass correction while $r_1$ is
among the $K+1$ most recent atoms; denote its resulting bypass matrices by
$\Delta K_0^{\partial,(b)},\ldots,\Delta K_K^{\partial,(b)}$.  Its direct routed write is
unaffected, because it uses the current occurrence of $\widetilde x_1$ in $z_1$ at
tap $K$.  Consequently, once $n\geq K+2$, the kernel weight on $r_1$ is exactly
$W_{\mathrm{read}}^{(b)}(\Phi^{(b)})^{n-1}W_{\mathrm{write}}^{(b)}$.  These
cases exhaust the nonconstant terms in
\eqref{eq:case2a_vector_ssm_contraction}; the fourth-moment term
$c_{i,u}^{(b)}$ is deterministic and has zero projection onto routed evidence.

Finally, concatenate the heads:
\[
  W_{\mathrm{read}}
  :=[W_{\mathrm{read}}^{(1)}\ \cdots\ W_{\mathrm{read}}^{(H)}],
  \qquad
  \Phi:=\operatorname{diag}(\Phi^{(1)},\ldots,\Phi^{(H)}),
  \qquad
  W_{\mathrm{write}}
  :=\begin{bmatrix}
       W_{\mathrm{write}}^{(1)}\\[-1mm]
       \vdots\\[-1mm]
       W_{\mathrm{write}}^{(H)}
     \end{bmatrix}.
\]
Summing the bypass matrices and residuals over heads proves
\eqref{eq:case2a_full_ssm_tail_factorization}.  Defining
\[
  A_n:=W_{\mathrm{read}}\Phi^{n-1}W_{\mathrm{write}}
       +\mathbf 1_{\{n\leq K+1\}}\Delta K_{n-1}^\partial,
  \qquad
  K_u:=W_{\mathrm{read}}\Phi^uW_{\mathrm{write}}
       +\mathbf 1_{\{u\leq K\}}\Delta K_u
\]
then gives the coarser stationary-profile form
\eqref{eq:case2a_vector_full_ssm_projection}.
\end{proof}

The first evidence atom needs a separate kernel weight even for
$p=d=H=1$ and $K=1$. Choose convolved residual coordinates
\[
  \widetilde e_{s,1}:=(z_s)_1+(z_{s-1})_1,
  \qquad
  \widetilde e_{s,2}:=y_s,
\]
write $u_s=\widetilde e_{s,1}^2$, propagate with a nonzero scalar decay, and
read with $y_t h_t$.  The lag-zero kernel weight on the first routed atom can
then differ from the lag-zero kernel weight on every later routed atom because
$x_1$, unlike later regressors, has no occurrence in a preceding token:
position $0$ is zero-padded.

The same calculation of the implicit belief exposes a state-dimension bottleneck
that arbitrary convolution cannot remove: outside the at-most-$Hd$-
dimensional current-read image, matching an additional belief direction
through the remaining bilinear read--write term necessarily incurs label
noise.

\begin{lemma}[State-dimension floor at any convolution width]
\label{lem:case2a_full_ssm_low_state}
Fix $p$, $\widetilde\Sigma_\theta\succ0$, and $\sigma^2>0$.  For
$1\leq K<T$ and $d_{\mathrm{conv}},d_e,d,H\geq1$,
suppose $Hd<p$.  Every
$g\in\mathcal F_{\mathrm{SSM}}(1,d,H,d_{\mathrm{conv}})$ satisfies
\begin{equation}
  \mathrm{Reg}_T(g)
  \geq T_\rho\,
  \frac{\lambda_{\min}(\widetilde\Sigma_\theta)\sigma^2}
       {\sigma^2+2(p-1)\lambda_{\min}(\widetilde\Sigma_\theta)}
  -\mathcal E_{T_\rho}^{\mathrm{Bayes}}.
  \label{eq:case2a_full_ssm_low_state}
\end{equation}
Thus the class has $\Omega(T_\rho)$ regret, uniformly over all remaining
parameters, even when they depend on $T$.
\end{lemma}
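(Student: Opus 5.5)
The plan is to lower-bound the implicit belief error along a single direction that the state cannot reach through its ordinary read, and to show that any remaining access to that direction costs label noise comparable to the signal it recovers. Because $Hd<p$, the effective read map $W_{\mathrm{read}}$ (with the heads concatenated as in the proof of Lemma~\ref{lem:case2a_vector_full_ssm_projection}) has column space of dimension at most $Hd<p$. I therefore fix a unit vector $w\perp\operatorname{col}(W_{\mathrm{read}})$. By \eqref{eq:routed_vector_prediction_pythagoras} and posterior Pythagoras, it then suffices to show, on every matched row $n$,
\[
  \EE\bigl(w^\top\widetilde\theta_{K+n}(g)-w^\top\widetilde\theta\bigr)^2\;\geq\;\frac{\lambda_{\min}\sigma^2}{\sigma^2+2(p-1)\lambda_{\min}} .
\]
Summing over the $T_\rho$ rows and subtracting $\mathcal E_{T_\rho}^{\mathrm{Bayes}}$ then gives \eqref{eq:case2a_full_ssm_low_state}. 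This is because the full error dominates the $w$-component, and $\operatorname{Var}(w^\top\widetilde\theta)\ge\lambda_{\min}$.

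The first step reuses the contraction identity \eqref{eq:case2a_vector_ssm_contraction} from the proof of Lemma~\ref{lem:case2a_vector_full_ssm_projection}. Its derivation did not use the convolution width, so it holds for arbitrary $d_{\mathrm{conv}}$. The identity splits the implicit belief into three pieces: a read-supplied term $q_{t,i}^\top\Phi^uu_{s,0}$, whose range lies in $\operatorname{col}(W_{\mathrm{read}})$ and which is therefore annihilated by $w^\top$; a write-supplied bypass term $q_{t,0}^\top\Phi^uu_{s,i}$; and a deterministic fourth-moment constant. After projecting $w^\top\widetilde\theta_{K+n}(g)$ with Lemma~\ref{lem:vector_degree_two_projection} (in its routed form), only the bypass matters. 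Per state coordinate $k$, the bypass has the product form $\ell_k(\text{current convolved token})\cdot m_k(\text{past writes})$, where one factor is affine in the current convolved token and the other is a $\Phi$-weighted sum of affine write coefficients. I expand each factor and keep the response-odd, degree-two part. This gives
\[
  w^\top\widetilde\theta\text{-relevant part}=\sum_k\sum_{r}\alpha_{k,r}\,y_r\cdot\sum_j\beta_{k,j}^\top\widetilde x_j ,
\]
with deterministic coefficients $\alpha_{k,r},\beta_{k,j}$. Only the diagonal pairs $j=\rho(r)$ correlate with $\widetilde\theta$.

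The second step is a Cauchy--Schwarz comparison, in the spirit of Lemma~\ref{lem:vector_bounded_bridge}, between signal and energy. The signal is $m:=\EE[w^\top\widetilde\theta_{K+n}(g)\,w^\top\widetilde\theta]$, which collects only the aligned products $\alpha_{k,r}\beta_{k,\rho(r)}$. The energy must also count the cross products $\alpha_{k,r}\beta_{k,j}$ with $j\neq\rho(r)$, and these are forced by the rank-one (outer-product) structure of each coordinate. I would show that every cross term carries variance at least $\sigma^2$ from the noise in $y_r$. I would also show that the aligned terms, once the other $p-1$ regressor directions are integrated out via the Isserlis identity, contribute signal variance of at most a factor $1+2(p-1)\lambda_{\min}/\sigma^2$ relative to their noise floor. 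This would yield $\|w^\top\widetilde\theta(g)\|^2\ge m^2(\sigma^2+2(p-1)\lambda_{\min})/(\lambda_{\min}\sigma^2)\cdot\lambda_{\min}$, after which optimizing the scalar amplitude gives the displayed per-row bound.

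I expect the main obstacle to be this second step. Since the convolution width is arbitrary, the read factor can see many labels, so I cannot bound the number of aligned atoms as in Lemma~\ref{lem:case2a_vector_short_conv_bridge}. The argument must instead exploit the product structure. Within a single coordinate, a label weight $\alpha_{k,r}$ is shared across all regressors $j$, so averaging many aligned atoms necessarily produces on the order of (number of atoms)$^2$ cross atoms, whose noise cannot average out. Summing over the $Hd$ coordinates must not destroy this property. I would first handle a single coordinate by matrix Cauchy--Schwarz on the rank-one array $(\alpha_r\beta_j)$, in which the diagonal sum is at most its Frobenius mass. I would then extend to the sum over coordinates using orthogonality of distinct $(r,j)$ monomials, which follows from the paired sign flips.
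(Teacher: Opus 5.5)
Your outline has the same architecture as the paper's proof. You take a unit direction in the kernel of the deterministic fresh-regressor read coefficient (your $w\perp\operatorname{col}(W_{\mathrm{read}})$ is the paper's $a\in\ker A$). You reduce $w^\top\widetilde\theta_{K+n}(g)$ to the bypass $q_0^\top h_w$ with $h_w:=\sum_iw_ih_i$, plus a constant. You then extract the mixed response-odd bilinear form and trade matched signal against label noise. The gap is in the step you flag as the obstacle.

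Write the mixed part as $X^\top MY$ with $M=\sum_kM_k$ over state coordinates. Let $\mu\in\RR^p$ be the sum of its matched blocks, that is, the rows of $\widetilde x_{\rho(r)}$ in the column of $y_r$. The label-noise cost is $\sigma^2\|M\|_F^2=\sigma^2\|\sum_kM_k\|_F^2$, not $\sigma^2\sum_k\|M_k\|_F^2$. Every coordinate loads on the same monomials $\widetilde x_{j,i}y_r$, so orthogonality of distinct monomials does nothing to decouple coordinates. Your per-coordinate Cauchy--Schwarz therefore gives only $\|\mu\|_2\le\sum_k\|M_k\|_F$, which cannot be compared with $\|M\|_F$ once the $M_k$ partially cancel.

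The missing ingredient is a bound on the matched-diagonal functional of the \emph{aggregate} matrix. Write $v^\top\mu=\langle M,E_v\rangle_F$, where $E_v$ has $v$ in the rows of $\widetilde x_{\rho(r)}$ in column $r$. Because $\rho$ is injective, these columns are orthogonal, so $\|E_v\|_{\mathrm{op}}=\|v\|_2$. Trace duality then gives $\|\mu\|_2^2\le\operatorname{rank}(M)\,\|M\|_F^2$. Equivalently, you can keep your rank-one Cauchy--Schwarz but apply it to the Frobenius-orthogonal singular-value pieces of $M$ rather than to the per-coordinate pieces.

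Three related corrections follow.

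First, the source of $2(p-1)$ is a rank count, not an Isserlis integration over the other $p-1$ directions. With arbitrary $d_{\mathrm{conv}}$, both the current read $q_{0,k}$ and the write coefficient $h_{w,k}$ can contain labels and regressors. Each coordinate therefore contributes two rank-one mixed terms, not one: in the paper's notation $M=B^\top F+E^\top C$, so $\operatorname{rank}(M)\le2Hd\le2(p-1)$.

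Second, the quadratic-design fluctuation is simply dropped by Jensen:
$\EE(X^\top MY-w^\top\widetilde\theta)^2\ge(\mu-w)^\top\widetilde\Sigma_\theta(\mu-w)+\sigma^2\|M\|_F^2$.
Minimizing $(\mu-w)^\top\widetilde\Sigma_\theta(\mu-w)+\sigma^2\|\mu\|_2^2/(2Hd)$ over $\mu$ gives $w^\top(\widetilde\Sigma_\theta^{-1}+2Hd\,\sigma^{-2}I_p)^{-1}w$, and hence the stated constant. Your displayed energy inequality is not dimensionally consistent and should be replaced by this computation.

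Third, do not pass through the kernel-form projection of Lemma~\ref{lem:vector_degree_two_projection}. The cross atoms $\widetilde x_jy_r$ with $j\neq\rho(r)$ fall into its discarded remainder, and their noise is exactly what you must charge. Use only the global response flip, and the flip of all regressors and noises at fixed $\widetilde\theta$, to remove the response-even and label-linear parts, keeping $X^\top MY$ intact.
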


\begin{proof}
Fix a matched step $n$, put $t=K+n$, and hold out the fresh regressor
$\widetilde x_{n+1}$. Concatenate the head states into
$h_t\in\RR^{Hd}$ and write the scalar output as
$g_t^{\mathrm{dir}}+q_t^\top h_t$, absorbing the output maps into the affine current
read.  The direct residual $g_t^{\mathrm{dir}}$ is affine, so its deterministic
held-out-regressor coefficient is included in $c_i$ below.  The
fresh-regressor calculation in the proof of
Lemma~\ref{lem:case2a_vector_full_ssm_projection}, before its
regrouping-by-label-time step, uses only the linearity of convolution and
therefore applies at any convolution width.  Thus
\[
  q_t=q_0+A\widetilde x_{n+1},
  \qquad
  h_t=h_0+\sum_i\widetilde x_{n+1,i}h_i
            +\sum_{i,j}\widetilde x_{n+1,i}\widetilde x_{n+1,j}h_{ij},
\]
where $A\in\RR^{Hd\times p}$ and the $h_{ij}$ are deterministic,
$q_0$ and the $h_i$ are affine in the remaining observed fields, and
$h_0$ has degree at most two. The conditional expectation defining the implicit belief gives
\begin{equation}
  \widetilde\theta_{K+n,i}(g)=A_{:i}^\top h_0+q_0^\top h_i+c_i
  \label{eq:case2a_arbitrary_conv_contraction}
\end{equation}
for deterministic $c_i$.

Choose a possibly row-dependent unit vector $a\in\ker A$ and set
$h_a:=\sum_i a_i h_i$.
Such a vector exists because $Hd<p$.  Equation
\eqref{eq:case2a_arbitrary_conv_contraction} gives
\[
  a^\top\widetilde\theta_{K+n}(g)=q_0^\top h_a+\text{constant}.
\]
Let $X$ stack each distinct remaining regressor once and put
$Y:=(Y_1,\ldots,Y_n)^\top$.  Since $q_0$ and $h_a$ are affine, write
\[
  q_0=b+BX+CY,
  \qquad
  h_a=e+EX+FY.
\]
The response-odd mixed component of their product is $X^\top MY$, where
\[
  M=B^\top F+E^\top C,
  \qquad \operatorname{rank}(M)\leq2Hd.
\]
The global response flip makes every response-even term orthogonal to both
the target and the response-odd terms.  Flipping all regressors and matched
noises while holding $\widetilde\theta$ fixed then makes the remaining label-linear
term odd, while $X^\top MY$ and $a^\top\widetilde\theta$ remain invariant.  Hence
\begin{equation}
  \EE\bigl(a^\top\widetilde\theta_{K+n}(g)-a^\top\widetilde\theta\bigr)^2
  \geq
  \EE\bigl(X^\top MY-a^\top\widetilde\theta\bigr)^2.
  \label{eq:case2a_arbitrary_conv_mixed_reduction}
\end{equation}

Let $\mu\in\RR^p$ be the sum of the matched blocks of $M$, pairing the
block for $\widetilde x_j$ with the column for $Y_j$.  Since
$Y_j=\widetilde x_j^\top\widetilde\theta+\epsilon_{K+j}$, retaining the complete matrix $M$
gives
\begin{equation}
  \EE\bigl(X^\top MY-a^\top\widetilde\theta\bigr)^2
  \geq
  (\mu-a)^\top\widetilde\Sigma_\theta(\mu-a)+\sigma^2\|M\|_F^2.
  \label{eq:case2a_arbitrary_conv_signal_noise}
\end{equation}
Indeed, the independent label noise contributes exactly
$\sigma^2\|M\|_F^2$, while Jensen's inequality applied to the remaining
quadratic-design term gives the first term.

We next use the rank bound on $M$ to relate the matched coefficient $\mu$
to the noise cost. For $v\in\RR^p$, define $E_v$ with the same dimensions
as $M$: column $j$ contains $v$ in the rows corresponding to
$\widetilde x_j$ and zeros elsewhere. Its inner product with $M$ therefore
selects exactly the matched blocks, giving
\[
  \langle M,E_v\rangle_F=v^\top\mu.
\]
Since the matched regressors are distinct, these columns are orthogonal,
each with norm $\|v\|_2$. Hence $\|E_v\|_{\mathrm{op}}=\|v\|_2$.
A singular-value decomposition and $\operatorname{rank}(M)\leq2Hd$ now give
\[
  |v^\top\mu|
  \leq\sqrt{\operatorname{rank}(M)}\,\|M\|_F\|E_v\|_{\mathrm{op}}
  \leq\sqrt{2Hd}\,\|M\|_F\|v\|_2,
  \qquad
  \|\mu\|_2^2\leq2Hd\|M\|_F^2.
\]
Thus the noise term is at least $\sigma^2\|\mu\|_2^2/(2Hd)$.
Combining this with
\eqref{eq:case2a_arbitrary_conv_mixed_reduction}--
\eqref{eq:case2a_arbitrary_conv_signal_noise} and completing the square gives
\begin{align*}
  \EE\bigl(a^\top\widetilde\theta_{K+n}(g)-a^\top\widetilde\theta\bigr)^2
  &\geq
  \inf_{\mu\in\RR^p}
  \left\{(\mu-a)^\top\widetilde\Sigma_\theta(\mu-a)
  +\frac{\sigma^2}{2Hd}\|\mu\|_2^2\right\}\\
  &=a^\top\left(\widetilde\Sigma_\theta^{-1}
       +\frac{2Hd}{\sigma^2}I_p\right)^{-1}a\\
  &\geq
  \frac{\lambda_{\min}(\widetilde\Sigma_\theta)\sigma^2}
       {\sigma^2+2(p-1)\lambda_{\min}(\widetilde\Sigma_\theta)}.
\end{align*}
The coefficient estimation error dominates this scalar projection on every
matched step. Summing the routed-regressor and posterior Pythagorean identities
proves~\eqref{eq:case2a_full_ssm_low_state}.
The same proof allows wider bilinear write features linearly combined into
the state, and reads pairing affine token features with linear state
projections. These preserve the quadratic state and affine $q_t$, so
\eqref{eq:case2a_arbitrary_conv_contraction} and the rank bound $2Hd$ are
unchanged, regardless of the intermediate feature widths.
\end{proof}

We now combine the preceding lemmas with the Case~1 stationary-kernel bound
and the earlier construction to establish the SSM class regret.

\begin{proposition}[SSM class regret]
\label{prop:case2a_ssm_general_d}
Fix $p$, $\widetilde\Sigma_\theta\succ0$, and $\sigma^2>0$, and let $1\leq K<T$ with
$T_\rho=T-K\to\infty$; $K$ may depend on $T$.  For every
$d_e,d,H\geq1$,
\begin{equation}
  \mathrm{Reg}_T
  \bigl(\mathcal F_{\mathrm{SSM}}
  (1,d,H,K{+}1)\bigr)
  \geq
  \sqrt{T_\rho-1}\sum_{i=1}^p\sqrt{d_iv_i}
  -\mathcal E_{T_\rho}^{\mathrm{Bayes}}+o(\sqrt{T_\rho}).
  \label{eq:case2a_vector_full_ssm_lower}
\end{equation}
If in addition $Hd\geq p$ and $d_e\geq2p+2$, then
\begin{equation}
  \mathrm{Reg}_T
  \bigl(\mathcal F_{\mathrm{SSM}}
  (1,d,H,K{+}1)\bigr)
  \leq\zeta\sqrt{T_\rho}+o(\sqrt{T_\rho}),
  \label{eq:case2a_vector_full_ssm_upper}
\end{equation}
and hence the class regret is $\Theta(\sqrt{T_\rho})$.
The bounds establish the rate but need not have sharp constants.
If instead $Hd<p$, then for every
$d_{\mathrm{conv}}\geq1$,
\begin{equation}
  \mathrm{Reg}_T
  \bigl(\mathcal F_{\mathrm{SSM}}
  (1,d,H,d_{\mathrm{conv}})\bigr)
  =\Omega(T_\rho).
  \label{eq:case2a_vector_full_ssm_low_state_rate}
\end{equation}
\end{proposition}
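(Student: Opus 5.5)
The three claims split naturally. The upper bound \eqref{eq:case2a_vector_full_ssm_upper} needs no new work: Proposition~\ref{prop:case2a_vector_ssm_witness} already exhibits $g_{\exp}\in\mathcal F_{\mathrm{SSM}}(1,d,H,K{+}1)$ with regret $\zeta\sqrt{T_\rho}+o(\sqrt{T_\rho})$ whenever $Hd\ge p$ and $d_e\ge2p+2$. The low-state claim \eqref{eq:case2a_vector_full_ssm_low_state_rate} is Lemma~\ref{lem:case2a_full_ssm_low_state}. In that bound the per-step coefficient is a positive constant, and $\mathcal E_{T_\rho}^{\mathrm{Bayes}}=O(\log T_\rho)$ by \eqref{eq:routed_vector_bayes_baseline}, so the regret is $\Omega(T_\rho)$. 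The real work is the lower bound \eqref{eq:case2a_vector_full_ssm_lower}.

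For the lower bound, I would start from the coarse decomposition \eqref{eq:case2a_vector_full_ssm_projection} in Lemma~\ref{lem:case2a_vector_full_ssm_projection}. It gives $\widetilde\theta_{K+n}(g)=A_nr_1+\sum_{j=2}^nK_{n-j}r_j+\eta_n$. Here $\eta_n$ is orthogonal to $\widetilde\theta$ and to every $r_j$, the $K_u$ are stationary in the lag, and only the coefficient $A_n$ on the first atom is free to vary with $n$. Discarding $\eta_n$ and $g^\perp$ through \eqref{eq:routed_vector_prediction_pythagoras} and posterior Pythagoras reduces the regret, up to $-\mathcal E_{T_\rho}^{\mathrm{Bayes}}$, to the sum over $n$ of $\EE\|A_nr_1+\sum_{j\ge2}K_{n-j}r_j-\widetilde\theta\|^2$. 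Expanding as in Lemma~\ref{lem:case2a_vector_kernel_risk}, each row $n$ equals a bias term plus variance terms. The bias term is $\|(A_n+M'_n-I)\widetilde\Sigma_\theta^{1/2}\|_F^2$, where $M'_n=\sum_{u<n-1}K_u$. The variance terms are $\operatorname{tr}(A_n\Gamma A_n^\top)$ and $\sum_{u<n-1}\operatorname{tr}(K_u\Gamma K_u^\top)$.

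Next I eliminate the free matrix $A_n$ row by row. Minimizing over $A_n$ is a ridge problem: with $B:=I-M'_n$, it gives $\min_A\{\|(A-B)\widetilde\Sigma_\theta^{1/2}\|_F^2+\operatorname{tr}(A\Gamma A^\top)\}=\operatorname{tr}(B\,D_{\mathrm{floor}}B^\top)$. Here $D_{\mathrm{floor}}=\widetilde\Sigma_\theta\Gamma(\widetilde\Sigma_\theta+\Gamma)^{-1}$ has eigenvalues $d_i$, as in Lemma~\ref{lem:vector_width_thresholds}. After this step, row $n$ is a stationary-kernel objective in the lags $0,\dots,n-2$, with bias coefficient matrix $D_{\mathrm{floor}}$ in place of $\widetilde\Sigma_\theta$ and innovation covariance $\Gamma$. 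Since $D_{\mathrm{floor}}$ and $\Gamma$ are simultaneously diagonal in $U$, the argument of Lemma~\ref{lem:vector_stationary_spectral} discards the off-diagonal profiles. What remains in each eigendirection is the scalar problem of Lemma~\ref{lem:vector_stationary_prefix} with horizon $T_\rho-1$ (rows $n=2,\dots,T_\rho$) and coefficients $(d_i,v_i)$. That lemma gives $\sqrt{d_iv_i(T_\rho-1)}(1+o(1))$ uniformly over profiles, including $T$-dependent ones. Summing over the fixed number $p$ of directions yields \eqref{eq:case2a_vector_full_ssm_lower}.

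The main obstacle I expect is bookkeeping the boundary atom so that the reduction is honest. I must check that the variance from $r_1$ is not double-counted, that $r_1$'s innovation is orthogonal to the other atoms' innovations, and that replacing the true optimal $A_n$ by a free matrix only lowers the objective, which it does since it is a relaxation. Finally, the lower and upper bounds are both of order $\sqrt{T_\rho}$ because $d_i>0$, which gives $\Theta(\sqrt{T_\rho})$. The constants $\sum_i\sqrt{d_iv_i}$ and $\zeta$ differ, consistent with the remark that the constants need not be sharp.
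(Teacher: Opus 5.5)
Your proposal is correct and follows the paper's proof essentially step for step. You start from the coarse decomposition of Lemma~\ref{lem:case2a_vector_full_ssm_projection}, minimize exactly over the free first-atom weight $A_n$ to replace $\widetilde\Sigma_\theta$ by the matrix with eigenvalues $d_i$, drop row $n=1$, and reindex to a stationary-prefix problem at horizon $T_\rho-1$ solved by Lemma~\ref{lem:vector_stationary_prefix}. As in the paper, you take the upper bound from Proposition~\ref{prop:case2a_vector_ssm_witness} and the $Hd<p$ floor from Lemma~\ref{lem:case2a_full_ssm_low_state}.
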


\begin{proof}
Apply Lemma~\ref{lem:case2a_vector_full_ssm_projection} and discard the
orthogonal residual.  Let $M_n:=\sum_{u<n-1}K_u$.  Before Bayes subtraction,
the row estimation error involving the unrestricted kernel weight on the first atom is
\[
  \operatorname{tr}\!\left[
    (A_n+M_n-I_p)\widetilde\Sigma_\theta(A_n+M_n-I_p)^\top
  \right]
  +\operatorname{tr}(A_n\Gamma A_n^\top)
  +\sum_{u<n-1}\operatorname{tr}(K_u\Gamma K_u^\top).
\]
Its exact matrix minimum over $A_n$ is obtained at
\[
  A_n=-(M_n-I_p)\widetilde\Sigma_\theta(\widetilde\Sigma_\theta+\Gamma)^{-1}
\]
and, using the eigenbasis $U$ of $\widetilde\Sigma_\theta$, equals
\[
  \operatorname{tr}\!\left[
    (M_n-I_p)UD_dU^\top(M_n-I_p)^\top
  \right]
  +\sum_{u<n-1}\operatorname{tr}(K_u\Gamma K_u^\top),
  \qquad D_d:=\operatorname{diag}(d_1,\ldots,d_p).
\]
For $n=1$, $M_1=0$ and the minimized row is
$\operatorname{tr}(D_d)\geq0$, so we may discard it.  Reindexing the remaining
rows $n=2,\ldots,T_\rho$ by $n-1$ gives the stationary-prefix functional at
horizon $T_\rho-1$.
In this basis, the bias and variance matrices are $D_d$ and
$U^\top\Gamma U=\operatorname{diag}(v_1,\ldots,v_p)$.
Every off-diagonal entry of $U^\top K_uU$ therefore contributes only a
nonnegative square.  Discarding those entries leaves, in direction $i$, the
stationary-prefix functional with
bias coefficient $d_i$, variance coefficient $v_i$, and horizon $T_\rho-1$.
Lemma~\ref{lem:vector_stationary_prefix} yields
$\sqrt{d_iv_i(T_\rho-1)}+o(\sqrt{T_\rho})$, uniformly over that profile.  Summing the
fixed number of coordinates and subtracting the cumulative Bayes estimation error
$\mathcal E_{T_\rho}^{\mathrm{Bayes}}$ proves~\eqref{eq:case2a_vector_full_ssm_lower}.  The construction in
Proposition~\ref{prop:case2a_vector_ssm_witness} lies in the class whenever
$Hd\geq p$ and $d_e\geq2p+2$, and therefore
proves~\eqref{eq:case2a_vector_full_ssm_upper}.  Finally,
$\mathcal E_{T_\rho}^{\mathrm{Bayes}}=O(\log T_\rho)=o(\sqrt{T_\rho})$ and all $d_i,v_i$ are positive, which gives the
$\sqrt{T_\rho}$ rate.  When $Hd<p$,
Lemma~\ref{lem:case2a_full_ssm_low_state} gives
\eqref{eq:case2a_vector_full_ssm_low_state_rate}.
\end{proof}

This completes the SSM claims in Proposition~\ref{prop:case2a_pair}.
Convolution supplies positional reach, while the recurrent state must
still span at least $p$ dimensions.  With $d_e\geq2p+2$,
$d_{\mathrm{conv}}=K+1$ and $Hd\geq p$ suffice for $\Theta(\sqrt{T_\rho})$ regret.

\paragraph{Memory comparison.}
\label{app:case2a_memory_comparison}
For the SSM construction in Proposition~\ref{prop:case2a_vector_ssm_witness},
take $D=H=1$, $d=p$, and $d_e=2p+2$. Its learned parameters, recurrent
state, and convolution buffers together occupy $O(p^2+p(K+1))$ scalar
entries, hence $O(K+1)$ for fixed $p$.
By Lemma~\ref{lem:attention_context_floor}, any fixed-depth attention
achieving $O(\sqrt{T_\rho})$ regret requires $L=\Omega(\sqrt{T_\rho})$.
With standard per-position caching of token representations or keys and
values, this entails $\Omega(\sqrt{T_\rho})$ scalar entries.
Thus when $K=o(\sqrt{T_\rho})$, the SSM attains the same regret rate
with asymptotically less total storage.

\subsection{Case~2.b: content routing}
\label{app:case2b_capacity}

We first show how limited addressing capacity forces linear regret
(\S\ref{app:case2b_capacity_regret}), then develop geometric bounds on capacity
(\S\ref{app:case2b_capacity_geometry}) and apply them to the architecture
classes (\S\ref{app:case2b_capacity_architectures}).
Section~\ref{app:case2b_capacity_theorem} combines these ingredients to prove
Theorem~\ref{thm:case2b_floors}. Section~\ref{app:case2b_vector_witnesses}
then gives the attention and selective-SSM constructions for
Proposition~\ref{prop:case2b_pair}.

\subsubsection{Addressing capacity and regret}
\label{app:case2b_capacity_regret}

We first establish how addressing capacity controls regret. To do so, we
isolate each candidate regressor's component along the Bayes-belief direction.

We express the whitened variables from \eqref{eq:vector_whitening}
in the eigenbasis $U$ of the whitened prior covariance
$\widetilde\Sigma_\theta$.
Fix $p\in\mathbb N$, $\Sigma_x\succ0$,
$\Sigma_\theta\succ0$, and $\sigma^2>0$,
and write
\begin{equation}
  \widetilde\Sigma_\theta:=\Sigma_x^{1/2}\Sigma_\theta\Sigma_x^{1/2}
    =U\operatorname{diag}(\lambda_1,\ldots,\lambda_p)U^\top,
  \quad
  \xi_s:=U^\top\widetilde x_s,
  \quad
  \beta:=U^\top\widetilde\theta.
  \label{eq:case2b_whitening}
\end{equation}
Thus $\xi_s\stackrel{\mathrm{iid}}\sim\mathcal N(0,I_p)$,
$\beta\sim\mathcal N(0,
  \operatorname{diag}(\lambda_1,\ldots,\lambda_p))$, and a routed label is
$y_s=\xi_{\rho(s)}^\top\beta+\epsilon_s$.  Fixed linear maps in either
architecture implement the change of coordinates in
\eqref{eq:case2b_whitening}.

Fix a prediction time $t\geq R$ in the content-routing model.  Let $\mathcal U_t$ be
the set of key indices not yet visited in the cycle containing $t+1$, and set
$N_t:=|\mathcal U_t|$.  The upcoming key index is uniform on $\mathcal U_t$.  For
$q\in\mathcal U_t$, let $X_q\in\RR^p$ be the transformed regressor $\xi_s$ at the
most recent occurrence of key $q$.

To isolate the candidate vectors, we condition on the remaining history.
Let $\mathcal C_t^-$ be the sigma-field generated by $z_{1:t}$ after
removing the upcoming key $\psi_{t+1}$ and every occurrence of the
candidate vectors $(X_q)_{q\in\mathcal U_t}$. Conditional on $\mathcal C_t^-$,
\begin{equation}
  (X_q)_{q\in\mathcal U_t}\stackrel{\mathrm{iid}}\sim\mathcal N(0,I_p),
  \label{eq:case2b_candidate_vectors}
\end{equation}
independently of the upcoming key.  The Bayes belief
$\hat\beta_t^\mathrm{Bayes}:=\EE[\beta\mid z_{1:t}]$ is
$\mathcal C_t^-$-measurable. The predictor may still depend arbitrarily on
the candidate vectors through their earlier occurrences in the input.
The arguments below preserve this dependence.

On $\{\|\hat\beta_t^\mathrm{Bayes}\|>0\}$ define
$\bar\beta_t^\mathrm{Bayes}:=\hat\beta_t^\mathrm{Bayes}/
\|\hat\beta_t^\mathrm{Bayes}\|$; when the norm vanishes, choose an arbitrary
deterministic unit vector.  Put
\begin{equation}
  Z_q:=(\bar\beta_t^\mathrm{Bayes})^\top X_q,
  \qquad
  X_q^\perp:=
    (I_p-\bar\beta_t^\mathrm{Bayes}
      (\bar\beta_t^\mathrm{Bayes})^\top)X_q,
  \qquad
  \mathcal C_t:=\mathcal C_t^-\vee
    \sigma(X_q^\perp:q\in\mathcal U_t).
  \label{eq:case2b_scalarization}
\end{equation}
Conditional on $\mathcal C_t$,
\begin{equation}
  (Z_q)_{q\in\mathcal U_t}\stackrel{\mathrm{iid}}\sim\mathcal N(0,1),
  \qquad
  (\hat\beta_t^\mathrm{Bayes})^\top X_q
  =\|\hat\beta_t^\mathrm{Bayes}\|Z_q.
  \label{eq:case2b_scalar_candidates}
\end{equation}
The second identity also holds when $\hat\beta_t^\mathrm{Bayes}=0$ because both sides then
vanish. For $q\in\mathcal U_t$, let $g_{t,q}$ denote the predictor's output with
the upcoming key fixed to $q$ and all other input coordinates unchanged.

For conditionally square-integrable $V,W$, set
\[
  \operatorname{corr}_{\mathcal C_t}^2(V,W)
  :=
  \frac{\EE[VW\mid\mathcal C_t]^2}
       {\EE[V^2\mid\mathcal C_t]\,\EE[W^2\mid\mathcal C_t]}.
\]
The addressing capacity is
\begin{equation}
  \mathrm{Cap}_t(g)
  :=
  \sum_{q\in\mathcal U_t}
  \operatorname{corr}_{\mathcal C_t}^2(g_{t,q},Z_q).
  \label{eq:cosine_addressing_capacity}
\end{equation}
For $p=1$, there are no orthogonal components to condition on.

For comparison with Theorem~\ref{thm:routing_floor}, fix $q\in\mathcal U_t$ and
let
\[
  \mathcal G_{t,q}^-
  :=
  \mathcal C_t\vee\sigma(Z_r:r\in\mathcal U_t,\ r\neq q),
  \qquad
  \hat a_{t,q}(g)
  :=
  \EE[Z_qg_{t,q}\mid\mathcal G_{t,q}^-].
\]
Because $Z_q$ is conditionally standard Gaussian and independent of
$\mathcal G_{t,q}^-$, conditional orthogonal projection gives
$g_{t,q}=\hat a_{t,q}(g)Z_q+\varepsilon_{t,q}$, where
$\varepsilon_{t,q}$ is orthogonal to every $cZ_q$ with
$c\in L^2(\mathcal G_{t,q}^-)$.  Therefore,
\begin{equation}
  \operatorname{corr}_{\mathcal C_t}^2
    (g_{t,q},\|\hat\beta_t^\mathrm{Bayes}\|Z_q)
  =
  \underbrace{
    \operatorname{corr}_{\mathcal C_t}^2
      (\hat a_{t,q}(g),\|\hat\beta_t^\mathrm{Bayes}\|)
  }_{\text{conditional belief alignment}}
  \underbrace{
    \frac{\EE[\hat a_{t,q}(g)^2\mid\mathcal C_t]}
         {\EE[\hat a_{t,q}(g)^2+\varepsilon_{t,q}^2
              \mid\mathcal C_t]}
  }_{\text{conditional addressing fidelity}}.
  \label{eq:conditional_two_factor_product}
\end{equation}

The next two lemmas connect addressing capacity to regret:
Lemma~\ref{lem:cosine_capacity_two_factor} bounds the average two-factor
product by $\mathrm{Cap}_t(g)/N_t$, and
Lemma~\ref{lem:cosine_capacity_to_regret} shows that sublinear capacity
forces linear regret. This reduces the regret floors in
Theorem~\ref{thm:case2b_floors} to bounding each architecture's addressing
capacity.

\begin{lemma}[Capacity controls the two-factor product]
\label{lem:cosine_capacity_two_factor}
Conditionally on $\mathcal C_t$, the product in
\eqref{eq:conditional_two_factor_product}, averaged over the upcoming key,
is at most $\mathrm{Cap}_t(g)/N_t$.  Moreover,
\begin{equation}
  \frac1{N_t}\sum_{q\in\mathcal U_t}
  \EE[(g_{t,q}-(\hat\beta_t^\mathrm{Bayes})^\top X_q)^2
    \mid\mathcal C_t]
  \geq
  \|\hat\beta_t^\mathrm{Bayes}\|^2
  \left(1-\frac{\mathrm{Cap}_t(g)}{N_t}\right).
  \label{eq:cosine_capacity_conditional_loss}
\end{equation}
\end{lemma}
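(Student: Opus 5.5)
The plan works entirely conditionally on $\mathcal C_t$ and treats each candidate $q\in\mathcal U_t$ separately. Write $b:=\|\hat\beta_t^\mathrm{Bayes}\|$, which is $\mathcal C_t$-measurable, so that by \eqref{eq:case2b_scalar_candidates} the Bayes target for key $q$ is $bZ_q$. The first claim reduces to a scale-invariance observation. Squared conditional correlation is unchanged when one argument is multiplied by a $\mathcal C_t$-measurable nonzero scalar. Hence the left side of \eqref{eq:conditional_two_factor_product} equals $\operatorname{corr}_{\mathcal C_t}^2(g_{t,q},Z_q)$ whenever $b>0$, and is $0$ by the convention $0/0=0$ when $b=0$. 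Averaging over the uniform upcoming key $q\in\mathcal U_t$, which is independent of $\mathcal C_t$ and of the candidate vectors, gives exactly $\frac1{N_t}\sum_q\operatorname{corr}^2_{\mathcal C_t}(g_{t,q},Z_q)\le\mathrm{Cap}_t(g)/N_t$. This holds with equality when $b>0$, and the inequality covers the case $b=0$.

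For \eqref{eq:cosine_capacity_conditional_loss}, I would repeat the line-distance argument from the proof of Theorem~\ref{thm:routing_floor}, but now in the conditional Hilbert space given $\mathcal C_t$. For each $q$,
\[
  \EE[(g_{t,q}-bZ_q)^2\mid\mathcal C_t]
  \ge\inf_{c}\EE[(cg_{t,q}-bZ_q)^2\mid\mathcal C_t]
  =b^2\EE[Z_q^2\mid\mathcal C_t]\bigl(1-\operatorname{corr}_{\mathcal C_t}^2(g_{t,q},Z_q)\bigr),
\]
where the infimum is over $\mathcal C_t$-measurable $c$. This is the standard one-dimensional projection, done pointwise in the conditioning. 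Since $\EE[Z_q^2\mid\mathcal C_t]=1$, averaging over $q$ gives $b^2(1-\mathrm{Cap}_t(g)/N_t)$. The convention $0/0=0$ also handles $\EE[g_{t,q}^2\mid\mathcal C_t]=0$: in that case the loss equals $b^2$ and the correlation is zero, so the bound is consistent.

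The main obstacle is justifying the conditional manipulations rigorously rather than the algebra itself. We need the conditional expectations to be well defined as regular conditional quantities, so that the pointwise line-projection inequality holds almost surely. This follows from square integrability of $g_{t,q}$, a polynomial or bounded-softmax function of Gaussians, together with a regular conditional distribution given $\mathcal C_t$. We also need that fixing the key to $q$ leaves $g_{t,q}$ a function of the history, with $Z_q$ conditionally standard Gaussian by \eqref{eq:case2b_candidate_vectors}. Finally, I would record the decomposition \eqref{eq:conditional_two_factor_product} as just a rewriting of the same correlation through $\hat a_{t,q}$, so that the bounded average indeed controls the two-factor product.
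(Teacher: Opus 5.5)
Your proposal is correct and follows the paper's proof essentially step for step. For the first claim you use scale invariance of the conditional squared correlation under the $\mathcal C_t$-measurable factor $\|\hat\beta_t^{\mathrm{Bayes}}\|$, with the $0/0=0$ convention covering the case where this factor vanishes. For the second claim you use the conditional distance from $\|\hat\beta_t^{\mathrm{Bayes}}\|Z_q$ to the line spanned by $g_{t,q}$, together with $\EE[Z_q^2\mid\mathcal C_t]=1$, and then average over $q$.
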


\begin{proof}
When $\|\hat\beta_t^\mathrm{Bayes}\|>0$, multiplying $Z_q$ by the
$\mathcal C_t$-measurable nonzero scalar
$\|\hat\beta_t^\mathrm{Bayes}\|$ does not change conditional squared
correlation.  When $\|\hat\beta_t^\mathrm{Bayes}\|=0$, the two-factor
product is zero.  Thus each product is at most
$\operatorname{corr}_{\mathcal C_t}^2(g_{t,q},Z_q)$, and averaging proves
the first claim.

For the second claim, the conditional distance from the target
$\|\hat\beta_t^\mathrm{Bayes}\|Z_q$ to the line spanned by $g_{t,q}$ gives
\[
  \EE[(g_{t,q}-\|\hat\beta_t^\mathrm{Bayes}\|Z_q)^2\mid\mathcal C_t]
  \geq
  \|\hat\beta_t^\mathrm{Bayes}\|^2
  \left(1-
    \operatorname{corr}_{\mathcal C_t}^2(g_{t,q},Z_q)
  \right).
\]
Use~\eqref{eq:case2b_scalar_candidates} and average over $q$.
\end{proof}

\begin{lemma}[Sublinear capacity forces linear regret]
\label{lem:cosine_capacity_to_regret}
Recall $T_\rho=T-R$. Suppose there is a deterministic set of
$\Omega(T_\rho)$ prediction times at which $N_t\geq R/2$ and
$\mathrm{Cap}_t(g)\leq\varepsilon_R R$ almost surely for every $g$ in a
predictor class, where $\varepsilon_R\to0$ is deterministic and uniform
over these times and predictors. For fixed $p$ and fixed
$\widetilde\Sigma_\theta\succ0$, the class has $\Omega(T_\rho)$ Bayes regret.
\end{lemma}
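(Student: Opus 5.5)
The plan is to apply Lemma~\ref{lem:cosine_capacity_two_factor} at each designated prediction time and then sum. Fix such a $t$. The upcoming key is uniform on $\mathcal U_t$ and independent of $\mathcal C_t$. So the conditional excess loss at time $t$ is the average over $q\in\mathcal U_t$ of the conditional squared errors against the Bayes predictions $(\hat\beta_t^\mathrm{Bayes})^\top X_q$. Because the whitening and eigenbasis change leave predictions and regret unchanged, the excess loss $\|g_t-g_t^\mathrm{Bayes}\|_t^2$ is the expectation of the left side of \eqref{eq:cosine_capacity_conditional_loss}. Here the Bayes prediction is $(\hat\beta_t^\mathrm{Bayes})^\top X_{q}$ at the realized key, which is the same factorization used in \eqref{eq:routed_vector_bayes_prediction}. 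Using $N_t\ge R/2$ and $\mathrm{Cap}_t(g)\le\varepsilon_R R$ almost surely, the factor $1-\mathrm{Cap}_t(g)/N_t$ is at least $1-2\varepsilon_R$. Taking expectations then gives
\[
  \|g_t-g_t^\mathrm{Bayes}\|_t^2\;\ge\;(1-2\varepsilon_R)\,\EE\|\hat\beta_t^\mathrm{Bayes}\|^2 .
\]

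Next I lower-bound the Bayes belief energy. Because $U$ is orthogonal, $\EE\|\hat\beta_t^\mathrm{Bayes}\|^2=\|\widetilde\theta_t^\mathrm{Bayes}\|_t^2=\Lambda-\EE\operatorname{tr}P_{N_t^\rho}$ by \eqref{eq:routed_vector_bayes_energy}. The posterior-variance bound that follows that equation shows this is at least $\Lambda/2$ once $N_t^\rho\ge n_0(p,\widetilde\Sigma_\theta,\sigma)$. At time $t\ge R$ the number of matched observations $N_t^\rho=t-R$ is deterministic, since every step after the first cycle is matched. So only the first $n_0=O(1)$ designated times can fail this bound, and they can be dropped. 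Since $\varepsilon_R\to0$, for $R$ beyond some fixed threshold we also have $1-2\varepsilon_R\ge1/2$. Each remaining designated time then contributes at least $\Lambda/4$.

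Finally I sum. Excess losses are nonnegative at all other times, so $\mathrm{Reg}_T(g)\ge(\Lambda/4)\,(c\,T_\rho-n_0)=\Omega(T_\rho)$ uniformly over the class, and taking the infimum over $g$ gives the class bound. The main obstacle is the regime of small $R$ where $1-2\varepsilon_R$ may be nonpositive. I would handle it by noting that the lemma is asymptotic in $R$: either $R$ tends to infinity with $T$, as the setup $R=o(\sqrt{T_\rho})$ with compressed codebooks implicitly intends, or one restricts to $R\ge R_0$ with $\varepsilon_{R_0}\le1/4$. The remaining checks are measure-theoretic. The outputs $g_{t,q}$ are conditionally square-integrable, and averaging the conditional inequality over the uniform key reproduces the unconditional excess loss. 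Both are routine consequences of the independence of the key process from $(\widetilde\theta,x,\epsilon)$.
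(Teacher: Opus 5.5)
Your proposal is correct and follows essentially the same route as the paper. You combine the uniform reveal with Lemma~\ref{lem:cosine_capacity_two_factor} to get excess loss at least $(1-2\varepsilon_R)\,\EE\|\hat\beta_t^{\mathrm{Bayes}}\|^2$, then use $\EE\|\hat\beta_t^{\mathrm{Bayes}}\|^2=\Lambda-e^{\mathrm{Bayes}}_{N_t^\rho}$ and sum over the designated times. The only cosmetic difference is that you bound the energy deficit pointwise (at least $\Lambda/2$ once $N_t^\rho\ge n_0$, dropping $O(1)$ times), whereas the paper sums $e_j^{\mathrm{Bayes}}$ to $O(\log T_\rho)$; both work.
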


\begin{proof}
Bayes orthogonality and the uniform reveal give
\[
  \ell_t(g)-\ell_t(g^{\mathrm{Bayes}})
  =
  \EE\!\left[
    \frac1{N_t}\sum_{q\in\mathcal U_t}
    \EE[(g_{t,q}-(\hat\beta_t^\mathrm{Bayes})^\top X_q)^2
      \mid\mathcal C_t]
  \right].
\]
At the designated times, Lemma~\ref{lem:cosine_capacity_two_factor} yields
\begin{equation}
  \ell_t(g)-\ell_t(g^{\mathrm{Bayes}})
  \geq (1-o(1))\EE\|\hat\beta_t^\mathrm{Bayes}\|^2.
  \label{eq:case2b_energy_step}
\end{equation}

Recall the expected Bayes estimation error $e_j^{\mathrm{Bayes}}$
from~\eqref{eq:vector_bayes_baseline}.  Total variance gives
\begin{equation}
  \EE\|\hat\beta_t^\mathrm{Bayes}\|^2
  =\Lambda-e_{N_t^\rho}^{\mathrm{Bayes}},
  \qquad
  \Lambda=\operatorname{tr}\widetilde\Sigma_\theta>0.
  \label{eq:case2b_belief_energy}
\end{equation}
The Bayes error estimate~\eqref{eq:vector_bayes_baseline_fixed_p} yields
\begin{equation}
  \sum_{j=0}^{T_\rho-1}e_j^{\mathrm{Bayes}}
  =p\sigma^2\log T_\rho+O(1).
  \label{eq:case2b_posterior_trace}
\end{equation}
The sum over any subset of matched times is no larger.  Summing
\eqref{eq:case2b_energy_step} over $\Omega(T_\rho)$ designated times therefore
gives $\Lambda\Omega(T_\rho)-O(\log T_\rho)=\Omega(T_\rho)$. Notice that no factor $pR$
appears: each key presents only one target direction, the
projection of its candidate vector along the current Bayes belief.
\end{proof}

\subsubsection{Bounds on addressing capacity}
\label{app:case2b_capacity_geometry}

To apply Lemma~\ref{lem:cosine_capacity_to_regret}, we develop capacity bounds
for the architectural restrictions established in
\S\ref{app:case2b_capacity_architectures}.
Lemma~\ref{lem:cosine_capacity_euclidean_certificate} provides the common
geometric reduction, used for linear spaces in
Corollary~\ref{cor:cosine_capacity_subspace_certificate} and polynomial
images in Lemma~\ref{lem:cosine_capacity_fixed_degree_certificate}.
Lemma~\ref{lem:cosine_capacity_conditional_fixed_degree_certificate}
supplies the conditional extension needed for selective SSMs.

\begin{lemma}[Geometric representation of capacity]
\label{lem:cosine_capacity_euclidean_certificate}
Let $G_t:=(g_{t,q})_{q\in\mathcal U_t}$ be the prediction vector.
For each $q\in\mathcal U_t$, define the optimal conditional least-squares
rescaling of $g_{t,q}$ toward $Z_q$ by
\[
  \omega_{t,q}^\star
  :=
  \frac{\EE[Z_qg_{t,q}\mid\mathcal C_t]}
       {\EE[g_{t,q}^2\mid\mathcal C_t]}.
\]
Set $\widetilde G_t:=(\omega_{t,q}^\star g_{t,q})_{q\in\mathcal U_t}$,
the rescaled prediction vector, and
$Z:=(Z_q)_{q\in\mathcal U_t}$.  Then
\begin{equation}
  \mathrm{Cap}_t(g)
  =
  N_t-\EE[\|Z-\widetilde G_t\|^2\mid\mathcal C_t].
  \label{eq:cosine_capacity_projection_identity}
\end{equation}
Consequently, if $\widetilde G_t$ lies almost surely in a
$\mathcal C_t$-measurable set $\mathcal I_t\subseteq\RR^{N_t}$ containing
$0$, then
\begin{equation}
  \mathrm{Cap}_t(g)
  \leq
  N_t-\EE[\operatorname{dist}^2(Z,\mathcal I_t)\mid\mathcal C_t].
  \label{eq:cosine_capacity_distance_certificate}
\end{equation}
\end{lemma}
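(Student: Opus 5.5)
The identity follows from a coordinatewise computation, after which the certificate is a one-line consequence. First I check that $\widetilde G_t$ is well defined. If $\EE[g_{t,q}^2\mid\mathcal C_t]=0$, then $g_{t,q}=0$ almost surely given $\mathcal C_t$. Under the convention $0/0=0$ the correlation term is then zero, and we may set $\omega_{t,q}^\star:=0$. Also, $\omega_{t,q}^\star$ is $\mathcal C_t$-measurable, so it behaves as a constant in every conditional expectation given $\mathcal C_t$.

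\textbf{The identity \eqref{eq:cosine_capacity_projection_identity}.} Fix $q$ and write $m_q:=\EE[Z_qg_{t,q}\mid\mathcal C_t]$ and $s_q:=\EE[g_{t,q}^2\mid\mathcal C_t]$. Since $Z_q$ is conditionally standard Gaussian by \eqref{eq:case2b_scalar_candidates}, $\EE[Z_q^2\mid\mathcal C_t]=1$. Expanding the square with $\omega^\star_{t,q}=m_q/s_q$ gives
\[
  \EE[(Z_q-\omega_{t,q}^\star g_{t,q})^2\mid\mathcal C_t]
  =1-2\frac{m_q^2}{s_q}+\frac{m_q^2}{s_q}
  =1-\operatorname{corr}_{\mathcal C_t}^2(g_{t,q},Z_q).
\]
In words, this is the usual statement that the residual of the best one-dimensional least-squares fit equals one minus the squared correlation. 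The same formula holds in the degenerate case $s_q=0$. Summing over $q\in\mathcal U_t$, and noting that $\|Z-\widetilde G_t\|^2$ is exactly the coordinatewise sum, yields \eqref{eq:cosine_capacity_projection_identity}.

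\textbf{The certificate \eqref{eq:cosine_capacity_distance_certificate}.} If $\widetilde G_t\in\mathcal I_t$ almost surely, then pointwise $\|Z-\widetilde G_t\|^2\ge\operatorname{dist}^2(Z,\mathcal I_t)$. Taking conditional expectations and substituting into the identity gives the bound. The only step needing care is measurability of $\operatorname{dist}^2(Z,\mathcal I_t)$ for a $\mathcal C_t$-measurable random set. I would handle this by replacing $\mathcal I_t$ with its closure, which leaves the distance unchanged. The distance to a closed set is $1$-Lipschitz in $Z$ and jointly measurable whenever the set is a measurable random closed set, as holds for the subspaces and polynomial images used later. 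If necessary, one can simply read the conditional expectation as an outer expectation, since the inequality only needs an upper bound.

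The hypothesis $0\in\mathcal I_t$ is not needed for the inequality itself. It only guarantees that the bound is nontrivial, giving $\mathrm{Cap}_t(g)\le N_t$ consistently with $\operatorname{dist}(Z,\mathcal I_t)\le\|Z\|$. This mirrors the fact that rescaling by $\omega^\star$ may map into $0$. I do not expect a real obstacle in this lemma; the subtle point is only verifying that $\omega^\star_{t,q}$ is $\mathcal C_t$-measurable, which holds by construction, so that it can be pulled out of the conditional expectations.
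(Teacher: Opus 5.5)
Your proposal is correct and follows the paper's proof: the coordinatewise identity $\EE[(Z_q-\omega_{t,q}^\star g_{t,q})^2\mid\mathcal C_t]=1-\operatorname{corr}_{\mathcal C_t}^2(g_{t,q},Z_q)$, summed over $q$, followed by the pointwise bound $\|Z-\widetilde G_t\|^2\ge\operatorname{dist}^2(Z,\mathcal I_t)$. The paper leaves two points implicit that you make explicit, both correctly: the degenerate case $\EE[g_{t,q}^2\mid\mathcal C_t]=0$, and the fact that $0\in\mathcal I_t$ is not needed for the inequality itself.
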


\begin{proof}
Conditional on $\mathcal C_t$, each $Z_q$ has unit second moment and
$\omega_{t,q}^\star g_{t,q}$ is its $L^2$ projection onto the line spanned
by $g_{t,q}$.  Hence
\[
  \EE[(Z_q-\omega_{t,q}^\star g_{t,q})^2\mid\mathcal C_t]
  =1-\operatorname{corr}_{\mathcal C_t}^2(g_{t,q},Z_q).
\]
Summing proves~\eqref{eq:cosine_capacity_projection_identity}.  The distance
inequality follows pointwise from $\widetilde G_t\in\mathcal I_t$.
\end{proof}

\begin{corollary}[Linear-subspace capacity bound]
\label{cor:cosine_capacity_subspace_certificate}
Suppose $G_t$ lies almost surely in a
$\mathcal C_t$-measurable linear subspace $V_t\subseteq\RR^{N_t}$ of
dimension at most $K$.  Then $\mathrm{Cap}_t(g)\leq K$.
\end{corollary}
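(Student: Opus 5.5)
This is a direct application of Lemma~\ref{lem:cosine_capacity_euclidean_certificate} with $\mathcal I_t:=V_t$. The plan is to show that the expected squared distance from $Z$ to $V_t$ is at least $N_t-K$, which gives the bound at once.

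First, because $V_t$ is a linear subspace, it is closed under the coordinatewise rescaling $q\mapsto\omega_{t,q}^\star$ only in special cases. So I would not claim $\widetilde G_t\in V_t$. Instead I use the rescaled subspace
\[
  \widetilde V_t:=\operatorname{diag}(\omega_{t,q}^\star)_{q\in\mathcal U_t}\,V_t .
\]
This subspace is still $\mathcal C_t$-measurable, since each $\omega_{t,q}^\star$ is a ratio of $\mathcal C_t$-conditional moments. Its dimension is at most $K$, since it is the image of $V_t$ under a linear map. It also contains $0$, and $\widetilde G_t\in\widetilde V_t$ almost surely. Applying \eqref{eq:cosine_capacity_distance_certificate} with $\mathcal I_t:=\widetilde V_t$ then gives
\[
  \mathrm{Cap}_t(g)\le N_t-\EE\bigl[\operatorname{dist}^2(Z,\widetilde V_t)\mid\mathcal C_t\bigr].
\]

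Second, I compute the distance term. Let $P_t$ be the orthogonal projector onto $\widetilde V_t^\perp$; it is $\mathcal C_t$-measurable and has rank at least $N_t-K$. By \eqref{eq:case2b_scalar_candidates}, $Z$ is conditionally $\mathcal N(0,I_{N_t})$ given $\mathcal C_t$. Therefore
\[
  \EE\bigl[\|P_tZ\|^2\mid\mathcal C_t\bigr]=\operatorname{tr}P_t\ge N_t-K .
\]
Substituting this gives $\mathrm{Cap}_t(g)\le K$.

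The one delicate point is measurability: $V_t$ and the weights $\omega^\star_{t,q}$ must be fixed given $\mathcal C_t$ while $Z$ remains standard Gaussian and independent of them. The hypothesis and \eqref{eq:case2b_scalar_candidates} supply exactly this. Degenerate coordinates with $\EE[g_{t,q}^2\mid\mathcal C_t]=0$ are handled by the convention $0/0=0$, which sets $\omega^\star_{t,q}=0$ and keeps $\widetilde G_t$ inside $\widetilde V_t$.
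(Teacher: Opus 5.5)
Your proposal is correct and follows the paper's proof: it rescales $V_t$ to $\Omega_tV_t$, which has dimension at most $K$, and applies the distance bound of Lemma~\ref{lem:cosine_capacity_euclidean_certificate}. It then uses $N_t-\EE[\operatorname{dist}^2(Z,V)\mid\mathcal C_t]=\dim V$ for a $\mathcal C_t$-measurable subspace $V$ and conditionally standard Gaussian $Z$. Your projector trace computation and your handling of the degenerate $0/0$ coordinates make explicit details that the paper leaves implicit.
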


\begin{proof}
Let $\Omega_t:=\operatorname{diag}(\omega_{t,q}^\star:q\in\mathcal U_t)$.  Then
$\widetilde G_t=\Omega_tG_t$ lies in the subspace $\Omega_tV_t$, whose
dimension is at most $K$.  For a standard Gaussian $Z$ and a fixed
subspace $V$,
\[
  N_t-\EE[\operatorname{dist}^2(Z,V)\mid\mathcal C_t]=\dim V.
\]
Apply Lemma~\ref{lem:cosine_capacity_euclidean_certificate}.
\end{proof}

\begin{lemma}[Fixed-degree polynomial-image capacity bound]
\label{lem:cosine_capacity_fixed_degree_certificate}
Fix an integer $k\geq1$.  Suppose $G_t$ lies almost surely in the image of a
$\mathcal C_t$-measurable polynomial map
$Q_t:\RR^m\to\RR^{N_t}$ of degree at most $k$.  Then
\[
  \mathrm{Cap}_t(g)
  \leq C_k(m+1)\log(eN_t),
\]
where $C_k$ depends only on $k$.
\end{lemma}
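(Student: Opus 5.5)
We will use Lemma~\ref{lem:cosine_capacity_euclidean_certificate} with $\mathcal I_t:=\bigcup_{\omega}\Omega\,\mathrm{im}(Q_t)$. Here $\Omega$ ranges over diagonal rescalings; since $0\in\mathrm{im}(Q_t)$ can be arranged by adjoining a coordinate, $\mathcal I_t$ contains $0$. It then suffices to show that a standard Gaussian $Z\in\RR^{N_t}$ is, in conditional expectation, at squared distance at least $N_t-C_k(m+1)\log(eN_t)$ from this set. Equivalently, writing $\EE[\|P_{\mathcal I_t}Z\|^2]$ for the squared norm of the best approximation, we must bound $\EE\sup_{w\in\mathcal I_t,\|w\|\le\|Z\|}(2\langle Z,w\rangle-\|w\|^2)$ by $O_k((m+1)\log(eN_t))$. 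Because $\mathcal I_t$ is a cone (it is closed under the scalar rescaling absorbed into $\Omega$), this quantity equals $\EE\sup_{w\in\mathcal I_t\cap S^{N_t-1}}\langle Z,w\rangle_+^2$, a squared Gaussian width of the normalized set.

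The key step is a covering, or VC-type, bound for $\mathcal I_t\cap S^{N_t-1}$. The set $\{\Omega Q_t(x)\}$ has a semialgebraic description. Each coordinate is $\omega_q Q_{t,q}(x)$, so the direction of $w$ is determined by the parameters $(x,\omega)$. The $\omega$'s are $N_t$ free parameters, however, which is too many. We therefore remove them. Because $\omega_q$ only rescales coordinate $q$, the supremum over $\omega$ of $\langle Z,\Omega Q\rangle^2/\|\Omega Q\|^2$ is attained by choosing the sign and magnitude per coordinate, and the resulting value depends on $x$ only through the sign pattern and the support of $(Q_{t,q}(x))_q$. More precisely, for fixed support $S$ the optimal ratio is $\sum_{q\in S}Z_q^2$ only if all magnitudes are free. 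To avoid this, we do not take $\omega$ free. Instead, $\omega^\star_{t,q}$ is determined by $g$ itself as a single $\mathcal C_t$-measurable diagonal matrix, so $\widetilde G_t$ lies in the image of the single fixed degree-$k$ map $\Omega_tQ_t$. The problem then reduces to bounding the Gaussian width of the normalized image of a degree-$k$ polynomial map $P:\RR^m\to\RR^{N}$.

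For that bound we proceed as follows. By a Warren/Milnor--Thom count, the set of sign patterns of $\{P_q(x)-s\langle P(x),u\rangle\}$ has cardinality at most $(C_kN/m)^{m}$, and the same holds for any family of $N$ polynomials of degree $\le 2k$ in $m+1$ variables. Using this, we bound the metric entropy of $\{P(x)/\|P(x)\|\}$ in Euclidean distance at scale $\varepsilon$ by $\log N(\varepsilon)\le C_k(m+1)\log(eN/\varepsilon)$. One way to obtain this is to discretize: the events $\{\langle P(x),v\rangle\ge 0\}$ over a fixed net of directions $v$ form a class of VC dimension $O_k((m+1)\log)$, and we combine this with the standard link between halfspace-sign VC and spherical covering. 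A cleaner alternative is to observe that the normalized image lies in an algebraic variety of dimension $\le m$ and degree $\le k^m$, and then apply the known covering bound for real varieties. With the entropy bound in hand, Dudley's integral, or simply a union bound over an $\varepsilon=1/N$ net combined with the Lipschitz tail of $\langle Z,\cdot\rangle$, gives $\EE\sup\langle Z,w\rangle^2\le C_k(m+1)\log(eN)$. The discretization error is at most $\EE\|Z\|^2/N^2\cdot N=O(1)$.

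The main obstacle is the metric-entropy bound for the normalized polynomial image with only logarithmic dependence on $N$, uniformly over the coefficients of $Q_t$. I would first try the sign-pattern (Warren) route: for each net point $v$, $\langle Z,w\rangle\approx$ is controlled through the at most $(eN)^{O_k(m+1)}$ realizable sign/threshold patterns of the polynomials $x\mapsto \langle P(x),e_q\rangle-\tau\|P(x)\|$. A union bound over these patterns then gives the $\log$-cardinality $(m+1)\log(eN)$ directly, and the Gaussian maximal inequality finishes the argument.
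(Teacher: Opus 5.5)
Your reduction matches the paper's. The weights $\omega^\star_{t,q}$ form a single $\mathcal C_t$-measurable diagonal matrix, so $\widetilde G_t$ lies in the image of the single degree-$k$ map $P:=\Omega_tQ_t$, and the capacity is at most $\EE\sup_u\langle Z,u\rangle_+^2$ over the directions $u=P(x)/\|P(x)\|$. You can drop the free-$\Omega$ cone from your first paragraph, because $N-\operatorname{dist}^2(Z,A)\le\sup_{0\neq w\in A}\langle Z,w/\|w\|\rangle_+^2$ holds for any set $A$; the paper instead gets a genuine closed cone by homogenizing $Q$ on $\RR^{m+1}$.

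The real difference is the entropy step. The paper imports a covering theorem for regular sets (regularity of projectivized polynomial images) and then applies Dudley plus Gaussian concentration. You count sign conditions, which is self-contained.

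Your coordinate-threshold version works as written. Quantize each coordinate of $u$ with mesh $\varepsilon/\sqrt N$. The cell of $u$ is then determined by the signs of $P_q(x)$ and $P_q(x)^2-\tau_j^2\|P(x)\|^2$, i.e.\ by $O(N^{3/2}/\varepsilon)$ polynomials of degree at most $2k$ in $m$ variables. The Milnor--Thom--Warren count gives $\log\mathcal N(\varepsilon)\le m\log\bigl(CkN^{3/2}/(m\varepsilon)\bigr)$ uniformly in the coefficients; the case $m+1>N$ is trivial. Your single-scale union bound at $\varepsilon=N^{-1/2}$ then replaces Dudley and concentration.

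Your halfspace sub-route becomes the strongest option if you use the uniform law on $S^{N-1}$ instead of a net. The class $v\mapsto\mathbf 1[\langle P(x),v\rangle\ge0]$ has VC dimension $O(m\log(ek))$ by the same Warren count, with no dependence on $N$. Haussler's packing lemma, combined with $\Pr_v[\text{signs differ}]=\angle(u,u')/\pi$, gives an $N$-free entropy bound and hence $\mathrm{Cap}_t(g)\le C\,m\log(ek)+C$. Fed through the side-information lemma, this would remove the $\log R$ from Theorem~\ref{thm:case2b_floors}(c) at fixed depth.

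The algebraic-variety alternative is closest to the paper, but two fixes are needed. It must use the cone over the homogenized image rather than the image variety itself. It also needs a covering bound linear in the degree $k^m$; a bound with the defining-equation degree raised to the dimension would produce an $m^2\log k$ term.
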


\begin{proof}
Coordinatewise multiplication by $\Omega_t$ preserves both degree and
parameter dimension, so $\widetilde G_t$ lies in the image of the
polynomial map $\Omega_t Q_t$ of degree at most $k$. By the projection identity in
Lemma~\ref{lem:cosine_capacity_euclidean_certificate}, it suffices to prove that every
polynomial map $Q:\RR^m\to\RR^N$ of degree at most $k$ satisfies
\begin{equation}
  N-\EE\operatorname{dist}^2(Z,Q(\RR^m))
  \leq C_k(m+1)\log(eN),
  \qquad Z\sim\mathcal N(0,I_N).
  \label{eq:fixed_degree_image_distance_bound}
\end{equation}

Write $Q_i(x)=\sum_{|\alpha|\leq k}c_{i,\alpha}x^\alpha$ and define
\[
  \widetilde Q_i(s,x)
  :=\sum_{|\alpha|\leq k}
    c_{i,\alpha}s^{k-|\alpha|}x^\alpha.
\]
Then $\widetilde Q:\RR^{m+1}\to\RR^N$ is homogeneous of degree $k$ and
$\widetilde Q(1,x)=Q(x)$.  Let
$\mathcal D:=\overline{\widetilde Q(\RR^{m+1})}$.  This is a closed cone
under nonnegative scaling because
$c\widetilde Q(y)=\widetilde Q(c^{1/k}y)$ for $c\geq0$, and it contains
$Q(\RR^m)$.  If
$\mathcal D=\{0\}$, the claim is immediate.  Otherwise, with
$\mathcal S:=\mathcal D\cap\mathbb{S}^{N-1}$, optimization along each ray gives
\[
  N-\EE\operatorname{dist}^2(Z,\mathcal D)
  =\EE\sup_{u\in\mathcal S}\langle Z,u\rangle_+^2.
\]
Let $\pi(x):=x/\|x\|$ for $x\neq0$ and put
\[
  \mathcal V
  :=\pi\!\left(\widetilde Q(\RR^{m+1})\setminus\{0\}\right).
\]
If $u\in\mathcal S$, take $y_j$ with $\widetilde Q(y_j)\to u$.
Homogeneity gives
\[
  \widetilde Q\!\left(
    \frac{y_j}{\|\widetilde Q(y_j)\|^{1/k}}
  \right)
  =\frac{\widetilde Q(y_j)}{\|\widetilde Q(y_j)\|}
  \longrightarrow u.
\]
Conversely $\mathcal V\subseteq\mathcal S$, so
$\mathcal S=\overline{\mathcal V}$.

If $m+1>N$, then the desired estimate follows from the trivial bound
\[
  N-\EE\operatorname{dist}^2(Z,Q(\RR^m))
  \leq N
  \leq(m+1)\log(eN).
\]
Hence assume $m+1\leq N$.  By projectivized polynomial-image regularity
\citep[Lemma~2.15(c)]{zhangCoveringNumberReal2025}, $\mathcal V$ is
$((4k+1)^{m+2},m+1)$-regular.  Since
$\mathcal V\subseteq\mathbb{S}^{N-1}\subseteq[-1,1]^N$, the regular-set covering
theorem \citep[Theorem~2.6]{zhangCoveringNumberReal2025} gives
\[
  \log\mathcal N(\mathcal V,\varepsilon)
  \leq C(m+1)
    \log\!\left(\frac{C(4k+1)N}{\varepsilon}\right)
\]
for $0<\varepsilon\leq\operatorname{diam}(\mathcal V)$; larger radii are
trivial.  Here and below $C$ is universal.  Passing to the closure changes
the covering radius by at most a constant factor, and adjoining the negative
set multiplies the covering number by at most two.  Thus, for
$\mathcal S_\pm:=\mathcal S\cup(-\mathcal S)$,
\[
  \log\mathcal N(\mathcal S_\pm,\varepsilon)
  \leq C(m+1)
    \log\!\left(\frac{C(4k+1)N}{\varepsilon}\right).
\]
Dudley's entropy integral therefore gives
\[
  \EE\sup_{u\in\mathcal S_\pm}\langle Z,u\rangle
  \leq C_k\sqrt{(m+1)\log(eN)}.
\]
The supremum is a $1$-Lipschitz function of $Z$, so Gaussian concentration
upgrades this to the second-moment bound
\[
  \EE\!\left[
    \left(\sup_{u\in\mathcal S_\pm}\langle Z,u\rangle\right)^2
  \right]
  \leq C_k(m+1)\log(eN).
\]
This bounds the preceding positive-part supremum.  Enlarging
$Q(\RR^m)$ to $\mathcal D$ can only decrease distance, proving
\eqref{eq:fixed_degree_image_distance_bound}.
\end{proof}

The convolutional input to a selective transition can contain a
linear summary of the candidate coordinates.  The next lemma accounts for
those directions before applying the polynomial-image bound to the remaining
Gaussian coordinates.

\begin{lemma}[Polynomial-image capacity bound with linear side information]
\label{lem:cosine_capacity_conditional_fixed_degree_certificate}
Fix an integer $k\geq1$.  Conditionally on $\mathcal C_t$, let
$S_t=W_tZ$, where the matrix $W_t$ is $\mathcal C_t$-measurable and has rank
at most $r_{\mathrm{side}}$.  Suppose that, conditional on
$(\mathcal C_t,S_t)$, $G_t$ lies in the image of a
$(\mathcal C_t,S_t)$-measurable polynomial map
$Q_{t,S_t}:\RR^m\to\RR^{N_t}$ of degree at most $k$.  Then
\[
  \mathrm{Cap}_t(g)
  \leq r_{\mathrm{side}}+C_k(m+1)\log(eN_t).
\]
\end{lemma}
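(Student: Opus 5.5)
The plan is to reduce to the unconditional polynomial-image bound of Lemma~\ref{lem:cosine_capacity_fixed_degree_certificate}. We split the Gaussian vector $Z$ into its component along the side-information directions and its orthogonal complement. Conditionally on $\mathcal C_t$, let $P_t$ be the orthogonal projector onto $\operatorname{row}(W_t)=\operatorname{range}(W_t^\top)$, a $\mathcal C_t$-measurable subspace of dimension $r\le r_{\mathrm{side}}$. Decompose
\[
  Z=P_tZ+(I-P_t)Z=:Z_\parallel+Z_\perp .
\]
Since $Z$ is conditionally standard Gaussian, $Z_\parallel$ and $Z_\perp$ are conditionally independent Gaussians. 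Moreover, $\sigma(S_t)=\sigma(Z_\parallel)$ given $\mathcal C_t$, because $W_t$ restricted to its row space is injective. Thus conditioning on $(\mathcal C_t,S_t)$ fixes $Z_\parallel$ and leaves $Z_\perp\sim\mathcal N(0,I-P_t)$.

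Next, invoke the projection identity \eqref{eq:cosine_capacity_projection_identity}:
\[
  \mathrm{Cap}_t(g)=N_t-\EE\bigl[\|Z-\widetilde G_t\|^2\mid\mathcal C_t\bigr],
\]
with $\widetilde G_t=\Omega_tG_t$ and $\Omega_t$ being $\mathcal C_t$-measurable. Given $(\mathcal C_t,S_t)$, $\widetilde G_t$ lies in the image $\mathcal I:=\Omega_tQ_{t,S_t}(\RR^m)$, which is again a polynomial image of degree at most $k$ in $m$ parameters. We then bound
\[
  \|Z-\widetilde G_t\|^2\ge \operatorname{dist}^2(Z,\mathcal I)\ge \operatorname{dist}^2\bigl(Z_\perp,(I-P_t)(\mathcal I-Z_\parallel)\bigr),
\]
since projecting onto $\operatorname{range}(I-P_t)$ is a contraction. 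The set $(I-P_t)(\mathcal I-Z_\parallel)$ is the image of the polynomial map $x\mapsto (I-P_t)(\Omega_tQ_{t,S_t}(x)-Z_\parallel)$, which is measurable with respect to $(\mathcal C_t,S_t)$, has degree at most $k$ and uses $m$ parameters. It lies in the $(N_t-r)$-dimensional space $\operatorname{range}(I-P_t)$, where $Z_\perp$ is standard Gaussian. Identifying that space isometrically with $\RR^{N_t-r}$, inequality \eqref{eq:fixed_degree_image_distance_bound} gives
\[
  \EE\bigl[\operatorname{dist}^2(Z_\perp,\cdot)\mid\mathcal C_t,S_t\bigr]\ge (N_t-r)-C_k(m+1)\log(e(N_t-r)).
\]
Taking conditional expectation over $S_t$ and using $N_t-(N_t-r)=r\le r_{\mathrm{side}}$ together with $\log(e(N_t-r))\le\log(eN_t)$ yields the claim.

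The main obstacle is a subtlety in \eqref{eq:fixed_degree_image_distance_bound}. The bound as proved there assumes the image is one that contains the origin after homogenization. Here, however, the image is affine-shifted by $-Z_\parallel$, and the map depends on the realized conditioning. The shift is harmless because homogenization absorbs constants: the constant term becomes an $s^k$ coefficient, so the proof of Lemma~\ref{lem:cosine_capacity_fixed_degree_certificate} applies verbatim to any polynomial map whose coefficients are fixed given the conditioning. We also need the edge case $r=N_t$, where the bound is trivial since $\mathrm{Cap}_t\le N_t$. Finally, we must verify that the optimal rescaling $\Omega_t$ is $\mathcal C_t$-measurable rather than merely $(\mathcal C_t,S_t)$-measurable, which holds by its definition via conditional expectations given $\mathcal C_t$.
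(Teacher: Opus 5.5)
Your proposal is correct and follows essentially the paper's argument: project $Z-\Omega_tG_t$ onto $\ker W_t=\operatorname{range}(I-P_t)$, on which $Z$ is standard Gaussian and independent of $S_t$, apply the unconditional polynomial-image bound \eqref{eq:fixed_degree_image_distance_bound} conditionally on $(\mathcal C_t,S_t)$, and then average over $S_t$, with the case $\operatorname{rank}W_t=N_t$ trivial. The shift you flag as the main obstacle does not actually arise, because $(I-P_t)Z_\parallel=0$ gives $(I-P_t)(\mathcal I-Z_\parallel)=(I-P_t)\mathcal I$.
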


\begin{proof}
Let $r_t^{\mathrm{side}}:=\operatorname{rank}(W_t)$.  If
$r_t^{\mathrm{side}}=N_t$ the claim is immediate.  Otherwise, conditional
on $\mathcal C_t$, apply a fixed Borel-measurable orthonormal-frame rule on
each rank stratum to choose a $\mathcal C_t$-measurable isometry
$V_t:\RR^{N_t-r_t^{\mathrm{side}}}\to\ker W_t$.  Then
$Z_t^\circ:=V_t^\top Z\sim\mathcal N(0,I_{N_t-r_t^{\mathrm{side}}})$
independently of $S_t$.  The conditional moments defining $\Omega_t$ are
$\mathcal C_t$-measurable, so for every realized $S_t=s$, the map
$V_t^\top\Omega_t Q_{t,s}$ has degree at most $k$.  Hence
\begin{align*}
  \|Z-\Omega_tG_t\|^2
  &\geq\|V_t^\top(Z-\Omega_tG_t)\|^2\\
  &\geq
  \operatorname{dist}^2\!\left(
    Z_t^\circ,V_t^\top\Omega_t Q_{t,S_t}(\RR^m)
  \right).
\end{align*}
Condition on $S_t$, apply~\eqref{eq:fixed_degree_image_distance_bound}, and
then average over $S_t$.  This gives
\[
  \EE[\|Z-\Omega_tG_t\|^2\mid\mathcal C_t]
  \geq
  N_t-r_t^{\mathrm{side}}-C_k(m+1)\log(eN_t).
\]
Use~\eqref{eq:cosine_capacity_projection_identity} and
$r_t^{\mathrm{side}}\leq r_{\mathrm{side}}$.
\end{proof}

\subsubsection{Architecture-specific geometry}
\label{app:case2b_capacity_architectures}

We now establish the geometric restrictions needed to apply the bounds in
\S\ref{app:case2b_capacity_geometry} to our architecture classes
(\S\ref{sec:architectures}).
Lemma~\ref{lem:case2b_architecture_output_geometry} places the predictions of
linear attention and fixed-transition or affine-gated SSMs in low-dimensional
subspaces. For one-layer selective SSMs, the same lemma places the prediction vector
in the image of a polynomial map after conditioning on a linear summary
of the candidates, allowing us to apply
Lemma~\ref{lem:cosine_capacity_conditional_fixed_degree_certificate}.
Lemma~\ref{lem:case2b_selective_fixed_prediction_image} extends the selective
argument to our fixed-depth class.
Our bounds also cover the count-normalized extensions of
Appendix~\ref{app:count_normalized_extensions}, denoted by the superscript
$\mathrm{cal}$.

\Needspace{25\baselineskip}
\begin{lemma}[Architecture-specific geometry]
\label{lem:case2b_architecture_output_geometry}
Fix $t\geq R$ and condition on $\mathcal C_t$. The following statements
hold for every predictor in the indicated classes, including their
count-normalized extensions with any admissible schedule.
\begin{enumerate}[label=\textnormal{(\alph*)},ref=\alph*,leftmargin=*,itemsep=.5em]
  \item \textnormal{Linear attention and fixed-transition SSMs.}
  \label{item:case2b_geometry_linear}
  For a predictor in $\mathcal F_{\mathrm{att}}(D,d,H,L)$ or
  $\mathcal F_{\mathrm{SSM}}(D,d,H,\dconv)$, the prediction vector $G_t$ lies in a
  $\mathcal C_t$-measurable linear subspace of dimension at most
  \[
    \binom{d_{\mathrm{key}}+3^D}{3^D}.
  \]
  \item \textnormal{Affine-gated SSMs.}
  \label{item:case2b_geometry_affine}
  For a predictor in $\mathcal F_{\mathrm{SSM}}^{\mathrm{aff}}(D,d,H,\dconv)$,
  the prediction vector $G_t$ lies in a
  $\mathcal C_t$-measurable linear subspace of dimension at most
  \[
    \binom{d_{\mathrm{key}}+4^D}{4^D}.
  \]
  \item \textnormal{One-layer selective SSMs.}
  \label{item:case2b_geometry_selective}
  For a predictor in $\mathcal F_{\mathrm{SSM}}^{\mathrm{sel}}(1,d,H,\dconv)$,
  there is a linear summary of the candidate coordinates, $S_t=W_tZ$,
  with $W_t$ measurable with respect to $\mathcal C_t$.
  Conditional on $(\mathcal C_t,S_t)$, the prediction vector $G_t$ lies
  in the image of a $(\mathcal C_t,S_t)$-measurable polynomial map
  $Q_t:\RR^m\to\RR^{N_t}$ of degree at most four, with
  \[
    \operatorname{rank}W_t\leq Hd+1,
    \qquad m\leq6Hd.
  \]
\end{enumerate}
The subspace dimension bounds in
\textnormal{(\ref{item:case2b_geometry_linear})--(\ref{item:case2b_geometry_affine})}
depend only on $D$ and $d_{\mathrm{key}}$; the rank and dimension bounds in
\textnormal{(\ref{item:case2b_geometry_selective})} are
independent of $d_e$, $\dconv$, and $p$.
\end{lemma}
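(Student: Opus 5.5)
We will prove all three parts by tracking how each output $g_{t,q}$ depends on the revealed key. The key observation is that $g_{t,q}$ differs across $q\in\mathcal U_t$ only through the upcoming key $\psi^{(q)}$, which enters the token $z_t=(\psi_{t+1},x_{t+1},y_t,\psi_t,x_t)$ in a single coordinate block. Conditional on $\mathcal C_t$, every other input is either fixed or is a candidate scalar $Z_r$.

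For (a) and (b), we show by induction over layers that each output is a polynomial in $\psi_{t+1}\in\RR^{d_{\mathrm{key}}}$ of degree at most $3^D$, resp.\ $4^D$, whose coefficients depend on the rest of the history but not on $q$.

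\emph{Base case.} After embedding, each residual coordinate at position $t$ is affine in $\psi_{t+1}$. At positions $s<t$ it is free of $\psi_{t+1}$.

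\emph{Linear attention.} A layer computes a bilinear score times an affine value. At position $t$ this multiplies degrees by at most three. Positions $s<t$ never see $\psi_{t+1}$, since causality and the layout put $\psi_{t+1}$ only in $z_t$.

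\emph{Fixed-transition SSMs.} The convolved input $\widetilde e_t$ is linear in the layer inputs. The write $u_t$ is a product of two affine maps, the state $h_t=\Phi h_{t-1}+u_t$ has $h_{t-1}$ free of $\psi_{t+1}$, and the read is another product. The per-layer degree is therefore again at most three.

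\emph{Affine gates.} Here $\Phi_t$ is affine in $\widetilde e_t$, and $\Phi_th_{t-1}$ adds one factor; the delta rule $I-\beta_tk_tk_t^\top$ should be checked carefully. We will try to absorb this into the bound by noting that only the current step's gate involves $\psi_{t+1}$. This step needs care, and it is where the count $4^D$ rather than $3^D$ must come from, so we will track degrees per layer explicitly.

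Count normalization by a deterministic $c_t$ does not change degrees. Hence $G_t=(P(\psi^{(q)}))_q$ for a polynomial $P$ with $\mathcal C_t\vee\sigma(Z)$-dependent coefficients. We write $P=\sum_\alpha c_\alpha\,(\psi)^\alpha$ over monomials of degree at most $k$, so that $G_t$ is a linear combination of the fixed vectors $((\psi^{(q)})^\alpha)_q$. These vectors are $\mathcal C_t$-measurable, since the codebook and $\mathcal U_t$ are determined by the keys in $\mathcal C_t^-$. They span a space of dimension at most $\binom{d_{\mathrm{key}}+k}{k}$, which is exactly the claimed bound. The coefficients may depend on $Z$, but Corollary~\ref{cor:cosine_capacity_subspace_certificate} only needs the subspace itself to be $\mathcal C_t$-measurable.

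For (c), the first-layer gate $\sigma(\cdot)$ is non-polynomial, so the subspace argument fails. Instead we treat the first-layer recurrent state $h_{t-1}\in\RR^{Hd}$ as the parameters.

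The output at $t$ depends on $q$ only through the current-step computation, which takes $\psi^{(q)}$, $\widetilde e_t$ and $h_{t-1}$ as inputs. We will show that, given the conditioning, the output is a polynomial of bounded degree in a few $Hd$-dimensional vectors, with $q$ entering only through fixed, $\mathcal C_t$-measurable evaluations:
\begin{itemize}
\item the previous state $h_{t-1}$;
\item the current gate $\boldsymbol\alpha_t(\psi^{(q)})$ and the write and read factors, which vary with $q$ but are known functions once the history is fixed.
\end{itemize}

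The difficulty is that the gate at time $t$ depends on $\widetilde e_t$, and through the convolution $\widetilde e_t$ contains $Z$-dependent entries. We handle this by choosing $S_t$ to be the linear functionals of $Z$ that enter the convolved inputs at the current step, of which there are at most $Hd+1$. Conditioning on $S_t$ freezes the gate values.

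After that, the remaining dependence on $Z$ is only through $h_{t-1}$, via the write terms $u_s$ at earlier occurrences, which are multilinear in the $Z$'s, together with the read and write factors at $t$. With $q$ fixed, $G_t$ is then a degree-$\le4$ polynomial in a parameter vector of dimension at most $6Hd$, built from $h_{t-1}$, the bilinear write pieces, and the read coefficients.

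The main obstacle is bookkeeping. We must ensure that all $q$-dependence enters through $\mathcal C_t$-measurable coefficients and that the parameter count is linear in $Hd$; both constants $Hd+1$ and $6Hd$ come out of this careful enumeration.
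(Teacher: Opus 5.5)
\textbf{Parts (a) and (b).} These follow the paper's route. You replace the upcoming-key block of $z_t$ by a variable, use causality to keep all earlier positions and pre-update states free of it, track degree per layer, and evaluate on the $N_t$ codewords. For (b) you left the delta-rule count open, but it closes at once. With $\beta_t$ and $k_t$ affine in $\widetilde e_t$ (degree $m$) and $h_{t-1}$ free of the key, the term $\beta_tk_t(k_t^\top h_{t-1})$ has degree $3m$. The generalized-delta term $r_t^{\mathrm{out}}(r_t^{\mathrm{in}})^\top h_{t-1}$ has only degree $2m$. The read then multiplies by another degree-$m$ factor, giving $4m$ per layer.

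\textbf{The gap in (c): your choice of $S_t$.} You condition on ``the linear functionals of $Z$ that enter the convolved inputs at the current step'' and assert there are at most $Hd+1$ of them. That assertion is false.
\begin{itemize}
\item Conditionally on $\mathcal C_t$, each first-layer convolved token has the form $\widetilde e_{t,q}^{(1,b)}=\mu_{t,q}^{(b)}+B_t^{(b)}Z$, where $B_t^{(b)}$ has $d_e$ rows.
\item Every candidate's latest occurrence lies in the last $2R$ steps. So a window of width $d_{\mathrm{conv}}\geq 2R$ can see all of them, each residual channel yielding its own lag-weighted combination.
\item Hence $\operatorname{rank}B_t$ can grow with $d_e$ and $d_{\mathrm{conv}}$ up to $N_t$. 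The side-information bound $r_{\mathrm{side}}+C_k(m+1)\log(eN_t)$ then becomes vacuous, and the required independence from $d_e$ and $d_{\mathrm{conv}}$ fails.
\end{itemize}

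\textbf{The missing idea: condition only on what must be frozen.}
\begin{itemize}
\item \emph{Gates.} For each head, take the candidate-dependent part of the affine preactivation of its sigmoid gate. This has dimension $0$ for fixed transitions, $1$ for a scalar gate or delta strength, and at most $d$ for diagonal or generalized-delta gates. Stacking heads gives rank at most $Hd$.
\item \emph{The ``$+1$''.} When $t+1$ starts a cycle, append $Z_{q_t}$. In that case the current token itself carries the candidate $X_{q_t}$ through the direct residual path, which no head-level quantity captures. This is the source of the $+1$, which your proposal does not explain.
\item \emph{Free coordinates.} Every other $Z$-dependence, linear or not, goes into free coordinates rather than being conditioned away:
  \begin{itemize}
  \item the $Hd$ pre-update state coordinates;
  \item the candidate-dependent parts of the two write projections and the read projection, $3Hd$ in total;
  \item at most $2Hd$ direction coordinates for delta-type transitions.
  \end{itemize}
  This gives $m\le 6Hd$.
\item \emph{Degree.} With gates frozen, the write has degree $2$, delta-type transitions raise the state to degree $3$, and the read raises it to $4$.
\end{itemize}

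Your parameter list (state, write pieces, read coefficients) matches this fix. It is, however, inconsistent with your own $S_t$: had you conditioned on all of $B_tZ$, the current write and read factors would already be determined and would not be parameters.

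Finally, you cannot fix ``the history'' as your sketch suggests. Only a low-rank linear summary of $Z$ may be conditioned on, so that $Z$ stays standard Gaussian on its complement. The coefficients of $Q_t$ must be $(\mathcal C_t,S_t)$-measurable, while the shared parameter vector may depend on $Z$ arbitrarily.
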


\begin{proof}
For attention, replace only the upcoming-key block at the current position by
a variable $\chi\in\RR^{d_{\mathrm{key}}}$.  Causality makes every earlier
position independent of $\chi$.  If the current representation has degree
$\delta_r$ in $\chi$ after layer $r$, a linear-attention cross term has degree at
most $\delta_r$, while the current-position self term --- a bilinear score
times a linear value --- has degree at most $3\delta_r$.  Residuals, head
recombination, and the final linear readout do not increase degree.  Since
$\delta_0=1$, induction gives $\delta_D\leq3^D$.  Evaluating the polynomial on the
$N_t$ public codewords places $G_t$ in the fixed span of their
degree-$\leq3^D$ monomial-evaluation vectors.  Candidate vectors, including
the state they induced earlier, affect only the coefficients of this
polynomial and not that span.

For the fixed-transition and affine-gated SSM bounds, make the same substitution in the
current token.  Every earlier residual token and pre-update state is
independent of $\chi$ by causality, although it may depend arbitrarily on
$Z$.  Suppose a layer input has coordinatewise degree at most $m$ in $\chi$.
Its convolution has the same degree and its bilinear write has degree at most
$2m$.  With a fixed transition, the updated state has degree at most $2m$,
so the ordinary read has degree at most $3m$.  In the affine-gate ablation,
the delta term $\beta_tk_t(k_t^\top h_{t-1})$ can have degree $3m$; uniformly
over all four transition forms, the ordinary read therefore has degree at
most $4m$.  Head recombination and residual addition do not
  increase degree.  Starting from degree one and evaluating on the codebook
  proves the SSM claims in parts~(\ref{item:case2b_geometry_linear})
  and~(\ref{item:case2b_geometry_affine}).

The deterministic count-normalization factors are fixed scalars after conditioning
on the prediction time.  They do not change any of these degree bounds, for
attention or for the SSM.

It remains to track only the candidate projections exposed to nonlinear
gates.  Conditional on $\mathcal C_t$, each head-specific first-layer
convolved token has a decomposition
\[
  \widetilde e_{t,q}^{(1,b)}
  =\mu_{t,q}^{(b)}+B_t^{(b)}Z,
\]
where $\mu_{t,q}^{(b)}$ and $B_t^{(b)}$ are
$\mathcal C_t$-measurable.  The rank of the raw map $B_t^{(b)}$ can grow with
the ambient and convolution widths, so we do not condition on the complete
convolved token.  Instead, for each head retain only the candidate-dependent
part of the affine preactivation to its sigmoid gate.  This has dimension
zero for a fixed transition, one for a scalar gate or delta strength, and at
most $d$ for a diagonal gate or generalized delta transition.  Stacking the
heads gives a linear summary of $Z$ of rank at most $Hd$.

If $t+1$ begins a cycle, let $q_t$ be the current key and append $Z_{q_t}$ to
this summary.  Its candidate predecessor is the current regressor, so this
additional coordinate fixes the candidate scalar carried directly by the
current residual token.  Away from a cycle boundary no candidate vector
occurs in that token.  Thus the total side-information rank is at most
$Hd+1$.

Conditional on this side information, every first-layer sigmoid value is a
fixed coefficient for each counterfactual key.  Treat as free variables the
$Hd$ pre-update state coordinates and the candidate-dependent parts of the
three affine write/read projections, contributing $3Hd$ coordinates.  A
delta transition additionally requires the $Hd$ candidate-dependent
direction coordinates, while a generalized delta transition requires at
most $2Hd$ coordinates for its two affine directions.  Hence at most $6Hd$
free coordinates suffice uniformly over the permitted transition menu.
The bilinear write has degree at most two.  Fixed, scalar-gated, and
diagonal-gated transitions preserve this degree before the ordinary read,
while the delta and generalized delta terms have degree at most three in the
free coordinates.  The ordinary token--state read raises the uniform degree
to at most four.  Residual addition, head recombination, deterministic count
normalization, and final prediction do not increase it.  Collecting the
coordinate polynomials over $q\in\mathcal U_t$ proves
part~(\ref{item:case2b_geometry_selective}).
\end{proof}

The one-layer argument conditions only on the candidate projections that
feed sigmoid gates, rather than on the entire convolved token.  When
selectivity appears only in the first layer and every later transition is
fixed, the same argument propagates through the entire stack:

\begin{lemma}[Geometry of selective SSM stacks]
\label{lem:case2b_selective_fixed_prediction_image}
Fix $D\geq1$ and consider
$\mathcal F_{\mathrm{SSM}}^{\mathrm{sel}}(D,d,H,\dconv)$,
for arbitrary ambient width $d_e$ and maximum convolution width $\dconv$,
including its count-normalized extensions with any admissible schedule.
The first layer
uses an arbitrary permitted convolution width and any permitted transition,
while every later layer uses an arbitrary permitted convolution width and a
fixed transition.  Its writes are bilinear and its reads are
ordinary token--state reads.  Its $D$ layers contain $DHd$ recurrent
coordinates. There is a linear summary of the candidate coordinates,
$S_t=W_tZ$, with $W_t$ measurable with respect to $\mathcal C_t$ and
\[
  \operatorname{rank}W_t
  \leq Hd+1
\]
such that, conditionally on $(\mathcal C_t,S_t)$, $G_t$ lies in the image of
a degree-$\leq4\cdot3^{D-1}$ polynomial map
$Q_t:\RR^m\to\RR^{N_t}$ with $m\leq(4D+2)Hd$.
\end{lemma}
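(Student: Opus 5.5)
The plan is an induction over depth. The first layer is handled exactly as in part~(\ref{item:case2b_geometry_selective}) of Lemma~\ref{lem:case2b_architecture_output_geometry}; the $D-1$ fixed-transition layers above it are then handled as in the degree-tracking argument of part~(\ref{item:case2b_geometry_linear}). Throughout we fix $t\geq R$, condition on $\mathcal C_t$, and replace the upcoming key by the counterfactual codeword indexed by $q$. We take the side information $S_t=W_tZ$ to be the stacked candidate-dependent parts of the first-layer sigmoid preactivations. When $t+1$ begins a cycle we append $Z_{q_t}$. This gives $\operatorname{rank}W_t\leq Hd+1$, exactly as in the one-layer lemma. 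Because only the first layer has gates, no later layer adds side information, so this rank bound survives for the whole stack.

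Conditional on $(\mathcal C_t,S_t)$, every gate value in the stack is a fixed coefficient for each $q$. We then introduce free variables layer by layer. In layer~$1$ we reuse the at most $6Hd$ coordinates from the one-layer lemma. These are the pre-update states, the candidate-dependent parts of the write and read projections, and the delta directions. For each later layer $r\geq2$ we add $Hd$ pre-update state coordinates, which depend arbitrarily on $Z$ through earlier positions. We also add $3Hd$ coordinates for the candidate-dependent parts of the three affine projections $W_K,W_V,W_Q$ applied to the convolved earlier-position residuals. Here the current position enters only through the polynomial carried up from below. This gives $4Hd$ per later layer, for a total of $m\leq 6Hd+4(D-1)Hd=(4D+2)Hd$. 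The key point is that earlier-position residual tokens feed the convolution only linearly. Their candidate dependence is therefore absorbed into the per-head projection coordinates, not into the ambient width $d_e$ or $\dconv$.

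The degree count goes as follows. After layer~$1$ the current residual is a polynomial of degree at most $4$ in the free coordinates, by the lemma. A fixed-transition layer fed an input of degree $\delta$ convolves it linearly, writes bilinearly with degree at most $2\delta$, leaves that degree unchanged under the fixed transition, and reads with degree at most $3\delta$. Residual addition and count normalization do not raise degree. Induction therefore gives degree at most $4\cdot3^{D-1}$. Collecting coordinates over $q\in\mathcal U_t$ gives $Q_t$.

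The step I expect to be the main obstacle is justifying that the free-coordinate count for a later layer is $O(Hd)$ and independent of $d_e$. The convolution at the current position mixes the current polynomial with earlier residual tokens, whose dependence on $Z$ may be high-rank. I would handle this by parametrizing only the outputs $W_K\widetilde e$, $W_V\widetilde e$, $W_Q\widetilde e$, each in $\RR^{Hd}$. Each output splits into a fixed linear image of the current-position polynomial plus an unconstrained candidate-dependent offset coming from earlier positions, and the offset is the free variable. I would need to check that this offset does not depend on $q$, which holds because earlier positions do not see the counterfactual key.
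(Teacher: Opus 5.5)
Your proposal is correct and follows the paper's proof essentially step for step: the same first-layer side information of rank at most $Hd+1$, the same treatment of each later layer's lagged convolution as a $q$-independent offset whose three per-head projections become free coordinates, and the same degree recursion $\delta_r\leq3\delta_{r-1}$ from $\delta_1=4$. Your count $6Hd+4(D-1)Hd$ is only a regrouping of the paper's $DHd+5Hd+3(D-1)Hd$, and both equal $(4D+2)Hd$.
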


\begin{proof}
Use the projection-specific side information constructed in
Lemma~\ref{lem:case2b_architecture_output_geometry}(\ref{item:case2b_geometry_selective})
for the first layer.
Conditional on $(\mathcal C_t,S_t)$, all of its sigmoid values are fixed
coefficients for each counterfactual key, and the direct current-token
residual is fixed even at a cycle boundary.

Retain as free variables all $DHd$ pre-update state coordinates.  From the
candidate-dependent part of the first-layer convolved tokens, retain the
three $Hd$-dimensional projections entering the bilinear write and ordinary
read.  For delta transitions also retain the $Hd$ projected direction
coordinates; for generalized delta transitions retain the at most $2Hd$
coordinates entering their two affine directions.  The scalar and diagonal
transitions require no direction variables.  Thus the first layer contributes
at most $5Hd$ projected coordinates beyond its pre-update state.

For each later-layer head $(r,b)$, let
\[
  \eta_t^{(r,b)}
  :=\sum_{j=1}^{\dconv-1}
    C_j^{(r,b)}\odot e_{t-j}^{(r-1)},
  \qquad r=2,\ldots,D,
\]
where an empty sum is zero.  This is the entire lagged part of the
convolution and is shared across the counterfactual upcoming keys.  Retain as
free parameters
\[
  \bar W_K^{(r,b)}\eta_t^{(r,b)},
  \qquad
  \bar W_V^{(r,b)}\eta_t^{(r,b)},
  \qquad
  W_Q^{(r,b)}\eta_t^{(r,b)},
  \qquad r=2,\ldots,D,
\]
where bars denote the linear parts of the affine write maps.  Together with
the preceding first-layer projections and all pre-update states, these form a
vector $\vartheta_t$ of dimension at most
\[
  DHd+5Hd+3(D-1)Hd=(4D+2)Hd.
\]
Its realized value may depend
arbitrarily and nonlinearly on all candidate vectors; treating it as free
only enlarges the set of possible prediction vectors.

The first-layer bilinear write has degree at most two in $\vartheta_t$.  A
fixed, scalar-gated, or diagonal-gated transition leaves the updated state at
degree at most two, whereas a delta or generalized delta term has degree at
most three.  Multiplication by the affine ordinary-read selector therefore
gives $e_{t,q}^{(1)}$ degree at most four, uniformly over the permitted
transition menu.  Put $\delta_1:=4$ and suppose the current representation
after layer $r-1$ has degree at most $\delta_{r-1}$, where $r\geq2$.
The current-token part of each layer-$r$ convolution is linear in
$e_{t,q}^{(r-1)}$, while its lagged part enters the write factors and read
selector only through the three retained projections above.  Each such factor
is therefore a polynomial of degree at most $\delta_{r-1}$.  The bilinear
write has degree at most $2\delta_{r-1}$, as does the updated state under the
fixed transition.  The ordinary token--state read consequently has degree at
most $3\delta_{r-1}$, and residual addition and head recombination do not
increase it.  Hence
$\delta_r\leq3\delta_{r-1}\leq4\cdot3^{r-1}$.
Deterministic count normalization and the final prediction map also preserve
degree.  Collecting the coordinate polynomials over $q\in\mathcal U_t$ proves the
claim.  Later convolution widths affect the realized lag aggregates, but
neither their three projected dimensions per head nor the degree bound.
\end{proof}

\paragraph{Scope of the selective-SSM bound.} Our class $\mathcal F_{\mathrm{SSM}}^{\mathrm{sel}}$ permits selectivity
only in the first layer. This lets us fix sigmoid gate outputs by
conditioning on linear candidate projections, preserving Gaussianity
in the remaining directions. The inputs to deeper sigmoid gates generally depend nonlinearly
on the candidates: conditioning on them need not preserve Gaussianity, while
leaving the gates unconditioned generally breaks the polynomial structure
used above. Extending the bound to fully selective stacks is therefore
beyond the present argument.

\subsubsection{Proof of Theorem~\ref{thm:case2b_floors}}
\label{app:case2b_capacity_theorem}

We now combine the preceding bounds to prove Theorem~\ref{thm:case2b_floors}.
Lemma~\ref{lem:case2b_architecture_output_geometry} and
Corollary~\ref{cor:cosine_capacity_subspace_certificate} yield the attention
and affine-gate bounds in (a)--(b), while
Lemmas~\ref{lem:cosine_capacity_conditional_fixed_degree_certificate} and
\ref{lem:case2b_selective_fixed_prediction_image} yield the selective-SSM
bound in (c). Lemma~\ref{lem:cosine_capacity_to_regret} then gives the regret
floors.
We use the main-text class notation below; all bounds also hold
uniformly for the count-normalized extensions of
Appendix~\ref{app:count_normalized_extensions}, under every admissible schedule.

\begin{lemma}[Fixed-depth addressing floors]
\label{lem:case2b_fixed_depth_addressing_floors}
Let $d_{\mathrm{key}}=\Theta(\log R)$ and fix $D$ independently of $R$.
For every choice of $d_e$, $d$, $H$, $L$, $\dconv$, and $t\geq R$, every
predictor $g$ in $\mathcal F_{\mathrm{att}}(D,d,H,L)$ or
$\mathcal F_{\mathrm{SSM}}(D,d,H,\dconv)$ obeys
\begin{equation}
  \mathrm{Cap}_t(g)
  \leq\min\left\{N_t,
    \binom{d_{\mathrm{key}}+3^D}{3^D}\right\}.
  \label{eq:case2b_fixed_ssm_capacity_bound}
\end{equation}
Every predictor $g\in\mathcal F_{\mathrm{SSM}}^{\mathrm{aff}}(D,d,H,\dconv)$
with the same dimensions obeys
\begin{equation}
  \mathrm{Cap}_t(g)
  \leq\min\left\{N_t,
    \binom{d_{\mathrm{key}}+4^D}{4^D}\right\}.
  \label{eq:case2b_affine_gate_capacity_bound}
\end{equation}
Consequently, in the main-text regime $R\to\infty$,
$T_\rho\to\infty$, and $R=o(\sqrt{T_\rho})$, every such fixed-depth
predictor has $\Omega(T_\rho)$ regret, uniformly over $d_e$, $d$, $H$,
$L$, $\dconv$, and parameter choices.
\end{lemma}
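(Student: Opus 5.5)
The plan is to combine three earlier results: the subspace bounds of Lemma~\ref{lem:case2b_architecture_output_geometry}(\ref{item:case2b_geometry_linear})--(\ref{item:case2b_geometry_affine}), the capacity bound of Corollary~\ref{cor:cosine_capacity_subspace_certificate}, and the regret reduction of Lemma~\ref{lem:cosine_capacity_to_regret}.

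\textbf{Step 1: the capacity bounds.} Fix $t\geq R$ and condition on $\mathcal C_t$. For a predictor in $\mathcal F_{\mathrm{att}}(D,d,H,L)$ or $\mathcal F_{\mathrm{SSM}}(D,d,H,\dconv)$, part~(\ref{item:case2b_geometry_linear}) places $G_t$ in a $\mathcal C_t$-measurable subspace of dimension at most $\binom{d_{\mathrm{key}}+3^D}{3^D}$. Corollary~\ref{cor:cosine_capacity_subspace_certificate} then bounds $\mathrm{Cap}_t(g)$ by this binomial coefficient. The other term of the minimum comes for free: $\mathrm{Cap}_t(g)$ is a sum of $N_t$ squared conditional correlations, each at most $1$. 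The same argument with part~(\ref{item:case2b_geometry_affine}) gives \eqref{eq:case2b_affine_gate_capacity_bound}. All of these bounds are uniform in $d_e,d,H,L,\dconv$ and in the parameters, because the subspace dimensions depend only on $D$ and $d_{\mathrm{key}}$.

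\textbf{Step 2: the binomial bound is $o(R)$.} Set $k:=3^D$ or $4^D$; this is a fixed constant since $D$ is fixed. For large $d_{\mathrm{key}}$,
\[
  \binom{d_{\mathrm{key}}+k}{k}\leq\frac{(d_{\mathrm{key}}+k)^k}{k!}=O\bigl((\log R)^k\bigr).
\]
This is $o(R)$, so we may take $\varepsilon_R:=C_k(\log R)^k/R$, which tends to $0$, is deterministic, and is uniform over times and predictors.

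\textbf{Step 3: find enough designated times.} Lemma~\ref{lem:cosine_capacity_to_regret} needs a deterministic set of $\Omega(T_\rho)$ times at which $N_t\geq R/2$. Keys come in cycles of length $R$ starting at time $1$. Within each cycle after the first, take the positions $t$ for which $t+1$ lies in the first half of its cycle. There, at least $R-\lceil R/2\rceil+1\geq R/2$ keys of that cycle are still unvisited, so $N_t\geq R/2$ holds deterministically. Since $R=o(T_\rho)$, these positions cover about half of the matched steps, which is $\Omega(T_\rho)$.

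A boundary detail needs checking here. At the first position of a cycle $N_t=R$, and the definition of $\mathcal U_t$ refers to the cycle containing $t+1$, so the counting must be done relative to that cycle. If some edge case is unclear, I would simply drop a bounded number of positions per cycle, which costs only $O(T_\rho/R)\cdot O(1)=o(T_\rho)$ times.

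\textbf{Step 4: conclude.} Lemma~\ref{lem:cosine_capacity_to_regret}, with fixed $p$ and $\widetilde\Sigma_\theta\succ0$, then yields $\Omega(T_\rho)$ regret uniformly. I expect no real obstacle beyond the counting in Step~3. The only subtle point is that the lemma requires the capacity bound almost surely at the designated times, and Step~1 provides it pointwise conditional on $\mathcal C_t$, which suffices.
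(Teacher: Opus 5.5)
Your proposal is correct and follows the paper's proof essentially step for step: it combines Lemma~\ref{lem:case2b_architecture_output_geometry} with Corollary~\ref{cor:cosine_capacity_subspace_certificate} for the binomial bounds, notes that these are $\operatorname{polylog}(R)=o(R)$ at fixed $D$, takes the first half of each complete post-initial cycle to get $\Omega(T_\rho)$ deterministic times with $N_t\geq R/2$, and concludes with Lemma~\ref{lem:cosine_capacity_to_regret}. Your explicit count of unvisited keys and the trivial bound $\mathrm{Cap}_t(g)\leq N_t$ fill in details that the paper leaves implicit.
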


\begin{proof}
Lemma~\ref{lem:case2b_architecture_output_geometry} and
Corollary~\ref{cor:cosine_capacity_subspace_certificate} give the displayed
bounds.  Each degree is constant at fixed $D$, so every binomial term is
$\operatorname{polylog}(R)=o(R)$.  The first half of each complete
post-initial cycle has $N_t\geq R/2$; these account for
$T_\rho/2-O(R)=\Omega(T_\rho)$ prediction times. Lemma
\ref{lem:cosine_capacity_to_regret} gives the regret floor.
\end{proof}

Before proving the fixed-depth state requirement in
Lemma~\ref{lem:case2b_selective_fixed_state_floor}, we illustrate the
argument for one-layer selective SSMs.
Lemma~\ref{lem:case2b_architecture_output_geometry}(\ref{item:case2b_geometry_selective}) and
Lemma~\ref{lem:cosine_capacity_conditional_fixed_degree_certificate} with
$k=4$ give
\begin{equation}
  \mathrm{Cap}_t(g)
  \leq Hd+1+C_4(6Hd+1)\log(eN_t)
  \leq C(Hd+1)\log(eN_t).
  \label{eq:case2b_selective_capacity_bound}
\end{equation}
Here $C$ is universal, and the additive one is needed only at cycle
boundaries. The bound holds uniformly over the suppressed ambient width $d_e$.
The next lemma extends this argument to our fixed-depth selective
SSM class, expressing the bound in terms of the total recurrent-state
dimension $DHd$.

\begin{lemma}[Fixed-depth selective SSM state floor]
\label{lem:case2b_selective_fixed_state_floor}
Fix $D$ independently of $R$ and consider the depth-$D$ class in
Lemma~\ref{lem:case2b_selective_fixed_prediction_image}.  Uniformly over the
public codebook, ambient width $d_e$,
maximum convolution width $\dconv$, all other resources, and all parameter
choices,
\begin{equation}
  \mathrm{Cap}_t(g)
  \leq C_D\,DHd\log(eN_t),
  \qquad t\geq R,
  \label{eq:case2b_selective_fixed_capacity_bound}
\end{equation}
where $C_D$ depends only on $D$.  In the regime $R\to\infty$,
$T_\rho\to\infty$, and $R=o(\sqrt{T_\rho})$,
\begin{equation}
  DHd=o(R/\log R)
  \quad\Longrightarrow\quad
  \mathrm{Reg}_T(g)=\Omega(T_\rho).
  \label{eq:case2b_selective_fixed_state_floor}
\end{equation}
Consequently, every predictor sequence in this fixed-depth class with
$o(T_\rho)$ regret satisfies $DHd=\Omega(R/\log R)$.
\end{lemma}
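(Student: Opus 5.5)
The proof combines the geometric restriction of Lemma~\ref{lem:case2b_selective_fixed_prediction_image} with the side-information capacity bound of Lemma~\ref{lem:cosine_capacity_conditional_fixed_degree_certificate}, and then converts the resulting capacity bound into regret through Lemma~\ref{lem:cosine_capacity_to_regret}.

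\emph{Capacity bound.} Fix $t\geq R$ and condition on $\mathcal C_t$. Lemma~\ref{lem:case2b_selective_fixed_prediction_image} supplies a linear summary $S_t=W_tZ$ with $\operatorname{rank}W_t\leq Hd+1$. Conditional on $(\mathcal C_t,S_t)$, it places $G_t$ in the image of a polynomial map of degree at most $k_D:=4\cdot3^{D-1}$ in $m\leq(4D+2)Hd$ variables. These are exactly the hypotheses of Lemma~\ref{lem:cosine_capacity_conditional_fixed_degree_certificate} with $k=k_D$, which gives
\[
  \mathrm{Cap}_t(g)\leq Hd+1+C_{k_D}\bigl((4D+2)Hd+1\bigr)\log(eN_t).
\]
Since $Hd\geq1$ and $D\geq1$, we have $Hd+1\leq 2DHd$ and $(4D+2)Hd+1\leq 7DHd$. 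Since also $\log(eN_t)\geq1$, every term is absorbed into $C_D\,DHd\log(eN_t)$ with $C_D:=2+7C_{k_D}$, which depends only on $D$. This proves \eqref{eq:case2b_selective_fixed_capacity_bound}. Uniformity over the codebook, $d_e$, $\dconv$, and the parameters holds because neither lemma's bound depends on them.

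\emph{Regret floor.} As in the proof of Lemma~\ref{lem:case2b_fixed_depth_addressing_floors}, the designated times are the first halves of each complete post-initial cycle. At these times $N_t\geq R/2$, and they number $T_\rho/2-O(R)=\Omega(T_\rho)$ because $R=o(\sqrt{T_\rho})$. At such times $N_t\leq R$, so the capacity bound gives $\mathrm{Cap}_t(g)\leq C_D\,DHd\log(eR)=:\varepsilon_R R$. Under $DHd=o(R/\log R)$ and fixed $D$, $\varepsilon_R\to0$, and it is deterministic and uniform over the class at the given resources. Lemma~\ref{lem:cosine_capacity_to_regret} then yields $\Omega(T_\rho)$ regret. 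The final consequence is the contrapositive: $o(T_\rho)$ regret forces $DHd\neq o(R/\log R)$, i.e.\ $DHd=\Omega(R/\log R)$ along the sequence.

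\emph{Main obstacle.} The only delicate point is the meaning of the contrapositive when resources vary along a sequence. Lemma~\ref{lem:cosine_capacity_to_regret} requires $\varepsilon_R\to0$ along the whole sequence. If $DHd$ is not $\Omega(R/\log R)$, I would pass to a subsequence on which $DHd\log R/R\to0$ and apply the argument there. That subsequence has linear regret, which contradicts $o(T_\rho)$ regret along the full sequence.
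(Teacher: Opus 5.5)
Your proposal is correct and follows the paper's proof step for step. It uses the same combination of Lemma~\ref{lem:case2b_selective_fixed_prediction_image} with Lemma~\ref{lem:cosine_capacity_conditional_fixed_degree_certificate} at $k=4\cdot3^{D-1}$, the same absorption into $C_D\,DHd\log(eN_t)$, Lemma~\ref{lem:cosine_capacity_to_regret} on $\Omega(T_\rho)$ early-cycle times, and the same subsequence argument for the contrapositive. The only cosmetic difference is that the paper restricts to positions $2,\ldots,\lfloor R/2\rfloor$ to avoid cycle boundaries; your version does not need this, because the extra side-information coordinate is already absorbed into your bound.
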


\begin{proof}
Lemma~\ref{lem:case2b_selective_fixed_prediction_image} gives degree at most
$4\cdot3^{D-1}$, parameter dimension at most $(4D+2)Hd$, and linear
side-information rank at most $Hd+1$.  Lemma
\ref{lem:cosine_capacity_conditional_fixed_degree_certificate} with
$k=4\cdot3^{D-1}$ therefore gives
\[
  \mathrm{Cap}_t(g)
  \leq Hd+1
    +C_{4\cdot3^{D-1}}\bigl((4D+2)Hd+1\bigr)\log(eN_t)
  \leq C_D\,DHd\log(eN_t),
\]
which proves~\eqref{eq:case2b_selective_fixed_capacity_bound}.

Now restrict to times for which $t+1$ lies in positions
$2,\ldots,\lfloor R/2\rfloor$ of a complete post-initial cycle.  There are
$\Omega(T_\rho)$ such times, none is a cycle boundary, and $N_t\geq R/2$ on all
of them.  If $DHd=o(R/\log R)$, then the capacity bound is
$o(R)$ uniformly on these times.  Lemma
\ref{lem:cosine_capacity_to_regret} proves
\eqref{eq:case2b_selective_fixed_state_floor}.

Finally, if an $o(T_\rho)$-regret sequence failed the stated
$\Omega(R/\log R)$ necessity, some subsequence would satisfy
$DHd=o(R/\log R)$, contradicting
\eqref{eq:case2b_selective_fixed_state_floor}.
\end{proof}

\begin{proof}[Proof of Theorem~\ref{thm:case2b_floors}]
At fixed depth, the capacity bounds in
Lemma~\ref{lem:case2b_fixed_depth_addressing_floors} are $o(R)$,
proving parts (a)--(b). Since $N_t\leq R$,
Lemma~\ref{lem:case2b_selective_fixed_state_floor} gives part (c).
Their regret conclusions complete the proof.
\end{proof}

\subsubsection{Constructions for Proposition~\ref{prop:case2b_pair}}
\label{app:case2b_vector_witnesses}

This section proves Proposition~\ref{prop:case2b_pair}. Each construction
first retrieves the required regressors, then applies its Case~1 kernel for
belief maintenance. Proposition~\ref{prop:case2b_vector_witnesses} gives
their leading regret constants and sufficient dimensions.

Recall the Case~1 notation from
\eqref{eq:vector_covariance_scales}--\eqref{eq:vector_common_window}:
\[
  \begin{gathered}
    v_i=\lambda_i+\operatorname{tr}\widetilde\Sigma_\theta+\sigma^2,
    \qquad \Lambda=\sum_i\lambda_i,
    \qquad V=\sum_i v_i,\\
    \zeta=\sum_i\sqrt{\lambda_iv_i},
    \qquad \bar\zeta=\sqrt{\Lambda V}.
  \end{gathered}
\]
We round $L^\star(T_\rho)$ to the nearest integer when choosing the window.

\begin{proposition}[Content-routing pair]
\label{prop:case2b_vector_witnesses}
Assume $R=o(\sqrt{T_\rho})$ and keep
$p$, $\widetilde\Sigma_\theta\succ0$, and $\sigma^2>0$ fixed. The following
depth-two predictors can be realized with ambient width
$d_e=2d_{\mathrm{key}}+4p+2$.
\begin{enumerate}[label=\textnormal{(\alph*)},ref=\alph*,leftmargin=*,itemsep=.5em]
  \item \textnormal{Attention.}
  \label{item:case2b_construction_attention}
  A predictor $g_{\mathrm{att}}\in\mathcal F_{\mathrm{att}}^{\mathrm{sm}}
  (2,d_{\mathrm{key}}\vee p,2,L)$ with
  $L=L^\star(T_\rho)+O(1)$ uses softmax in its first layer and linear
  attention in its second, and satisfies
  \[
    \mathrm{Reg}_T(g_{\mathrm{att}})
    =\frac{2}{\sqrt3}\bar\zeta\sqrt{T_\rho}+o(\sqrt{T_\rho}).
  \]
  \item \textnormal{Selective SSM.}
  \label{item:case2b_construction_ssm}
  A predictor $g_{\mathrm{SSM}}\in\mathcal F_{\mathrm{SSM}}^{\mathrm{sel}}
  (2,2R,p,2)$ uses $4pR$ recurrent-state coordinates and satisfies
  \[
    \mathrm{Reg}_T(g_{\mathrm{SSM}})
    =\zeta\sqrt{T_\rho}+o(\sqrt{T_\rho}).
  \]
\end{enumerate}
\end{proposition}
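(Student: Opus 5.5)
Both constructions follow a two-stage plan. The first layer reconstructs, at every step, a lag-shifted token containing the routed regressor pair $(\widetilde x_{\rho(s+1)},\widetilde x_{\rho(s)},y_s)$, up to an error that is negligible at $o(\sqrt{T_\rho})$ cumulative cost. The second layer then reuses the Case~1 kernels exactly as in Proposition~\ref{prop:case2a_vector_attention_witness} and Proposition~\ref{prop:case2a_vector_ssm_witness}.

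With exact retrieval, the matched rows reduce to the routed-kernel functional of Lemma~\ref{lem:case2a_vector_kernel_risk}, which is valid because routed regressors are distinct iid Gaussians by partial injectivity. The uniform window $L^\star(T_\rho)+O(1)$ then gives $\tfrac{2}{\sqrt3}\bar\zeta\sqrt{T_\rho}$. The modewise exponential kernel of Proposition~\ref{prop:vector_stationary_floor} gives $\zeta\sqrt{T_\rho}$. The Bayes subtraction is $O(\log T_\rho)$, and the $R$ unmatched first-cycle steps together with window truncation cost $O(R)=o(\sqrt{T_\rho})$. Since $R=o(\sqrt{T_\rho})$, every lag $s-\rho(s)\le 2R-1$ falls inside a window of length $L^\star(T_\rho)$.

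\emph{Attention.} The embedding places $(\psi_{t+1},\widetilde x_{t+1},y_t,\psi_t,\widetilde x_t)$ in residual blocks, which accounts for the width $2d_{\mathrm{key}}+4p+2$. First-layer softmax head~1 uses query $c\,\psi_{t+1}$ and key $\psi_s$, restricted to $s<t$ via positional bias. It retrieves $\widetilde x_{\rho(t+1)}$, the regressor stored alongside the most recent token whose key matches.

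Several steps with the same key may lie in the window, so a large negative recency term is also needed: I will add $p_{u}=-\gamma u$, or simply exploit that only the latest occurrence counts. The cycle structure guarantees each key appears at most twice in any window of length $2R$, but longer windows contain more copies. I would therefore give the first layer its own cutoff by combining a positional bias of $-\infty$ beyond lag $2R-1$ with a linear recency penalty. Because $|\langle\psi^{(i)},\psi^{(j)}\rangle|<1/3$, the score gap is at least $2c/3$ against other keys, and the recency penalty resolves ties between identical keys.

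Head~2 analogously retrieves $\widetilde x_{\rho(t)}$ using query $\psi_t$. Taking $c=\Theta(\log T)$ makes the softmax leakage $\le R e^{-\Omega(c)}$, so the squared retrieval errors sum to $o(1)$. The second, linear layer is then the Case~2.a box-kernel construction applied to the retrieved blocks.

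\emph{SSM.} First layer: a width-2 convolution exposes $(\psi_{t+1},\widetilde x_{t+1},y_t,\psi_t,\widetilde x_t)$ together with the previous token. For each codeword $i$ I allocate per-head state of dimension $2$ (heads indexed by $p$ coordinates, $d=2R$). The diagonal sigmoid gate is driven to $\approx 0$ (overwrite) when $\langle\psi_t,\psi^{(i)}\rangle\approx1$ and to $\approx1$ (hold) otherwise, while the write is gated to store $(U^\top\widetilde x_t)_k$ only at matching keys. The read multiplies by a selector affine in $\langle\psi_{t+1},\psi^{(i)}\rangle$, thresholded using the $1/3$ gap. Since the read selector is only affine, I would realize near-exact selection by making the overwrite sharp, so each slot $i$ holds exactly the latest regressor with key $i$. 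A read weight of $(3\langle\psi_{t+1},\psi^{(i)}\rangle-1)/2$ equals $1$ on a match but is not zero off-match. \textbf{This is the step I expect to be the main obstacle.} My first attempt will use the second state coordinate per codeword to hold a copy already multiplied by a write-side indicator. Alternatively, I would let the second layer's convolution and sigmoid-free structure cancel the leakage using the known codebook Gram matrix: writing the read as $\sum_i w_i(\psi_{t+1}) X^{(i)}$ with $w$ affine, choose $w=G^{-1}(\text{codebook evaluations})$, which is solvable since the codebook is fixed. This puts $\widetilde x_{\rho(t+1)}$ exactly into the residual stream.

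The second layer, with fixed transitions, uses its width-2 convolution to pair $y_t$ with the retrieved $\widetilde x_{\rho(t)}$ from the previous position, and runs the exponential kernel with read by $\widetilde x_{\rho(t+1)}$. Sigmoid gates cannot reach exactly $0$ or $1$. I will take their preactivations of size $\Theta(\log T)$ so the per-step error is $T^{-\Omega(1)}$, and summing errors over $T$ steps gives $o(\sqrt{T_\rho})$.
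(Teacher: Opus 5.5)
\textbf{Part (b) has a genuine gap, at exactly the step you flagged, and neither of your fixes can close it.} In $\mathcal F_{\mathrm{SSM}}^{\mathrm{sel}}$ the only nonlinearity is the sigmoid inside the transition. The write is a product of two affine maps of $\widetilde e_t$, and the read selector is affine. So the coefficient with which slot $i$ receives $\xi_{k,t}$, and the weight with which slot $i$ is read, are both affine in the key blocks. Evaluated on the $R$ codewords, affine functions of $\psi$ span at most $d_{\mathrm{key}}+1=\Theta(\log R)$ dimensions, so they cannot equal $\mathbf 1\{\psi=\psi^{(i)}\}$ for every $i$. This rules out the key-gated write and the ``write-side indicator'' copy. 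It also rules out the Gram-matrix inversion: the condition $w_i(\psi^{(j)})=\mathbf 1\{i=j\}$ demands an $R\times R$ identity of rank at most $d_{\mathrm{key}}+1<R$, and the codebook Gram matrix has rank at most $d_{\mathrm{key}}$, so it is singular. This is precisely the obstruction that Theorem~\ref{thm:case2b_floors}(a)--(b) converts into linear regret. In addition, without a gated write, a held slot keeps accumulating every later write rather than storing one regressor.

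\emph{The missing idea is to put all selection in the gates and make both write and read key-independent.} Broadcast the telescoping write $u_{i,t}=\xi_{i,t}-\xi_{i,t+1}$ (one affine factor constant) to every slot; a slot last reset at time $\ell$ then holds $\xi_{i,\ell}-\xi_{i,t+1}$. Let one block of $R$ slots reset slot $r$ iff $r$ indexes $\psi_t$. Let a second block also reset the slot of $\psi_{t+1}$, which is visible in the current token; the coherence bound $<1/3$ gives a margin around the threshold $2/3$. The two blocks then differ only in the upcoming slot, which holds $\xi_{i,\rho(t+1)}-\xi_{i,t+1}$ in one block and $\xi_{i,t}-\xi_{i,t+1}$ in the other. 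A \emph{constant} read of their difference, plus the residual $\xi_{i,t}$, therefore gives $\xi_{i,\rho(t+1)}$ exactly, even when consecutive keys coincide. This is where the $2R$ coordinates per head come from.

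\textbf{Smaller points.} Your second SSM layer then works, with two provisos:
\begin{itemize}
\item The first layer must write two residual copies of the retrieved vector, so that the coordinatewise width-2 convolution can expose it at lags $0$ and $1$.
\item Soft-gate errors propagate through held states and through the near-unit-root accumulator. The gate error therefore has to be a high inverse power; the paper takes $(RT)^{-8}$. Summing per-step errors is not enough.
\end{itemize}
Part (a) follows the paper's route, up to two fixable slips:
\begin{itemize}
\item The lag-zero exclusion belongs on the head with query $\psi_t$, not on the head with query $\psi_{t+1}$. When the last key of a cycle equals the first key of the next, $\rho(t+1)=t$. Excluding $s=t$ then misaddresses a fraction $\approx R^{-2}$ of steps, costing $\Theta(T_\rho/R^2)$, which is $o(\sqrt{T_\rho})$ only if $R=\omega(T_\rho^{1/4})$.
\item A recency slope $\gamma\gtrsim\log T$ over lags up to $2R-1$ forces the content scale above $2\gamma R/(1-\max_{i\neq j}|\langle\psi^{(i)},\psi^{(j)}\rangle|)$. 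That is, $c=\Theta(R\log T)$, not $\Theta(\log T)$.
\end{itemize}
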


We establish the attention and selective-SSM claims in turn.

\paragraph{Attention construction.}

The first layer uses two softmax heads to retrieve the regressors for
evidence assembly and addressing. The second layer uses one active
linear-attention head to apply the Case~1 uniform kernel; its other head
is zero. We bound the error from finite softmax scores before deriving the regret.

\begin{proof}[Proof of Proposition~\ref{prop:case2b_vector_witnesses}(\ref{item:case2b_construction_attention})]
At position $i$, the first layer's atom head uses
\begin{equation}
  s^{\mathrm a}_{ij}
  :=\beta_{\mathrm{sm}}\langle\psi_i,\psi_j\rangle-\gamma(i-j)
    +p^{\mathrm a}_{i-j},
  \qquad
  p^{\mathrm a}_0=-M_0,
  \quad p^{\mathrm a}_u=0\ (u>0),
  \label{eq:case2b_atom_logits}
\end{equation}
and copies $\xi_j$. The lag penalty $-\gamma(i-j)$ favors more recent
occurrences of the same key. The finite lag-zero penalty prevents
the current token, which has the same key and zero recency cost, from winning.
The query head uses
\begin{equation}
  s^{\mathrm q}_{ij}
  :=\beta_{\mathrm{sm}}\langle\psi_{i+1},\psi_j\rangle-\gamma(i-j)
  \label{eq:case2b_query_logits}
\end{equation}
and also copies $\xi_j$. Denote the routed residual
coordinates by $\widetilde u_i$ and $\widetilde q_i$, respectively.

Let
$\eta_0:=\max_{r\neq s}|\langle\psi^{(r)},\psi^{(s)}\rangle|<1/3$.
For any fixed $a>0$, choose finite parameters satisfying
\begin{equation}
  \gamma\geq(1+a)\log T,
  \qquad
  \beta_{\mathrm{sm}}(1-\eta_0)-2\gamma R\geq(1+a)\log(LT),
  \qquad
  M_0-2\gamma R\geq(1+a)\log(LT).
  \label{eq:case2b_softmax_margins}
\end{equation}
The desired previous occurrence is at lag at most $2R-1$, so it lies in the
window whenever $L\geq2R$.  If $\delta_i$ is the mass outside the desired
source in either head, the same-key recency gap and wrong-key content gap
give uniformly
\begin{equation}
  \delta_i
  \leq
  \bar\delta_T
  :=
  \frac{e^{-\gamma}}{1-e^{-\gamma}}
  +L\exp\!\left[-\{\beta_{\mathrm{sm}}(1-\eta_0)-2\gamma R\}\right]
  +\exp[-(M_0-2\gamma R)].
  \label{eq:case2b_softmax_leakage}
\end{equation}
The parameters in~\eqref{eq:case2b_softmax_margins} may be increased so that
$T^m\bar\delta_T\to0$ for any prescribed fixed $m$.

The second layer uses linear attention with zero recency slope and its own
parameters. Its active head uses score and value
\begin{equation}
  s^{(2)}_{ts}
  :=\frac1L\widetilde q_t^\top\widetilde u_s,
  \qquad
  W_V^{(2)}e_s^{(1)}=y_s e_{\mathrm{acc}}.
  \label{eq:case2b_vector_contraction}
\end{equation}
Here $e_{\mathrm{acc}}\in\RR^d$ is a standard basis vector in the head's
value space. The output projection $W_O^{(2)}$ maps it to the residual
accumulator coordinate, which is zero after the first layer and selected
by $W_{\mathrm{pred}}$.
Under exact routing, its scalar output is
\begin{equation}
  g_t^{\mathrm{exact}}
  =\xi_{\rho(t+1)}^\top
    \left(\frac1L\sum_{s\in I_t}\xi_{\rho(s)}y_s\right).
  \label{eq:case2b_exact_attention_witness}
\end{equation}
Here
$I_t:=J_t\cap\{R+1,\ldots,t\}$ is the matched portion of the physical
length-$L$ attention window, truncated during warm-up.
The ambient width $d_e=2d_{\mathrm{key}}+4p+2$ supplies two key
blocks, the current and next transformed regressors, two routed vector blocks,
and label/accumulator coordinates.
The first-layer matching scores require $d_{\mathrm{key}}$ query--key
coordinates, while copying a regressor and computing the second-layer
inner product each require $p$ value or query--key coordinates.  Because these
maps are independent, the sufficient common per-head width is
$d_{\mathrm{key}}\vee p$.

It remains to control soft routing.  Conditional on the key stream, a
retrieved vector has the form
\[
  \widetilde X=(1-\delta)X_*+e_{\mathrm{off}},
  \qquad
  e_{\mathrm{off}}=\sum_{j\neq *}\alpha_jX_j.
\]
The off-target regressors are centered and independent of $X_*$, so
\[
  \EE[e_{\mathrm{off}}X_*^\top\mid\text{keys}]=0,
  \qquad
  \EE[\|e_{\mathrm{off}}\|^4\mid\text{keys}]
  \leq C_p\delta^4.
\]
Likewise, for a routed atom,
$y\widetilde X=(1-\delta)yX_*+ye_{\mathrm{off}}$ with
$\EE[ye_{\mathrm{off}}\mid\beta,\text{keys}]=0$.  Fixed $p,\widetilde\Sigma_\theta,\sigma^2$ give uniform
fourth moments. The uniform-kernel weights in~\eqref{eq:case2b_vector_contraction}
have total absolute mass at most one, so Minkowski's and H\"older's
inequalities bound the increase in cumulative squared loss by
$O(T\bar\delta_T)=o(\sqrt{T_\rho})$.

For $t\leq R$, the attention output and the Bayes predictor are both zero.
At $t=R+j$, $1\leq j\leq T_\rho$, the window $I_t$ contains $\min(j,L)$
matched atoms.  Conditional on the keys, their routed regressors and the
next query have exactly the Case~1 joint law.  Thus the exact-routed
regret is the Case~1 uniform-kernel regret over $T_\rho$ steps. The
kernel error identity gives the cost before subtracting the Bayes baseline:
\begin{equation}
  \frac{\Lambda L}{3}+\frac{VT_\rho}{L}+O(1).
  \label{eq:case2b_attention_leading_cost}
\end{equation}
The posterior subtraction is $O(\log T_\rho)=o(\sqrt{T_\rho})$.
Taking $L$ as the nearest integer to $L^\star(T_\rho)$ therefore gives
$\tfrac{2}{\sqrt3}\bar\zeta\sqrt{T_\rho}+o(\sqrt{T_\rho})$.
Since $R=o(\sqrt{T_\rho})$ and $L^\star(T_\rho)=\Theta(\sqrt{T_\rho})$, the reach
condition $L\geq2R$ is eventually satisfied.  Together with the
soft-routing error bound, this proves
Proposition~\ref{prop:case2b_vector_witnesses}(\ref{item:case2b_construction_attention}).
\end{proof}

\paragraph{Selective-SSM construction.}

The first layer caches regressors in two state blocks per eigencoordinate and reads
the regressor named by the upcoming key into the residual stream. The
second layer uses a width-two convolution to pair the preceding cache
read with the current label, then applies the Case~1 exponential kernel.
We verify the cache before implementing the layers and bounding regret.

The first layer uses $H=p$ heads.  Head $i$ handles the scalar
eigencoordinate $\xi_{i,t}:=e_i^\top\xi_t$ and stores
\begin{equation}
  \bar h_{i,t}
  =(C_{i,t}^{(1)},C_{i,t}^{(2)})\in\RR^{2R},
  \label{eq:case2b_per_head_state}
\end{equation}
where the two state blocks each have one coordinate per key.  The first block keeps the latest
regressor for each key relative to a common reference.  The second uses the
same writes but also resets the upcoming key, so their difference isolates
the regressor needed for prediction.  Let $\iota_t\in[R]$ index the key
$\psi_t=\psi^{(\iota_t)}$.  The codewords enter the affine gate maps as
parameters, not recurrent-state coordinates.

In the hard-gate limit, the two state blocks use the following gates, where
$1$ retains the previous coordinate value and $0$ clears it before the new write:
\begin{equation}
  \lambda^{(1)}_{t,r}=\mathbf1\{r\neq\iota_t\},
  \qquad
  \lambda^{(2)}_{t,r}=\mathbf1\{r\notin\{\iota_t,\iota_{t+1}\}\}.
  \label{eq:case2b_cache_gates}
\end{equation}
Both are limits of sigmoid gates in our class. At steepness $s$, take
\begin{equation}
  \lambda^{(1,s)}_{t,r}
  :=\sigma\!\left(s\left\{
    \frac23-\langle\psi^{(r)},\psi^{(\iota_t)}\rangle
  \right\}\right),
  \qquad
  \lambda^{(2,s)}_{t,r}
  :=\sigma\!\left(s\left\{
    \frac23-\langle\psi^{(r)},
      \psi^{(\iota_t)}+\psi^{(\iota_{t+1})}\rangle
  \right\}\right).
  \label{eq:case2b_finite_cache_gates}
\end{equation}
Each gate is the sigmoid of an affine function of the token's two key blocks.
The coherence bound gives a margin of at least
$\kappa_R:=1/3-\eta_0>0$ on either side of $2/3$, including when the two
keys coincide.  The bilinear write, with one constant affine factor,
broadcasts $u_{i,t}:=\xi_{i,t}-\xi_{i,t+1}$ to both state blocks:
\begin{align}
  C_{i,t,r}^{(1)}&=\lambda^{(1)}_{t,r}C_{i,t-1,r}^{(1)}+u_{i,t},
  \label{eq:case2b_cache_A}\\
  C_{i,t,r}^{(2)}&=\lambda^{(2)}_{t,r}C_{i,t-1,r}^{(2)}+u_{i,t},
  \label{eq:case2b_cache_B}
\end{align}
With hard gates, if key $r$ last occurred at $\ell\leq t$, telescoping gives
$C_{i,t,r}^{(1)}=\xi_{i,\ell}-\xi_{i,t+1}$.
The two state blocks agree outside the upcoming key: both reset the current key,
erasing their previous difference.  If $\iota_t\neq\iota_{t+1}$, the
upcoming slot is $\xi_{i,\rho(t+1)}-\xi_{i,t+1}$ in the first block and
$\xi_{i,t}-\xi_{i,t+1}$ in the second.  If the keys coincide, both blocks
reset the same slot and $\rho(t+1)=t$.  Thus, with
$\bar c_i^{\mathrm{cache}}:=(\mathbf1_R,-\mathbf1_R)$, the residual addition
of $\xi_{i,t}$ gives
\begin{equation}
  \chi_{i,t}:=\xi_{i,t}+\bar c_i^{\mathrm{cache}\top}\bar h_{i,t}
  =\xi_{i,\rho(t+1)}\qquad(t\geq R).
  \label{eq:case2b_cache_identity}
\end{equation}
This includes equal consecutive keys across cycle boundaries.

For finite sigmoids, steepness $C\kappa_R^{-1}\log(RT)$ makes the uniform
gate error $\delta_T$ an arbitrarily small fixed inverse power of $RT$.
Let superscripts ``soft'' and ``hard'' denote the corresponding trajectories.
With hard gates, entries in each state block are differences of two Gaussian coordinates, including
before a key's first occurrence.  Induction in
\eqref{eq:case2b_cache_A}--\eqref{eq:case2b_cache_B}, using gate values in
$[0,1]$, therefore gives uniformly for $t\leq T$
\[
  \max_r\bigl\|C_{i,t,r}^{(1),\mathrm{soft}}
                  -C_{i,t,r}^{(1),\mathrm{hard}}\bigr\|_8
  +\max_r\bigl\|C_{i,t,r}^{(2),\mathrm{soft}}
                  -C_{i,t,r}^{(2),\mathrm{hard}}\bigr\|_8
  \leq C_{p,\widetilde\Sigma_\theta,\sigma}T\delta_T.
\]
Summing these errors over the $R$ keys bounds each routed-coordinate error by
$C_{p,\widetilde\Sigma_\theta,\sigma}RT\delta_T$ in $L^8$.

\begin{proof}[Proof of Proposition~\ref{prop:case2b_vector_witnesses}(\ref{item:case2b_construction_ssm})]
Work in the whitened eigencoordinates of~\eqref{eq:case2b_whitening} and use
the linear embedding
\[
  e_t^{(0)}=(\psi_{t+1},\psi_t,\xi_{t+1},\xi_t,y_t,\xi_t,\xi_t,0)
  \in\RR^{2d_{\mathrm{key}}+4p+2}.
\]
Layer~1 uses identity convolution and the coordinatewise sigmoid transitions
of the cache recurrence above. A constant read selector and
linear output projection sum the coordinatewise differences between each
head's two state blocks and add the result to the corresponding
coordinate in the two extra $\xi_t$ blocks, leaving all other blocks unchanged.
These residual blocks therefore contain two copies of
$\chi_t=(\chi_{1,t},\ldots,\chi_{p,t})$. The copies occupy residual-stream
coordinates and require no additional recurrent state.

The width-two convolution in layer~2 exposes $y_t$, $\chi_t$, and
$\chi_{t-1}$ in separate coordinates.  Head $i$ uses the bilinear
write $y_t\chi_{i,t-1}$ and the fixed scalar recurrence
\[
  a_{i,t}=\alpha_i a_{i,t-1}+y_t\chi_{i,t-1}.
\]
Each layer-$2$ head has $2R$ state coordinates, of which only one is
active; the others remain zero.
For $t\geq R+1$, the cache identity gives
$\chi_{i,t-1}=\xi_{i,\rho(t)}$.  Reading the updated state with
$\chi_{i,t}$ and recombining the heads with coefficients $1-\alpha_i$
therefore yields
\begin{equation}
  g_t
  =\sum_{i=1}^p(1-\alpha_i)a_{i,t}\chi_{i,t}.
  \label{eq:case2b_selective_output}
\end{equation}
The second-layer output projection writes only to the initially zero
prediction coordinate, which $W_{\mathrm{pred}}$ selects.
In the hard-gate limit this is the Case~1 exponential predictor in each direction
over the matched steps. The first cycle's zero labels keep the accumulator and
prediction zero, even with finite gates.  At $t=R$, the Bayes belief is still
zero; at the first matched write $t=R+1$, the identity for $\chi_{i,R}$ applies.
The update-then-read convention incorporates $y_t$ before predicting $y_{t+1}$.

Choose the decays by
\begin{equation}
  W_i:=\frac{1+\alpha_i}{1-\alpha_i}
  =2\sqrt{\frac{v_iT_\rho}{\lambda_i}}.
  \label{eq:case2b_direction_width}
\end{equation}
The Case~1 error identity then gives, in direction $i$,
\[
  \frac{\lambda_iW_i}{4}+\frac{v_iT_\rho}{W_i}+o(\sqrt{T_\rho})
  =\sqrt{\lambda_iv_iT_\rho}+o(\sqrt{T_\rho}).
\]
Summing the fixed number of directions and subtracting the
$O(\log T_\rho)$ posterior baseline gives $\zeta\sqrt{T_\rho}+o(\sqrt{T_\rho})$ for the hard-gate
construction.

For finite gates, the layer-1 cache follows the soft trajectory above.  The
routed-coordinate error is at most $C_{p,\widetilde\Sigma_\theta,\sigma}RT\delta_T$ in $L^8$, and
the labels have uniformly bounded $L^8$ norm, so H\"older's inequality gives
the same order in $L^4$ for the layer-2 evidence-write error.  Iterating the
accumulator for at most $T$ steps bounds its soft--hard error by
$C_{p,\widetilde\Sigma_\theta,\sigma}RT^2\delta_T$ in $L^4$.  The hard accumulator has $L^4$ norm at
most $C_{p,\widetilde\Sigma_\theta,\sigma}T$ under a crude finite-horizon bound.  A final H\"older
bound in~\eqref{eq:case2b_selective_output}, followed by summation over time,
bounds the increase in cumulative loss by $C_{p,\widetilde\Sigma_\theta,\sigma}RT^5\delta_T$. Taking
$\delta_T\leq(RT)^{-8}$ makes this $o(\sqrt{T_\rho})$, and the strict coherence
margin attains this error with finite sigmoid slopes.

Both layers use bilinear writes and ordinary token--state reads, without
count normalization ($\mathbf c=(1,1)$ in the extended class).
Finally, both layers have $p$ heads of width $2R$, totaling $4pR$ recurrent
coordinates.  Their convolution widths $(1,2)$ require $d_e$ buffer
coordinates, and the embedding above gives the stated ambient width.
\end{proof}

Specializing Lemma~\ref{lem:case2b_selective_fixed_state_floor} to
$D=2$ pairs this construction with a nearly matching lower bound. Within the
two-layer selective SSM class, sublinear regret requires
$\Omega(R/\log R)$ recurrent
coordinates, while the construction above uses
$4pR=O(R)$ recurrent coordinates for fixed $p$ and attains
$\zeta\sqrt{T_\rho}+o(\sqrt{T_\rho})$ regret.  It is therefore optimal in
total recurrent state up to a logarithmic factor.
 
\fi

\end{document}